\definecolor{LightBlue}{rgb}{0.78,1,1}
\newtheorem{theorem}{Theorem}
\newtheorem{lemma}{Lemma}
\newtheorem{proposition}{Proposition}
\newtheorem{assumption}{Assumption}
\newtheorem{remark}{Remark}
\newtheorem{corollary}{Corollary}
\begin{document}
\title{ FedDA: Faster Framework of Local \\ Adaptive Gradient Methods via Restarted Dual Averaging}

\author{ Junyi Li\thanks{Department of Electrical and Computer Engineering, University of Pittsburgh,
Pittsburgh, USA. Email: junyili.ai@gmail.com}, \ Feihu Huang\thanks{College of Computer Science and Technology, Nanjing University of Aeronautics and Astronautics, Nanjing, China;
and also with MIIT Key Laboratory of Pattern Analysis and Machine
Intelligence, Nanjing, China. E-mail: huangfeihu2018@gmail.com}, \ Heng Huang \thanks{Department of Electrical and Computer Engineering, University of Pittsburgh,
Pittsburgh, USA.  Email: henghuanghh@gmail.com}
}

\date{}
\maketitle

\begin{abstract}
Federated learning (FL) is an emerging learning paradigm to tackle massively distributed data. In Federated Learning, a set of clients jointly perform a machine learning task under the coordination of a server. The FedAvg algorithm is one of the most widely used methods to solve Federated Learning problems. In FedAvg, the learning rate is a constant rather than changing adaptively. The adaptive gradient methods show superior performance over the constant learning rate schedule; however, there is still no general framework to incorporate adaptive gradient methods into the federated setting. In this paper, we propose \textbf{FedDA}, a novel framework for local adaptive gradient methods. The framework adopts a restarted dual averaging technique and is flexible with various gradient estimation methods and adaptive learning rate formulations. In particular, we analyze \textbf{FedDA-MVR}, an instantiation of our framework, and show that it achieves gradient complexity $\tilde{O}(\epsilon^{-1.5})$ and communication complexity $\tilde{O}(\epsilon^{-1})$ for finding a stationary point $\epsilon$. This matches the best known rate for first-order FL algorithms and \textbf{FedDA-MVR} is the first adaptive FL algorithm that achieves this rate. We also perform extensive numerical experiments to verify the efficacy of our method.
\end{abstract}

\section{Introduction}
Federated Learning denotes the process in which a set of distributed located clients jointly perform a machine learning task under the coordination of a central server over their privately-held data. A widely used method in FL is the FedAvg(Local-SGD)~\cite{mcmahan2017communication} algorithm. As indicated by its name, FedAvg performs (stochastic) gradient descent steps on each client and averages local states periodically. This method can be shown to converge~\cite{stich2018local, haddadpour2019convergence, woodworth2020local} when the distributions of the clients are homogeneous or with bounded heterogeneity. Recently, a large amount of literature has focused on accelerating FedAvg. In particular, many research works use momentum-based methods to accelerate FL, and significant progress has been made in this direction with improved gradient complexity and communication complexity~\cite{das2020faster,karimireddy2019scaffold, khanduri2021stem}. However, another important category of methods: adaptive gradient methods have received much less attention, and there is still no general framework to incorporate adaptive gradient methods into the federated setting.

Adaptive gradient methods such as Adagrad~\cite{duchi2011adaptive}, Adam~\cite{kingma2014adam} and AMSGrad~\cite{reddi2018convergence} are widely used in the non-distributed setting. The gradient descent method uses either a fixed learning rate or a fixed learning rate schedule. In contrast, adaptive gradient methods set the learning rate to be inversely proportional to the magnitude of the gradient; this can incorporate the local curvature structure of the problem. Adaptive gradient methods perform well in practice; meanwhile, they also enjoy useful theoretical implications that make them outperform the vanilla gradient descent method~\cite{duchi2011adaptive,guo2021stochastic}. For example, a recent study~\cite{staib2019escaping} showed that adaptive gradients help escape saddle points. Furthermore, some studies~\cite{loshchilov2018decoupled, chen2018closing} showed that adaptive gradients improve the generalization performance of the model. 

Adaptive gradient methods can be viewed as a type of generalized mirror descent~\cite{huang2021super} methods, where the associated mirror map is defined according to adaptive learning rates. However, the mirror map is dynamic and changes at every training step. As a special case, the gradient descent method can be viewed as a mirror descent method with the mirror map being the $L_2$ distance function. Following the convention in the mirror descent literature, we denote the parameter space as the primal space and the gradient space as the dual space. The primal and dual space differ in adaptive gradient methods, but they coincide in the gradient descent method. We can exploit this primal-dual view to understand existing FL algorithms and design new algorithms. FedAvg actually exploits the usefulness of average dual states. In FedAvg, the gradient average approximates the true gradient evaluated at an average point of the client states, and the approximation error is upper-bounded by the client states difference; therefore, clients can perform multiple local steps without communication. Although in FedAvg, we do not differentiate the primal and dual space as they are the same, but the dual state average and primal state average are not equivalent for adaptive gradient methods.

Current federated adaptive gradient methods in the literature either only perform adaptive gradient steps on the server side, or ignore this primal-dual nuance when supporting local adaptive gradient steps. An early work is~\cite{reddi2020adaptive}, the authors proposed applying adaptive gradients in the server-average step, while performing normal gradient descent updates locally. This method is simple to implement and gets better performance than FedAvg, but the adaptive information is not exploited during local updates; this weakens the usage of adaptive gradients. Recently, some work ~\cite{karimireddy2020mime, chen2020toward} exploited adaptive information during local update steps; however, a common characteristic of these methods is that they average the primal states (parameters) of the problem during the synchronization step. This will cause some problems. Firstly, since adaptive learning rates define the mirror map, updating adaptive learning rates locally makes the dual space not aligned, thus we can not average the primal states directly. Then even if the adaptive learning rates are fixed locally, the primal space might be nonlinear \emph{w.r.t.} the dual space, \emph{e.g.}, when we solve a constrained optimization problem.  In summary, we propose two principles to apply adaptive gradients in FL. First, the local dual spaces should be aligned with each other; Second, we should average dual states.

\begin{table*}
  \centering
  \caption{ Comparisons of representative Federated Learning algorithms for finding
  an $\epsilon$-stationary point of Objective~\eqref{Eq: Problem}
  \emph{i.e.}, $\|\nabla f(x)\|^2 \leq \epsilon$ or its equivalent variants.
  $Gc(f,\epsilon)$ denotes the number of gradient queries \emph{w.r.t.} $f^{(k)}(x)$ for $k\in [K]$; $Cc(f,\epsilon)$ denotes the number of communication rounds; \textbf{State} means what state the algorithm maintains locally (Primal/Dual); \textbf{Local-Adaptive} means whether the algorithm performs adaptive gradient descent locally or not; \textbf{Constrained} means whether the algorithm can solve both constrained and unconstrained problems or not. The first three algorithms are not adaptive gradient methods, and the last four methods support some form of adaptive gradients.
   } \label{tab:1}
  \resizebox{\textwidth}{!}{
 \begin{tabular}{{c|c|c|c|c|c|c}}\toprule
  \textbf{Algorithm} & $Gc(f,\epsilon)$ & $Cc(f,\epsilon)$ & \textbf{State}  & \textbf{Local-Adaptive} & \textbf{Constrained}\\\hline
  FedAvg~\cite{mcmahan2017communication} & $O(\epsilon^{-2})$ & $O(\epsilon^{-1.5})$ & Primal/Dual  &  &  \\ \hline
  FedCM~\cite{xu2021fedcm} & $\tilde{O}(\epsilon^{-1.5})$ & $\tilde{O}(\epsilon^{-1})$ &  Primal/Dual&  &  \\ \hline
  STEM~\cite{khanduri2021stem} & $\tilde{O}(\epsilon^{-1.5})$ & $\tilde{O}(\epsilon^{-1})$ &  Primal/Dual&  &  \\ \midrule
  FedAdam~\cite{reddi2020adaptive} & $O(\epsilon^{-2})$ & $O(\epsilon^{-1})$ & Primal  &  & \\ \hline
  Local-AMSGrad~\cite{chen2020toward} & $O(\epsilon^{-2})$ & $O(\epsilon^{-1.5})$ & Primal & $\surd$  & \\  \hline
  MIME-MVR~\cite{karimireddy2020mime} & $\tilde{O}(\epsilon^{-1.5})$ & $O(\epsilon^{-1.5})$ & Primal & $\surd$ &  \\   \hline
  \textbf{FedDA-MVR(Ours)} & $\tilde{O}(\epsilon^{-1.5})$ & { $\tilde{O}(\epsilon^{-1})$} & Dual & $\surd$ & $\surd$  \\ \bottomrule
 \end{tabular}
 }
\end{table*}

More specifically, we propose the FL adaptive gradients framework \textbf{FedDA}, which is short for \textbf{Fed}erated \textbf{D}ual-averaging \textbf{A}daptive-gradient. In each global round of \textbf{FedDA}, the clients aggregate gradients (dual states) locally, and the server averages the dual states of the clients in the synchronization step. Local weights (primal states) are used as gradient query points in local updates and are recovered through the inverse mirror map (defined by the adaptive gradients). The global primal state is updated on the basis of the averaged dual states and the inverse mirror map. In addition, we utilize a restarting technique to make sure that all clients share the same dual space during local updates; more precisely, we refresh the adaptive gradients at every global epoch and use a fixed one in the local update. Our \textbf{FedDA} framework is general and can incorporate a large family of adaptive gradient methods to the FL setting. In particular, \textbf{FedDA-MVR}, an instantiation of our framework, achieves the best-known gradient complexity and communication complexity in the FL setting. Finally, we highlight our \textbf{contribution} as follows:
\begin{itemize}
\vspace*{-4pt}
\item[(i)] We propose \textbf{FedDA}, a framework for federated adaptive gradient methods. The framework uses a restarted dual averaging technique and adapts a large family of adaptive gradient methods to the FL setting;
\item[(ii)] \textbf{FedDA-MVR}, an instantiation of our framework, obtains the gradient complexity of $\tilde{O}(\epsilon^{-1.5})$ and communication complexity of $\tilde{O}(\epsilon^{-1})$. This matches the optimal rate of non-adaptive federated algorithms and outperforms existing adaptive federated algorithms. \textbf{FedDA-MVR} uses the momentum-based variance-reduction gradient estimation, and exponential moving average of the gradient square as adaptive learning rates;
\item[(iii)] We empirically verify the efficacy of the framework \textbf{FedDA} by performing a colorrectal cancer prediction task and a classification task over the CIFAR10 and FEMNIST datasets.
\vspace*{-4pt}
\end{itemize}

\noindent\textbf{Notations.} $\nabla f(x)$ ($\nabla f^{(k)} (x)$)
denotes the first-order derivatives of the function $f(x)$ ($f^{(k)} (x)$) \emph{w.r.t.} variable $x$. $\xi$ denotes a random sample and $\nabla f (x;\xi)$($\nabla f^{(k)} (x;\xi)$) is the stochastic estimate $\nabla f(x)$ ($\nabla f^{(k)} (x)$). $O(\cdot)$ is the big O notation, and $\tilde{O}(\cdot)$ hides logarithmic terms. $I_d$ denotes a $d$-dimensional identity matrix. $Diag(x)$ denotes the matrix whose diagonal is the vector $x$. $\|\cdot\|$ denotes the $\ell_2$ norm for vectors and the spectral norm for matrices, respectively. $\langle \cdot,\cdot\rangle$ denotes the Euclidean inner product. [K] denotes the set of $\{1,2,...,K\}$. For a random variable $X$, $\mathbb{E}[X]$ denotes its expectation.

\section{Related Works}
\textbf{Optimization Algorithms in Federated Learning.} The term Federated Learning was first coined in~\cite{mcmahan2017communication}, where the task is learned from a set of distributed located clients under the coordination of a server. In the paper~\cite{mcmahan2017communication}, the authors proposed the FedAvg algorithm, in which each client performs multiple steps of gradient descent with its local data and then sends the updated model to the server for averaging. The idea of FedAvg algorithm resembles the Local-SGD algorithm, which is studied in a more general distributed setting for a longer time~\cite{mangasarian1993backpropagation}. The convergence of the local-SGD method has been heavily analyzed in the literature~\cite{stich2018local, karimireddy2019error, dieuleveut2019communication, khaled2020tighter, yu2019parallel, woodworth2020local, woodworth2021minimax,glasgow2022sharp}. Recently, \cite{glasgow2022sharp} proved a convergence rate of the Local-SGD under convex setting that matches the lower bound. On top of the vanilla Local-SGD, various acceleration methods are considered; we list a few representatives here. \cite{karimireddy2020scaffold} adopted the idea of variance reduction technique for non-distributed finite sum problems: a 'control variate' which contains historical full gradient information is used to correct the bias of local gradients. Then in~\cite{karimireddy2020mime}, the authors proposed a general framework (MIME) to translate a centralized optimizer into the FL setting, including adaptive gradient methods. In MIME, the states of an optimizer are fixed during local update steps and only updated at the server-average step. In~\cite{das2020faster, khanduri2021near}, momentum-based variance reduction is applied to the FL setting to control the noise of the stochastic gradients. In \cite{das2020faster}, the authors maintained a server momentum state and a client momentum state, while in \cite{khanduri2021near}, the authors maintained a momentum state and the momentum was averaged periodically similar to the primal state.

Adaptive gradient methods are also studied in the FL setting. The 'Adaptive Federated Optimization'~\cite{reddi2020adaptive} method proposed to use adaptive gradients on the server side while the local gradients are used to update the states of the adaptive gradient methods. In~\cite{chen2020toward}, the authors first showed the divergence of a naive local AMSGrad method that directly averages the primal states periodically. The authors then proposed Local-AMSGrad, a method in which clients update adaptive learning rates locally and average at the synchronization step. At the server average step, both primal states and local adaptive learning rates are averaged to replace the old states. 
Finally, another line of research~\cite{tang2020apmsqueeze, tang20211,lu2022maximizing, chen2020efficient} considers federated adaptive learning rates through the compression approach, these methods communicate local gradients at every step, but the compression techniques are used to reduce the communication cost.

\textbf{Adaptive Gradients in the Non-distributed Learning.} Adaptive gradient methods are widely used in the non-distributed machine learning setting. The first adaptive gradient method \emph{i.e.} Adagrad was proposed in~\cite{duchi2011adaptive}, where the method was shown to outperform SGD in the sparse gradient setting. Since Adagrad does not perform well under dense gradient setting and non-convex setting, some of its variants are proposed, such as SC-Adagra~\cite{mukkamala2017variants} and SAdagrad~\cite{chen2018sadagrad}. Furthermore, Adam~\cite{kingma2014adam} and YOGI~\cite{zaheer2018adaptive} proposed to use the exponential moving average
instead of the arithmetic average used in Adagrad. Adam/YOGI is widely used and very successful in deep learning applications; however, Adam diverges in some settings and the gradient information quickly disappears, so AMSGrad~\cite{reddi2018convergence} is proposed, and it applies an extra `long term memory' variable to preserve the past gradient information to handle the convergence issue of Adam. The convergence of Adam-type methods is also studied in the literature~\cite{chen2019convergence, zhou2018convergence, liu2019variance, guo2021stochastic, huang2021super}. Adaptive gradient methods with good generalization performance are also proposed, such as AdamW~\cite{loshchilov2018decoupled}, Padam~\cite{chen2018closing}, Adabound~\cite{luo2019adaptive}, Adabelief~\cite{zhuang2020adabelief} and AaGrad-Norm~\cite{ward2019adagrad}.


\section{Preliminaries}
In this section, we introduce some preliminaries before introducing our framework. First, we consider the following formulation of Federated Learning:
\begin{align}
\label{Eq: Problem}
     \min_{x \in \mathcal{X} \subset \mathbb{R}^d} ~&\bigg\{  f(x) \coloneqq \frac{1}{K} \sum_{k = 1}^K \big\{ f^{(k)}(x) \coloneqq \mathbb{E}_{\xi^{(k)} \sim \mathcal{D}^{(k)}}[ f^{(k)}(x; \xi^{(k)})] \big\}\bigg\}.
\end{align}
which considers $K$ clients. For the $k_{th}$ client, we optimize the loss objective $f^{(k)} (x):\mathcal{X} \to \mathbb{R} $ which is smooth and possibly non-convex, and $x$ denotes the variable of interest. $\mathcal{X}\subset \mathbb{R}^{d}$ is a compact and convex set. $\xi^{(k)} \sim \mathcal{D}^{(k)}$ is a random example that follows an unknown data distribution $\mathcal{D}^{(k)}$. The formulation in~\eqref{Eq: Problem} includes both the homogeneous case \emph{i.e.} $f^{(k)} (x) = f^{(j)} (x)$ for any $k,j \in [K]$, and the heterogeneous case \emph{i.e.} $f^{(k)} (x) \neq f^{(j)} (x)$ for some $k,j \in [K]$.

Next, we introduce some basics of adaptive gradient methods from a mirror-descent perspective. Generally, mirror descent is associated with a mirror map $\Phi(x)$. Given the objective $f(x)$ and the primal state $x_t \in \mathcal{X}$ at $t_{th}$ step, we first map the primal state to the mirror space as $y_t = \nabla \Phi(x_t)$, then we perform the gradient descent step in the mirror space: $y_{t+1} = y_t - \eta\nabla f(x)$, where $\eta$ is the learning rate, finally, we map $y_{t+1}$ back to the primal space as $x_{t+1} = \underset{x \in \mathcal{X}}{\arg\min} D_{\Phi}(x, y_{t+1})$, where $D_{\Phi}(x,y)$ denotes the Bregman Divergence associated to $\Phi$, \emph{i.e.} $D_{\Phi}(x,y) = f(x) - f(y) - \langle \nabla f(y), x - y\rangle$, In summary, the mirror descent step can be written as a Bregman proximal gradient step as follows: \[x_{t+1} = \underset{x \in \mathcal{X}}{\arg\min}\ \eta \langle \nabla f(x_t), x \rangle + D_{\Phi}(x, x_{t})\]
For the adaptive gradient methods, we uses the following mirror map: $\Phi(x) = \frac{1}{2}x^THx$, where $H$ is the adaptive matrix and is positive definite. Many adaptive gradient methods can be written in the following proximal gradient descent form:
\begin{align}\label{eq:adapt_update}
x_{t+1} = \underset{x \in \mathcal{X}}{\arg\min}\ \eta \langle \nu_t, x \rangle +  \frac{1}{2}(x - x_t)^TH_t(x - x_t),
\end{align}
we replace the gradient $\nabla f(x)$ with the generalized gradient estimation $\nu_t$, besides, we replace $H$ with $H_t$ based on the fact that the adaptive matrix is updated at every step. Next, we show some examples of adaptive gradients methods that can be phrased as the above formulation. For the Adagrad~\cite{duchi2011adaptive} method, we set 
\begin{align}\label{eq:ada1}
\nu_t = \nabla f(x_t, \xi_t),\; H_t = Diag(\sqrt{\mu_t}),\; \mu_t =\frac{1}{t}\sum_{i=1}^{t}\nu_i^2
\end{align}
For Adam~\cite{kingma2014adam}, we have:
\begin{align}\label{eq:ada2}
&\hat{\nu}_t = (1 - \beta_1)\nabla f(x_t, \xi_t) + \beta_1 \hat{\nu}_{t-1},\nonumber\\
&\hat{\mu}_t =(1 - \beta_2) \nabla f(x_t, \xi_t)^2 + \beta_2 \hat{\mu}_{t-1}\nonumber \\
&\nu_t = \hat{\nu}_t/(1 - \gamma_1^t),\; \mu_t = \hat{\mu}_t/(1 - \gamma_2^t),\nonumber\\
&H_t = Diag(\sqrt{\mu_t} + \epsilon)
\end{align}
where $\beta_1, \beta_2, \gamma_1, \gamma_2$ are some constants. For other adaptive gradient methods, please refer to~\cite{huang2021super}.

\begin{algorithm}[t]
\caption{\textbf{FedDA}-Server}
\label{alg:fedada-server}
\begin{algorithmic}[1]
\STATE {\bfseries Input:} Number of global epochs $E$, tuning parameters $\{\beta_{\tau}\}_{i=1}^{E}$;
\STATE {\bfseries Initialize:} Choose $x_0 \in \mathcal{X}$ and compute $\nu_0 = \frac{1}{K} \sum_{j=1}^{K} \nabla f^{(j)}(x_0, \mathcal{\mathcal{B}}^{(k)}_0)$ where $\{\mathcal{\mathcal{B}}^{(k)}_0\}_{k=1}^K$ are a mini-batch of random points selected from each of $K$ clients;
\FOR{$\tau=0$ \textbf{to} $E - 1$}
\STATE Server selects a set $\mathcal{S}_{\tau}$  of $r$ clients chosen uniformly at random w/o replacement;
\FOR{the client $k \in \mathcal{S}_\tau$ in parallel}
\STATE  ($z^{(k)}_{\tau+1, I}$, $\nu^{(k)}_{\tau + 1, I}$) = \textbf{FedDA}-client($x_\tau$, $\nu_{\tau}$, $H_\tau$)
\ENDFOR
\STATE Compute $z_{\tau + 1} = \frac{1}{r} \sum_{k \in \mathcal{S}_{\tau}}  z^{(k)}_{\tau+1, I}$;
\STATE Compute $x_{\tau+1} = \underset{x \in \mathcal{X}}{\arg\min} \{ -\langle x, z_{\tau + 1}\rangle + \frac{1}{2\lambda} (x - x_{\tau})^{T}H_{\tau}(x - x_{\tau})$ \};
\STATE Compute $\nu_{\tau + 1} = \frac{1}{r} \sum_{k \in \mathcal{S}_{\tau}}  \nu^{(k)}_{\tau+1, I}$;
\STATE Compute $H_{\tau+1} = \mathcal{V}(H_{\tau}, z_{\tau+1})$;
\ENDFOR
\end{algorithmic}
\end{algorithm}

\begin{algorithm}[t]
\caption{\textbf{FedDA}-Client ($x_\tau, \nu_\tau$, $H_\tau$)}
\label{alg:fedada-client}
\begin{algorithmic}[1]
\STATE {\bfseries Input:} Number of local steps $I$, tuning parameters $\{\eta_{\tau + 1,i}\}_{i=0}^{I-1}$, $\{\alpha_{\tau + 1,i}\}_{i=1}^{I}$;
\STATE {\bfseries Initialize:} $x^{(k)}_{\tau + 1,0} = x_\tau$; $\nu^{(k)}_{\tau + 1,0} = \nu_\tau$; $z^{(k)}_{\tau+1,0} = 0$;
\FOR{$i=0$ \textbf{to} $I-1$}
\STATE Compute $z^{(k)}_{\tau + 1,i+1} = z^{(k)}_{\tau + 1, i} - \eta_{\tau + 1, i} \nu_{\tau + 1, i}^{(k)}$;
\STATE Compute $x^{(k)}_{\tau + 1, i+1} = \underset{x \in \mathcal{X}}{\arg\min} \{ -\langle x, z^{(k)}_{\tau + 1,i+1}\rangle + \frac{1}{2\lambda} (x - x^{(k)}_{\tau + 1, 0})^{T}H_\tau(x - x^{(k)}_{\tau + 1, 0})$ \};
\STATE Compute $\nu^{(k)}_{\tau + 1, i+1} = \mathcal{U}(\nu^{(k)}_{\tau + 1, i}, x^{(k)}_{\tau + 1, i+1}, x^{(k)}_{\tau + 1, i}; \alpha_{\tau+1, i+1}, \mathcal{\mathcal{B}}_{\tau + 1, i+1}^{(k)})$, where $\mathcal{\mathcal{B}}_{\tau + 1, i+1}^{(k)}$ is a minibatch of random samples from the client $k$;
\ENDFOR
\STATE {\bfseries Output:} Send $z^{(k)}_{\tau + 1, I}$, $\nu^{(k)}_{\tau + 1, I}$ to the server.
\end{algorithmic}
\end{algorithm}

\section{Local Adaptive Gradients via Dual Averaging}
In this section, we introduce \textbf{FedDA}, a framework of federated adaptive gradient methods.  The procedure of \textbf{FedDA} is summarized in Algorithm~\ref{alg:fedada-server}. 

In Algorithm~\ref{alg:fedada-server}, we perform $E$ global steps and at each global step, we select a subset of clients for training. All selected clients at each step will run Algorithm~\ref{alg:fedada-client}. In Algorithm~\ref{alg:fedada-client}, clients receive the current model weight $x_\tau$, gradient estimation $\nu_\tau$ and adaptive gradient matrix $H_\tau$. The clients then perform $I$ local training steps: line 3- line 7 in Algorithm~\ref{alg:fedada-client}. For each step, we first accumulate the dual state in the variable $z^{(k)}_{\tau, i}$ (line 4), then we calculate the local primal state $x^{(k)}_{\tau, i} $ (line 5), which is a proximal gradient step similar to~\eqref{eq:adapt_update}. The function of this step is to map the aggregated dual state $z^{(k)}_{\tau, i}$ back to the primal space, and we use the primal state to query the gradient to update the estimation of the gradient $\nu^{(k)}_{\tau, i}$ (line 6). Note, we use a fixed adaptive matrix $H_{\tau}$ during local steps, this makes the clients share the same dual space. In line 6 of Algorithm~\ref{alg:fedada-client}, we update the gradient estimation $\nu^{(k)}_{\tau, i}$. The update rule $\mathcal{U}(\cdot)$ is general, \emph{e.g.},the momentum-based variance reduction update~\eqref{eq:nu-case1} and the momentum update~\eqref{eq:nu-case2} as follows ($\alpha_{\tau,i}$ is some constant):
\begin{align}
\label{eq:nu-case1}
    &\nu_{\tau+1, i+1}^{(k)} = \nabla f^{(k)} (x^{(k)}_{\tau+1, i+1}, \mathcal{\mathcal{B}}_{\tau+1, i+1}^{(k)}) + (1 - \alpha_{\tau+1, i+1})(\nu_{\tau+1, i}^{(k)} - \nabla f^{(k)} (x^{(k)}_{\tau+1, i}, \mathcal{\mathcal{B}}_{\tau+1, i+1}^{(k)}))
\end{align}
and
\begin{align}
\label{eq:nu-case2}
   &\nu_{\tau+1, i+1}^{(k)} = \alpha_{\tau+1, i+1}\nabla f^{(k)} (x^{(k)}_{\tau+1, i+1}, \mathcal{\mathcal{B}}_{\tau+1, i+1}^{(k)}) + (1 - \alpha_{\tau+1, i+1})\nu_{\tau+1, i}^{(k)}
\end{align}

After the client runs Algorithm~\ref{alg:fedada-client}, it returns the aggregated local dual states $z^{(k)}_{\tau + 1, I}$ and the local gradient estimation $\nu^{(k)}_{\tau + 1, I}$ to the server. The server first averages the local dual states (line 8 of Algorithm~\ref{alg:fedada-server}) to get $z_{\tau+1}$. We can average local dual states as all clients have a common dual space. The server then calculates the new primal states $x_{\tau+1}$ as in line 9 of Algorithm~\ref{alg:fedada-server}. Next, the gradient estimation $\nu_\tau$ is also updated by averaging local states (line 10 of Algorithm~\ref{alg:fedada-server}). Finally, we update the adaptive matrix $H_\tau$ (line 11 of Algorithm~\ref{alg:fedada-server}). The update rule $\mathcal{V}$ is general, \emph{e.g.}, 
\begin{align}
\label{eq:mu-case1}
    \mu_{\tau+1} &= \beta_{\tau+1}z_{\tau+1}^2/\eta_{\tau+1,I-1}^2 + (1 - \beta_{\tau+1})\mu_{\tau+1},\nonumber\\
    H_{\tau+1} &= Diag(\sqrt{\mu_{\tau+1}} + \epsilon)
\end{align}
and
\begin{align}
\label{eq:mu-case2}
    \mu_{\tau+1} &= \beta_{\tau+1}||z_{\tau+1}||/\eta_{\tau+1, I-1} + (1 - \beta_{\tau+1})\mu_{\tau+1},\nonumber\\
    H_{\tau+1} &= (\mu_{\tau+1} + \epsilon)I_d
\end{align}
where we set $\mu_0 = 0$, $\epsilon$ is some constant. In summary, Algorithm~\ref{alg:fedada-server} aggregates and averages dual states at each global round. The adaptive matrix $H_\tau$ is fixed during local updates and is refreshed on the server side at each global round. Since the algorithm uses a new mirror map (adaptive gradient matrix) at each global round, we call our framework to be restarted dual averaging.


\begin{remark}
In contrast to our dual-averaging strategy, some existing adaptive FL algorithms~\cite{praneeth2020mime} average the local primal states. In the unconstrained case, the primal and dual spaces are linear with each other, but in the constrained case, the linearity does not exist, and the averaging in the primal space and dual space is not equivalent. As we show in the subsequent theoretical analysis, dual averaging leads to the convergence in the constrained case.
\end{remark}

\begin{remark}
Note that we use the averaged dual states $z_{\tau+1}$ when we update the adaptive matrix (line 11 of Algorithm~\ref{alg:fedada-server}). An alternative choice is to use the most recent gradient~\cite{praneeth2020mime}. In comparison, the dual state aggregates information of whole round and offers smoother estimation of the problem's local curvature. Another possible choice, as used in the Local-AMSGrad method~\cite{chen2020toward}, is to update the state of the adaptive matrix $\mu_\tau$ (see~\eqref{eq:mu-case1}) locally and then average in the server synchronization step. The limitation of this approach is that $\mu_\tau$ is not linear \emph{w.r.t} gradient, and thus averaging $\mu_\tau$ does not offer a linear speed-up \emph{w.r.t.} the number of clients; in contrast, the dual state satisfies linearity.
\end{remark}

\begin{remark}
By choosing different update rules $\mathcal{U}$ and $\mathcal{V}$, we can create many variants of FedDA. An representative is \textbf{FedDA-MVR}, in which we update $\nu_{\tau,i}^{(k)}$ with momentum-based variance reduction (~\eqref{eq:nu-case1}) and the adaptive matrix $H_{\tau}$ with an exponential average of the square of the gradient (~\eqref{eq:mu-case1}). In the subsequent discussion, we focus on this variant and perform both theoretical and empirical analysis.
\end{remark}

\section{Theoretical Analysis}
In this section, we provide the theoretical analysis of our \textbf{FedDA} framework; more specifically, we focus on the analysis of \textbf{FedDA-MVR}. FedDA-MVR uses ~\eqref{eq:mu-case1} to update the adaptive matrix $H_\tau$ and~\eqref{eq:nu-case1} to update the gradient estimation $\nu^{(k)}_{\tau,i}$. We first state the assumptions we need in our analysis:
\subsection{Some Mild Assumptions}
\begin{assumption}[Bounded Client Heterogeneity]
\label{ass: inter_variance} The difference of gradients between different workers are bounded: \[\| \nabla f^{(k)} (x) - \nabla f^{(\ell)} (x) \|^2 \leq \zeta^2,\; \forall k,\ell \in [K].\]
\end{assumption}
We measure the heterogeneity of the clients in terms of gradient dissimilarity. The above assumption or its similar form is also exploited in the analysis of other Federated Learning Algorithms, such as in~\cite{khanduri2021stem, das2020faster}.
\begin{assumption} \label{ass:value}
The function $f(x)$ is bounded from below in $\mathcal{X}$, \emph{i.e.,} $f^* = \inf_{x\in \mathcal{X}} f(x)$.
\end{assumption}
\begin{assumption}[Unbiased and Bounded-variance Stochastic Gradient]
\label{ass: unbiased_bouned} The stochastic gradients are unbiased with bounded variance, \emph{i.e.}
$$\mathbb{E}[\nabla f^{(k)} (x ; \xi^{(k)})] = \nabla f^{(k)} (x)$$ and there exists a constant $\sigma$ such that 
\begin{align*}
&\mathbb{E}\| \nabla f^{(k)} (x ; \xi^{(k)}) - \nabla f^{(k)} (x) \|^2 \leq \sigma^2, \nonumber\\
&\qquad \forall~\xi^{(k)} \sim \mathcal{D}^{(k)},\; \forall~k\in[K]
\end{align*}
\end{assumption}
Assumption~\ref{ass:value} guarantees the feasibility of the Federated Learning problem~\eqref{Eq: Problem}, and Assumption~\ref{ass: unbiased_bouned} is widely used in stochastic optimization analysis.
\begin{assumption} \label{ass:postive_definite}
The adaptive matrix $H_\tau$ is symmetric positive definite, \emph{i.e.} there exists a constant $\rho > 0$ such that \[ H_\tau \succeq \rho I_d \succ 0,\; \forall t\geq 1,\] 
\end{assumption}
In our analysis, we assume the adaptive matrix is positive definite, and this requirement can be easily satisfied by many adaptive gradient methods. Firstly, most adaptive gradient methods always have non-negative adaptive learning rates, such as~\eqref{eq:ada1} and~\eqref{eq:ada2}. To make it positive, we can add a bias term $\epsilon$ such as in the Adam update rule~\eqref{eq:ada2}.

\begin{assumption}[{Sample Gradient Lipschitz Smoothness}]
\label{Ass: Lip_Smoothness}
The stochastic functions $f^{(k)}(x, \xi^{(k)})$ with $\xi^{(k)} \sim \mathcal{D}^{(k)}$ for all $k \in [K]$, satisfy the mean squared smoothness property, i.e, we have 
\begin{align*}
&\mathbb{E} \| \nabla f^{(k)} (x ; \xi^{(k)}) -  \nabla f^{(k)} (y ; \xi^{(k)}) \|^2 \leq L^2   \| x - y \|^2 \nonumber\\
&\qquad \text{for all}~x,y \in \mathbb{R}^d
\end{align*}
\end{assumption}
The smoothness assumption above is a slightly stronger requirement than the standard smooth condition, but this assumption is widely used in the analysis of variance reduction methods, such as SPIDER~\cite{fang2018spider} and STORM~\cite{cutkosky2019momentum}.

\begin{assumption}
\label{Ass: Full_part}
All clients participate in the training at each step, \emph{i.e.} choose $r = K$ in Algorithm~\ref{alg:fedada-server}.
\end{assumption}
We make the full participation assumption to simplify the exposition of the theoretical results. All the results presented can be easily generalized to the partial participation case.

\subsection{Convergence Property of \textbf{Fed-MVR}}
In this subsection, we provide the convergence property of our \textbf{FedDA-MVR} variant. For convenience of discussion, we redefine the subscript $t = \tau I + i$, \emph{i.e.} we denote the $t$ step as the $i$ local step in the $\tau$ global round. Similarly, we denote the total number of running steps as $T = EI$. We analyze our algorithm through the following measure:
\begin{align}\label{eq:measure}
    \mathcal{G}_t = \frac{\rho^2}{\lambda^2\eta_t^2}||\tilde{x}_t - \tilde{x}_{t+1}||^2 + ||\bar{\nu}_t - \nabla f(\tilde{x}_t)||^2
\end{align}
where $\bar{\nu}_t$ denotes the average gradient estimation at the $t$ step and $\tilde{x}_t$ denotes the virtual global primal state at the $t$ step (see Section~\ref{sec:pre-local} in the appendix for formal definitions). In Remark~\ref{rem:1} of the appendix, we discuss the intuition of the measure $\mathcal{G}_t$. In particular, in the unconstrained case \emph{i.e.} when $\mathcal{X} = R^d$, the measure upper-bounds the square norm of the gradient. Therefore, the convergence of our measure $\mathcal{G}_t$ means the convergence to a first-order stationary point. Now, we are ready to provide the main result of our convergence theorem.
\begin{theorem} 
\label{Thm: PR_Convergence_Main} In Algorithm~\ref{alg:fedada-server}, we choose the parameters as $\displaystyle \kappa = \frac{\rho K^{2/3}}{\lambda L}$, $\displaystyle c = \frac{96\lambda^2L^2}{K\rho^2} + \frac{ \rho }{72 \kappa^3\lambda LI^2}$, $w_t=\max \bigg\{48^3 I^6 K^{2} - t - I,  14^3K^{0.5} \bigg\}$, $\lambda > 0$, and choose $\eta_{t} = \frac{\kappa}{(\omega_{t} + t + I)^{1/3}}$, then we have:
\begin{align*}
    \frac{1}{T} \sum_{t = 0}^{T-1} \mathbb{E}[\mathcal{G}_t] &\leq   \bigg[\frac{96LI^2}{T} + \frac{2L}{K^{2/3} T^{2/3}}\bigg] (f(x_0) -   f^\ast )  + \bigg[\frac{72 I^4}{bT}   + \frac{3I^2}{2b K^{2/3}T^{2/3}} \bigg] \sigma^2 \\
 &  \quad  + 192^2\times\bigg( \frac{48I^2}{T} + \frac{1}{K^{2/3} T^{2/3}} \bigg)\bigg(\frac{\sigma^2}{4b_1} + \frac{2\zeta^2}{21}\bigg) \log(T+1).
\end{align*}
\end{theorem}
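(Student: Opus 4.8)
The plan is to follow the standard template for momentum-based variance-reduced analysis (in the style of STORM/SPIDER), adapted to the dual-averaging and constrained federated setting. First I would set up the \emph{virtual iterate} framework: define the virtual global dual state as the average over clients of the accumulated dual variables $z^{(k)}$, and the virtual global primal state $\tilde{x}_t$ as its image under the inverse mirror map determined by $H_\tau$. Because the adaptive matrix is frozen during each global round (lines 6 and 11 of Algorithm~\ref{alg:fedada-server}), all clients share the same mirror map within a round, so the averaging in line 8 is well defined in the dual space and $\tilde{x}_t$ evolves by a single dual-averaging step with the averaged estimator $\bar{\nu}_t$. Inspecting the stationarity condition of the proximal map in line 9 gives $\tilde{x}_{t+1}-\tilde{x}_t \approx \lambda\eta_t H_\tau^{-1}\bar{\nu}_t$ up to the projection onto $\mathcal{X}$, so that $\|\tilde{x}_{t+1}-\tilde{x}_t\|\lesssim(\lambda\eta_t/\rho)\|\bar{\nu}_t\|$; this is exactly what the first term of $\mathcal{G}_t$ measures, and the lower bound $H_\tau\succeq\rho I_d$ from Assumption~\ref{ass:postive_definite} is what converts dual-space increments into primal-space increments.

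Next I would prove two recursions. First, a \emph{descent inequality} for $f(\tilde{x}_{t+1})$ in terms of $f(\tilde{x}_t)$: using the mean-squared smoothness (Assumption~\ref{Ass: Lip_Smoothness}) together with the optimality condition of the proximal step, I expect $f(\tilde{x}_{t+1}) \le f(\tilde{x}_t) - \tfrac{c_1}{\eta_t}\|\tilde{x}_t-\tilde{x}_{t+1}\|^2 + c_2\eta_t\|\bar{\nu}_t-\nabla f(\tilde{x}_t)\|^2 + (\text{drift})$. Second, a \emph{variance recursion} for the estimation error; the momentum-variance-reduction update~\eqref{eq:nu-case1} yields the characteristic bound $\mathbb{E}\|\bar{\nu}_{t+1}-\nabla f(\tilde{x}_{t+1})\|^2 \le (1-\alpha_{t+1})^2\mathbb{E}\|\bar{\nu}_t-\nabla f(\tilde{x}_t)\|^2 + 2\alpha_{t+1}^2\sigma^2/(Kb) + 2L^2\|\tilde{x}_{t+1}-\tilde{x}_t\|^2$, where the $1/K$ factor is the linear speed-up obtained from averaging the estimator states across clients, responsible for the $K^{2/3}$ in the final bound.

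The heart of the argument is a \emph{client-drift} bound controlling $\sum_i \|x^{(k)}_{\tau,i}-\tilde{x}_t\|^2$, the deviation of each local primal iterate from the virtual global one accumulated over the $I$ inner steps. I would bound this by unrolling lines 4--5 of Algorithm~\ref{alg:fedada-client}, using $H_\tau\succeq\rho I_d$ to pass from dual increments to primal increments and then summing the resulting geometric-like terms; the heterogeneity bound (Assumption~\ref{ass: inter_variance}) enters here through the $\zeta^2$ term and the stochastic-gradient variance (Assumption~\ref{ass: unbiased_bouned}) through the $\sigma^2$ terms. This drift feeds back into both recursions as the discrepancy between the averaged local estimator $\bar{\nu}_t$ and gradients evaluated at the true virtual point, and it is what produces the $I^2$ and $I^4$ factors (lower-order, $O(1/T)$) in the statement.

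Finally I would assemble a Lyapunov function of the form $\Psi_t = f(\tilde{x}_t) + \frac{c}{\eta_{t-1}}\|\bar{\nu}_t-\nabla f(\tilde{x}_t)\|^2$ and show, using the prescribed $\kappa$, $c$ and $w_t$, that $\mathbb{E}[\Psi_{t+1}-\Psi_t] \le -\Theta(\eta_t)\,\mathbb{E}[\mathcal{G}_t] + (\text{noise})_t$, the step-size being engineered so the negative coefficient dominates the positive contributions from the two recursions. Telescoping from $t=0$ to $T-1$, dividing by $\sum_t\eta_t\sim T^{2/3}$, and inserting the sums $\sum_t\eta_t\sim T^{2/3}$, $\sum_t\eta_t^2\sim T^{1/3}$, $\sum_t\eta_t^3\sim\log T$ yields the stated bound; in particular the $\log(T+1)$ factor arises precisely from the $\sum_t\eta_t^3$ sum generated by accumulating the per-step stochastic noise of the variance-reduced estimator. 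I expect the main obstacle to be the drift analysis in the \emph{constrained} case: since the inverse mirror map $\arg\min_{x\in\mathcal{X}}\{\cdot\}$ is nonlinear, the average-then-project point differs from the average of the projected local points, so one must use nonexpansiveness of the projection in the $H_\tau$-norm (and the lower bound $\rho$) to keep the drift, descent, and variance terms aligned, rather than relying on the linearity that holds only in the unconstrained setting.
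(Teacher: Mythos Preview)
Your plan is correct and mirrors the paper's own proof essentially step for step: the paper defines the virtual primal sequence $\tilde{x}_{\tau,i}$ through the averaged dual state $\bar{z}_{\tau,i}$, proves a descent inequality for $f(\tilde{x}_{\tau,i})$ (Lemma~\ref{lem:A3}), a STORM-type variance contraction for $\bar{e}_{\tau,i}=\bar{\nu}_{\tau,i}-\nabla f(\tilde{x}_{\tau,i})$ (Lemmas~\ref{lem:A9}, \ref{lem:grad_est_err}), combines them via the potential $\Phi_\tau=f(\tilde{x}_\tau)+\tfrac{\rho K}{64\lambda L^2}\|e_\tau\|^2/\eta_{\tau-1,I-1}$ (Lemma~\ref{Lem:A12}), and telescopes with the stated step-size schedule so that $\sum_t\eta_t^3\lesssim\log(T+1)$. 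One refinement worth flagging: the drift you describe actually splits into \emph{two} coupled recursions in the paper---the dual-state consensus $\sum_k\|z_i^{(k)}-\bar z_i\|^2$ (Lemmas~\ref{lem:A8.1}, \ref{lem:A8.2}) and a separate gradient-estimator consensus $\sum_k\|\nu_i^{(k)}-\bar\nu_i\|^2$ (Lemmas~\ref{lem:A13}, \ref{lem:A14}); the latter is itself unrolled over the round and then bounded back by $\sum_i\eta_i\|\tilde d_i\|^2$ plus noise, and it is this second recursion that brings in the $\zeta^2$ heterogeneity term, so make sure your drift analysis tracks both. Also, the constrained-case control is handled not via nonexpansiveness per se but by pairing the first-order optimality conditions of the two proximal problems $\psi_i^{(k)}$ and $\tilde\psi_i$ (Lemma~\ref{lem:A0}, claim~3) to obtain $\rho\|x_i^{(k)}-\tilde x_i\|\le\lambda\|z_i^{(k)}-\bar z_i\|$ directly.
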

Note, by choosing a proper value of local updates $I$ and using a minibatch of samples for the first iteration to decrease the noise, our result matches the best known convergence rate for stochastic federated gradient methods~\cite{khanduri2021stem}, \emph{i.e.} our algorithms has gradient complexity of $\tilde{O}(\epsilon^{-1.5})$ and communication complexity of $\tilde{O}(\epsilon^{-1})$, moreover we achieve linear speed up \emph{w.r.t} the number of clients $K$. More formally, we have the following corollary:
\begin{corollary}
Suppose in Algorithm~\ref{alg:fedada-server}, we set $I = O((T/K^2)^{1/6})$, and use sample minibatch of size O($I^2$) in the initialization, then we have:
\begin{align*}
   \frac{1}{T} \sum_{t = 1}^T \bigg( \mathbb{E} [\mathcal{G}_t]\bigg) = \tilde{O}(\frac{1}{K^{2/3}T^{2/3}}) 
\end{align*}
and to reach an $\epsilon$-stationary point, we need to make $\tilde{O}(\epsilon^{-1.5}/K)$ number of steps and need $\tilde{O}(\epsilon^{-1})$ number of communication rounds.
\end{corollary}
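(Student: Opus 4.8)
The plan is to derive the corollary directly from Theorem~\ref{Thm: PR_Convergence_Main}: I would substitute the prescribed schedule $I = \Theta((T/K^2)^{1/6})$ and initialization batch size $b = \Theta(I^2)$ into the right-hand side of the theorem's bound and check that each of its terms collapses to $O(1/(K^{2/3}T^{2/3}))$ up to a single $\log(T+1)$ factor. The identity doing all the work is that this choice of $I$ gives $I^2 = \Theta(T^{1/3}/K^{2/3})$, so that $I^2/T = \Theta(1/(K^{2/3}T^{2/3}))$, which is exactly the claimed rate; every $I$-dependent prefactor in the theorem is built from $I^2/T$ or $I^4/(bT)$ and therefore reduces to this scale.

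With that identity the term-by-term verification is mechanical. The pieces already at the target rate, namely $2L/(K^{2/3}T^{2/3})$ and the $1/(K^{2/3}T^{2/3})$ part of the last group, need no work. The terms carrying $96LI^2/T$ and $48I^2/T$ reduce to the target rate by the identity above. For the two variance terms with $b$ in the denominator I would use $b = \Theta(I^2)$ to rewrite $I^4/(bT)$ as $I^2/T$ and $I^2/(bK^{2/3}T^{2/3})$ as $1/(K^{2/3}T^{2/3})$; this is precisely the role of the $\Theta(I^2)$ initialization batch. The local minibatch $b_1$ may be kept at constant size, so $\sigma^2/(4b_1)+2\zeta^2/21$ is an absolute constant, and the $\log(T+1)$ multiplying the last group is what turns the final $O$ into $\tilde{O}$. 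Summing the finitely many terms yields $\frac{1}{T}\sum_{t}\mathbb{E}[\mathcal{G}_t] = \tilde{O}(1/(K^{2/3}T^{2/3}))$, the first claim.

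To obtain the complexities I would invert this rate. Imposing $\tilde{O}(1/(K^{2/3}T^{2/3})) \le \epsilon$ and solving for $T$ gives $T = \tilde{O}(\epsilon^{-1.5}/K)$ local steps, the stated step count (which, since $\mathcal{G}_t$ upper-bounds $\|\nabla f\|^2$ in the unconstrained case, corresponds to an $\epsilon$-stationary point). The gradient complexity follows by counting queries: under the MVR rule~\eqref{eq:nu-case1} each client spends $O(1)$ gradient evaluations per local step plus the one-time batch $b = \Theta(I^2)$, so the total is $\Theta(KT + Kb) = \tilde{O}(\epsilon^{-1.5})$. For communication I would use $E = T/I$; since $I = \Theta((T/K^2)^{1/6})$ this is $E = \Theta(T^{5/6}K^{1/3})$, and substituting $T = \tilde{O}(\epsilon^{-1.5}/K)$ gives $E = \tilde{O}(\epsilon^{-5/4}/K^{1/2})$, which is $\tilde{O}(\epsilon^{-1})$ in the standard many-clients regime $K = \Omega(\epsilon^{-1/2})$ (equivalently, choosing $I$ as large as the $O(\cdot)$ permits minimizes $E$).

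The corollary is routine once the theorem is granted, so I do not anticipate a genuine mathematical obstacle here; the real difficulty lives in Theorem~\ref{Thm: PR_Convergence_Main} itself. The only places that require care are the bookkeeping relating the three quantities — total steps $T$, local steps per round $I$, and communication rounds $E = T/I$ — and the check that the $\Theta(I^2)$ initialization batch together with a constant local batch $b_1$ keeps every variance term at the target scale. I would also be careful when collapsing $E = \Theta(T^{5/6}K^{1/3})$ into a bound in $\epsilon$ alone, since that is where the $K$-dependence of the linear-speedup claim is easiest to mishandle.
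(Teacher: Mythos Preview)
Your proposal is correct and matches the paper's own argument essentially step for step: substitute $I=\Theta((T/K^2)^{1/6})$ and $b=\Theta(I^2)$ into the theorem's bound so every term collapses to $\tilde O(1/(K^{2/3}T^{2/3}))$, invert to get $T=\tilde O(\epsilon^{-1.5}/K)$, and compute $E=T/I=\Theta(T^{5/6}K^{1/3})$, which becomes $\tilde O(\epsilon^{-1})$ under the many-clients assumption $K\ge\sqrt{T}$ (equivalently $K=\Omega(\epsilon^{-1/2})$). You are in fact more explicit than the paper, which glosses the term-by-term check as ``straightforward'' and invokes the same $K\ge\sqrt{T}$ regime to close the communication bound.
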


\begin{figure}[t]
	\begin{center}
		\includegraphics[width=0.3\linewidth]{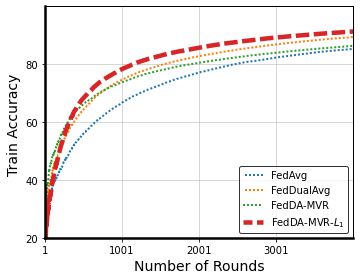}
		\includegraphics[width=0.3\linewidth]{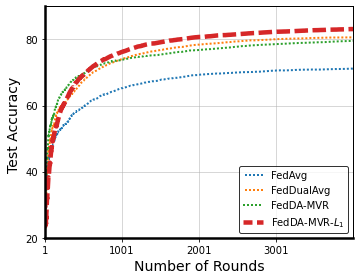}
		\includegraphics[width=0.3\linewidth]{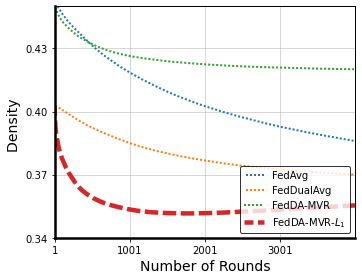}
		\caption{Results for the PATHMNIST~\cite{medmnistv2} dataset. Plots show the Train Accuracy, Test Accuracy, Density vs Number of Rounds ($E$ in Algorithm~\ref{alg:fedada-server}) respectively. The post-fix of $L_1$ means we consider the $L_1$ constraints. $I$ is chosen as 5.}
		\label{fig:bio}
	\end{center}
\end{figure}

\section{Numerical Experiments}
In this section, we perform numerical experiments to verify the efficacy of the proposed adaptive federated learning framework \emph{i.e.} \textbf{FedDA}. More specifically, we consider the variant of \textbf{FedDA-MVR} here, and defer experiments for other variants to Section~\ref{sec:more_exp} of the appendix. We performed two sets of experiments. In the first experiment, we consider a biomedical prediction task: predicting the survival of colorectal cancer. In this task, we impose a $L_1$ sparsity constraint. $L_1$ constraint improves the explainability of the model, which is essential for biomedical applications. Then in the second experiment, we consider a federated multiclass image classification task. More specifically, we consider two datasets: CIFAR10~\cite{krizhevsky2009learning} and FEMNIST~\cite{caldas2018leaf}.  All experiments are run on a machine with an Intel Xeon Gold 6248 CPU and 4 Nvidia Tesla V100 GPUs. The code is written in Pytorch. We simulate the Federated Learning environment through the Pytorch.distributed package.

\subsection{Colorrectal Cancer Survival Prediction with Sparse Constraints}
In this subsection, we consider a colorrectal cancer prediction task on the PATHMNST dataset~\cite{medmnistv2,kather2019predicting}, which contains 9 different classes. It has 89996 training images, and we equally randomly split the training set into 10 clients. We used the original test set for the metric. In this task, we impose the $L_1$ sparsity constraint to improve the explainability of the model.

In this task, we compare with the following baselines: FedAvg~\cite{mcmahan2017communication} and FedDualAvg~\cite{yuan2021federated}. FedDualAvg is a recently proposed federated algorithm that deals with composite optimization problems. In FedDualAvg, clients maintain dual states locally, but adaptive gradients are not applied. For our FedDA-MVR, we train with and without the $L_1$ constraint. In this task, we use a 4-layer convolutional neural network with 32 filters at each layer. We have 10 clients and run 20000 steps ($T$), average states with interval 5 ($I$) and use mini-batch size of 16. Besides, we calculate density with threshold 0.01. For other hyper-parameters, we perform grid search and choose the best setting for each method. More specifically, for the SGD method, we use learning rate 0.01; for the FedDualAvg algorithm, we use local learning rate 0.1, global learning rate 0.1, $L_1$ constraint 0.01; for our FedDA-MVR, we use learning rate 0.01, $w$ as 100000, $c$ as 5000000, $\beta$ as 0.999 and $\tau$ as 0.01, for the $L_1$ regularized version FedDA-MVR-$L_1$, we also add $L_1$ constraint 0.01. For other variants of FedDA: for FedDA-2-1, we use learning rate 0.001, $\alpha$ as 0.9, $\beta$ as 0.999, $\tau$ as 0.01; for FedDA-1-2, we use learning rate 1, $w$ as 10000, $c$ as 200, $\beta$ as 0.999, $\tau$ as 0.001, $L_1$ constraint 0.01; for FedDA-2-2, we use learning rate 0.01, $\alpha$ 0.9, $\beta$ as 0.999, $\tau$ as 0.01, $L_1$ constraint 0.01. 

The results are summarized in Figure~\ref{fig:bio}, the plots are averaged over 5 independent runs and then smoothed. In Figure~\ref{fig:bio}, FedDualAvg and FedDA-MVR-$L_1$ consider the $L_1$ constraint, while FedAvg and FedDA-MVR do not. We show results of Train/Test Accuracy and also the number of non-zero (below a threshold) elements in the parameter (\emph{i.e.} the rightmost plot in Figure~\ref{fig:bio}). As shown in the plots, FedDA-MVR-$L_1$ outperforms unconstrained FedDA-MVR in all metrics. This shows the importance of considering constrained problems in Federated Learning. Furthermore, FedDA-MVR-$L_1$ also outperforms FedAvg and FedDualAvg in all metrics. This shows that our algorithm can effectively exploit adaptive gradient information in the constrained case. For more details of this experiment, such as the hyper-parameter choices, please refer to Section~\ref{sec:more_exp} of the appendix.

\subsection{Image Classification Task with CIFAR10 and FEMNIST}
In this subsection, we consider an unconstrained image classification task for both homogeneous and heterogeneous cases. More specifically, we consider two datasets: CIFAR10~\cite{krizhevsky2009learning} and FEMNIST~\cite{caldas2018leaf}. CIFAR10 is a widely used image classification benchmark dataset which contains 50000 training images, and we construct both homogeneous and heterogeneous cases based on it. For the homogeneous case, we uniformly randomly distribute them into 10 clients. For the heterogeneous case, we create heterogeneity in the training set as follows: Suppose we have 10 clients, for $i_{th}$ client, we distribute $\rho$-percent samples of $i_{th}$ class, and $(1 - \rho)/9$-percent samples of other classes, where $0 < \rho \leq 1$. Note for $\rho$ close to 1, the $i_{th}$ client will be dominated by images of $i_{th}$ class, thus the data distribution among clients will be very different. In our experiments, we choose $\rho = 0.8$. This means the $i_{th}$ client has 4000 images of $i_{th}$ class and 111 images of other classes. This creates a high level of heterogeneity. Note that we use the original test set of CIFAR10. FEMNIST is a Federated dataset of hand-written digits; it contains hand-written digits of 3550 users (we randomly sample 500 users in our experiments). Data distribution of FEMNIST is heterogeneous for different writing styles of people. Note we randomly sample 50 users at each global round.  We run 20000 steps ($T$), average states with interval 5 ($I$) and use mini-batch size of 16. We use a 4-layer convolutional neural network with 64 filters at each layer.
For other hyper-parameters, we perform grid search and choose the best setting for each method.

\begin{figure}[t]
	\begin{center}
		\includegraphics[width=0.24\columnwidth]{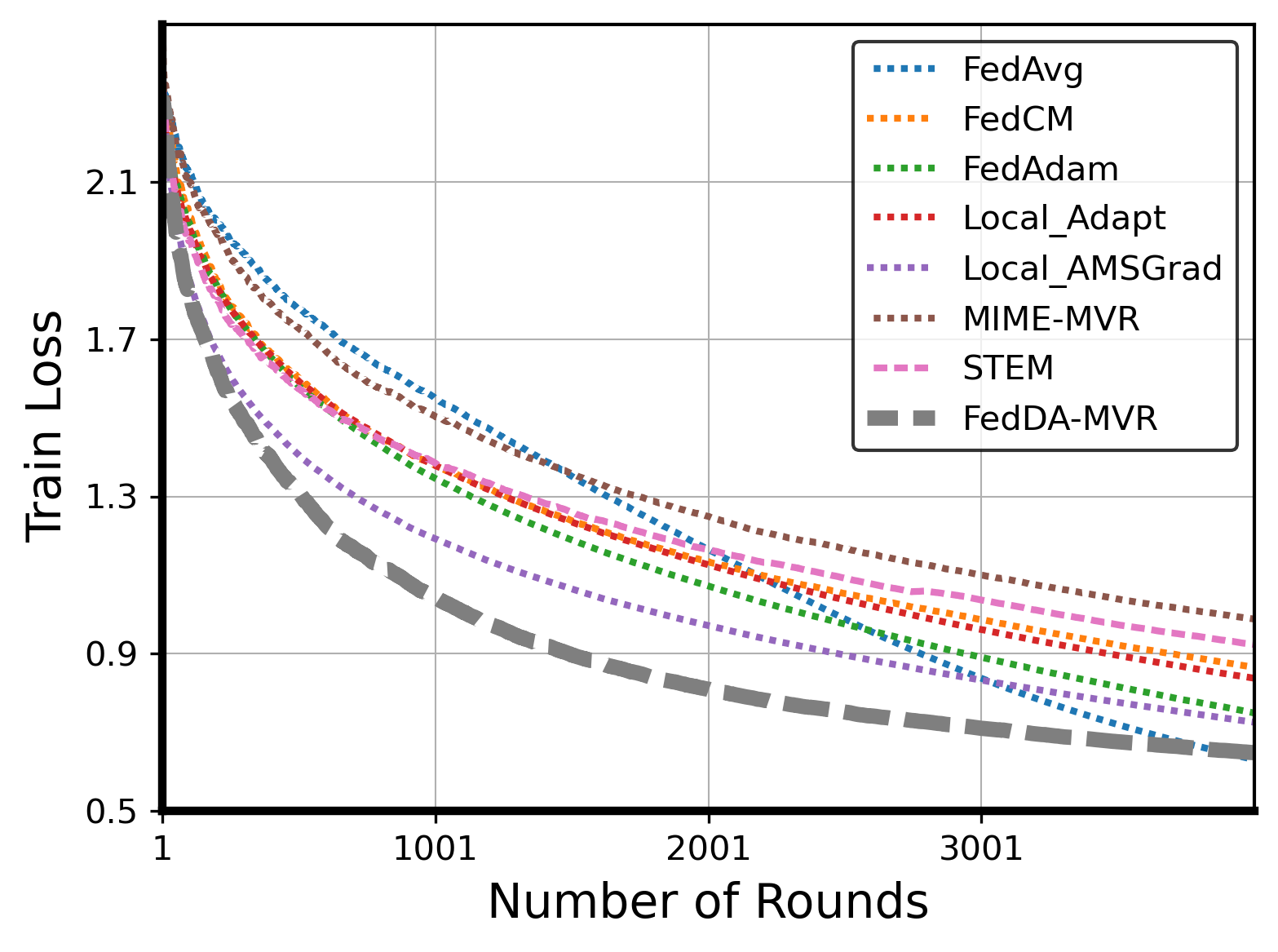}
		\includegraphics[width=0.24\columnwidth]{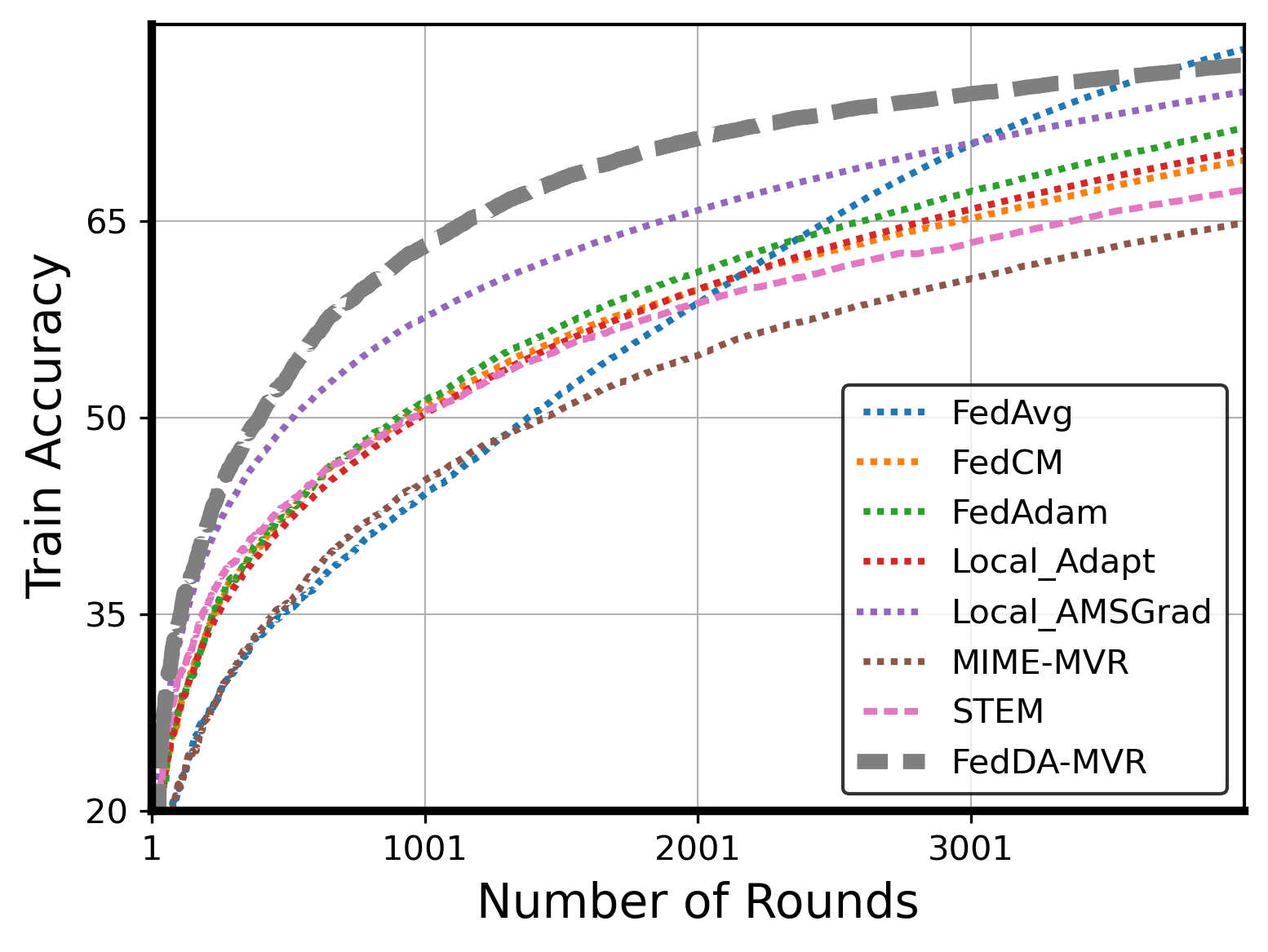}
		\includegraphics[width=0.24\columnwidth]{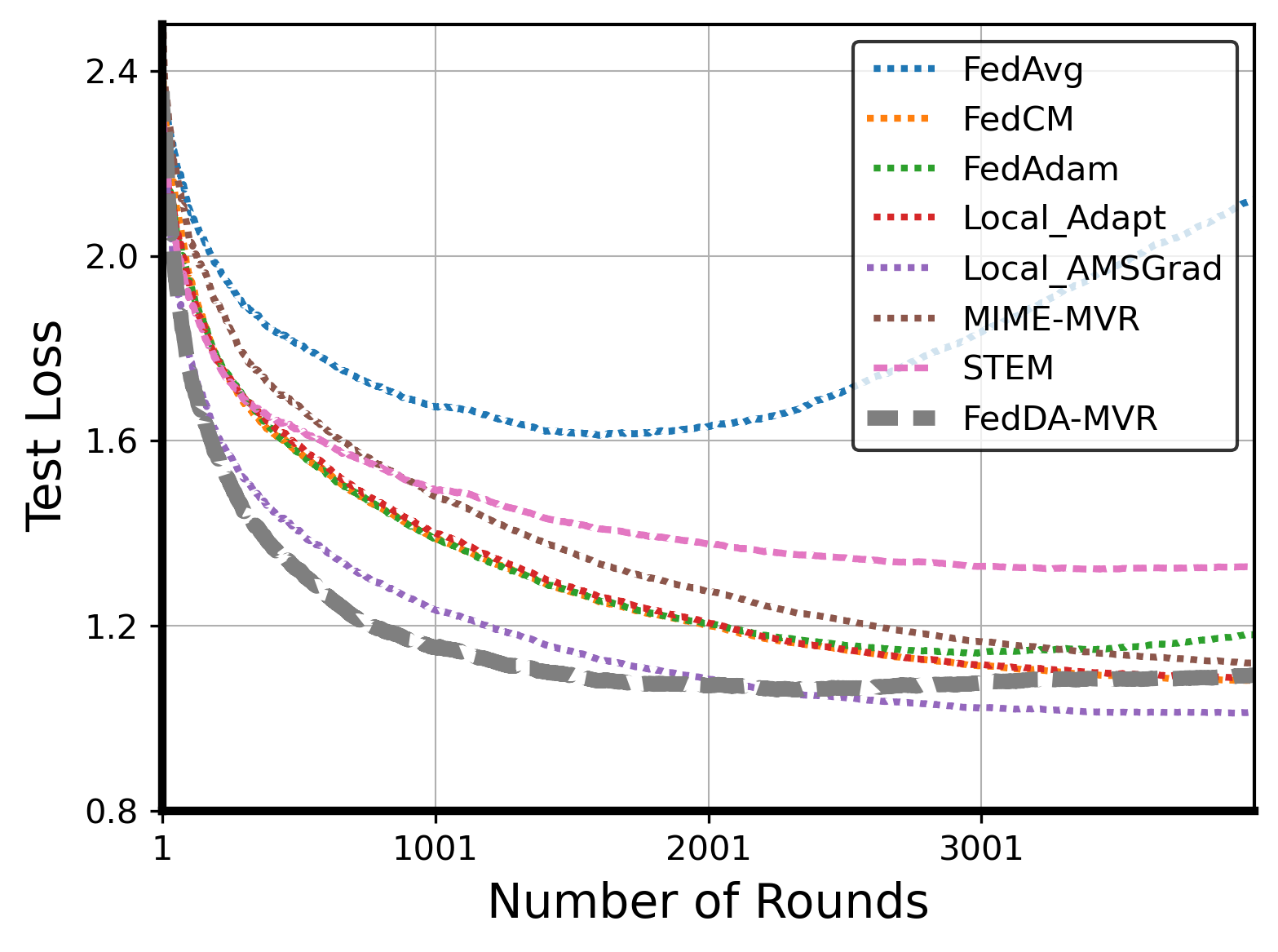}
		\includegraphics[width=0.24\columnwidth]{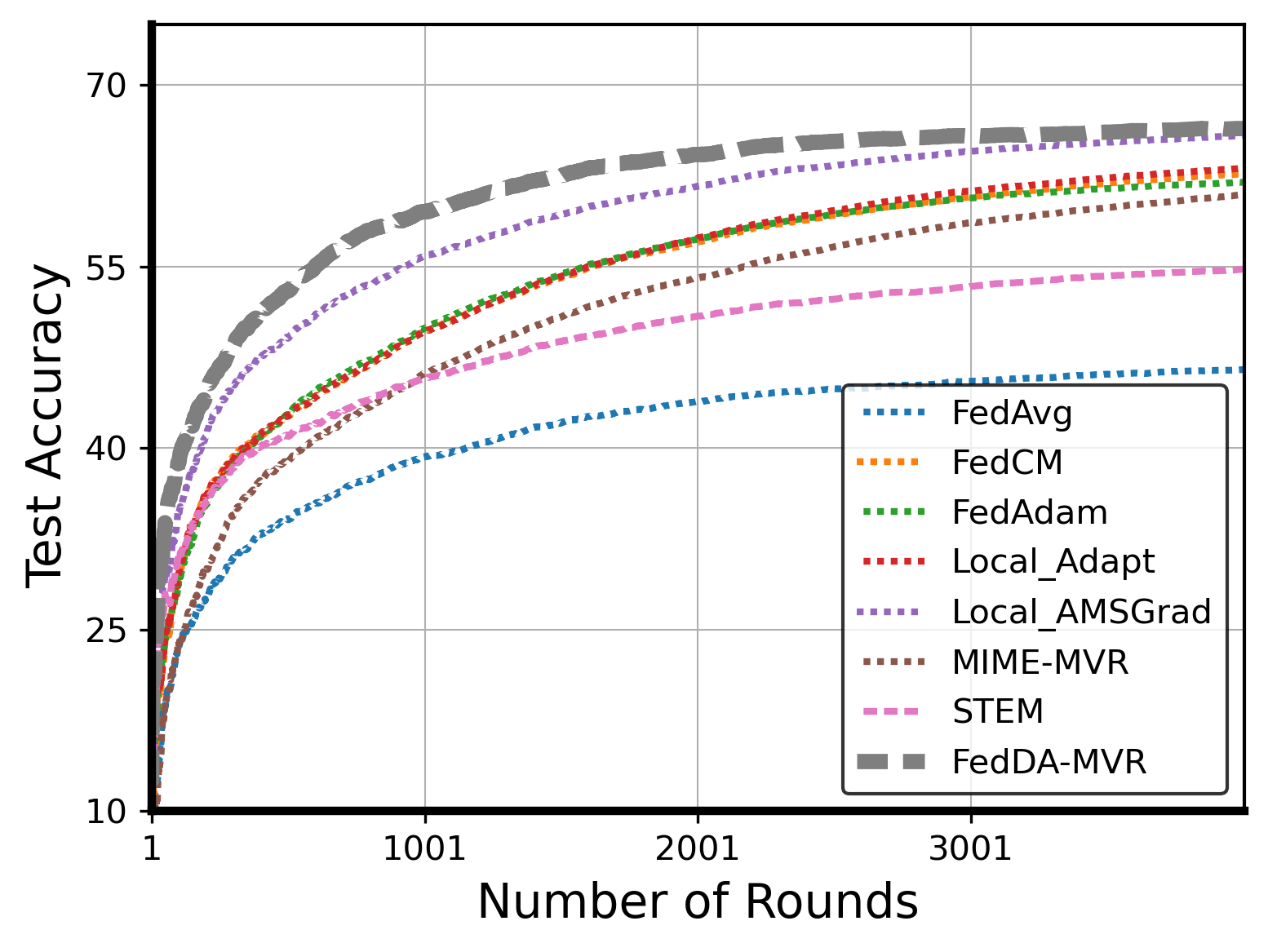}
		\caption{Results for CIFAR10 dataset. From left to right, we show Train Loss, Train Accuracy, Test Loss, Test Accuracy \emph{w.r.t} the number of rounds (E in Algorithm~\ref{alg:fedada-server}), respectively. $I$ is chosen as 5.}
		\label{fig:cifar}
	\end{center}
\end{figure}

\begin{figure}[t]
	\begin{center}
		\includegraphics[width=0.24\columnwidth]{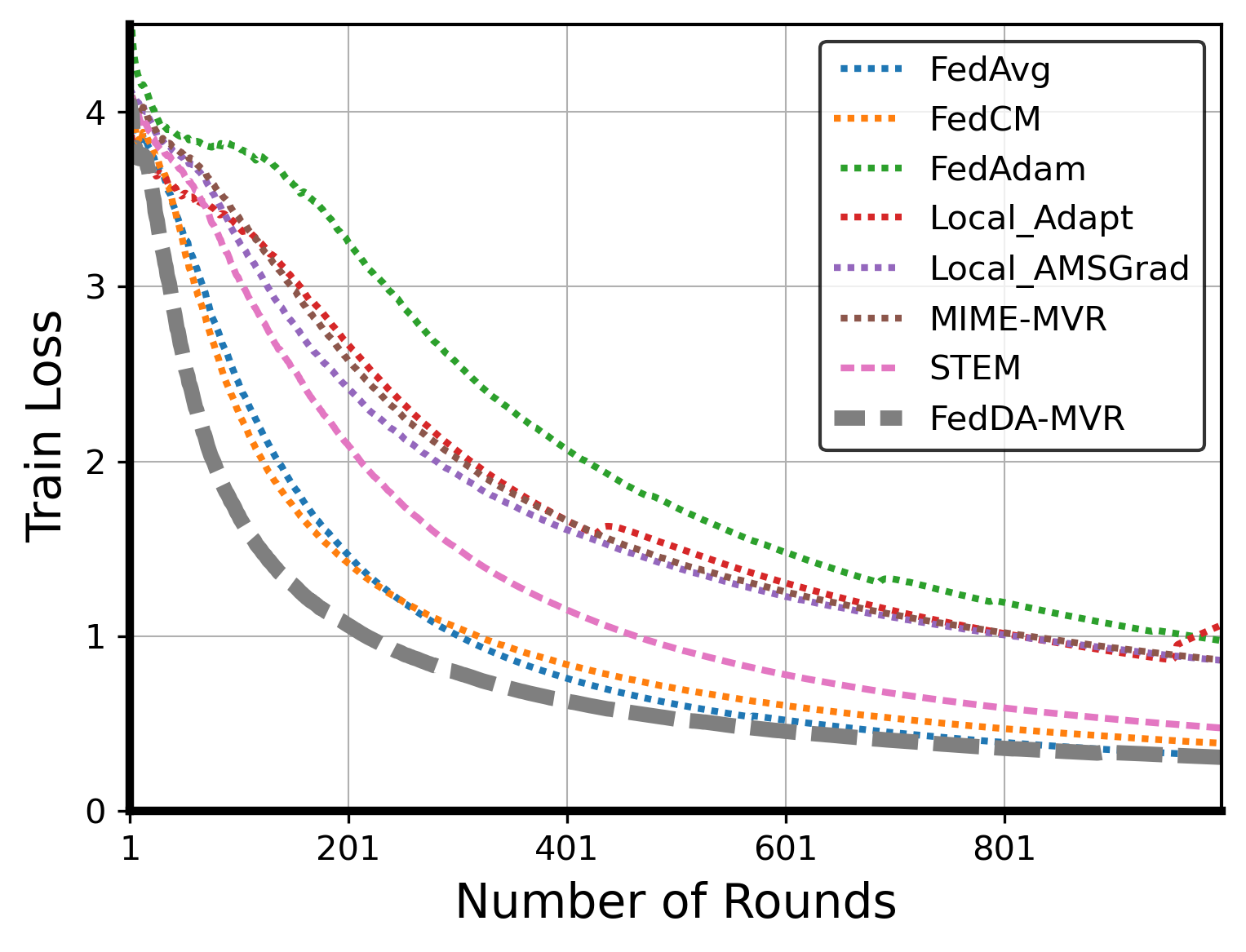}
		\includegraphics[width=0.24\columnwidth]{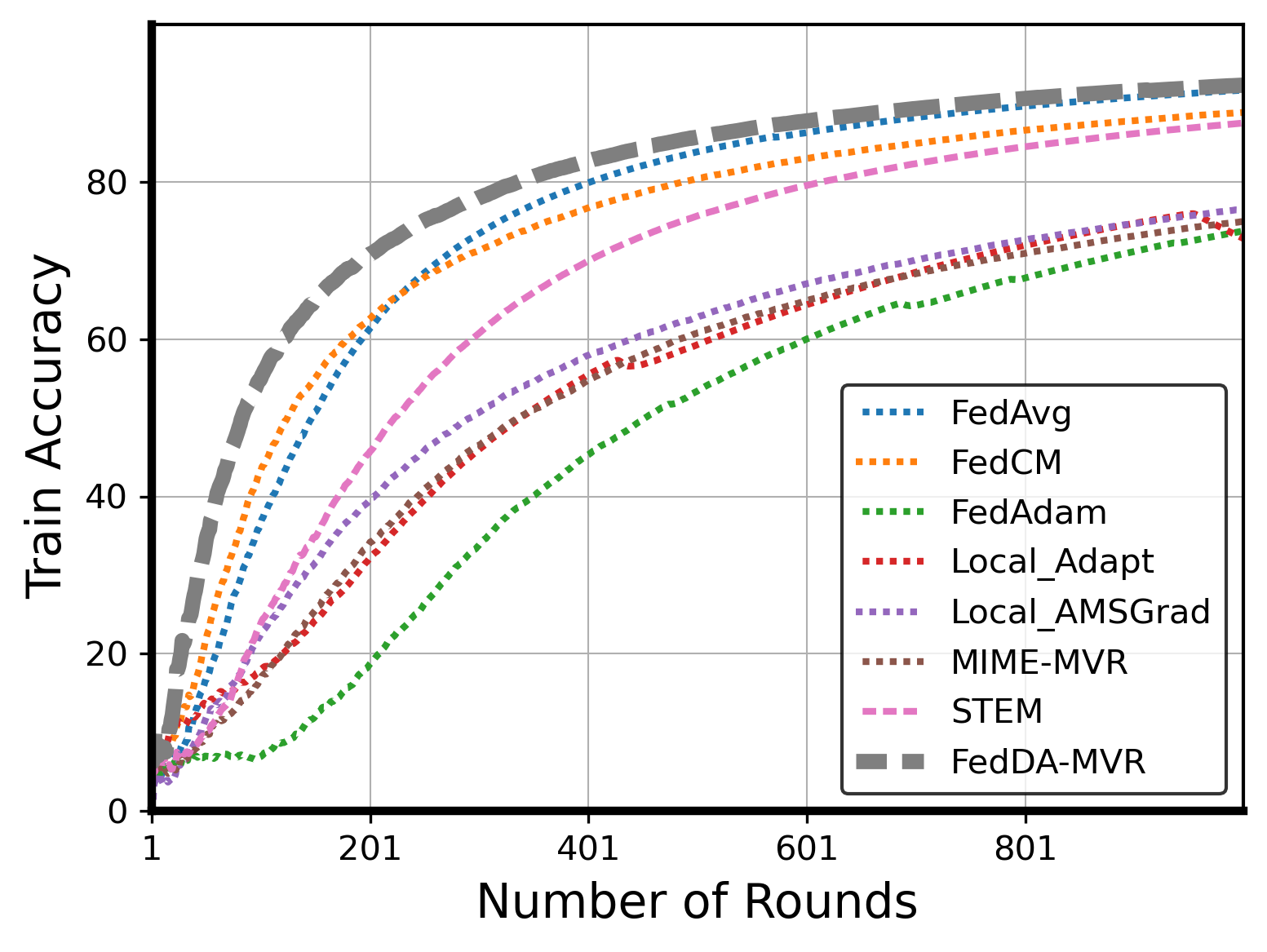}
		\includegraphics[width=0.24\columnwidth]{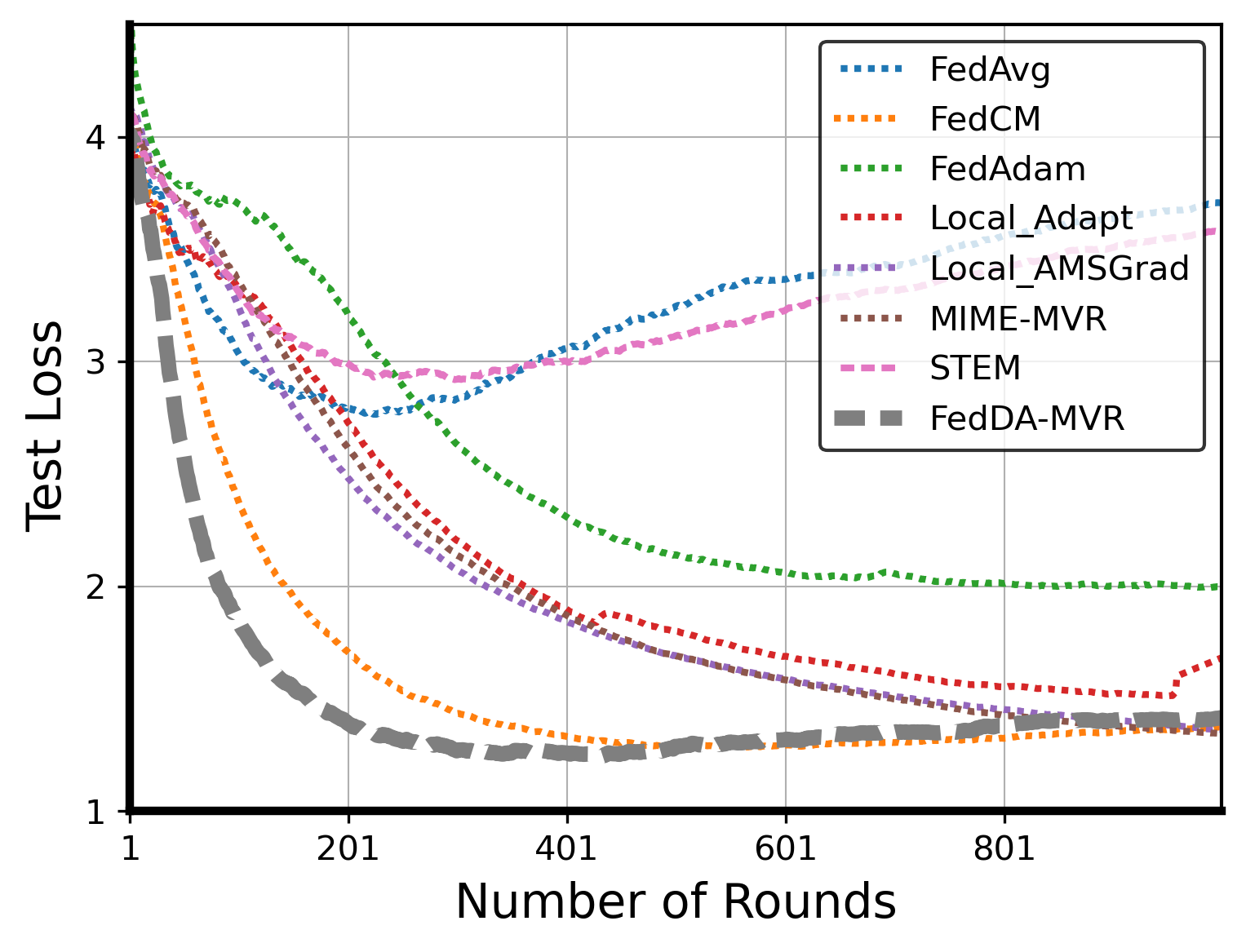}
		\includegraphics[width=0.24\columnwidth]{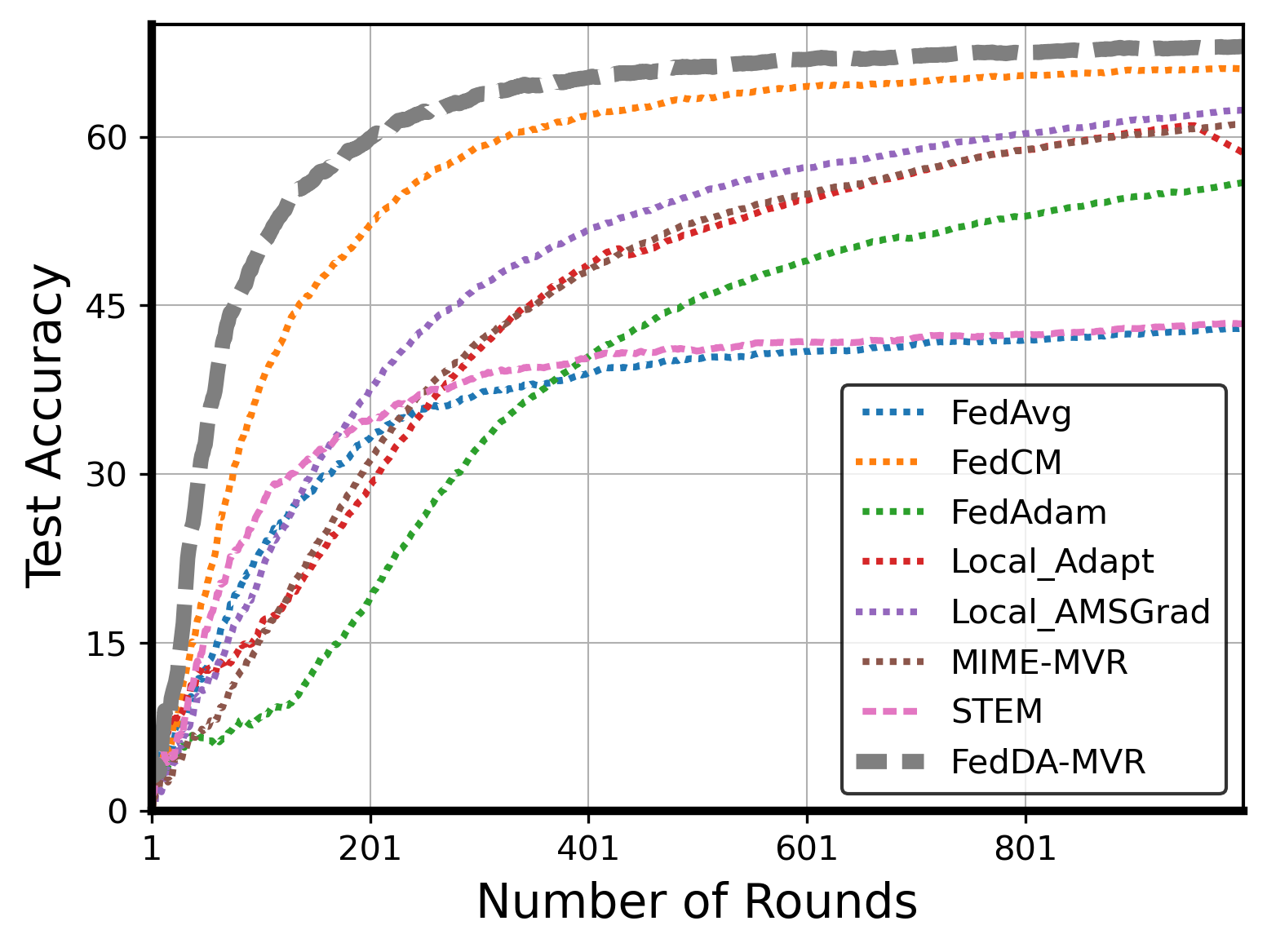}
		\caption{Results for FEMNIST dataset. From left to right, we show Train Loss, Train Accuracy, Test Loss, Test Accuracy \emph{w.r.t} the number of rounds (E in Algorithm~\ref{alg:fedada-server}), respectively. $I$ is chosen as 5.}
		\label{fig:femnist}
	\end{center}
\end{figure}

In this task, we compare our method with the following baselines: the non-adaptive methods: FedAvg~\cite{mcmahan2017communication}, FedCM~\cite{xu2021fedcm}, STEM~\cite{khanduri2021stem} and adaptive methods: FedAdam~\cite{reddi2020adaptive}, Local-Adapt~\cite{wang2021local}, Local-AMSGrad~\cite{chen2020toward}, MIME-MVR~\cite{praneeth2020mime}. For all methods, we tune their hyper-parameters to find the best setting. The results are summarized in Figure~\ref{fig:cifar} and Figure~\ref{fig:cifar-hete} (CIFAR10) and Figure~\ref{fig:femnist} (FEMNIST), the plots are averaged over 5 runs and then smoothed. Note in Figure~\ref{fig:cifar-hete}, \textbf{FedDA-$i$-$j$} represents different variants of our framework, please refer to Appendix~\ref{sec:more_exp} for their definition. In particular, \textbf{FedDA-$1$-$1$} represents \textbf{FedDA-MVR}.  As shown in the figures, our FedDA-MVR outperforms all baselines. In addition, the FedAvg algorithm has competitive training performance; however, it tends to overfit the training data severely and suffers most from the heterogeneity. Then we observe that adaptive methods in general get better train and test performance. Finally, the superior performance of our method compared with the three adaptive baselines shows that our method exploits adaptive information better; for example, MIME-MVR also exploits the momentum-based variance reduction technique, but it fixes all optimizer states during local updates, in contrast, we only fix the adaptive matrix but update the momentum $\nu_t^{(k)}, k \in [K]$ at every step. For more details, including the hyper-parameter selection, please refer to Section~\ref{sec:more_exp} of the appendix.

\section{Conclusion}
In this paper, we proposed the FedDA framework to incorporate adaptive gradients into the Federated Learning environment. More specifically, we adopted the Mirror Descent view of adaptive gradients, furthermore, we proposed to maintain and average the dual states in the training, meanwhile we fixed the adaptive matrix during local training such that the dual space is shared by all clients. We also analyze the convergence property of our Framework: for the variant FedDA-MVR, we proved that it reaches an $\epsilon$-optimal stationary point with $\tilde{O}(\epsilon^{-1.5})$ gradient queries and $\tilde{O}(\epsilon^{-1})$ communication rounds, these results match the best known gradient complexity and communication complexity of stochastic federated algorithms under the non-convex case. Finally, we validate our algorithm for both constrained and unconstrained tasks. The numerical results show the superior performance of our algorithm compared to various baseline methods.

\bibliography{FedAda}
\bibliographystyle{abbrv}

\newpage
 
\appendix

\section{More Experimental Details and Results}
\label{sec:more_exp}
In this section, we add additional experiments. In Section A.1, we consider more variants of \textbf{FedDA} besides \textbf{FedDA-MVR}. More specifically, we consider four variants of \textbf{FedDA}. We introduce two cases for the update of the adaptive matrix $H_\tau$ in~\eqref{eq:mu-case1} and~\eqref{eq:mu-case2} and we denote them as case 1 and case 2, similarly, we denote~\eqref{eq:nu-case1} and~\eqref{eq:nu-case2} as case 1 and case 2 of gradient estimation respectively. So we have four different variants, we denote them as \textbf{FedDA-$i$-$j$}, for $i,j \in \{1,2\}$, where $i$ shows the choice of gradient estimation and $j$ shows the choice of adaptive matrix update rule. Note \textbf{FedDA-MVR} corresponds to \textbf{FedDA-1-1} as we choose Case 1 of gradient estimation and Case 1 of adaptive matrix update in Algorithm~\ref{alg:fedada-server}. We also introduce more details such as the hyper-parameter choices. Then in Section A.2, we perform some ablation studies and compare our FedDA with other baselines in more detail; In Section A.3, we include experiments when we construct heterogeneous dataset from CIFAR10; Finally in Section A.4, we show the form of our FedDA when $I=1$, i.e. no local steps. 

\subsection{Other Variants of \textbf{FedDA} for Tasks in Section 6}

\subsubsection{Colorrectal Cancer Survival Prediction with Sparse Constraints}
\begin{figure}[h]
	\begin{center}
		\includegraphics[width=0.32\columnwidth]{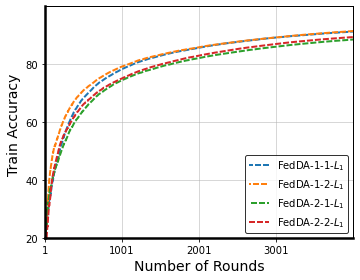}
		\includegraphics[width=0.32\columnwidth]{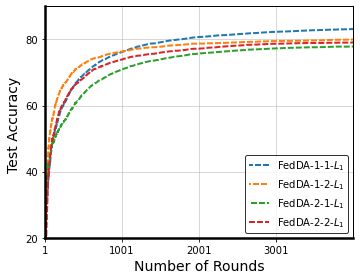}
		\includegraphics[width=0.32\columnwidth]{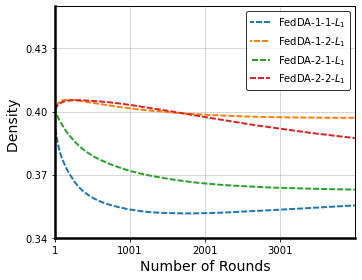}
		\caption{Results for the PATHMNIST dataset. Plots show the Train Accuracy, Test Accuracy, Density vs Number of Rounds ($E$ in Algorithm~\ref{alg:fedada-server}) respectively. The post-fix of $L_1$ means we consider the $L_1$ constraints.}
		\label{fig:bio-var}
	\end{center}
\end{figure}

The experimental results for different variants of FedDA is summarized in Figure~\ref{fig:bio-var}. As shown by the plots, all variants of FedDA get good performance, but we find FedDA-MVR (FedDA-1-1) gets most sparse model as measured by the density metric.

\subsubsection{Image Classification Task with CIFAR10 and FEMNIST}
In this unconstrained federated image classification task, we use a 4-layer convolutional neural network with 64 filters at each layer. For the FEMNIST dataset, we randomly sample 50 users at each global round.  We run 20000 steps ($T$), average states with interval 5 ($I$) and use mini-batch size of 16. 
For other hyper-parameters, we perform grid search and choose the best setting for each method. In the CIFAR10 related experiments, for the SGD method, we use learning rate 0.005; for the FedCM algorithm, we use learning rate 0.01, momentum coefficient $\alpha$ as 0.9; for the FedAdam algorithm, we use local learning rate 0.001, global learning rate 0.002, momentum coefficient 0.9, coefficient for adaptive matrix $\beta$ as 0.999; for the Local-Adapt algorithm, we use local learning rate 0.001, global learning rate 0.002, momentum coefficient 0.9, coefficient for adaptive matrix $\beta$ as 0.999; for the Local-AMSGrad algorithm, we use learning rate 0.001, momentum coefficient 0.9, adaptive matrix coefficient 0.999; for the MIME-MVR algorithm, we use learning rate 0.1, $w$ 100, $c$ as 2000; for the STEM algorithm, we use learning rate 0.1, $w$ 100 and $c$ 2000; for our FedDA-MVR, we use learning rate 0.02, $w$ as 10000, $c$ as 1000000, $\beta$ as 0.999 and $\tau$ as 0.01. For other variants of FedDA: for FedDA-2-1, we use learning rate 0.001, $\alpha$ as 0.9, $\beta$ as 0.999, $\tau$ as 0.01; for FedDA-1-2, we use learning rate 1, $w$ as 5000, $c$ as 100, $\beta$ as 0.999, $\tau$ as 0.01; for FedDA-2-2, we use learning rate 0.01, $\alpha$ 0.9, $\beta$ as 0.999, $\tau$ as 0.01.

\begin{figure}[t]
	\begin{center}
		\includegraphics[width=0.24\columnwidth]{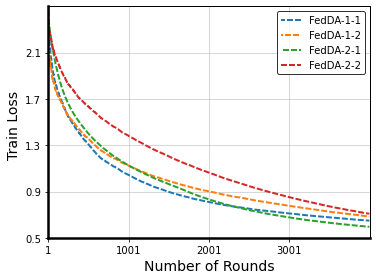}
		\includegraphics[width=0.24\columnwidth]{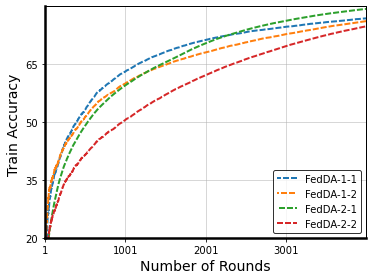}
		\includegraphics[width=0.24\columnwidth]{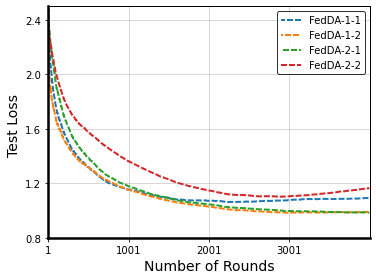}
		\includegraphics[width=0.24\columnwidth]{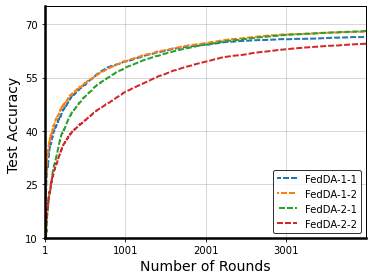}
		\caption{Results for CIFAR10 dataset. From left to right, we show Train Loss, Train Accuracy, Test Loss, Test Accuracy \emph{w.r.t} the number of global rounds (E in Algorithm~\ref{alg:fedada-server}), respectively.}
		\label{fig:cifar-var}
	\end{center}
\end{figure}

\begin{figure}[ht]
	\begin{center}
		\includegraphics[width=0.24\columnwidth]{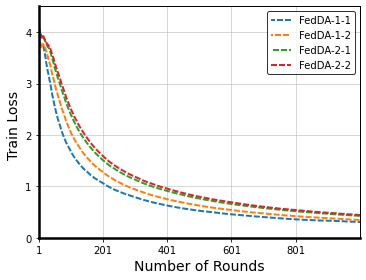}
		\includegraphics[width=0.24\columnwidth]{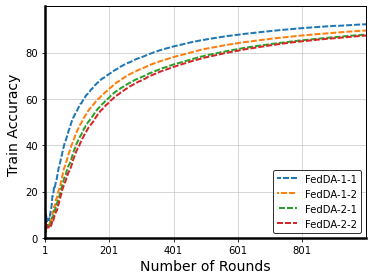}
		\includegraphics[width=0.24\columnwidth]{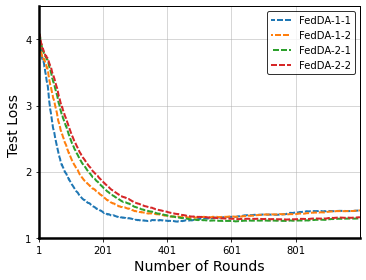}
		\includegraphics[width=0.24\columnwidth]{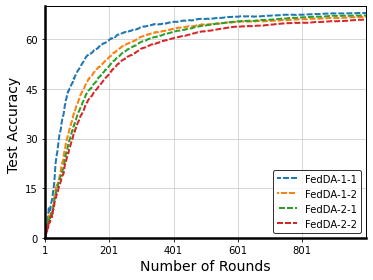}
		\caption{Results for FEMNIST dataset. From left to right, we show Train Loss, Train Accuracy, Test Loss, Test Accuracy \emph{w.r.t} the number of global rounds (E in Algorithm~\ref{alg:fedada-server}), respectively.}
		\label{fig:femnist-var}
	\end{center}
\end{figure}

Then in the FEMNIST experiments, for the SGD method, we use learning rate 0.1; for the FedCM algorithm, we use learning rate 0.1, momentum coefficient $\alpha$ as 0.9; for the FedAdam algorithm, we use local learning rate 0.02, global learning rate 0.04, momentum coefficient 0.9, coefficient for adaptive matrix $\beta$ as 0.999; for the Local-Adapt algorithm, we use local learning rate 0.02, global learning rate 0.02, momentum coefficient 0.9, coefficient for adaptive matrix $\beta$ as 0.999; for the Local-AMSGrad algorithm, we use learning rate 0.0005, momentum coefficient 0.9, adaptive matrix coefficient 0.999; for the MIME-MVR algorithm, we use learning rate 1, $w$ 10000, $c$ as 400; for the STEM algorithm, we use learning rate 1, $w$ 10000 and $c$ 400; for our FedDA-MVR, we use learning rate 0.02, $w$ as 10000, $c$ as 1000000, $\beta$ as 0.999 and $\tau$ as 0.01. For other variants of FedDA: for FedDA-2-1, we use learning rate 0.001, $\alpha$ as 0.9, $\beta$ as 0.999, $\tau$ as 0.01; for FedDA-1-2, we use learning rate 1, $w$ as 5000, $c$ as 100, $\beta$ as 0.999, $\tau$ as 0.01; for FedDA-2-2, we use the learning rate 0.01, $\alpha$ 0.9, $\beta$ as 0.999, $\tau$ as 0.01.

The experimental results for different variants of FedDA is summarized in Figure~\ref{fig:cifar-var} and~\ref{fig:femnist-var}. As shown by plots, all variants of FedDA get good performance. FedDA-MVR (FedDA-1-1) gets the best performance in most metrics, we observe that its test loss show some extent of overfitting in the late training stage.

\subsection{More discussion of Experimental Results}
In this subsection, we make more detailed comparison between our FedDA and other baselines (The experiments are over homogeneous CIFAR10 dataset). In Figure~\ref{fig:cifar-momentum-comp}, we compare FedCM with FedDA-2-1 and FedDA-2-2 for different values of local steps $I$. Since FedDA-2-1 and FedDA-2-2 do not use variance reduction acceleration, the superior performance shows the effectiveness of using adaptive gradients in our framework. Next, In Figure~\ref{fig:cifar-amsgrad-comp}, we compare Local-AMSGrad vs FedDA-2-1 for different values of $I$, FedDA-2-1 outperforms Local-AMSGrad for all $I$ and with a greater margin for larger $I$. Note both Local-AMSGrad and FedDA-2-1 use Adam-style adaptive gradients (\eqref{eq:nu-case2} and \eqref{eq:mu-case1}) and have same communication cost per epoch. In Figure~\ref{fig:cifar-fedadam-comp}, we compare FedAdam and Local-Adapt with FedDA-2-1. All methods use Adam-style adaptive gradients. FedAdam only performs adaptive gradients over the server, Local-Adapt performs both local and global adaptive gradients, but the state of the local adaptive gradient is refreshed per epoch. We have two observations: First, the Local-Adapt method has very marginal improvement over FedAdam, which shows the restarted strategy used by Local-Adapt is less effective than our method; Second, both FedAdam and Local-Adapt benefit little from increasing the $I$ value (compared to our FedDA-2-1). For FedAdam, this shows the limitation of only applying adaptive gradients at the server level. Finally, in Figure~\ref{fig:cifar-ablation}, we change $I$ for all four variants of our FedDA. As shown by the figure, our framework can benefit from more local steps.

\begin{figure}[ht]
	\begin{center}
		\includegraphics[width=0.24\columnwidth]{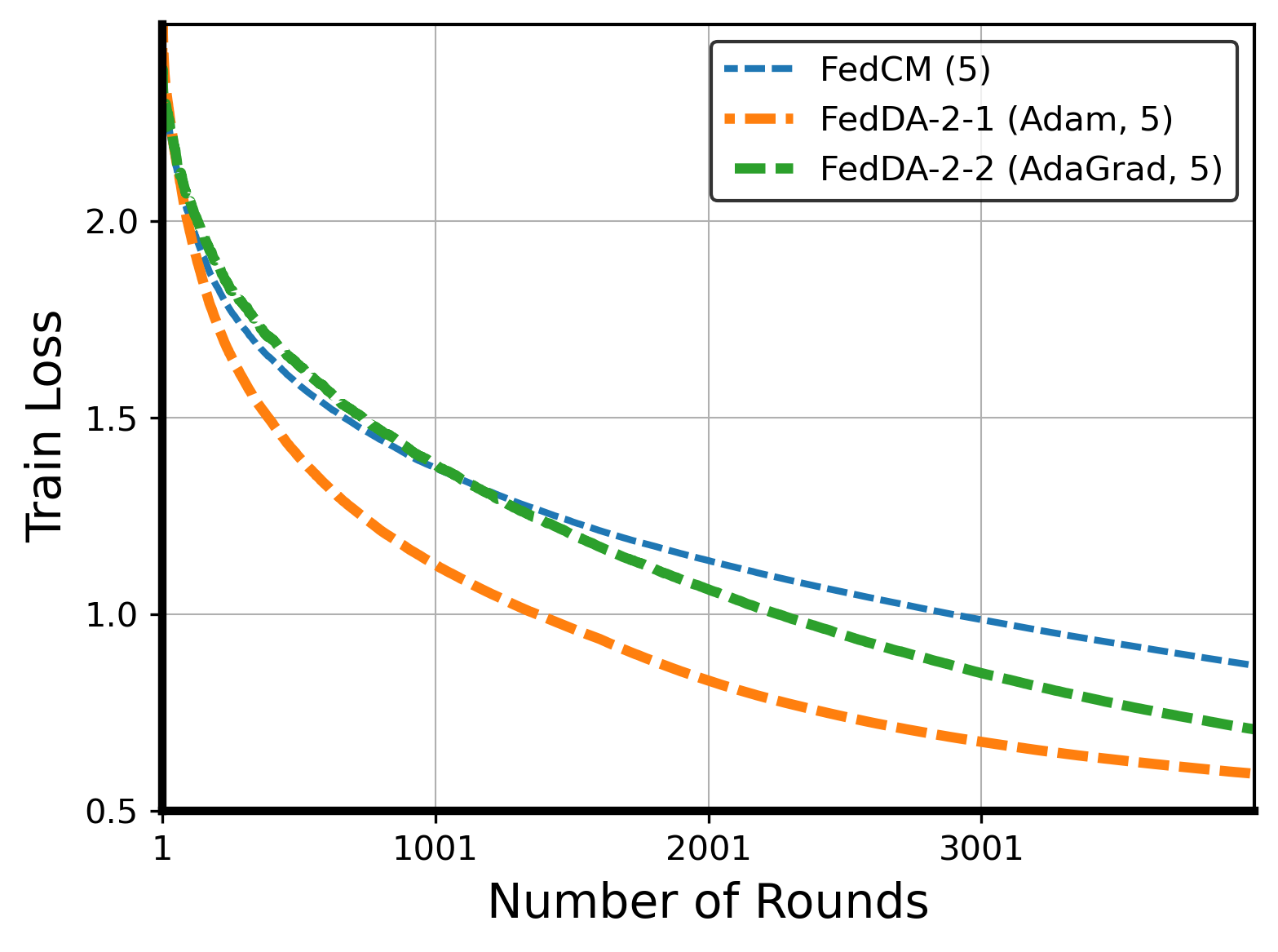}
		\includegraphics[width=0.24\columnwidth]{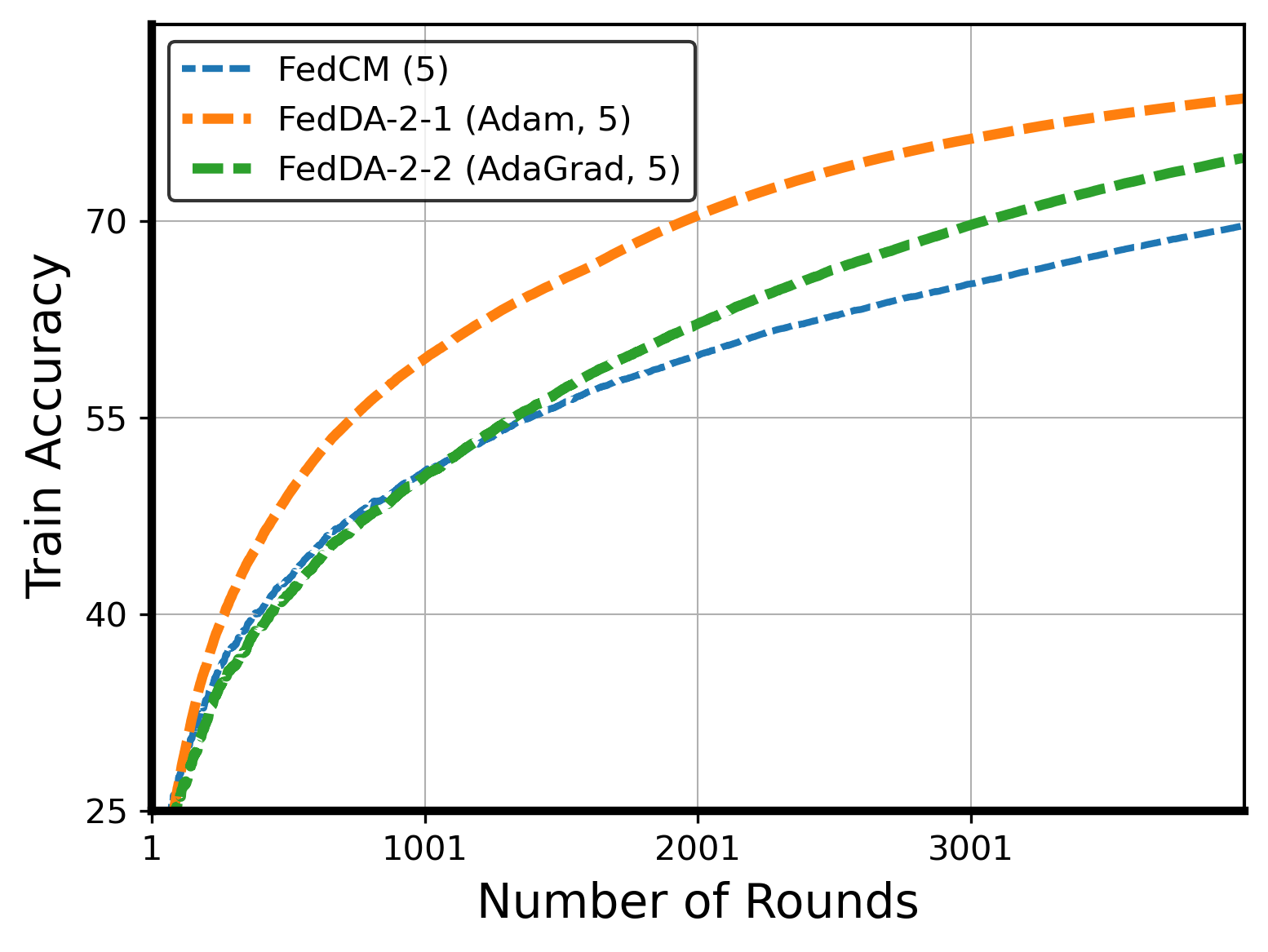}
		\includegraphics[width=0.24\columnwidth]{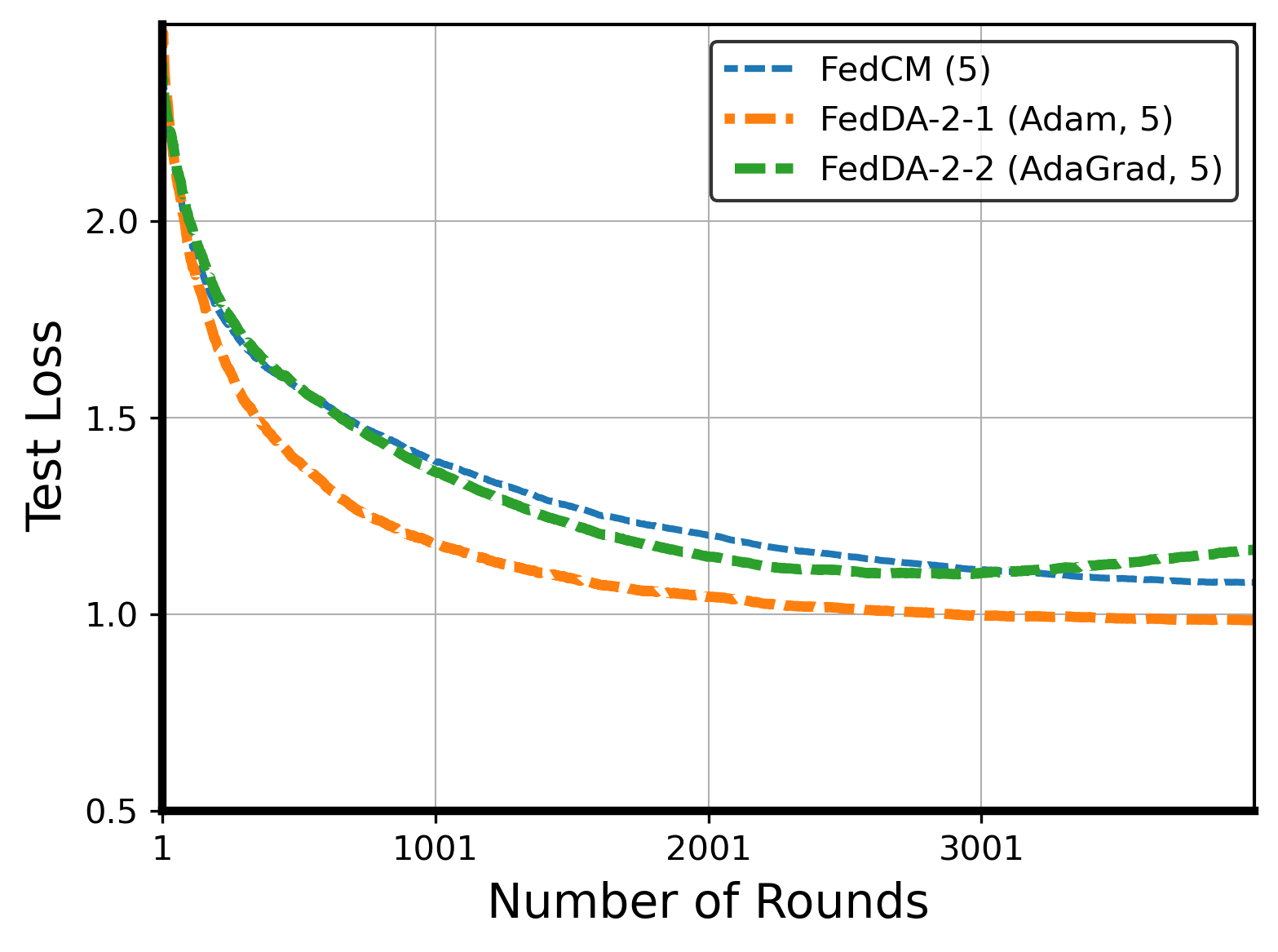}
		\includegraphics[width=0.24\columnwidth]{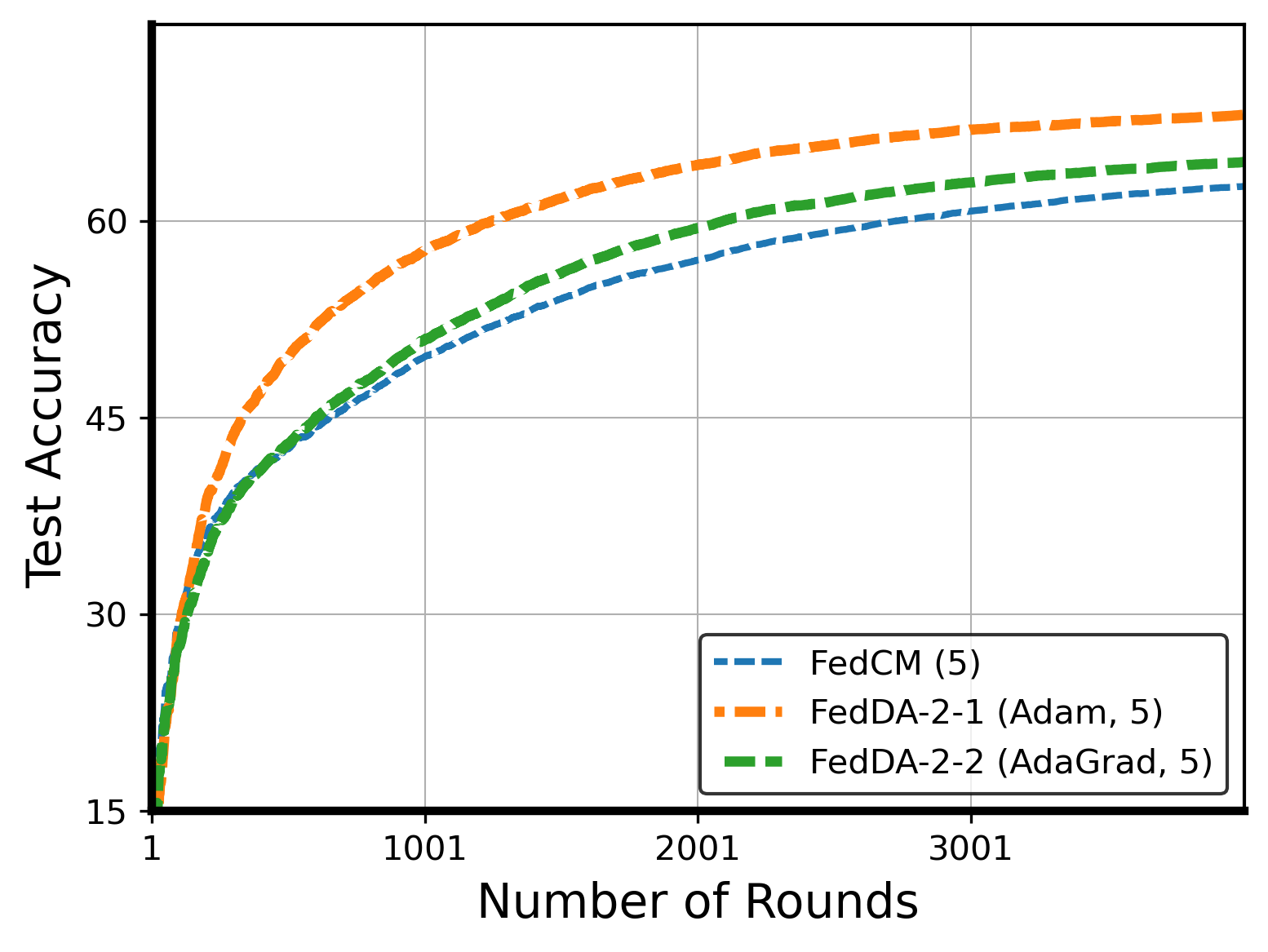}
		\includegraphics[width=0.24\columnwidth]{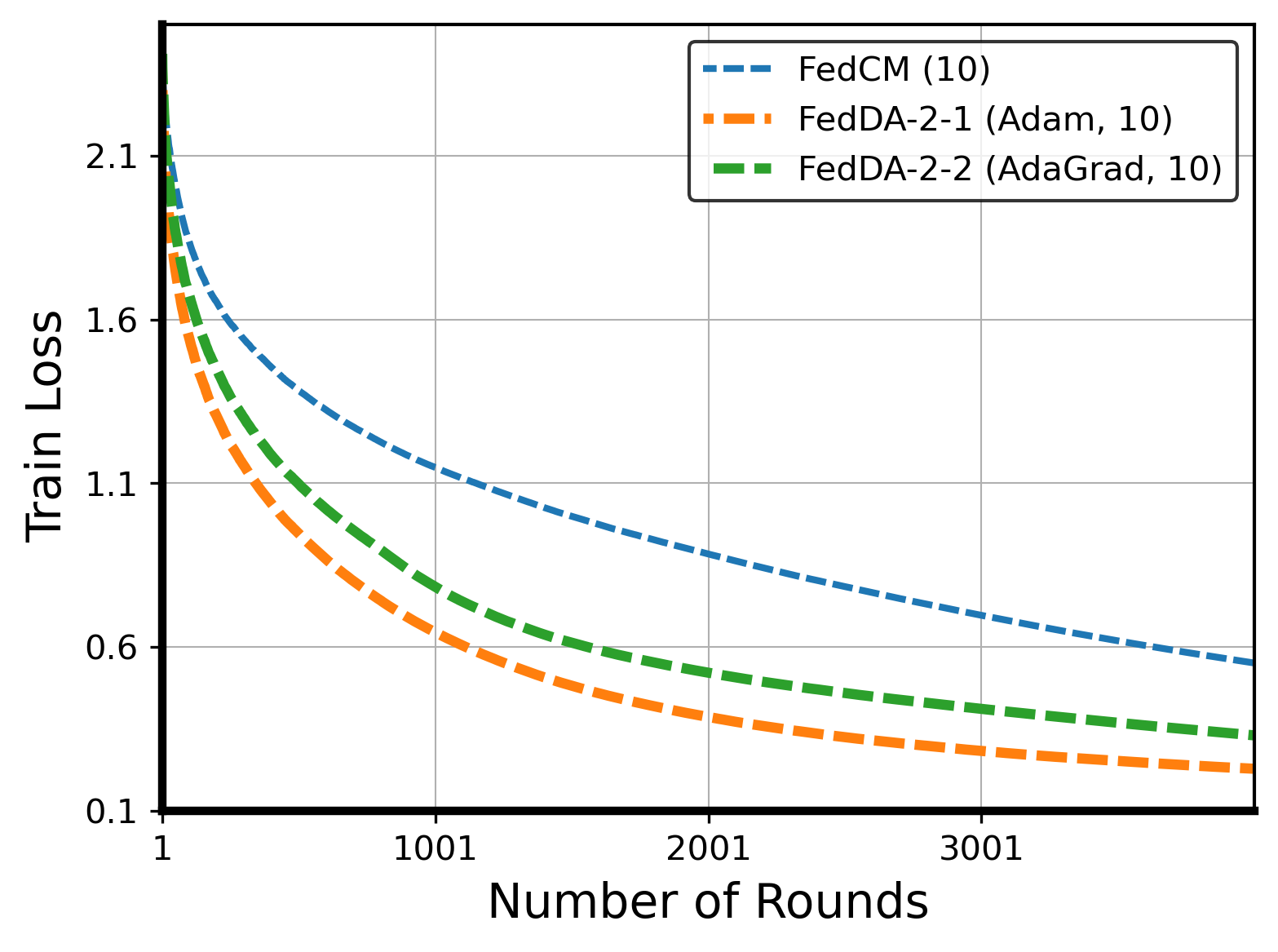}
		\includegraphics[width=0.24\columnwidth]{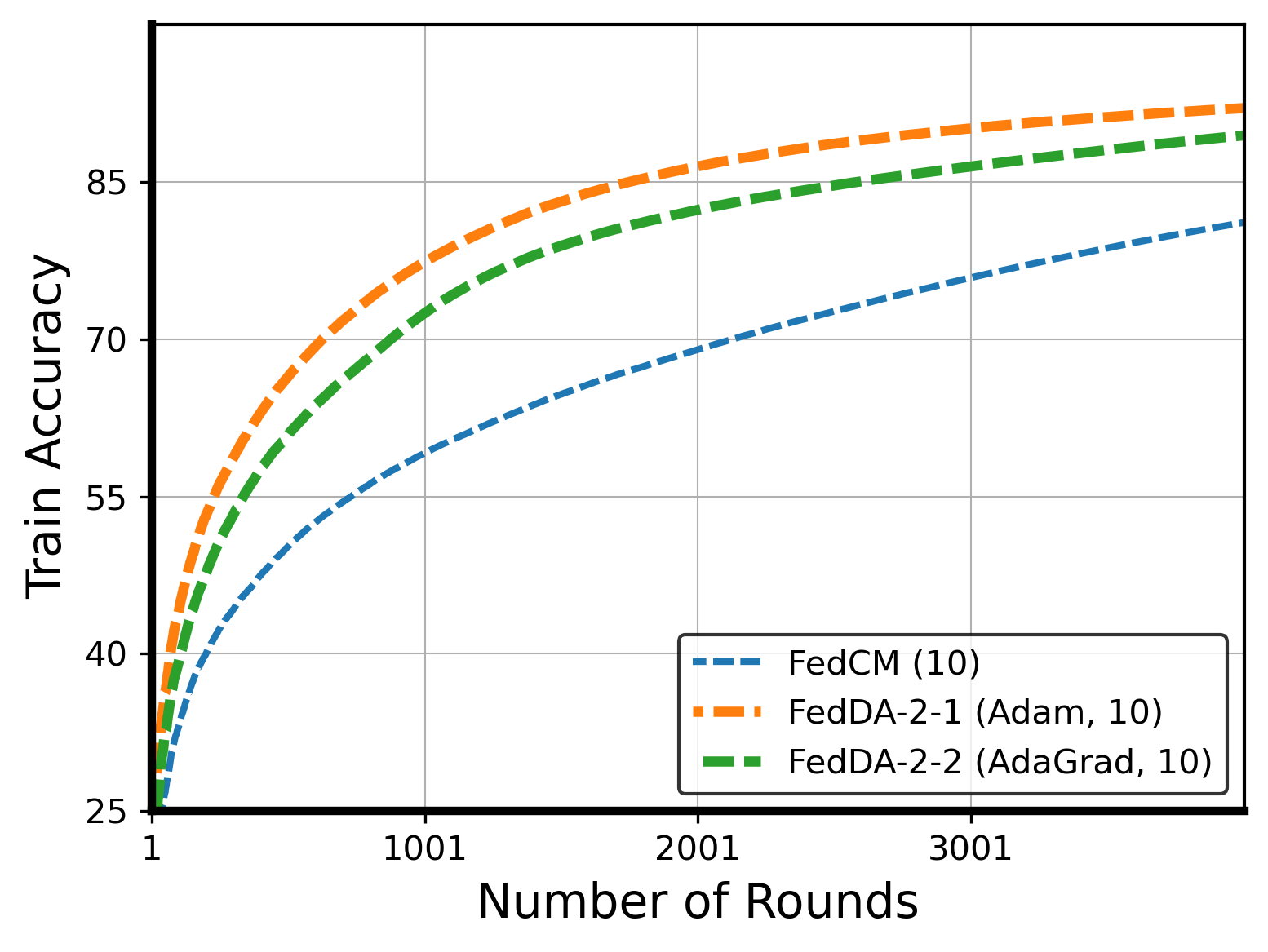}
		\includegraphics[width=0.24\columnwidth]{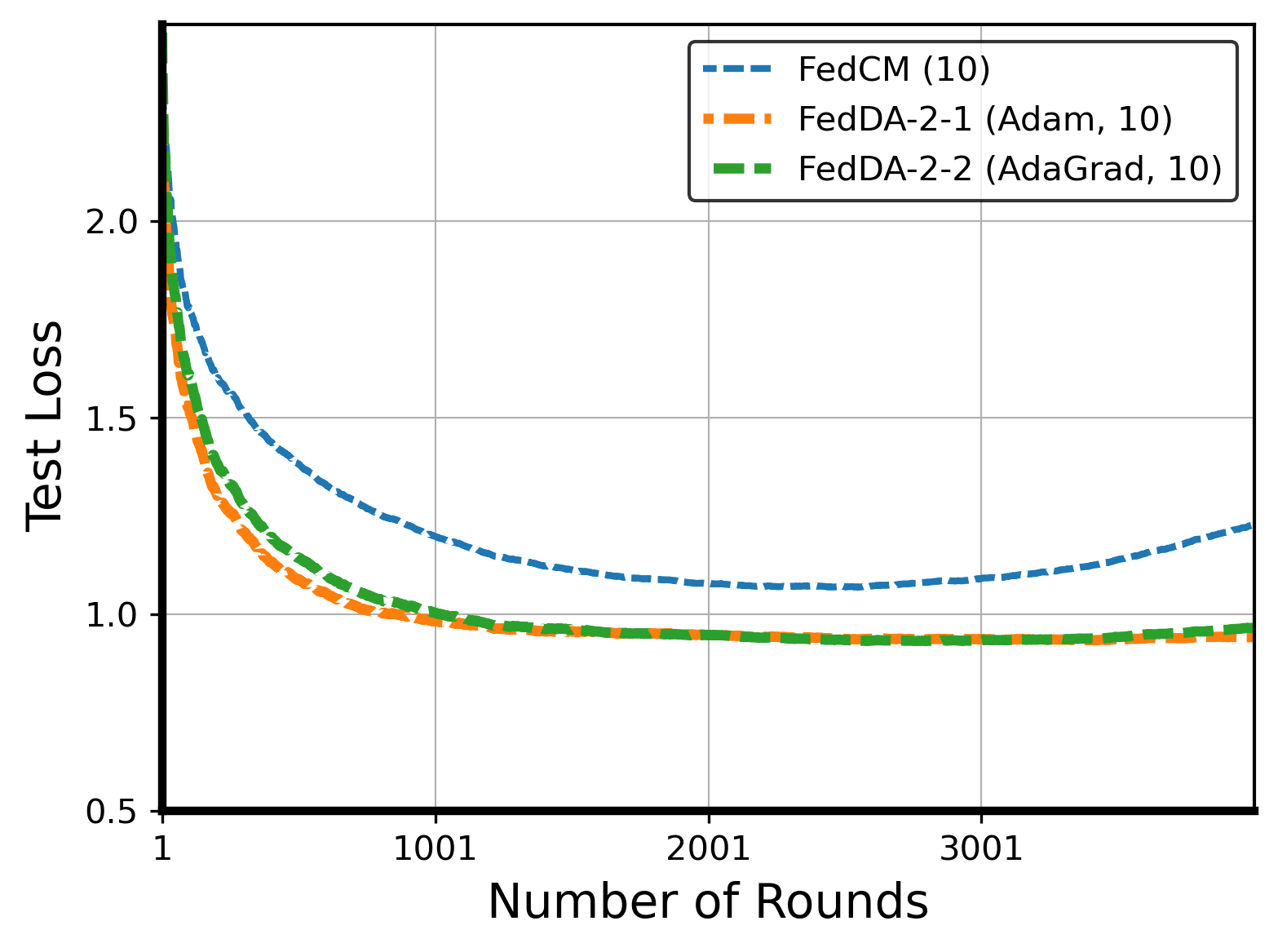}
		\includegraphics[width=0.24\columnwidth]{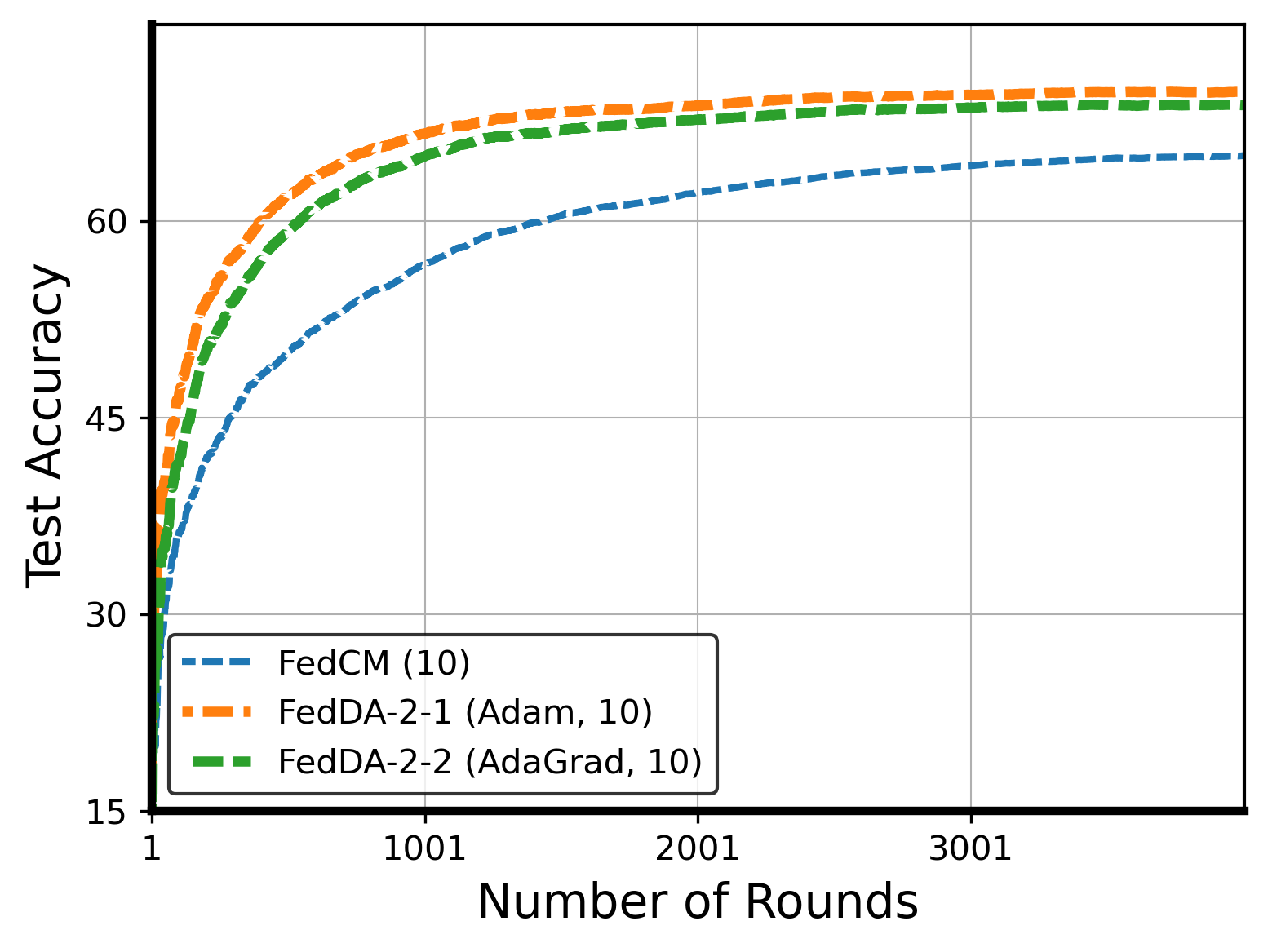}
		\includegraphics[width=0.24\columnwidth]{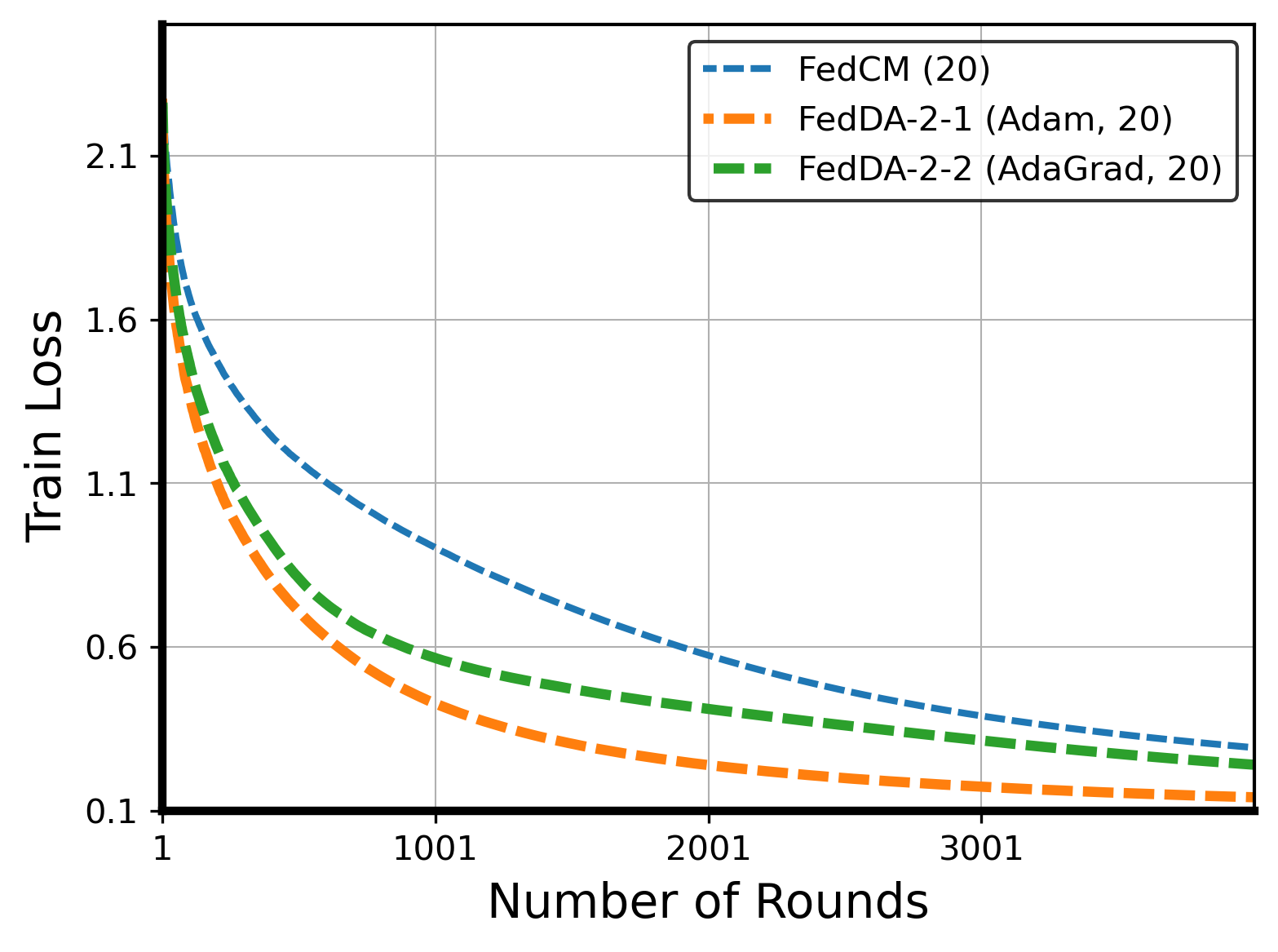}
		\includegraphics[width=0.24\columnwidth]{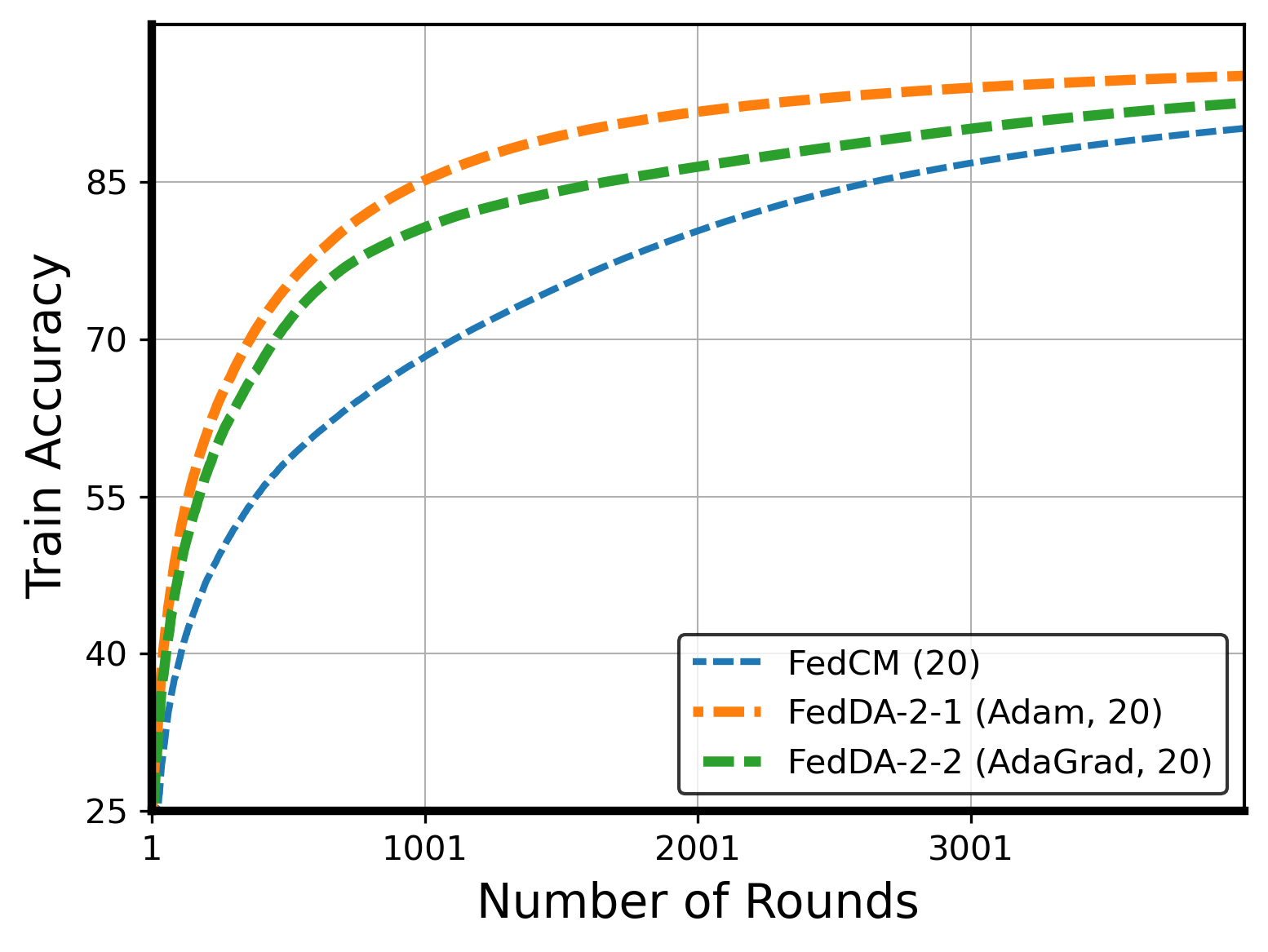}
		\includegraphics[width=0.24\columnwidth]{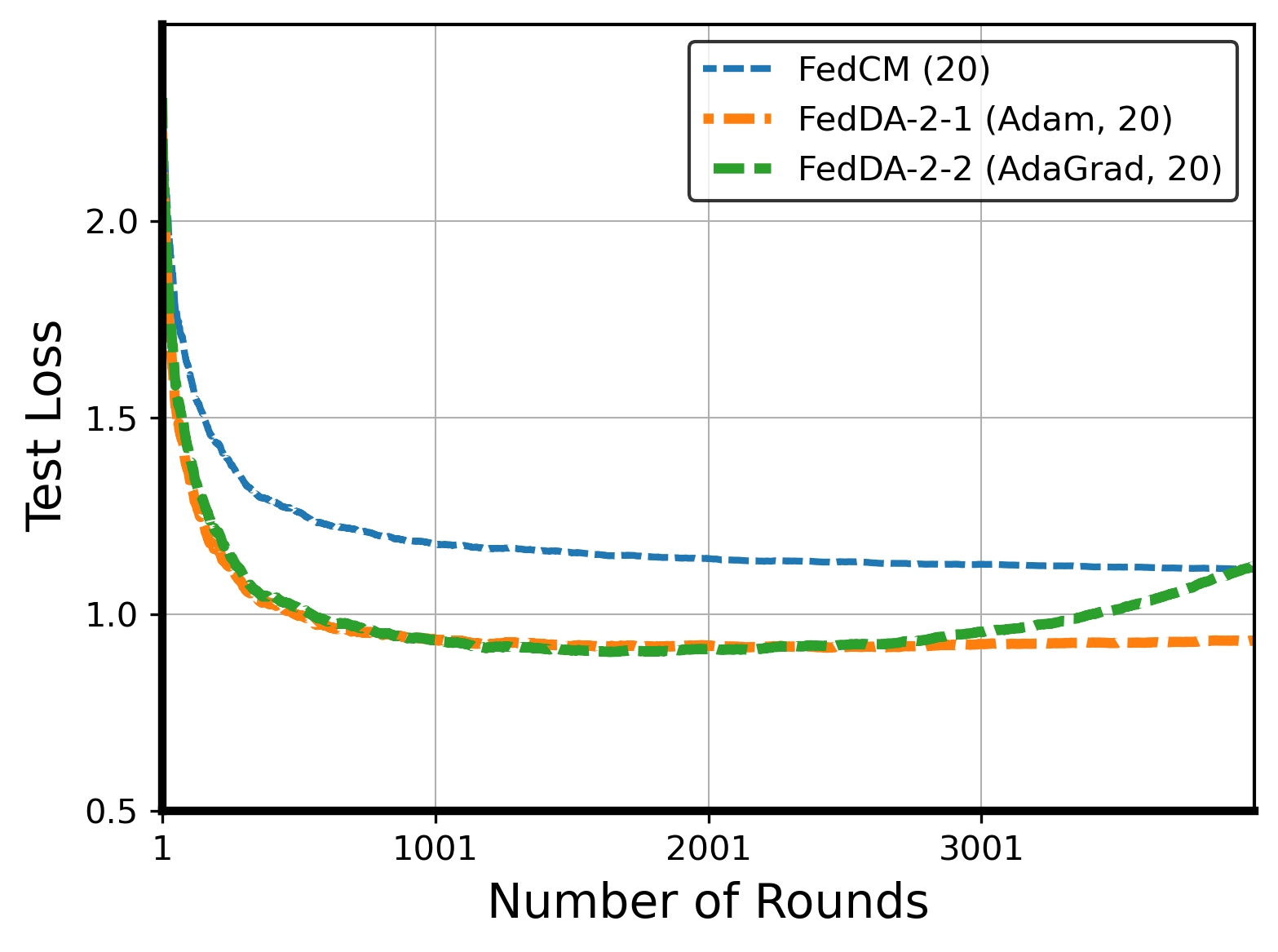}
		\includegraphics[width=0.24\columnwidth]{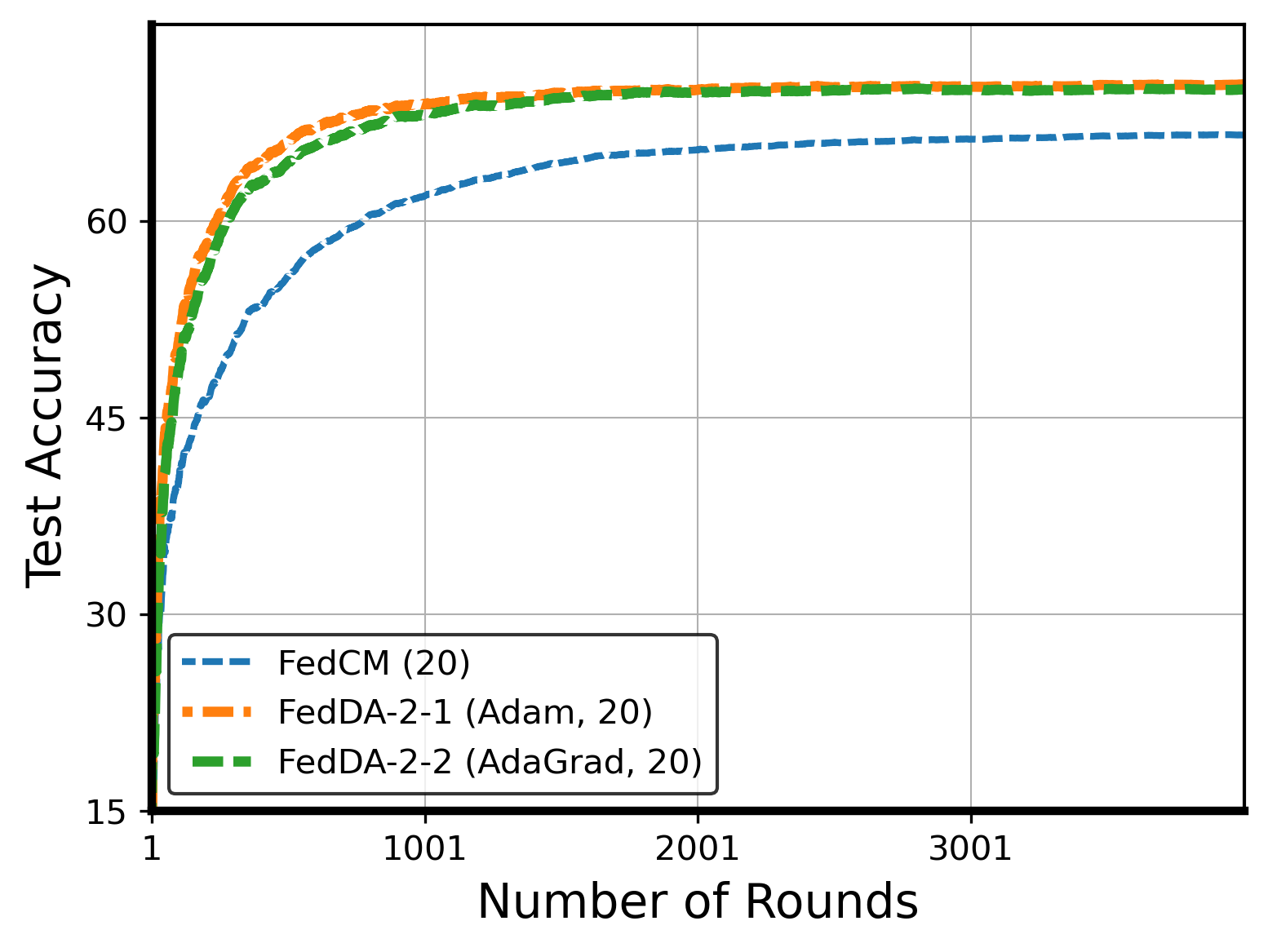}
		\caption{Comparison between FedCM vs FedDA-2-1 and FedDA-2-2. From top to bottom, we show  $I = 5, 10, 20$ respectively. The number inside the parentheses is the value of $I$.}
		\label{fig:cifar-momentum-comp}
	\end{center}
\end{figure}

\begin{figure}[ht]
	\begin{center}
		\includegraphics[width=0.24\columnwidth]{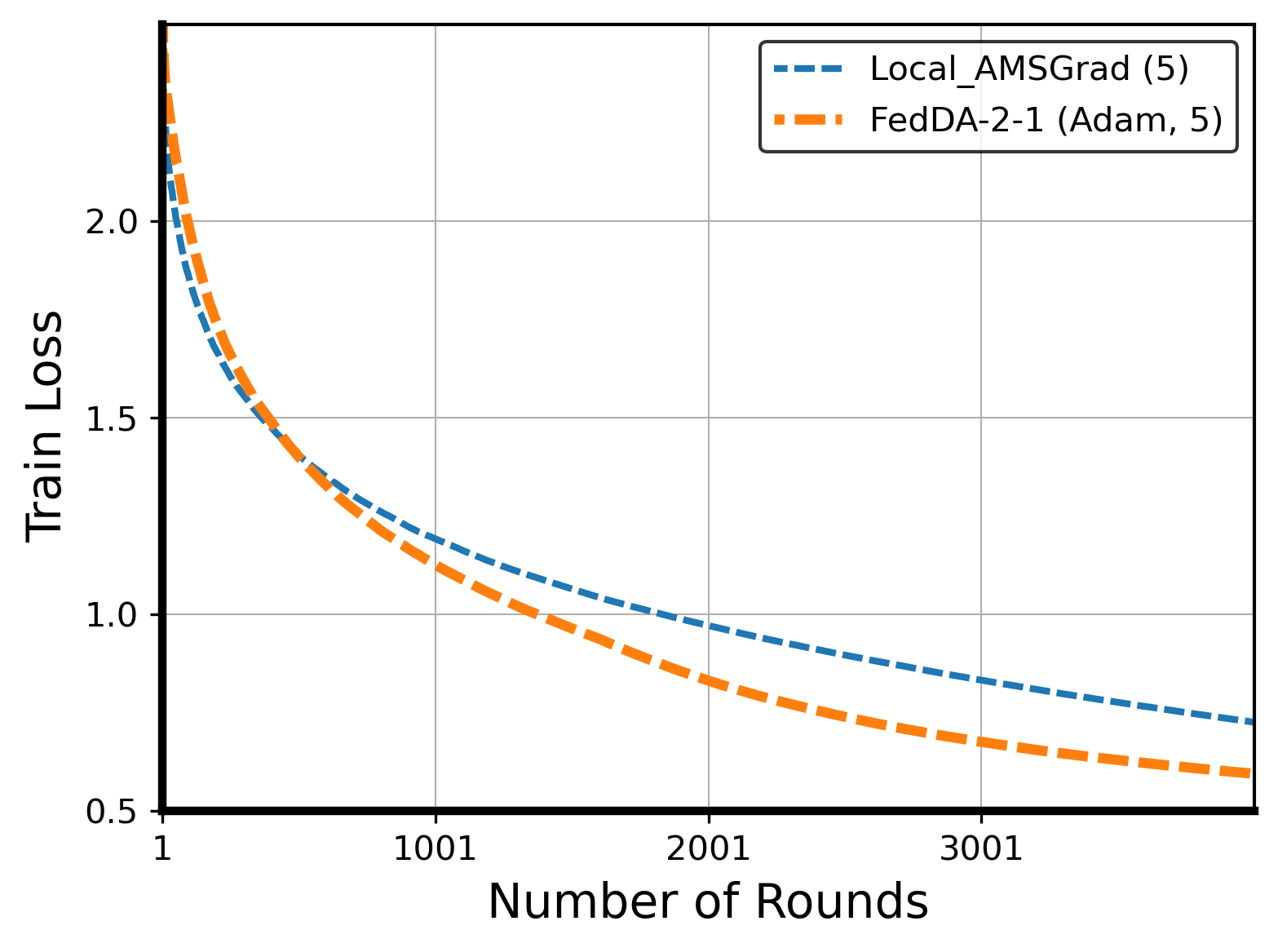}
		\includegraphics[width=0.24\columnwidth]{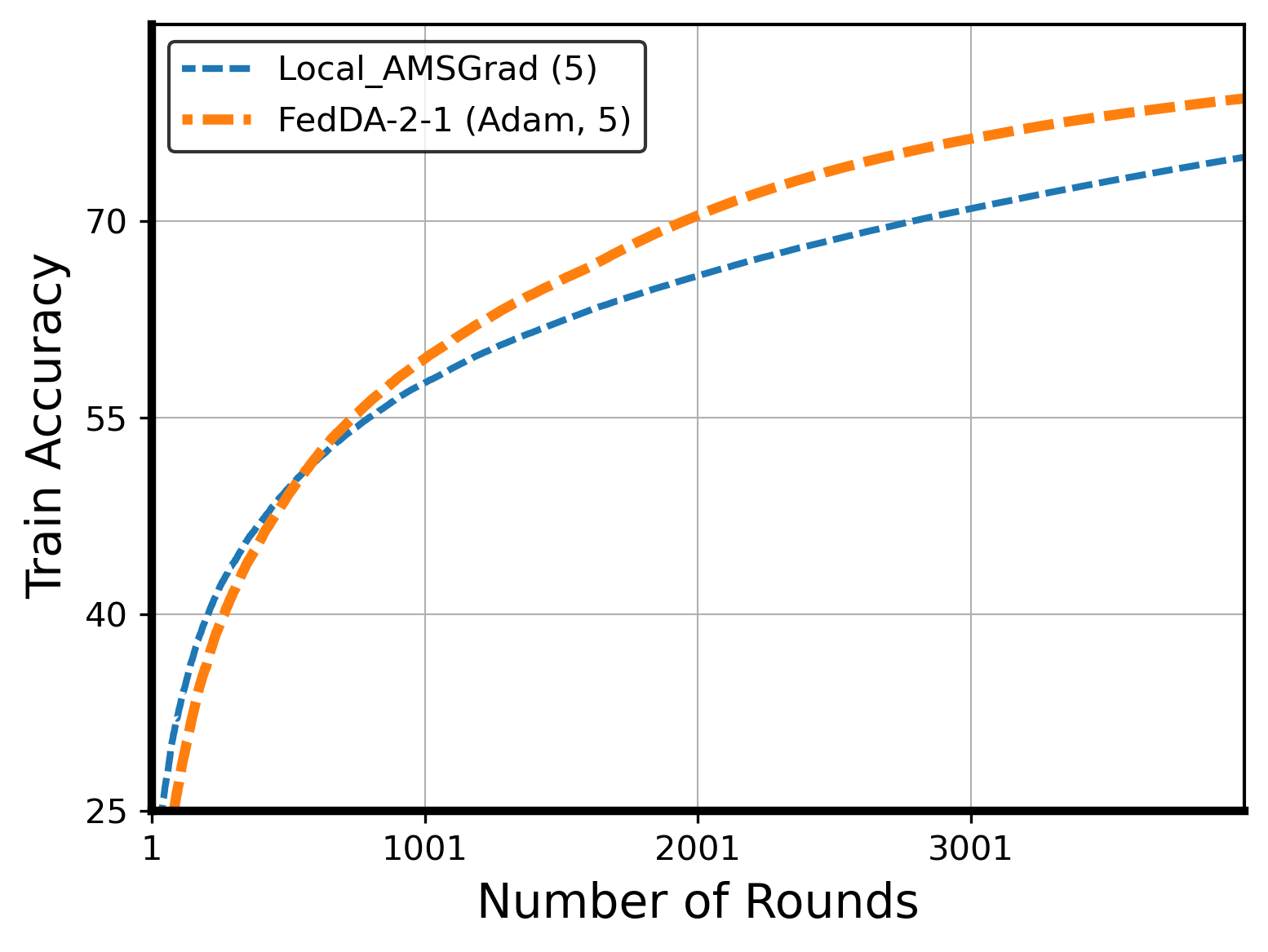}
		\includegraphics[width=0.24\columnwidth]{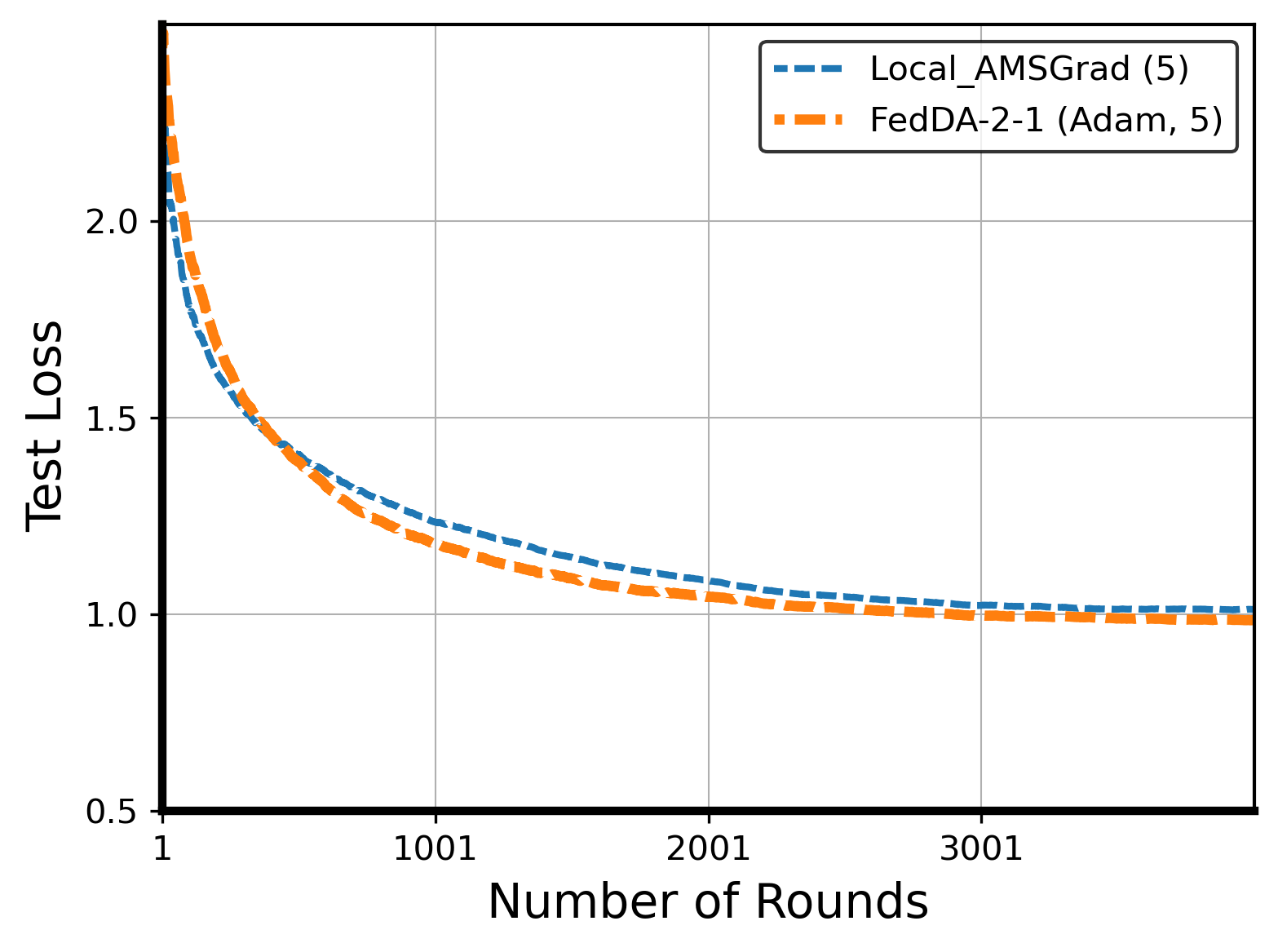}
		\includegraphics[width=0.24\columnwidth]{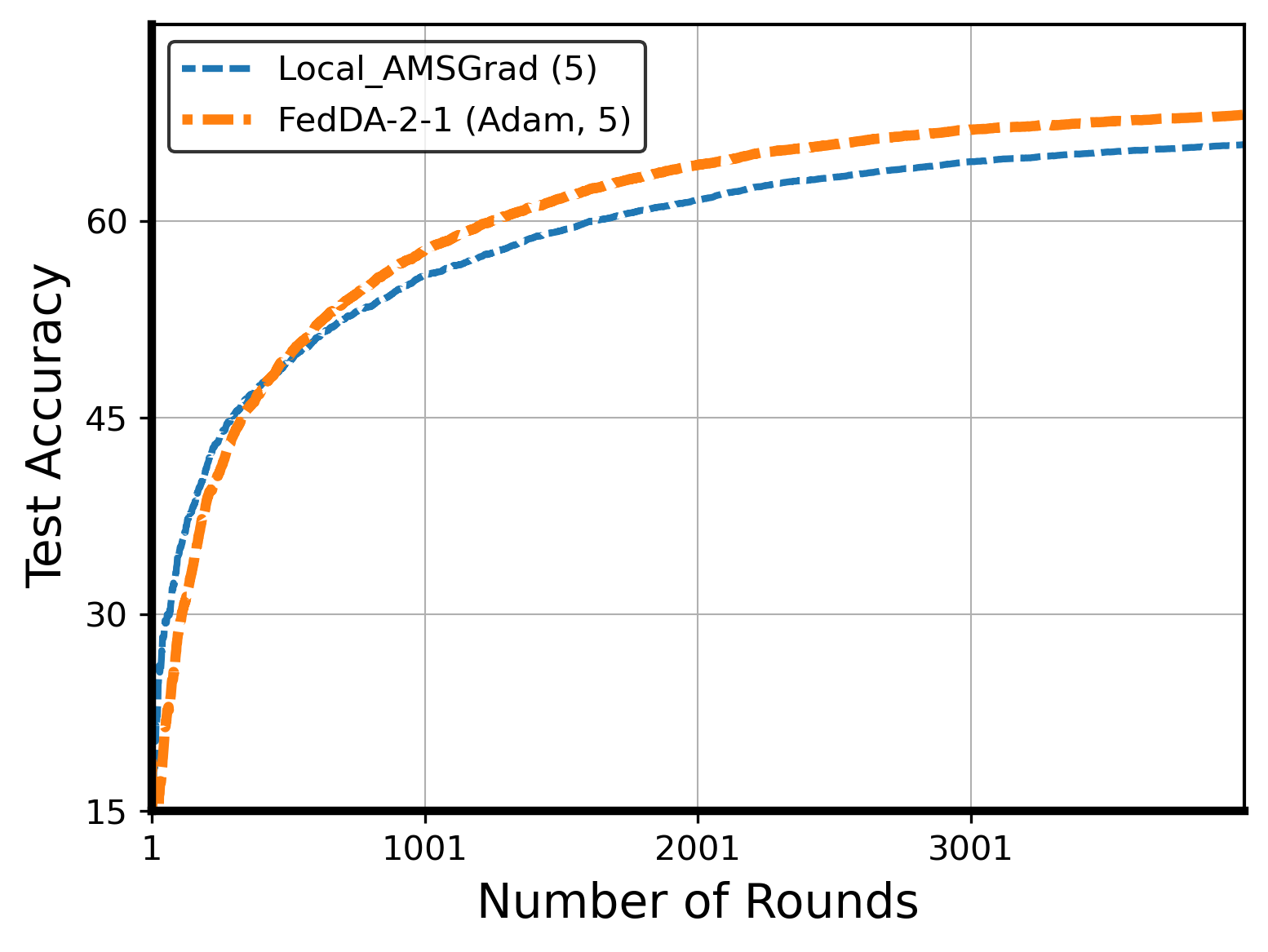}
		\includegraphics[width=0.24\columnwidth]{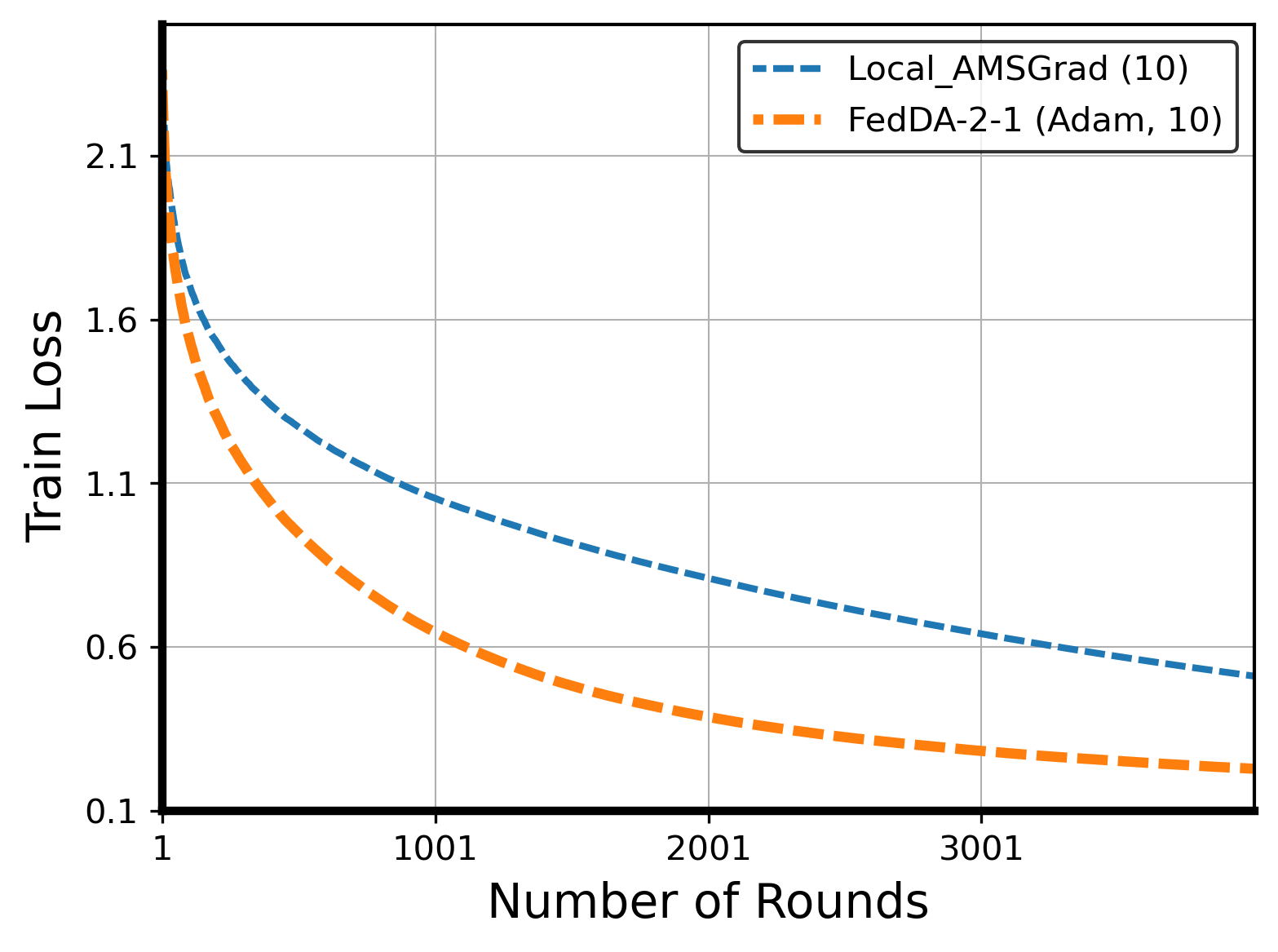}
		\includegraphics[width=0.24\columnwidth]{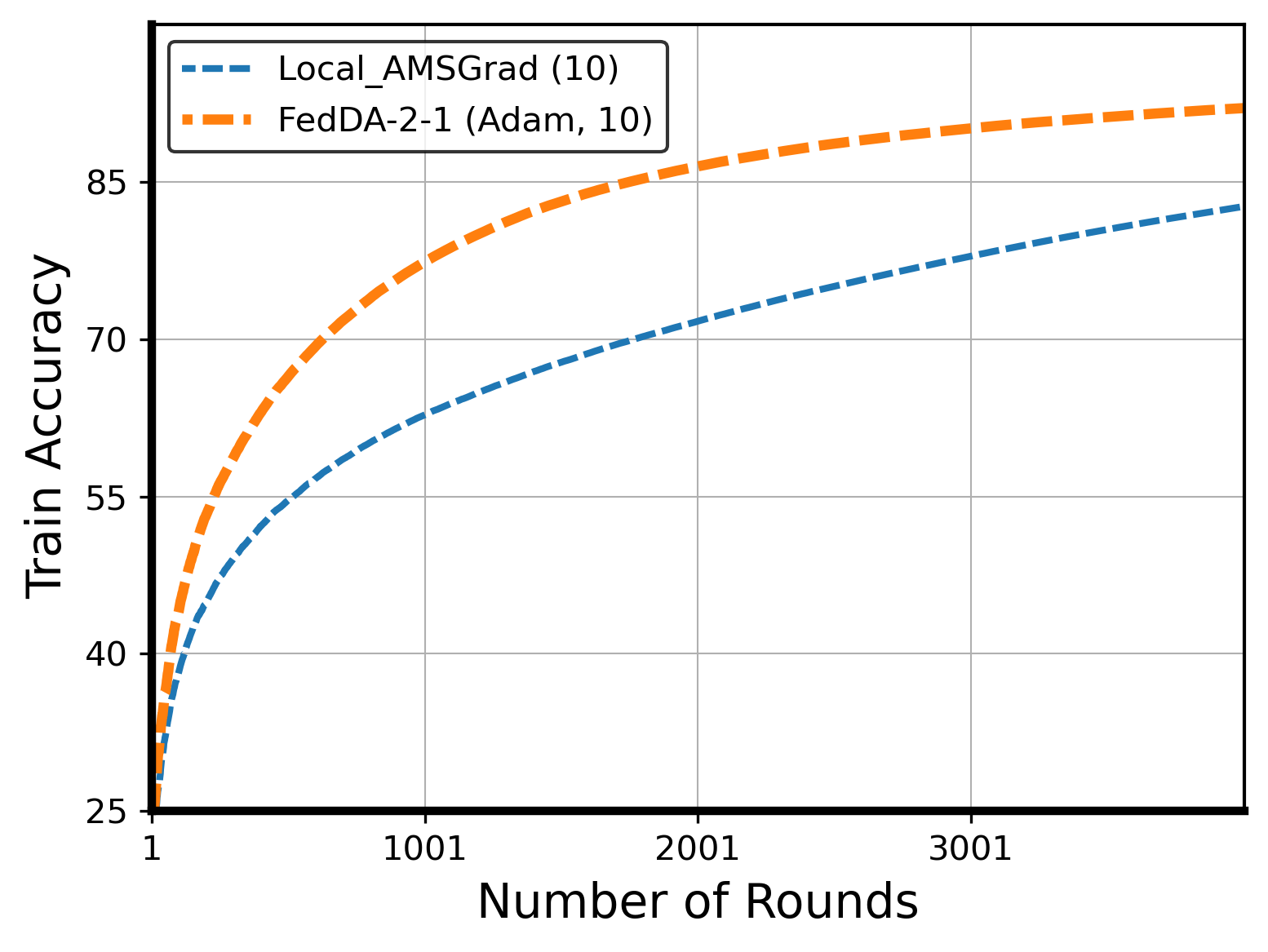}
		\includegraphics[width=0.24\columnwidth]{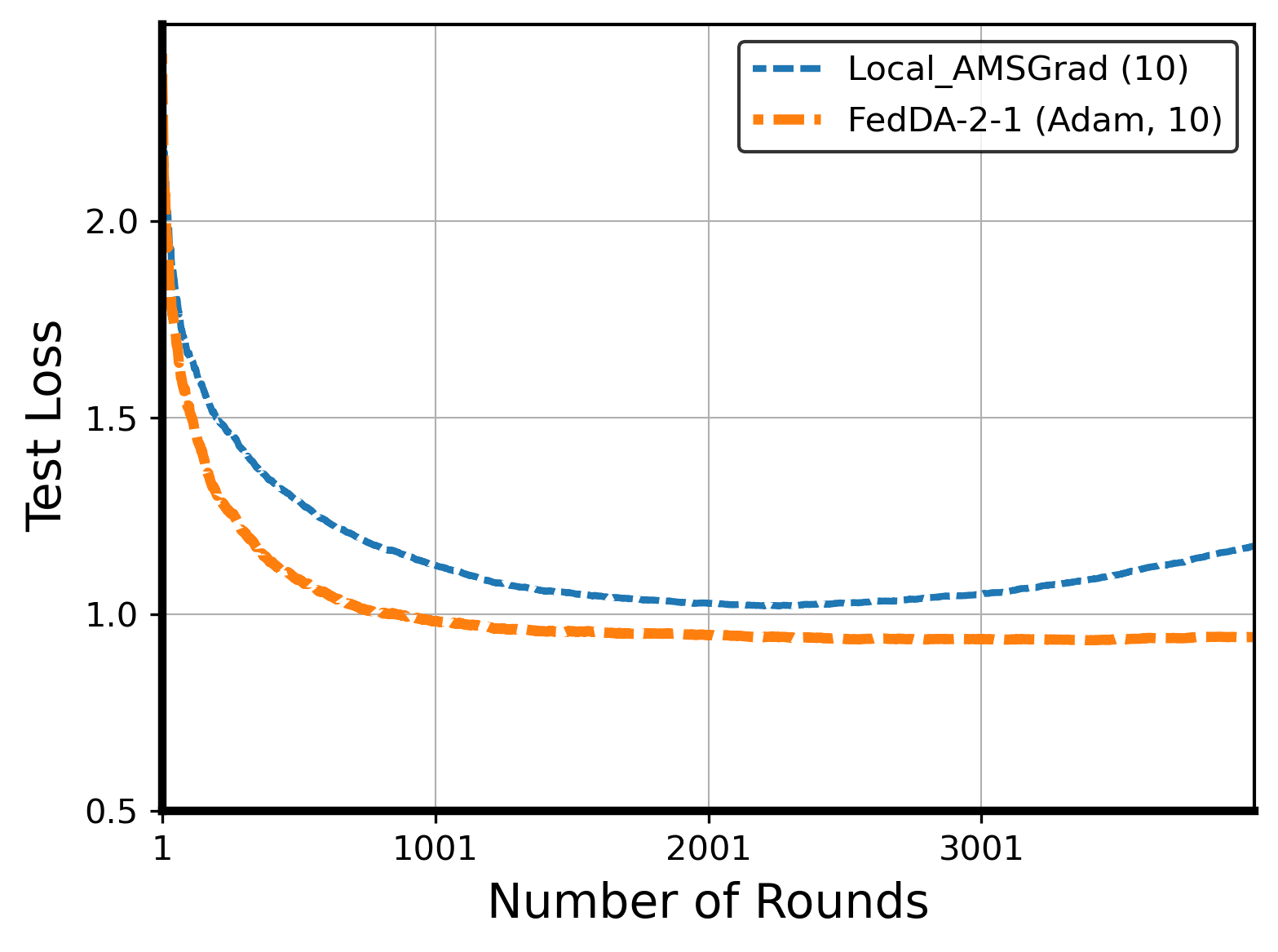}
		\includegraphics[width=0.24\columnwidth]{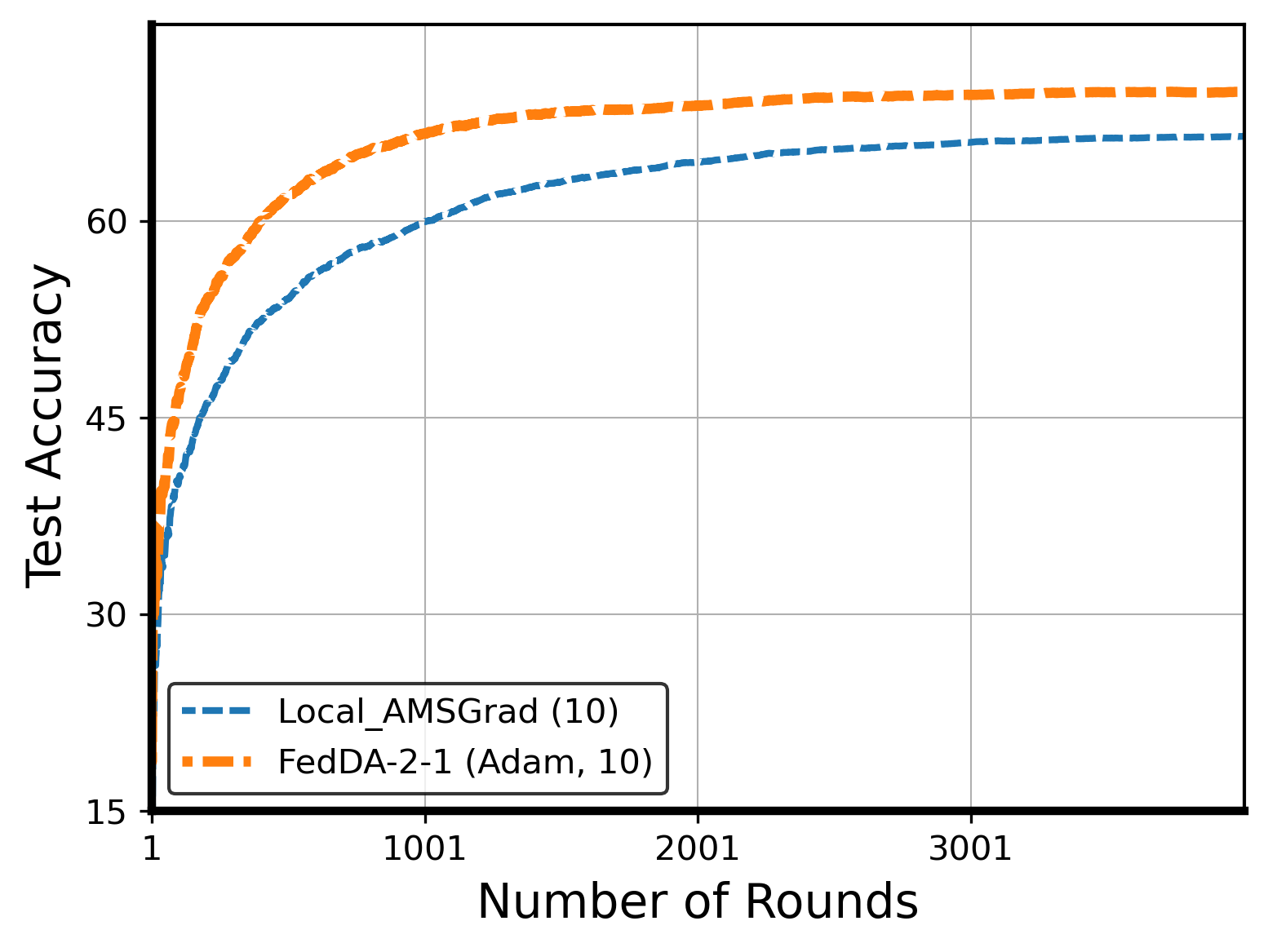}
		\includegraphics[width=0.24\columnwidth]{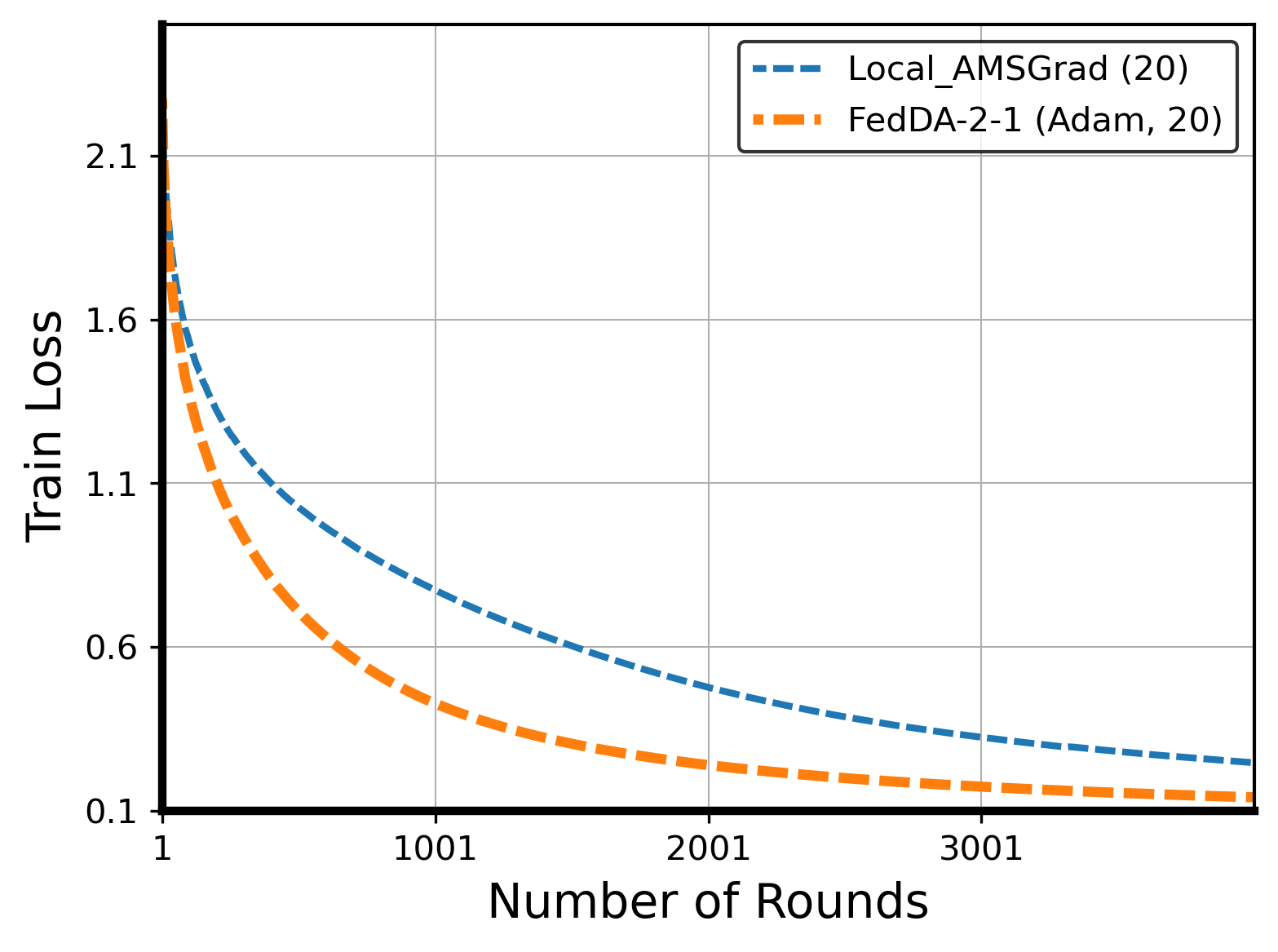}
		\includegraphics[width=0.24\columnwidth]{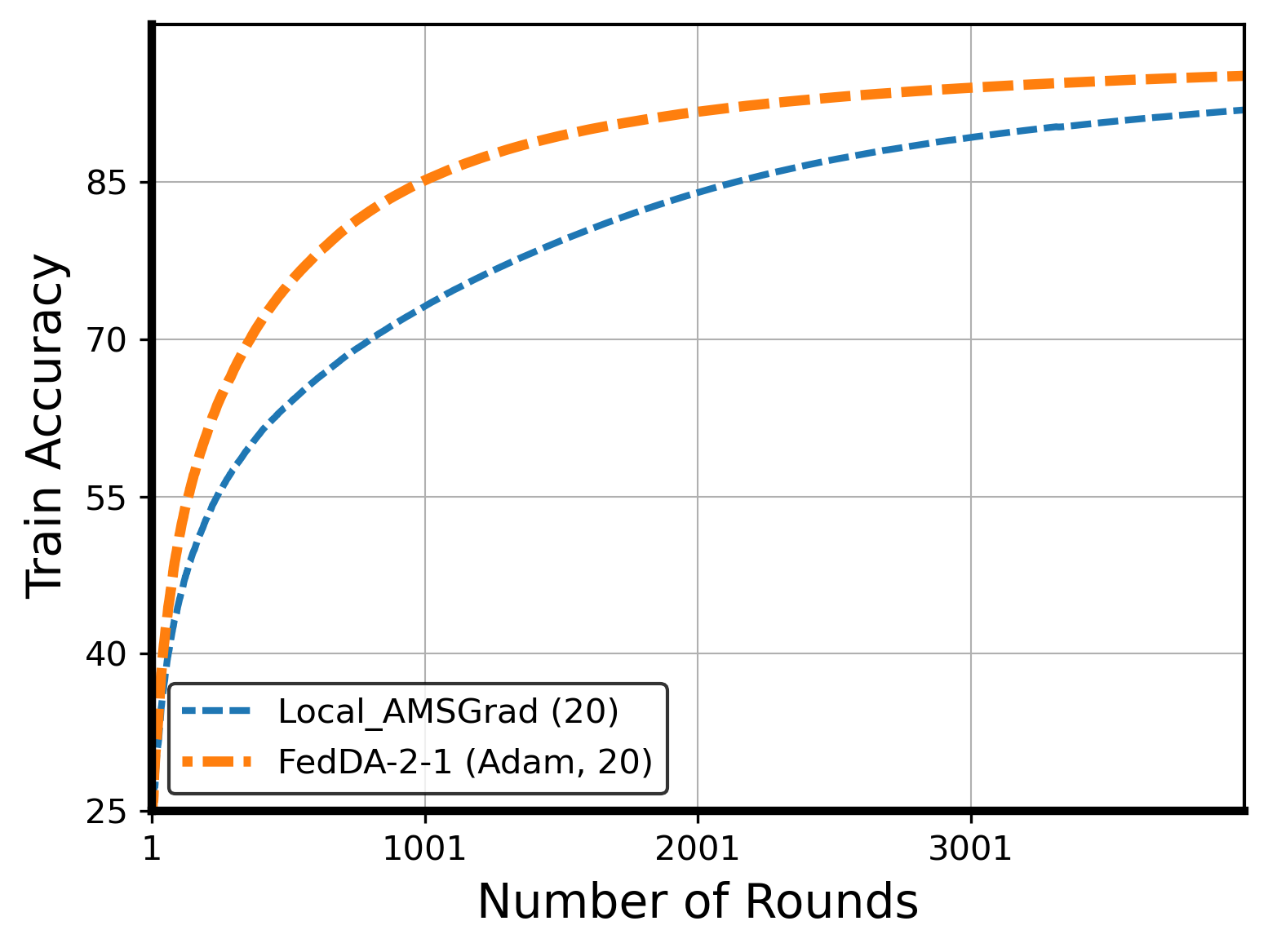}
		\includegraphics[width=0.24\columnwidth]{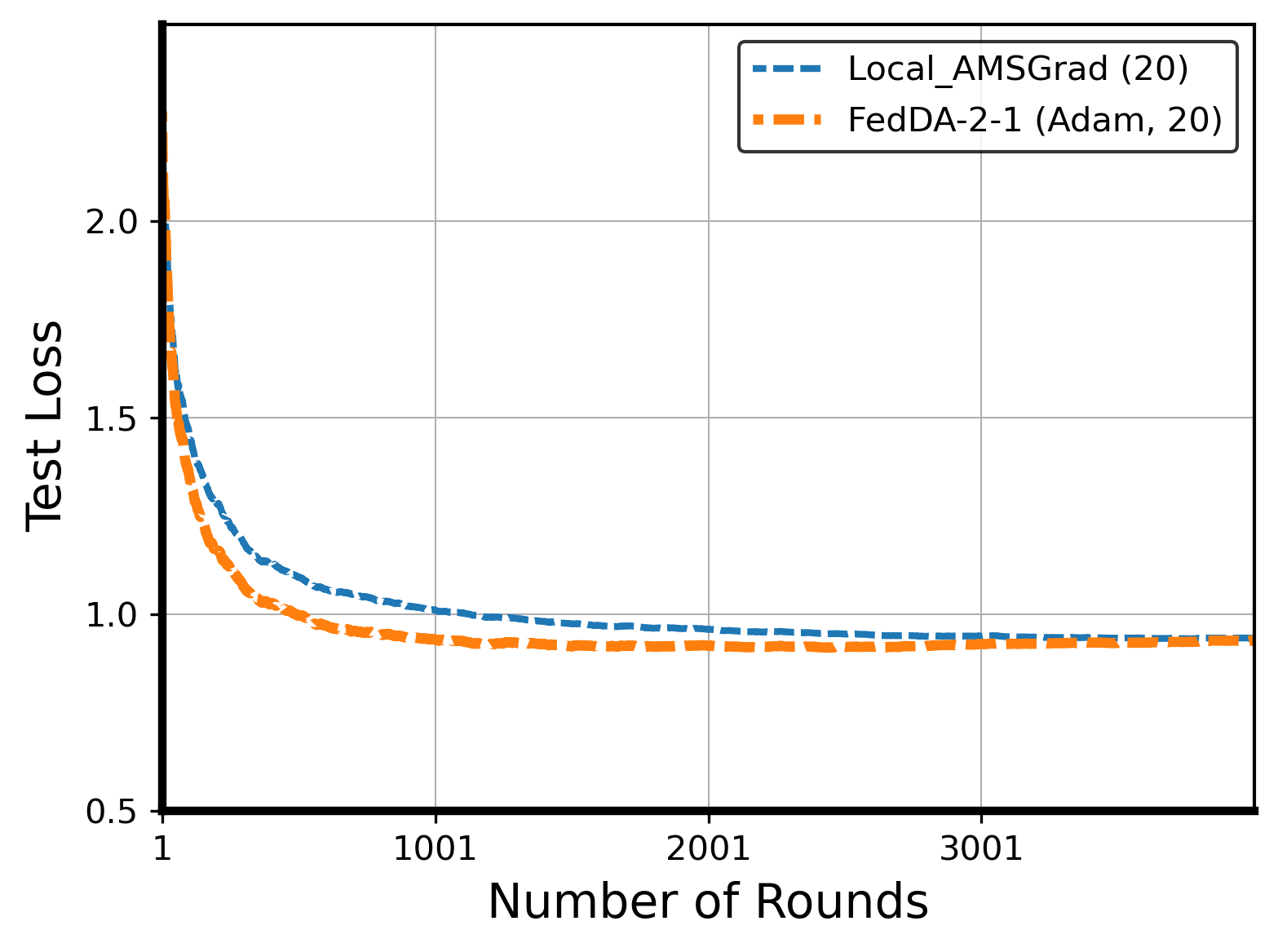}
		\includegraphics[width=0.24\columnwidth]{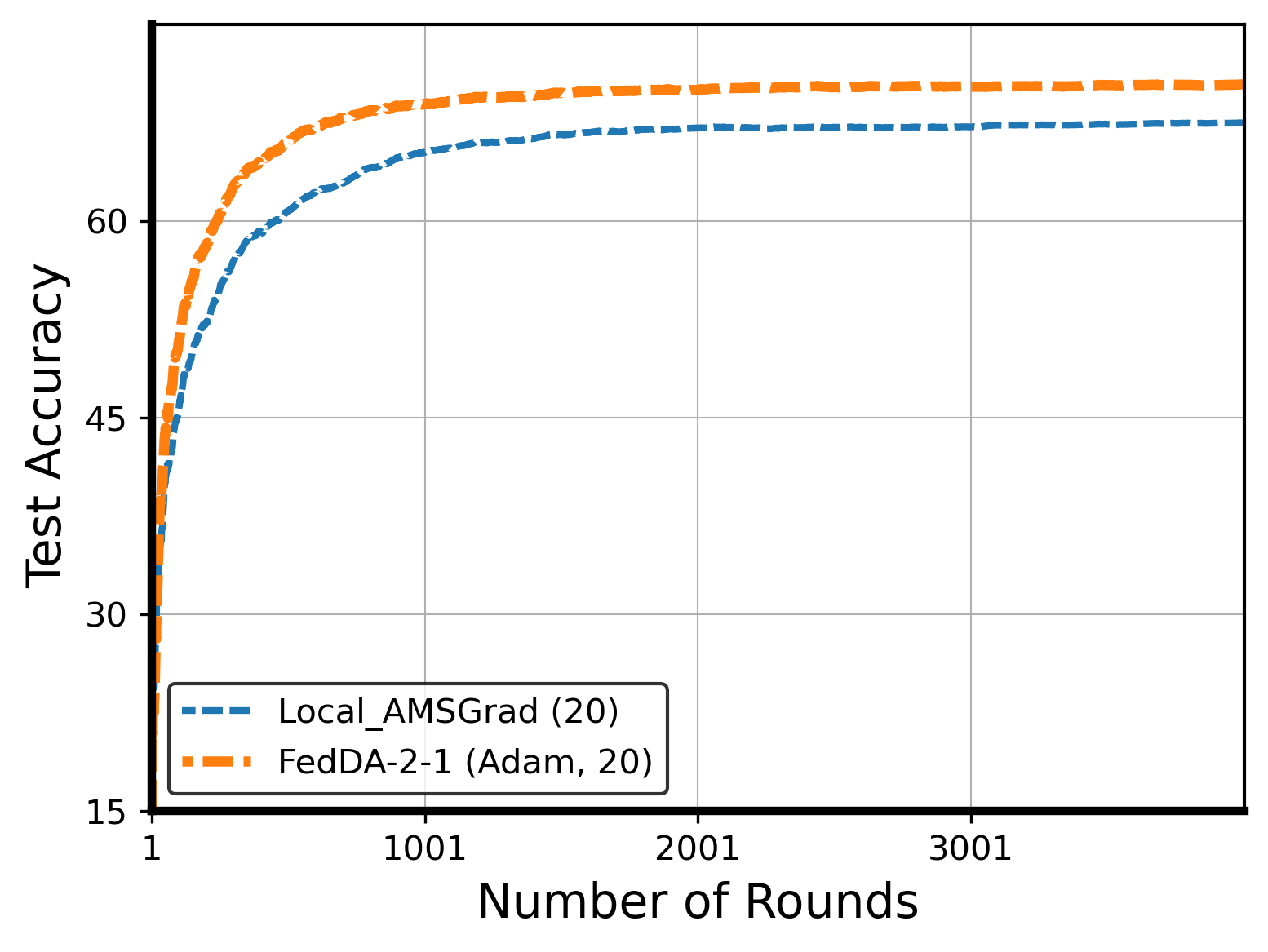}
		\caption{Comparison between Local-AMSGrad vs FedDA-2-1. From top to bottom, we show  $I = 5, 10, 20$ respectively. The number inside the parentheses is the value of $I$.}
		\label{fig:cifar-amsgrad-comp}
	\end{center}
\end{figure}

\begin{figure}[ht]
	\begin{center}
		\includegraphics[width=0.24\columnwidth]{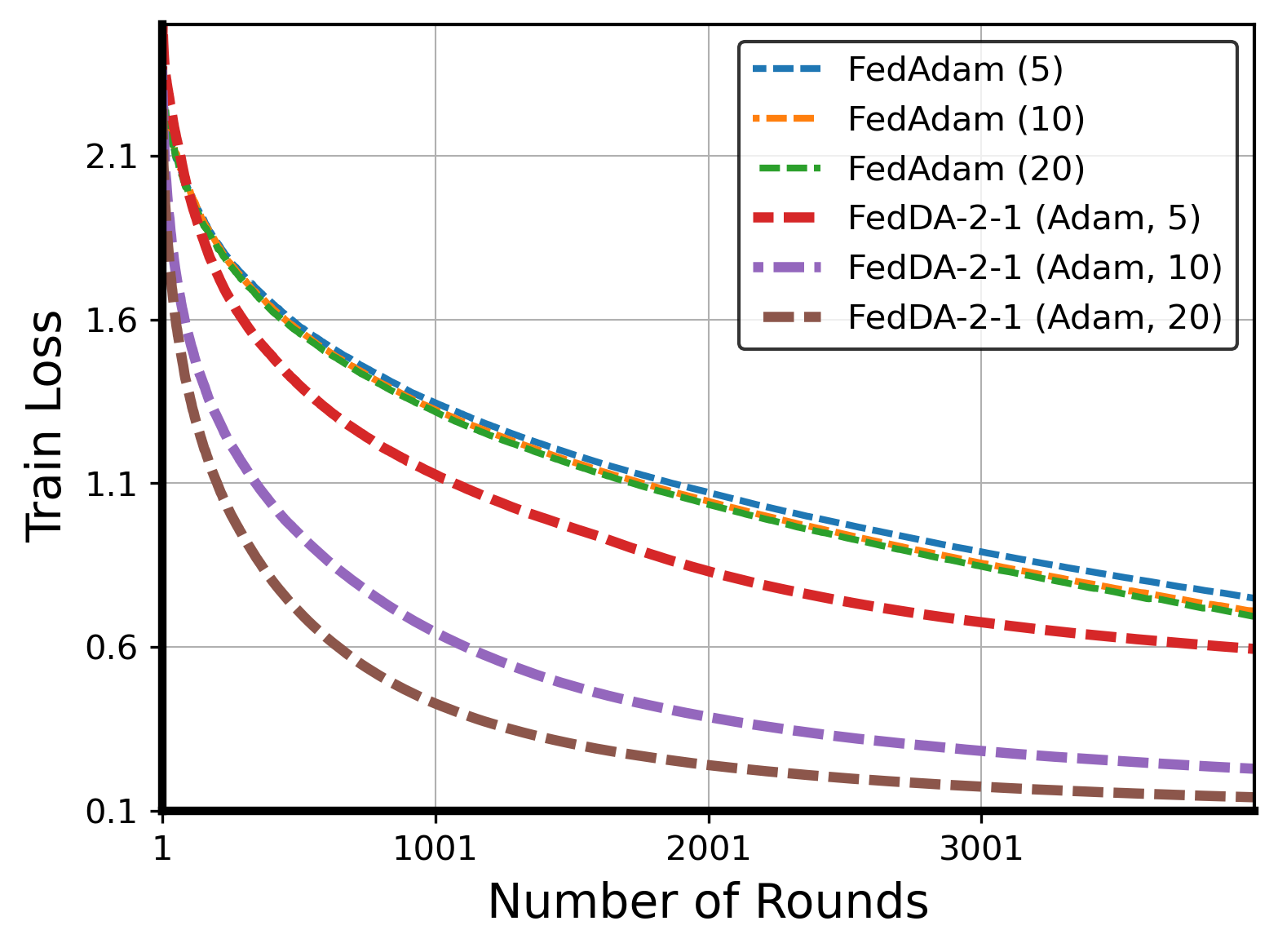}
		\includegraphics[width=0.24\columnwidth]{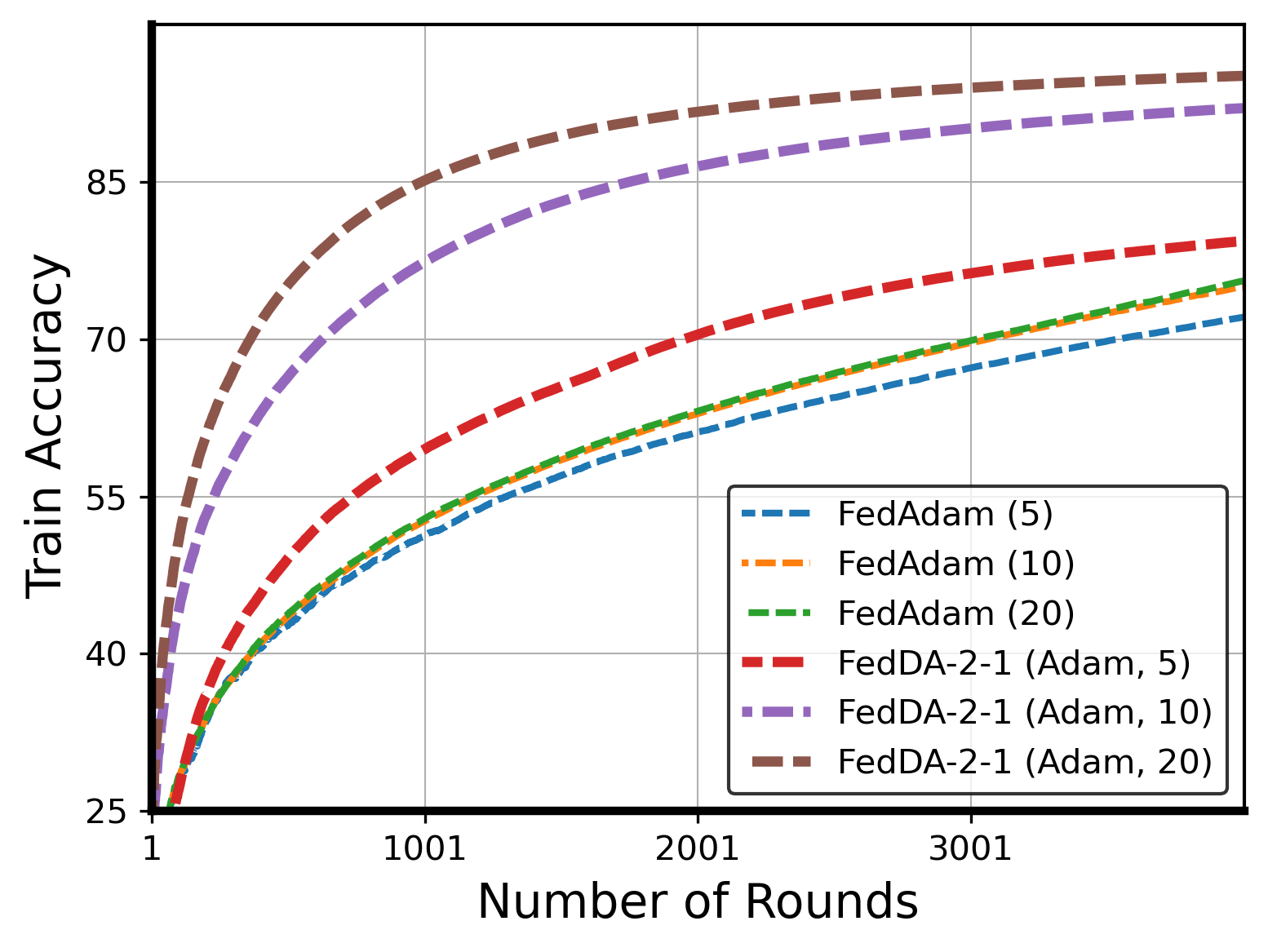}
		\includegraphics[width=0.24\columnwidth]{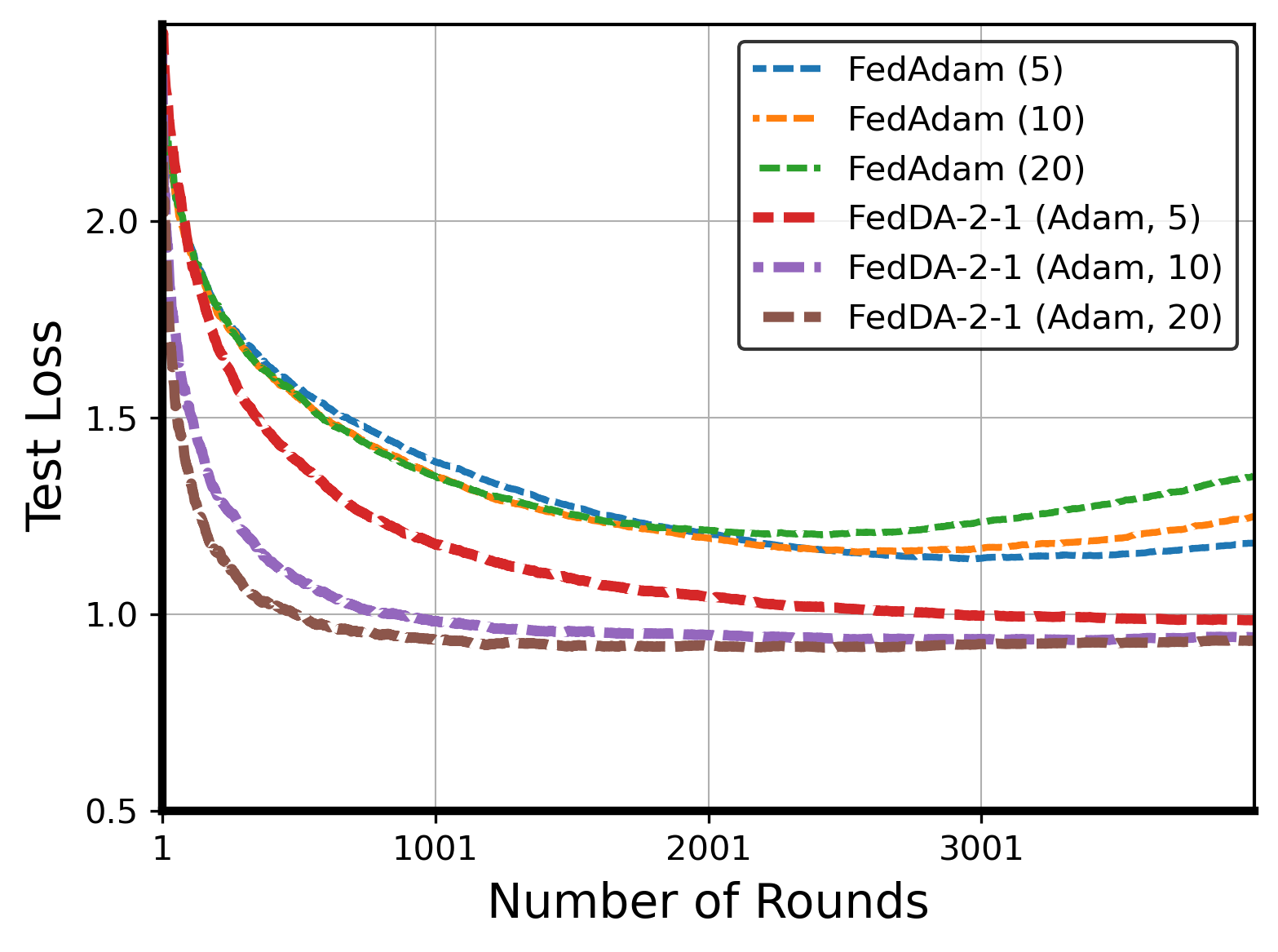}
		\includegraphics[width=0.24\columnwidth]{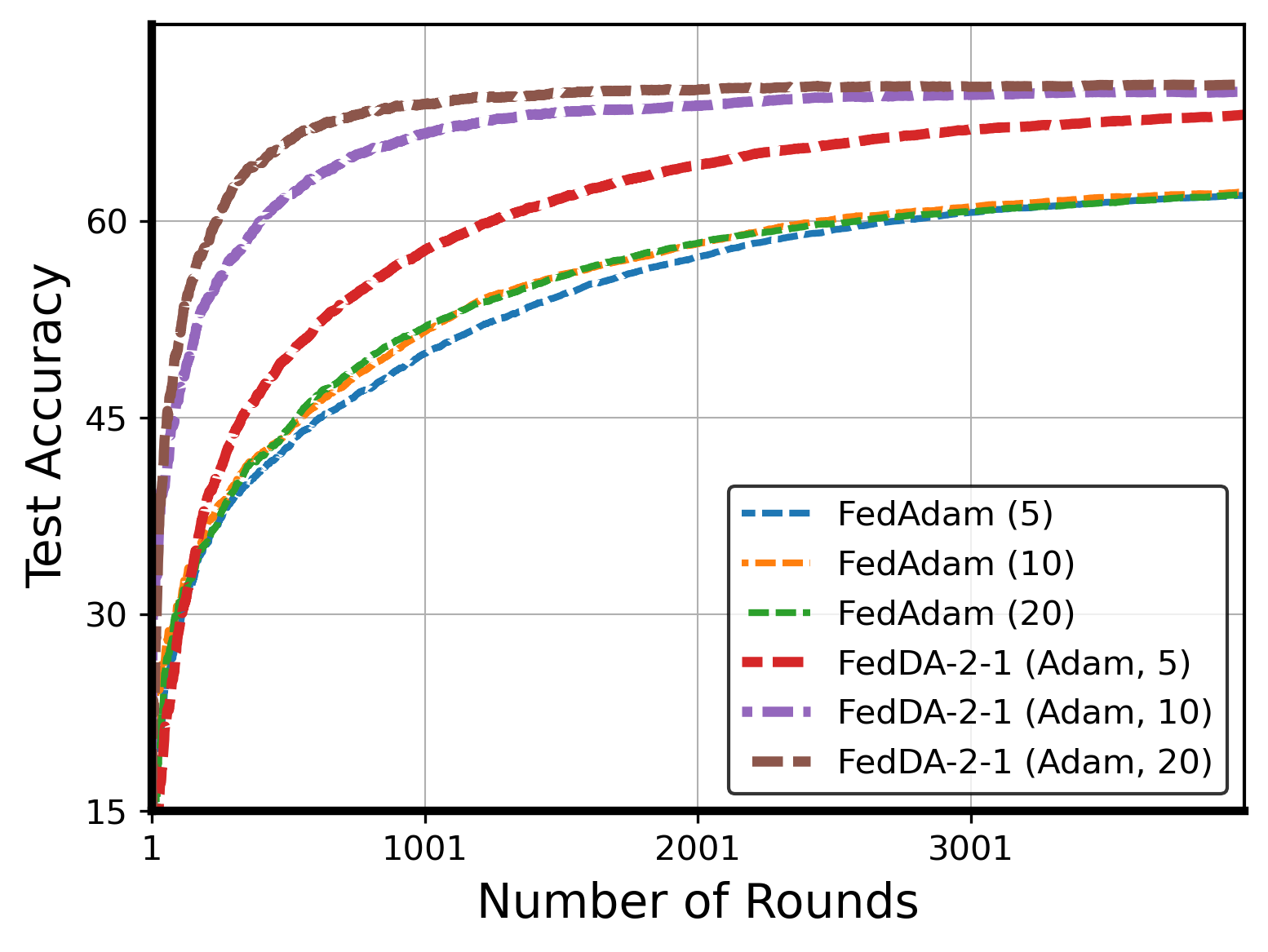}
		\includegraphics[width=0.24\columnwidth]{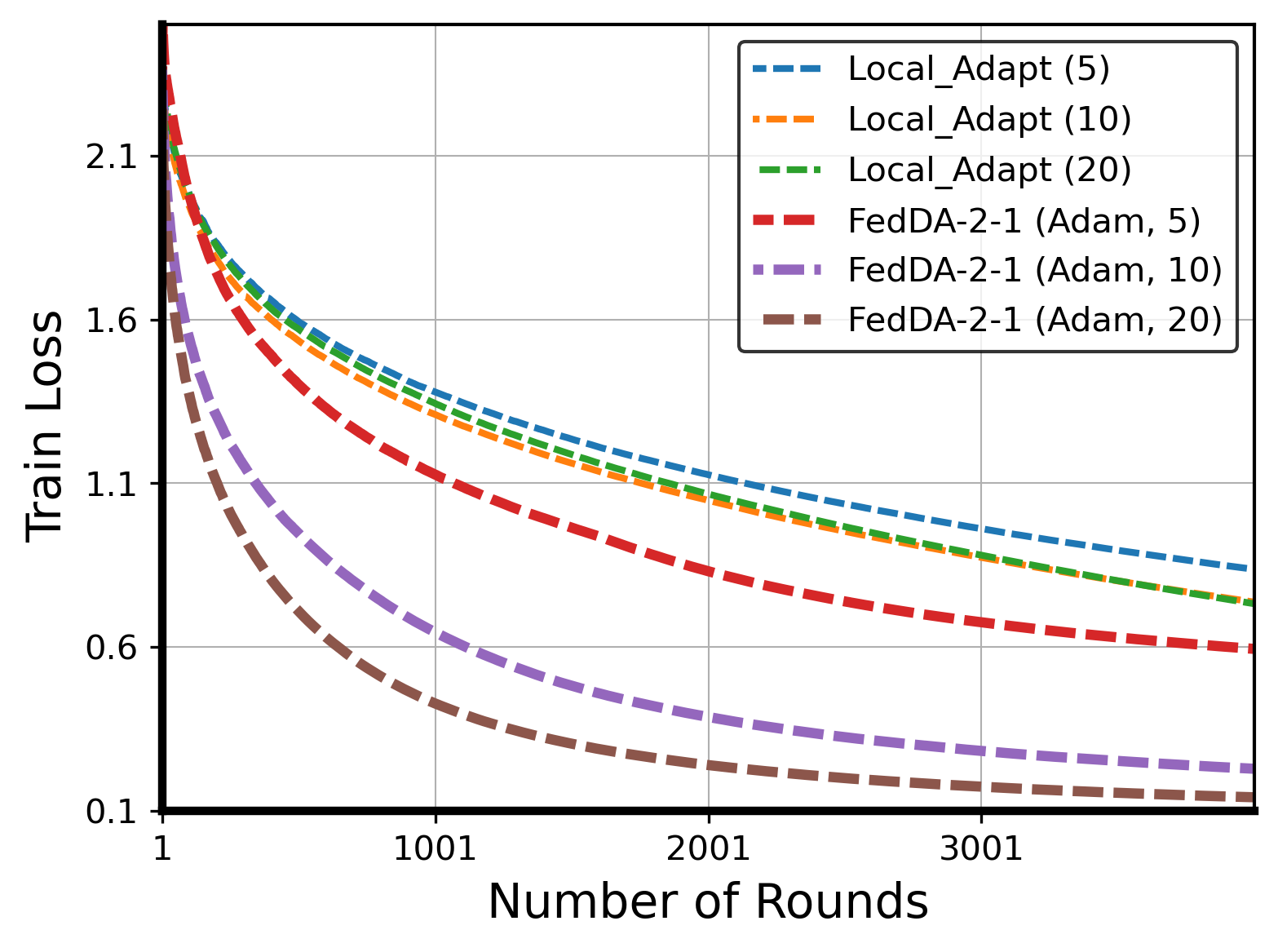}
		\includegraphics[width=0.24\columnwidth]{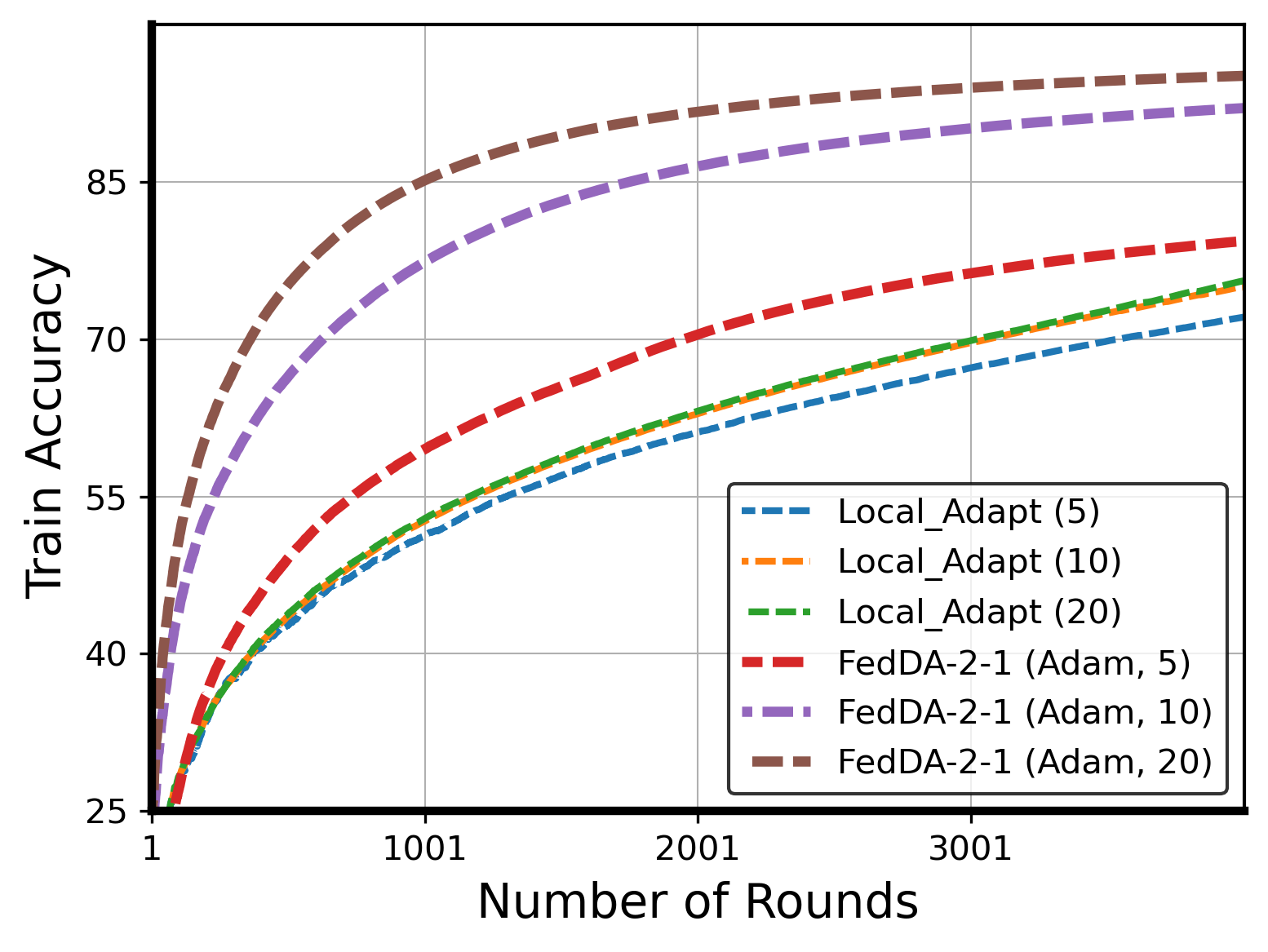}
		\includegraphics[width=0.24\columnwidth]{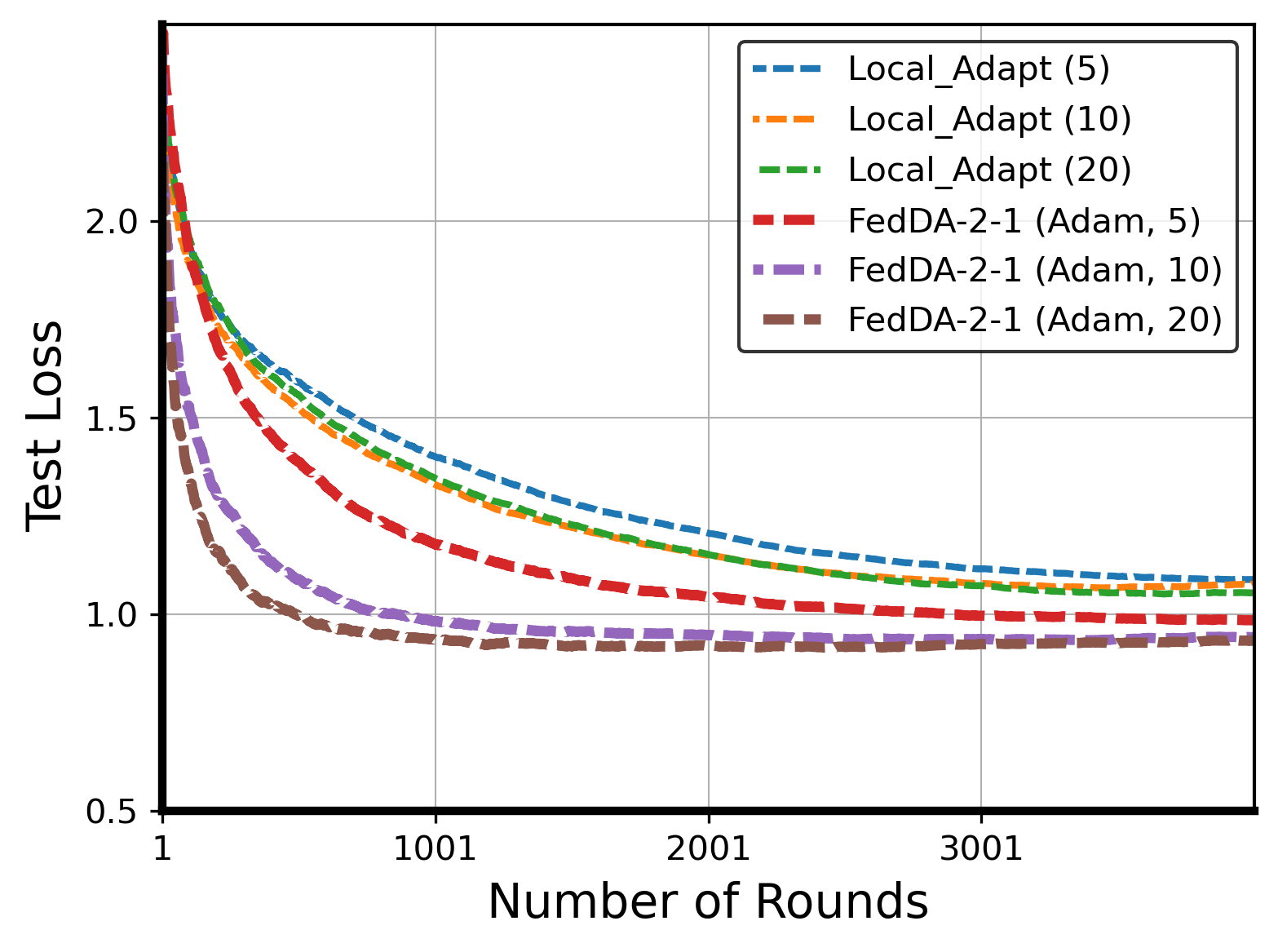}
		\includegraphics[width=0.24\columnwidth]{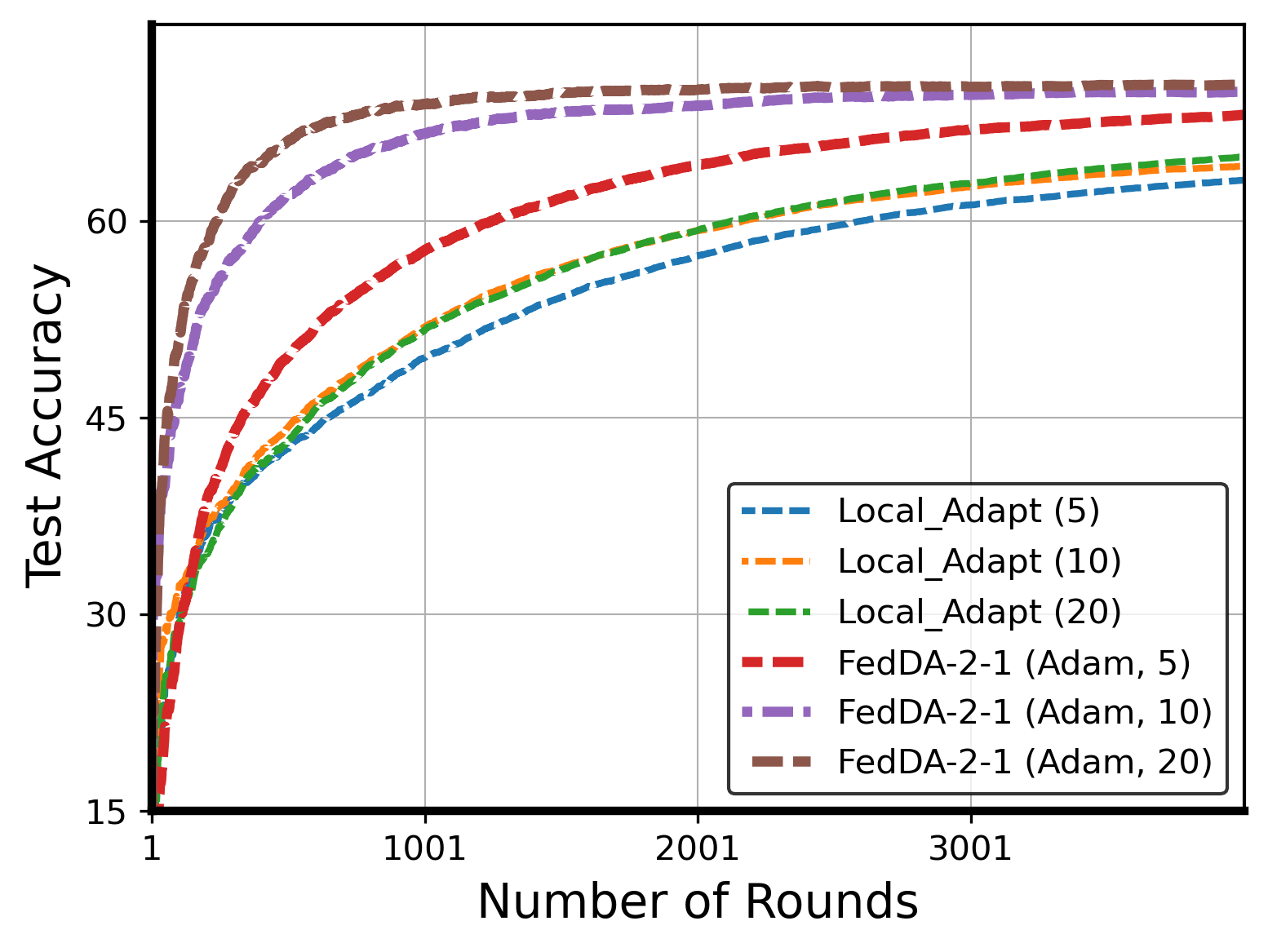}
		\caption{Comparison between FedAdam and Local-Adapt vs FedDA-2-1. The number inside the parentheses is the value of $I$.}
		\label{fig:cifar-fedadam-comp}
	\end{center}
\end{figure}

\begin{figure}[ht]
	\begin{center}
		\includegraphics[width=0.24\columnwidth]{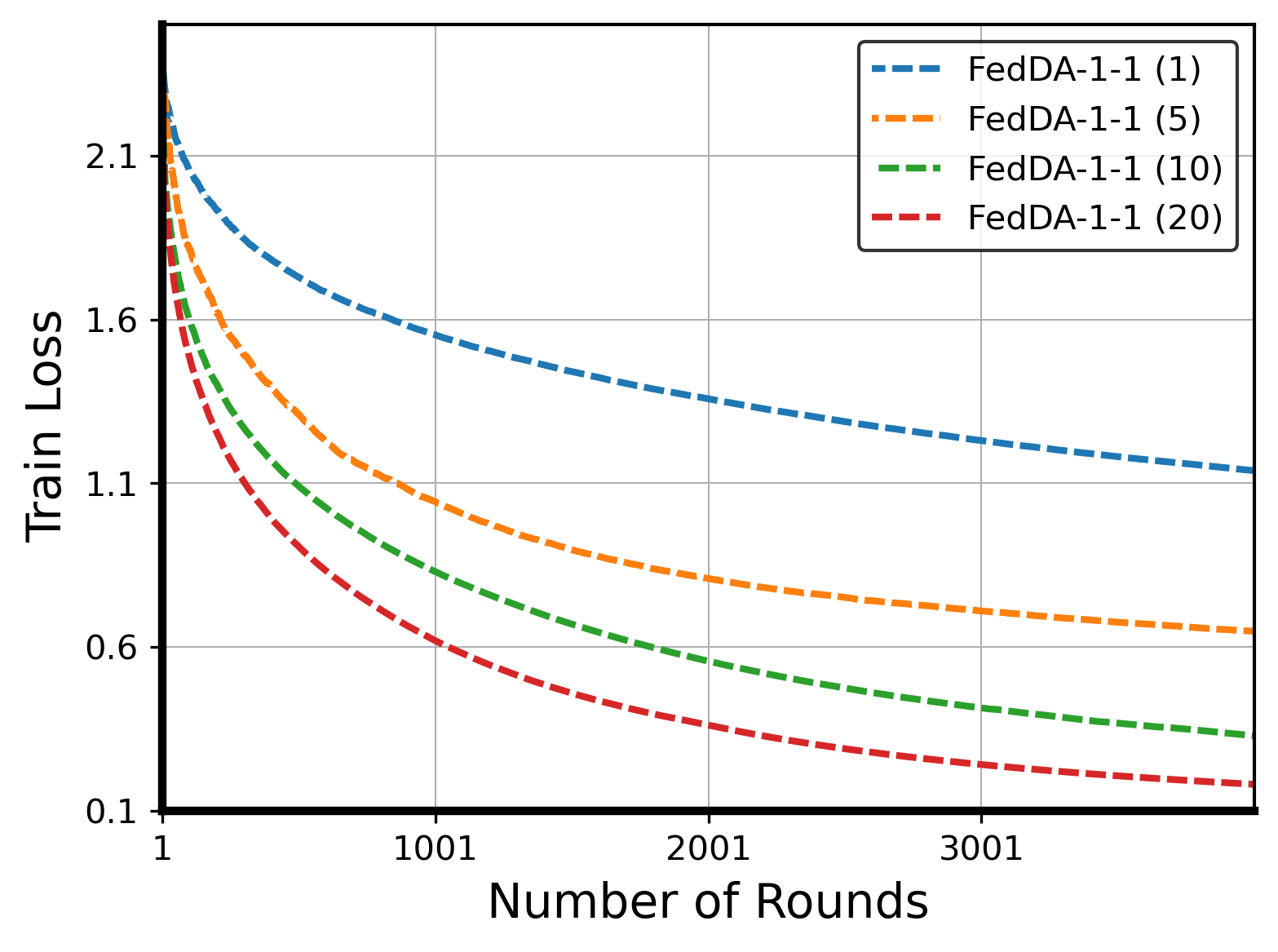}
		\includegraphics[width=0.24\columnwidth]{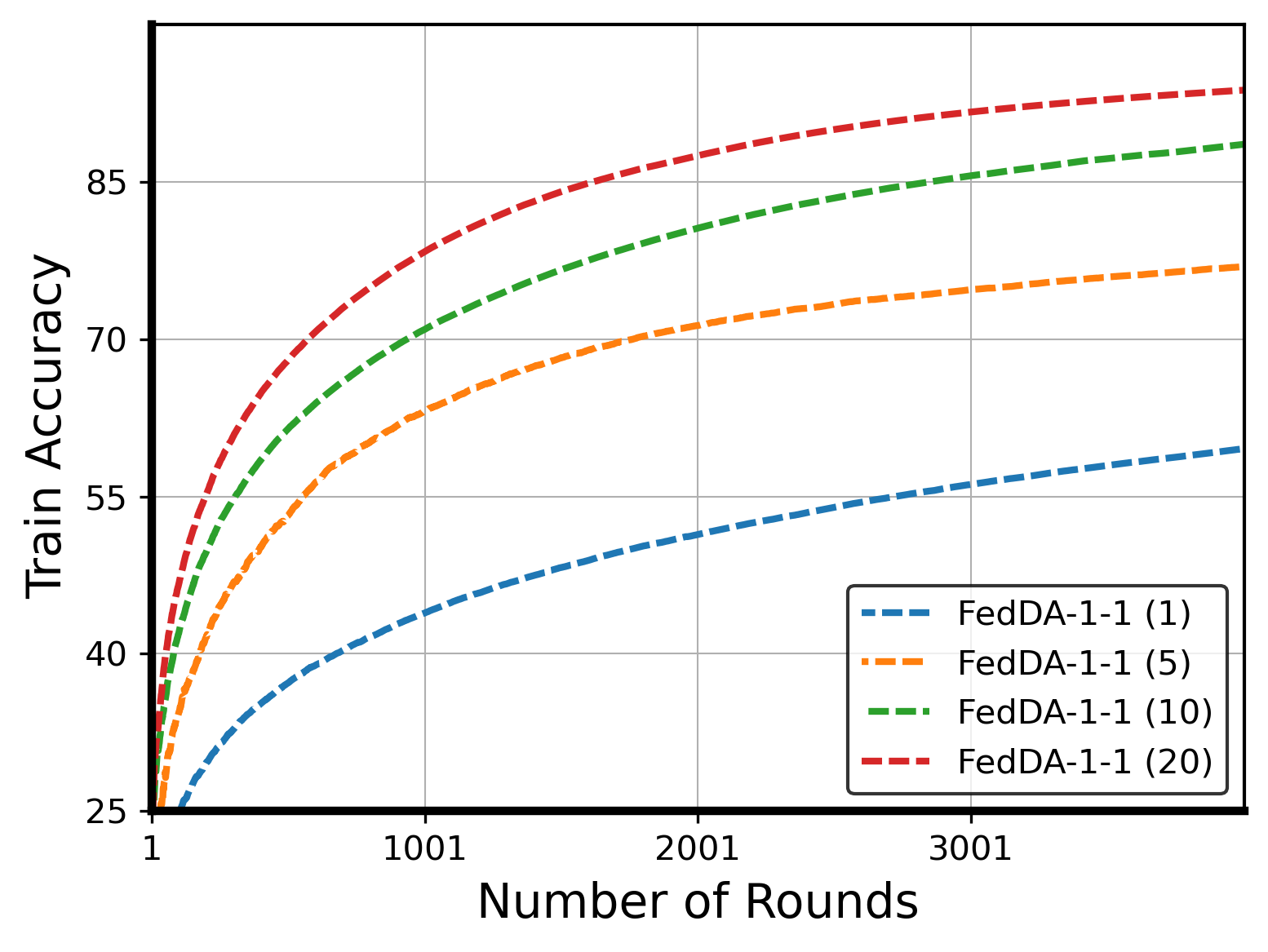}
		\includegraphics[width=0.24\columnwidth]{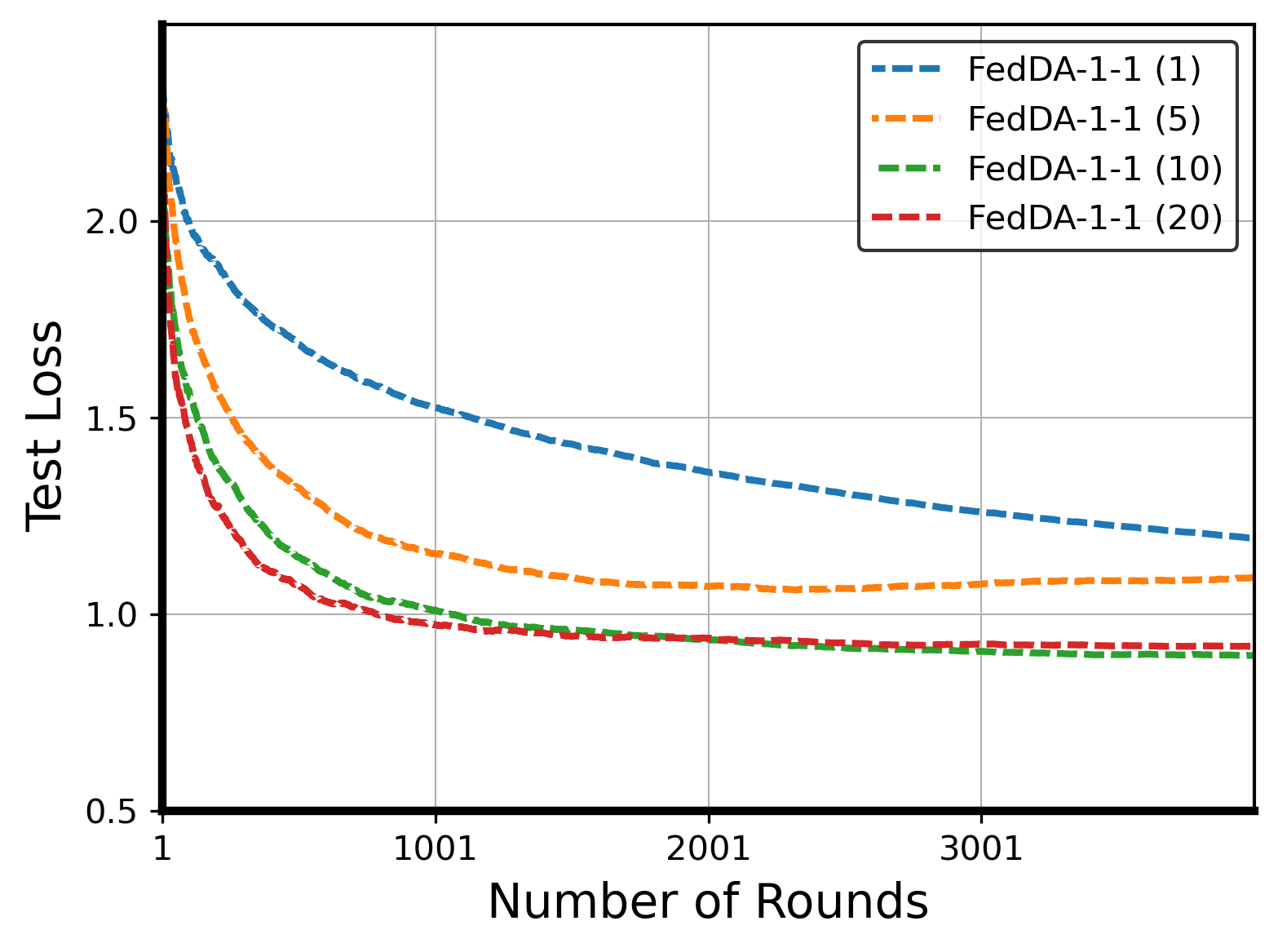}
		\includegraphics[width=0.24\columnwidth]{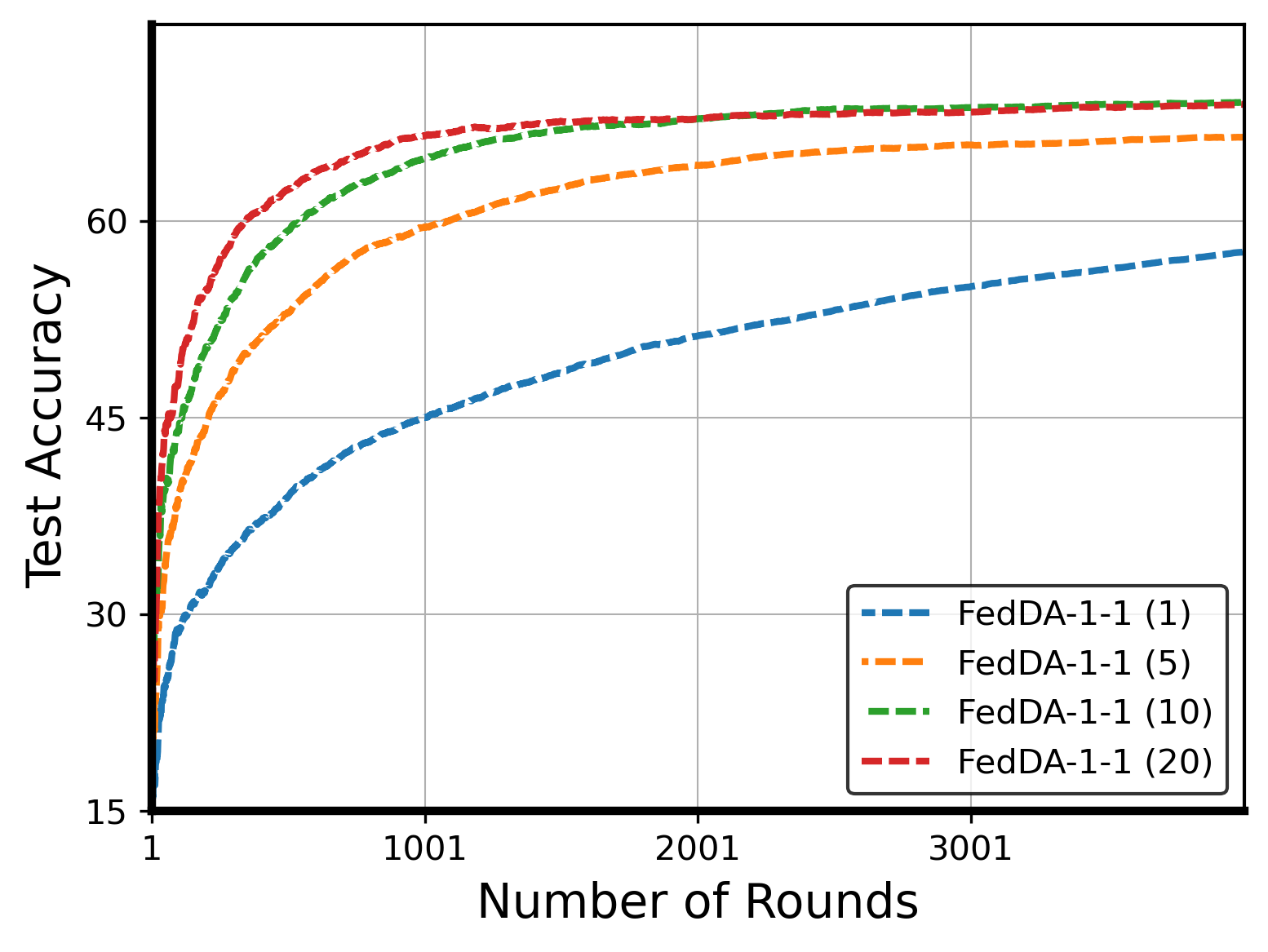}
		\includegraphics[width=0.24\columnwidth]{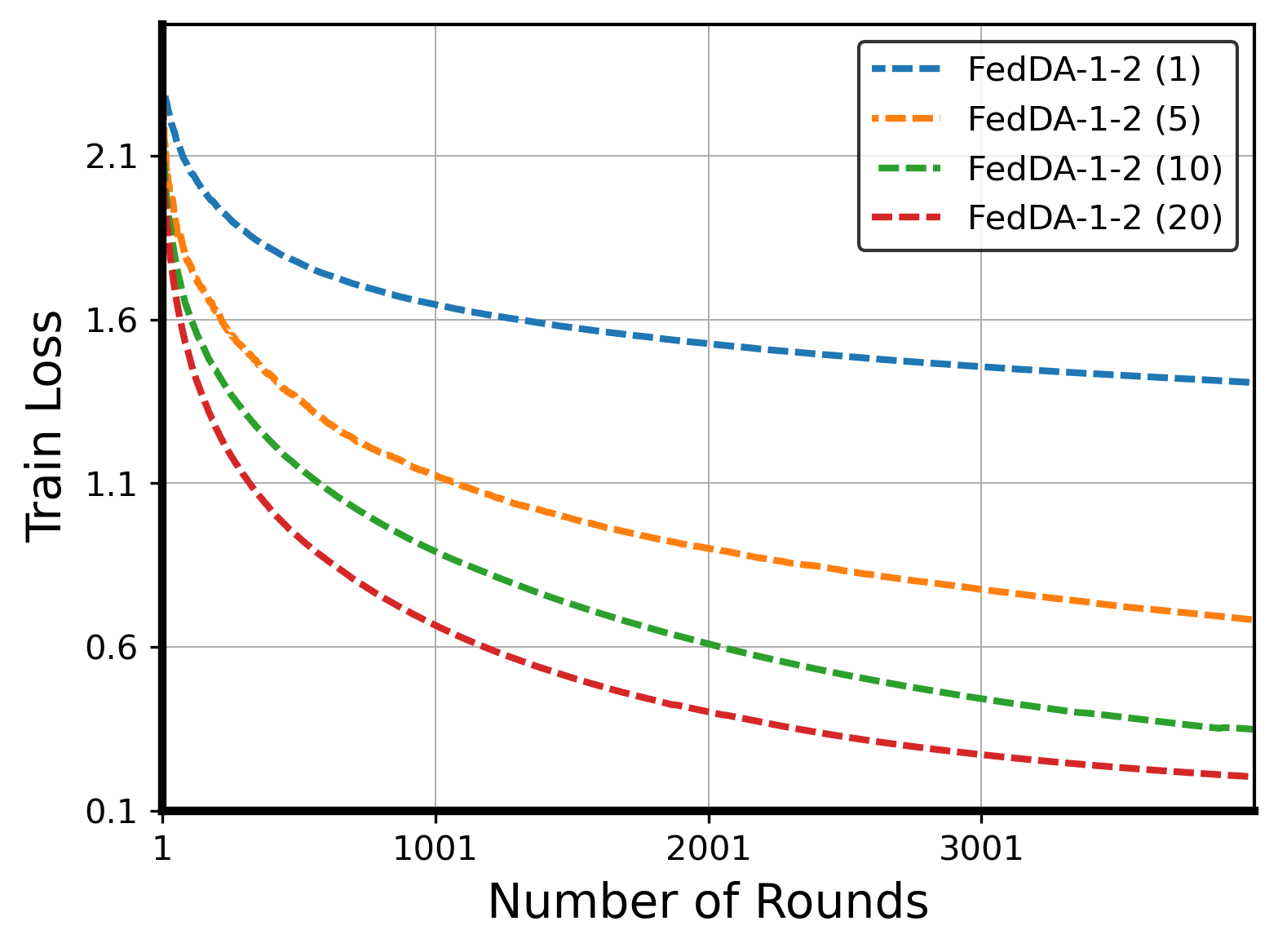}
		\includegraphics[width=0.24\columnwidth]{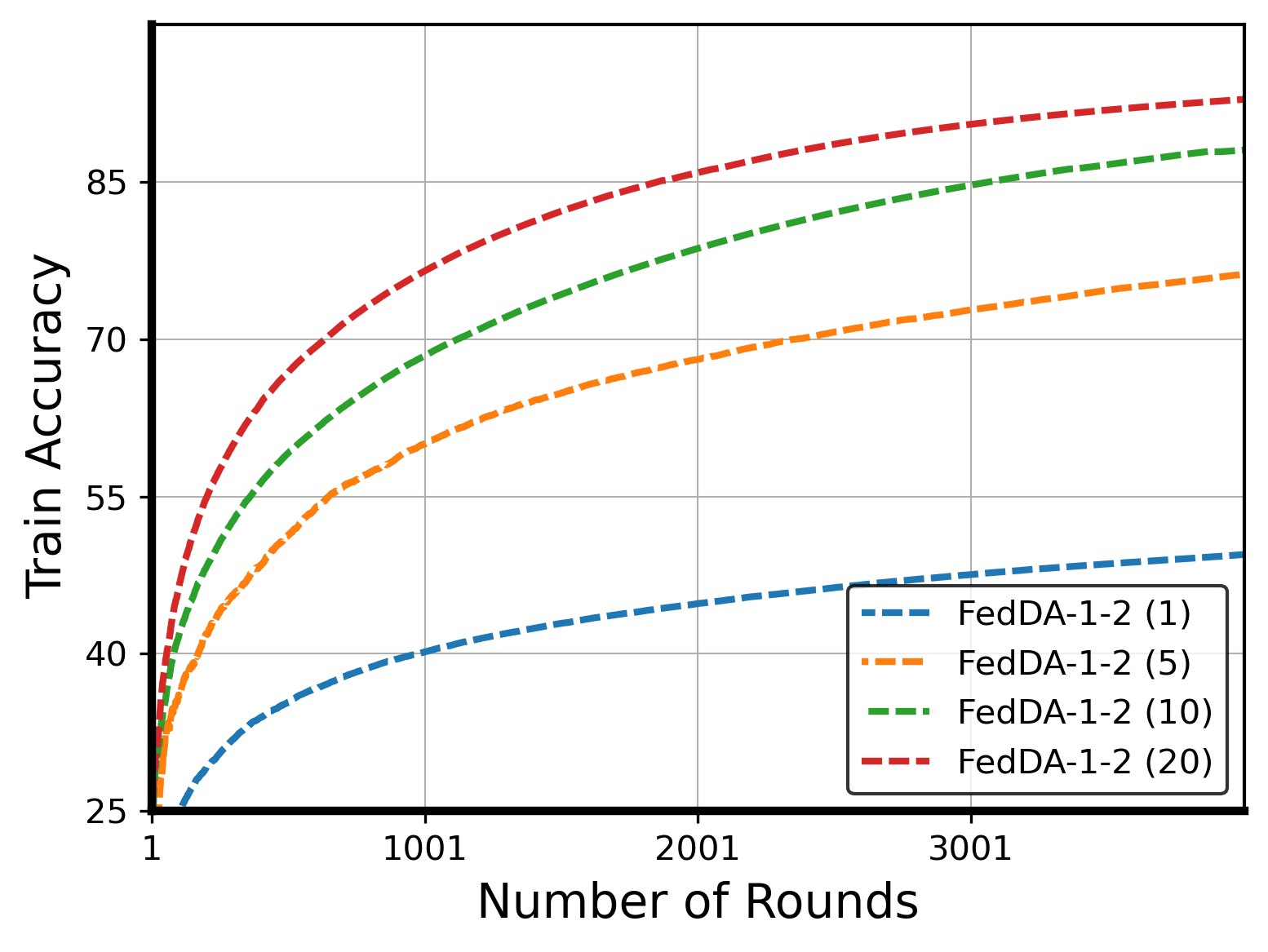}
		\includegraphics[width=0.24\columnwidth]{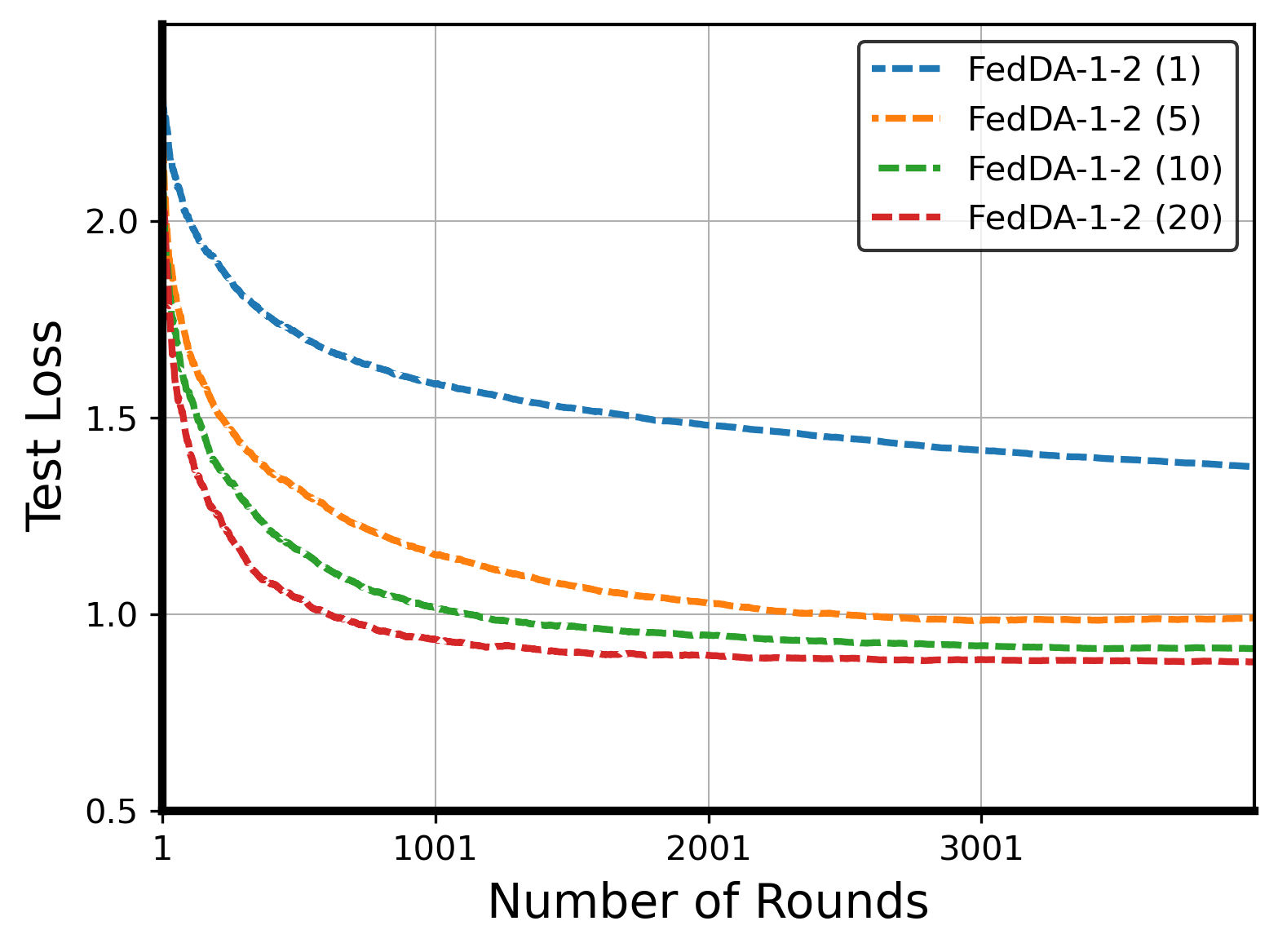}
		\includegraphics[width=0.24\columnwidth]{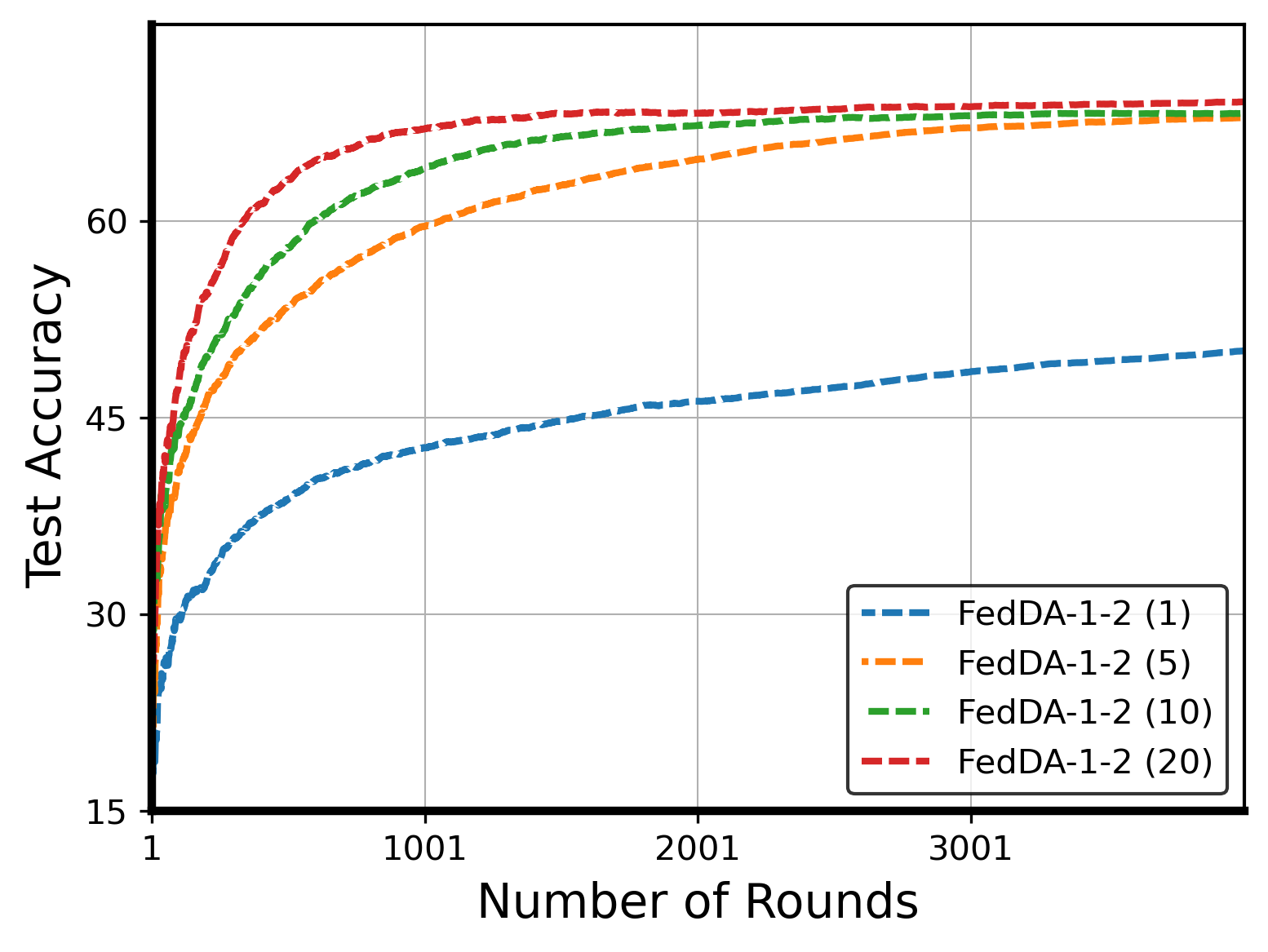}
		\includegraphics[width=0.24\columnwidth]{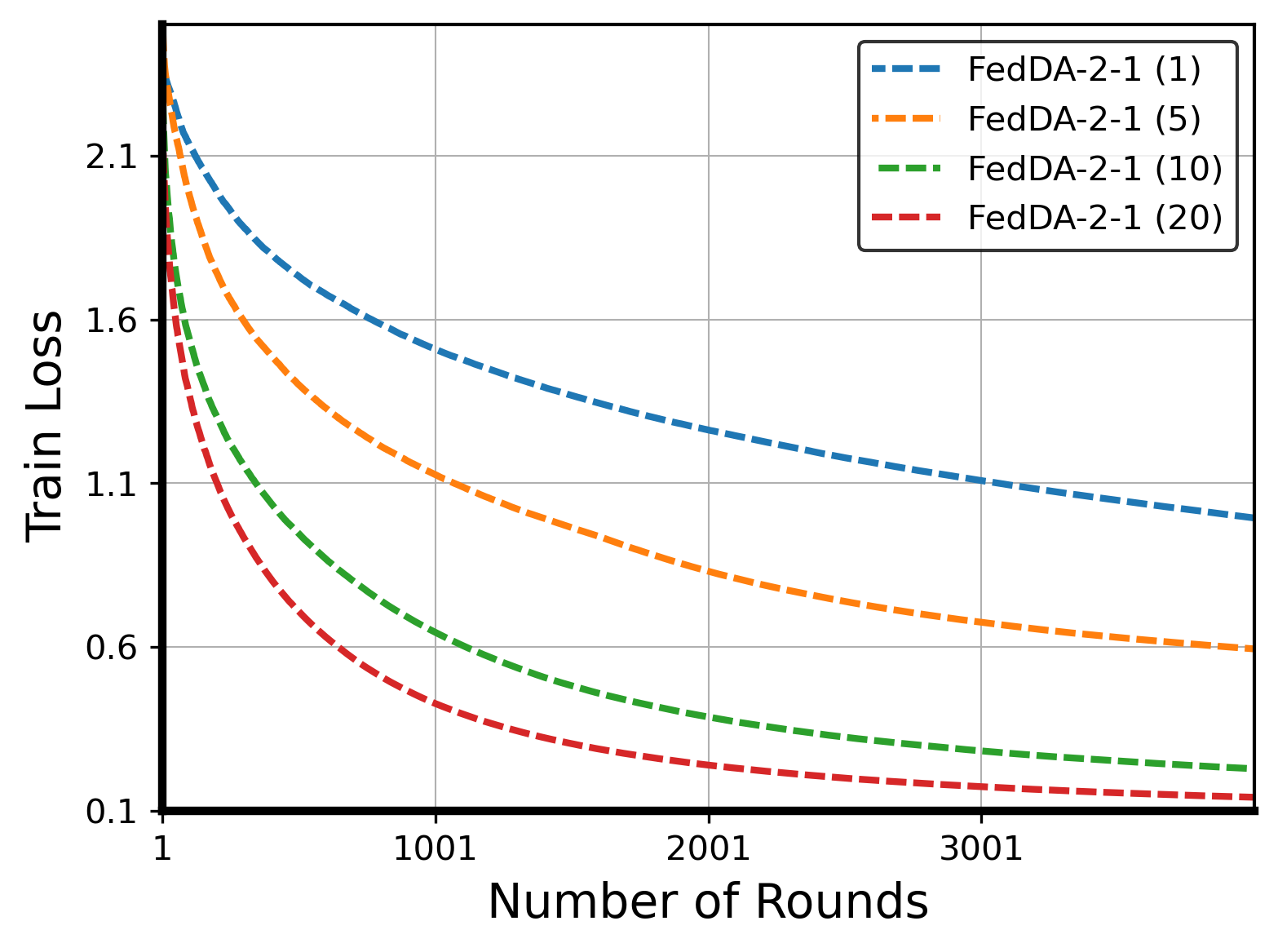}
		\includegraphics[width=0.24\columnwidth]{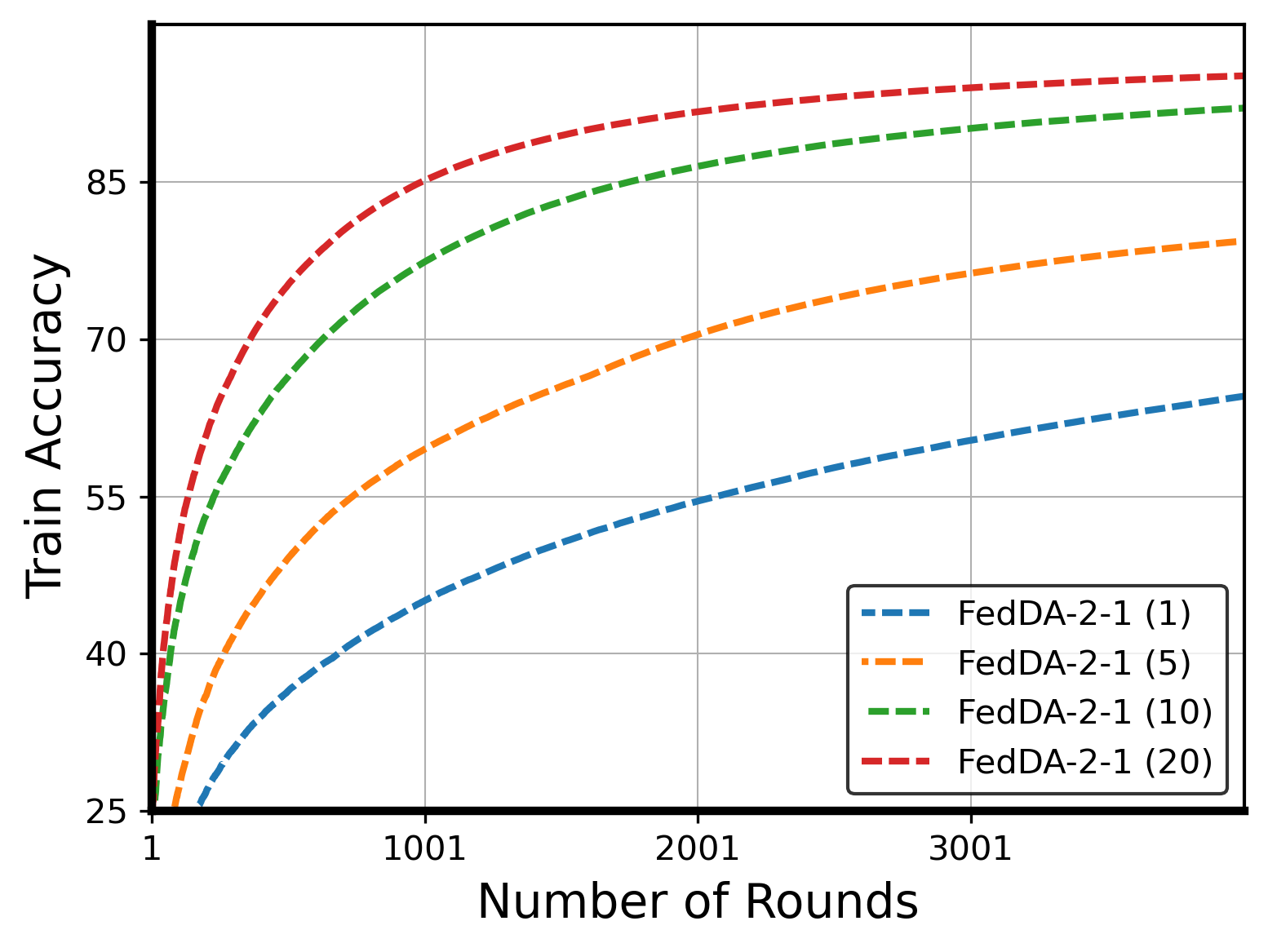}
		\includegraphics[width=0.24\columnwidth]{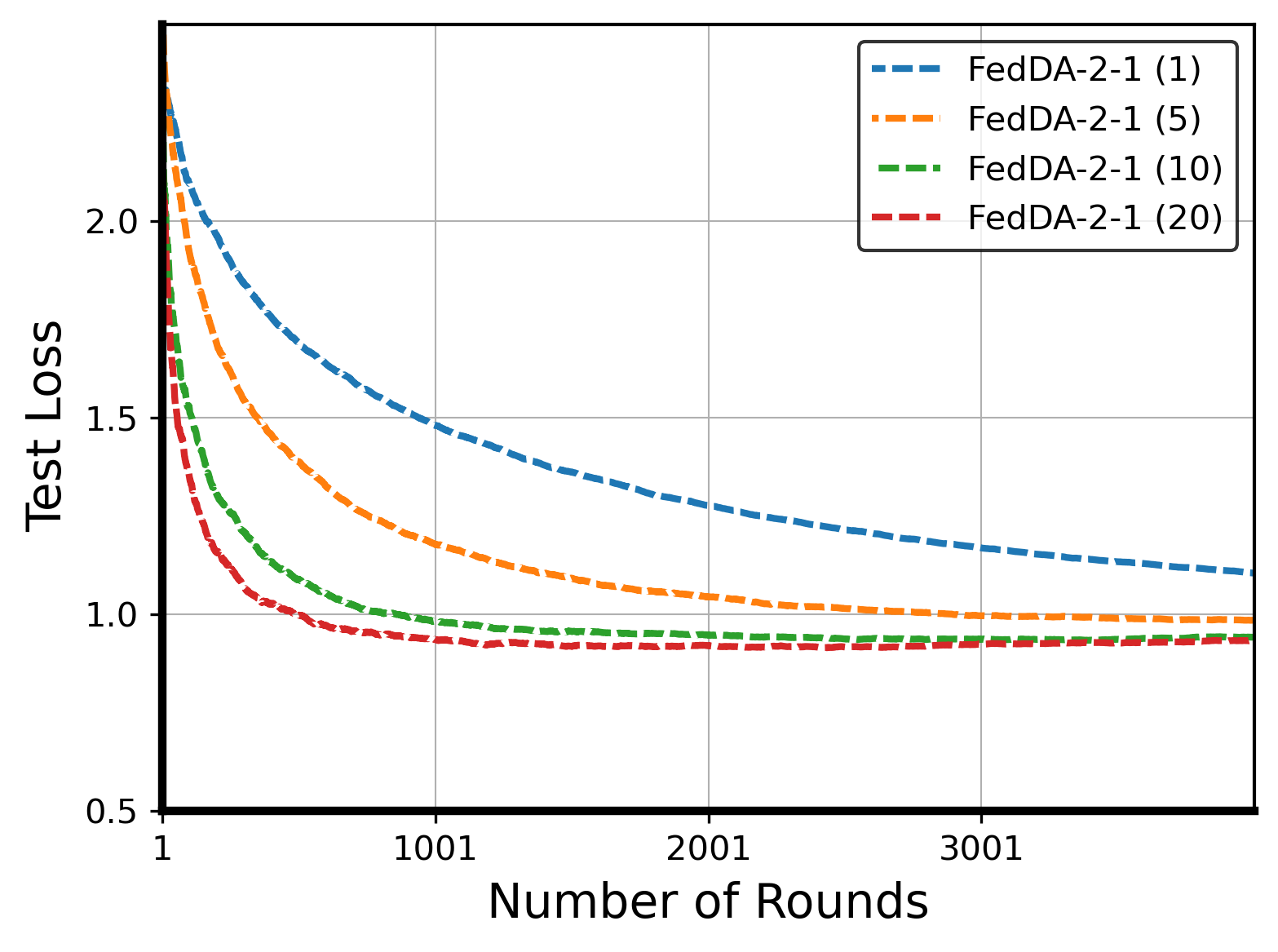}
		\includegraphics[width=0.24\columnwidth]{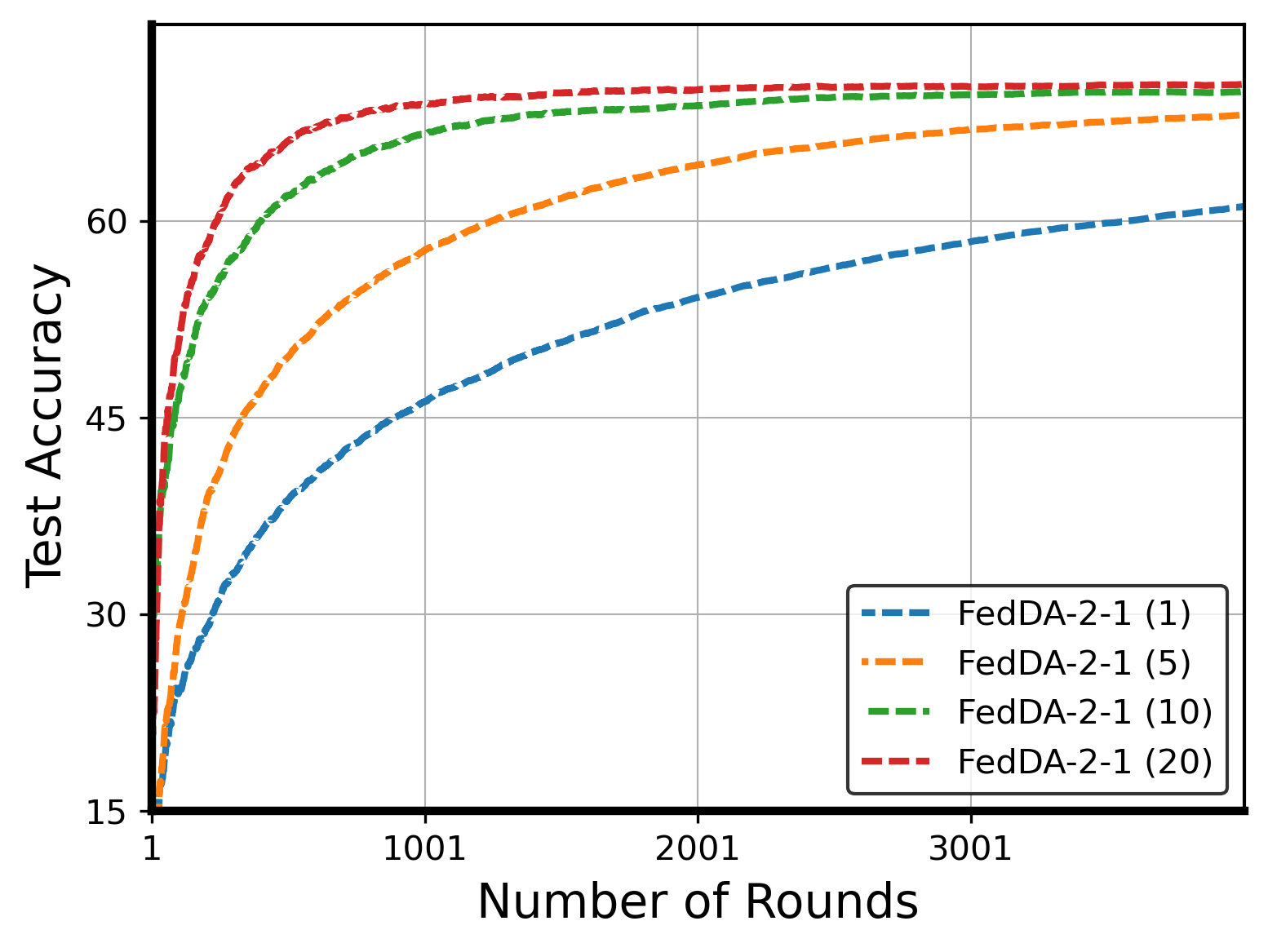}
		\includegraphics[width=0.24\columnwidth]{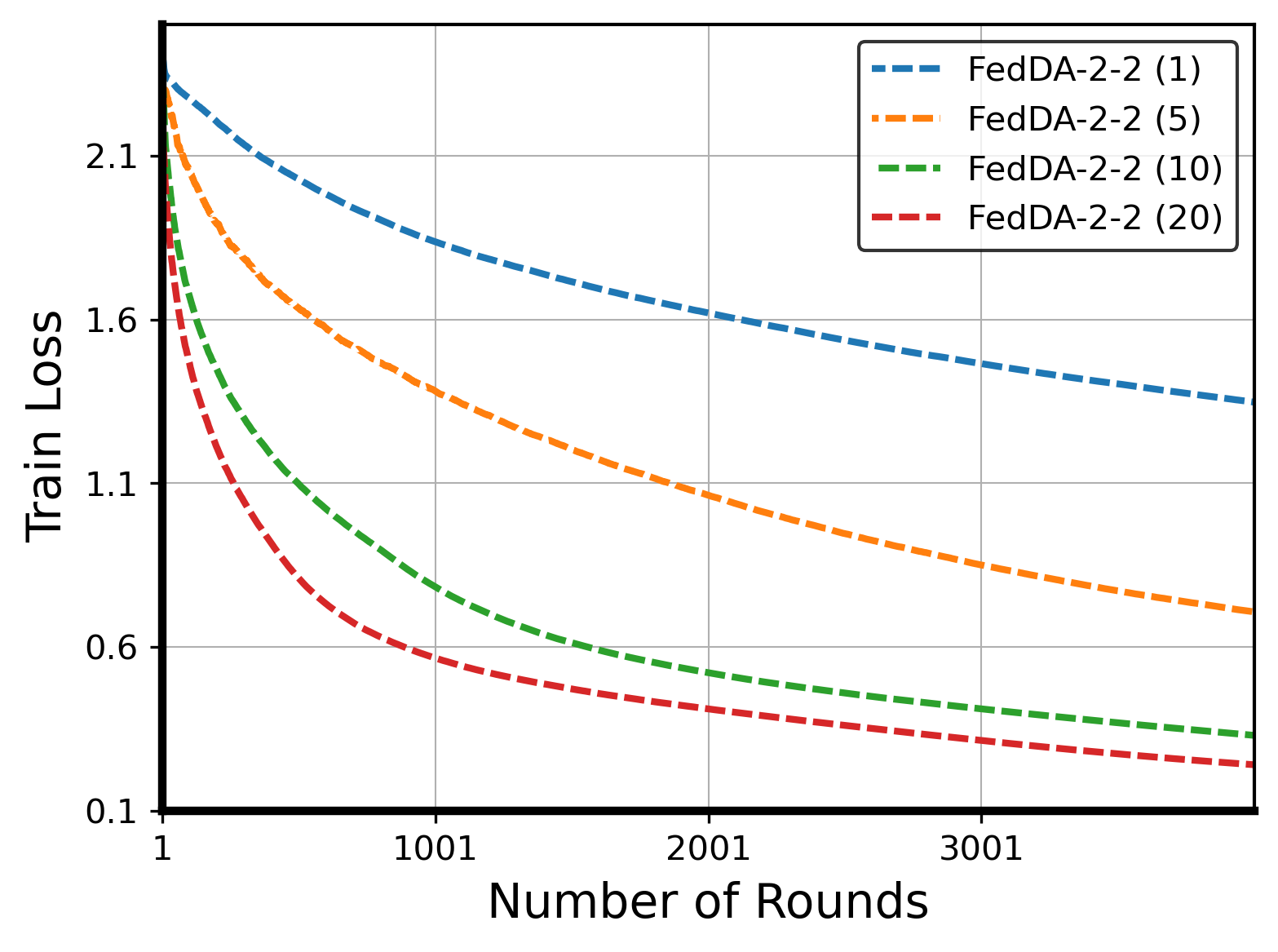}
		\includegraphics[width=0.24\columnwidth]{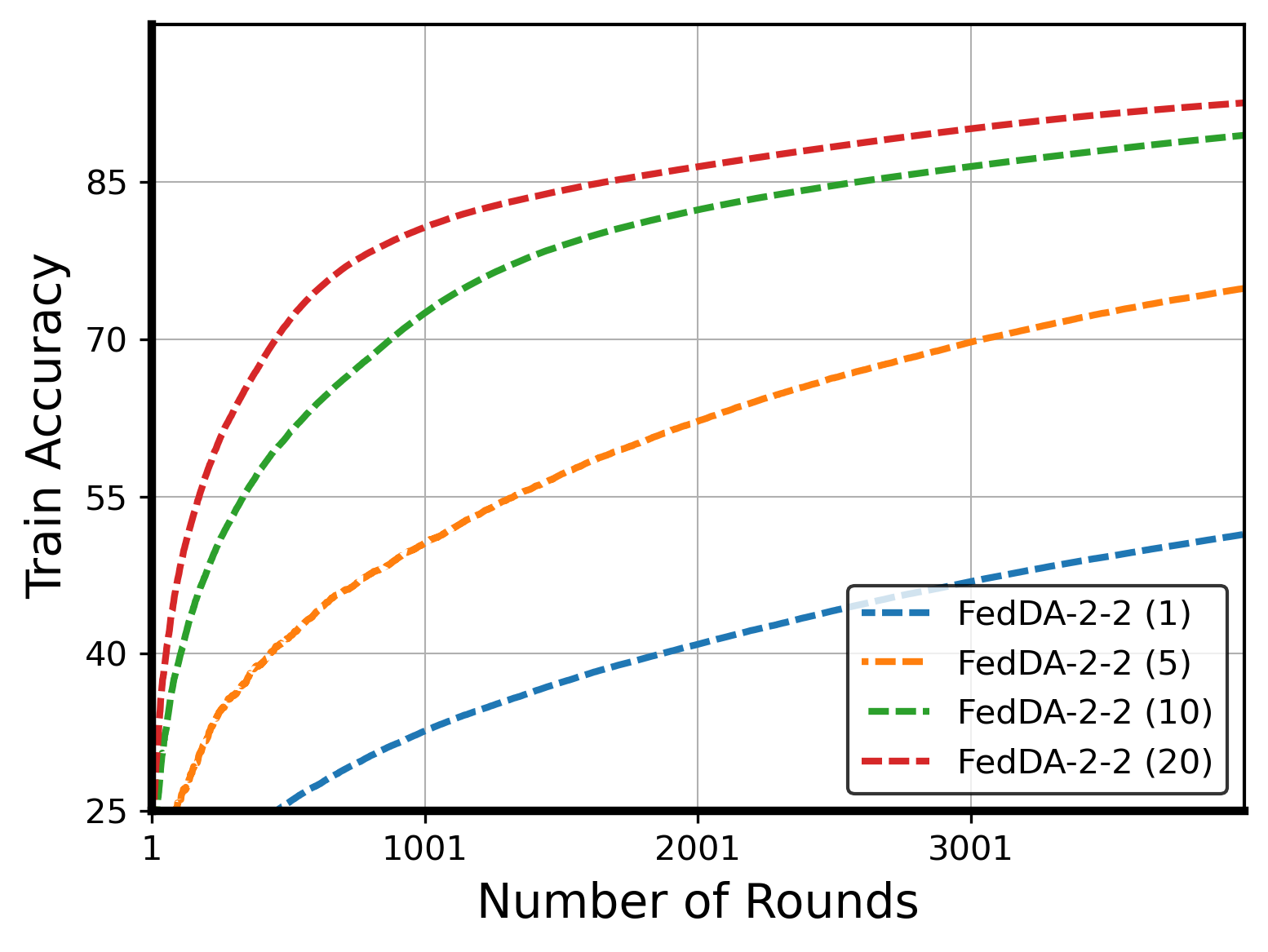}
		\includegraphics[width=0.24\columnwidth]{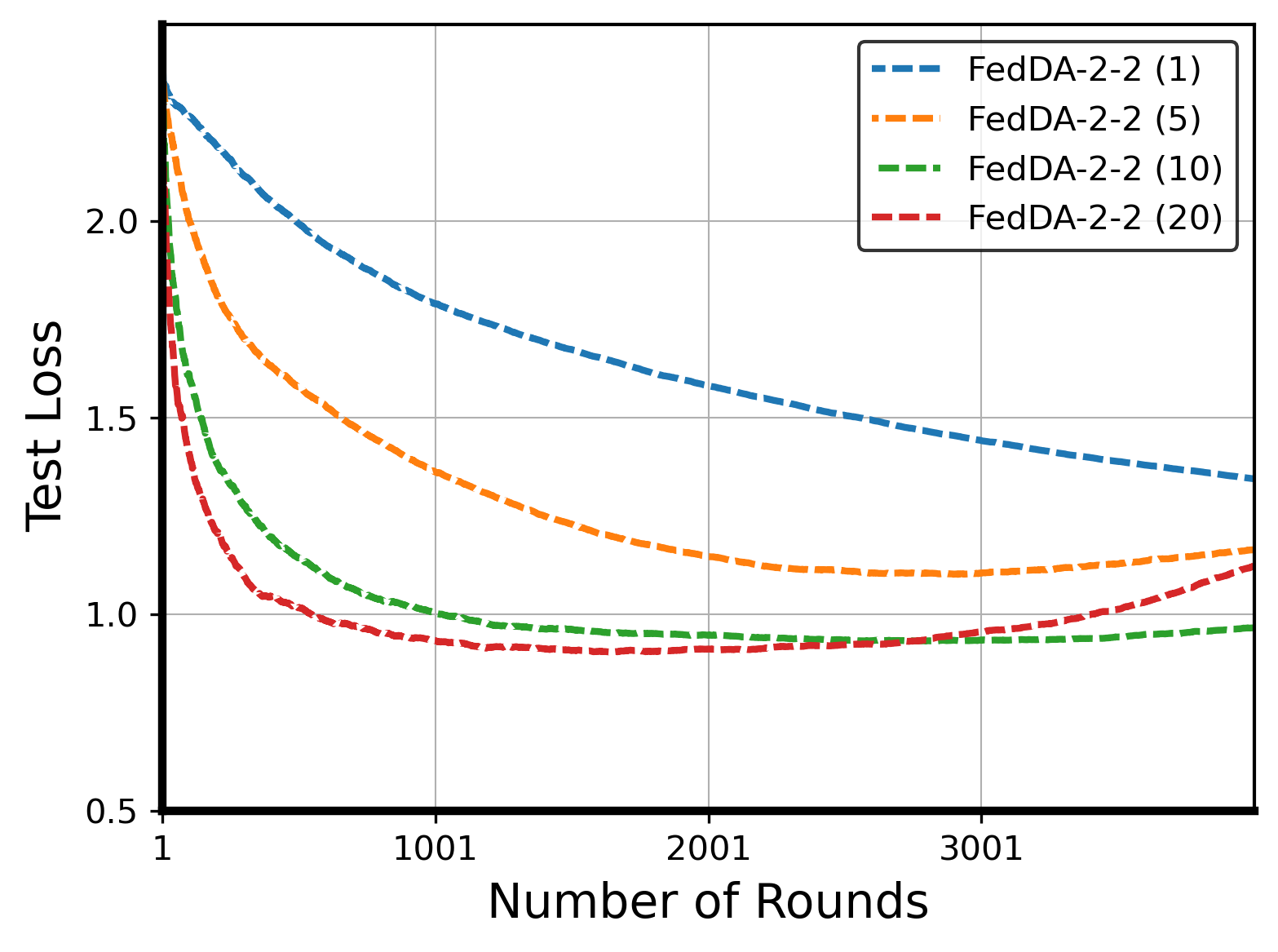}
		\includegraphics[width=0.24\columnwidth]{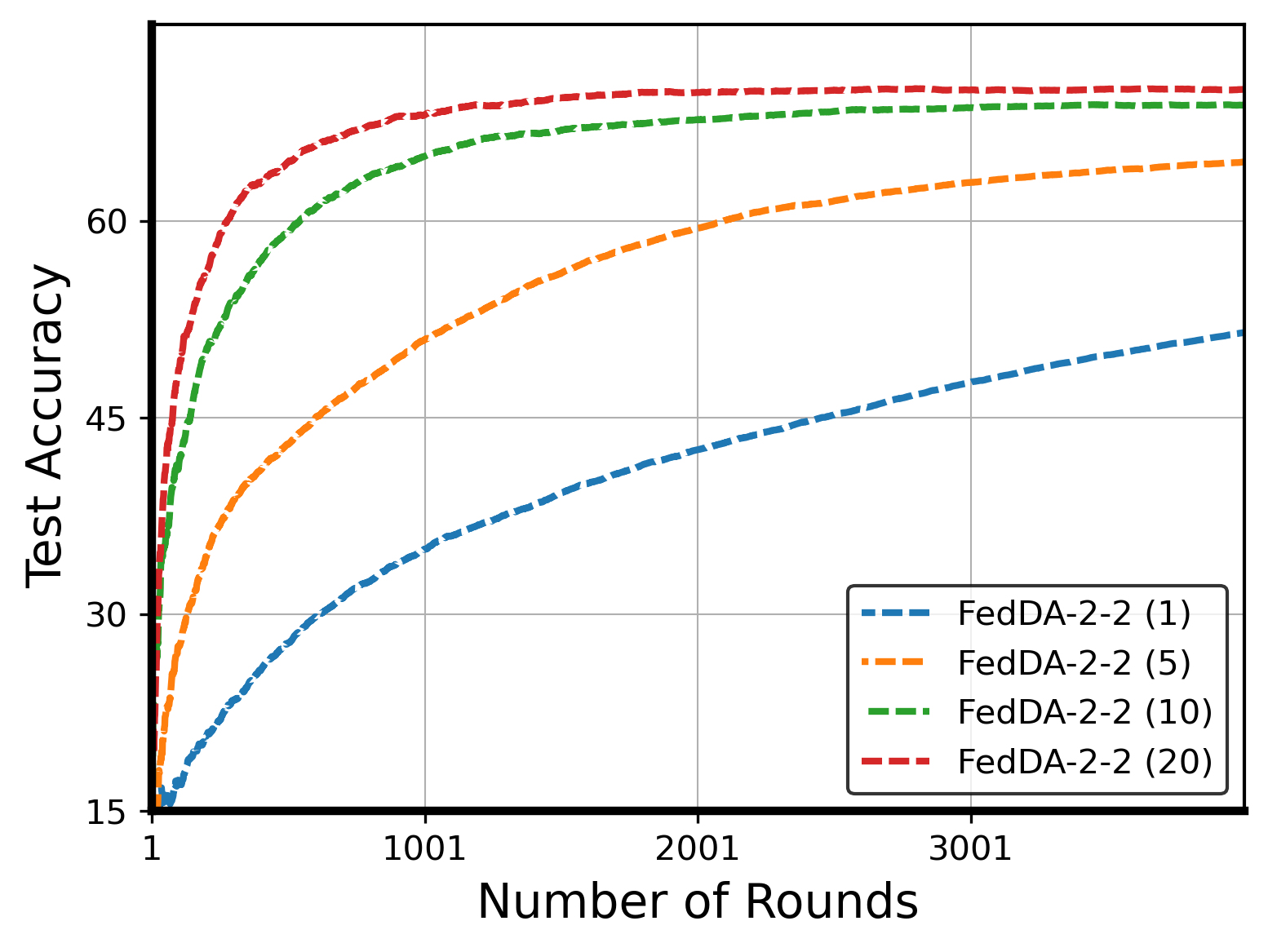}
		\caption{Ablation study of local steps $I$. From top row to the bottom row, we show results for FedDA-1-1, FedDA-1-2, FedDA-2-1 and FedDA-2-2. The number inside the parentheses is the value of $I$.}
		\label{fig:cifar-ablation}
	\end{center}
\end{figure}

\subsection{Image Classification Task with Heterogeneous CIFAR10}
\begin{figure}[ht]
	\begin{center}
		\includegraphics[width=0.24\columnwidth]{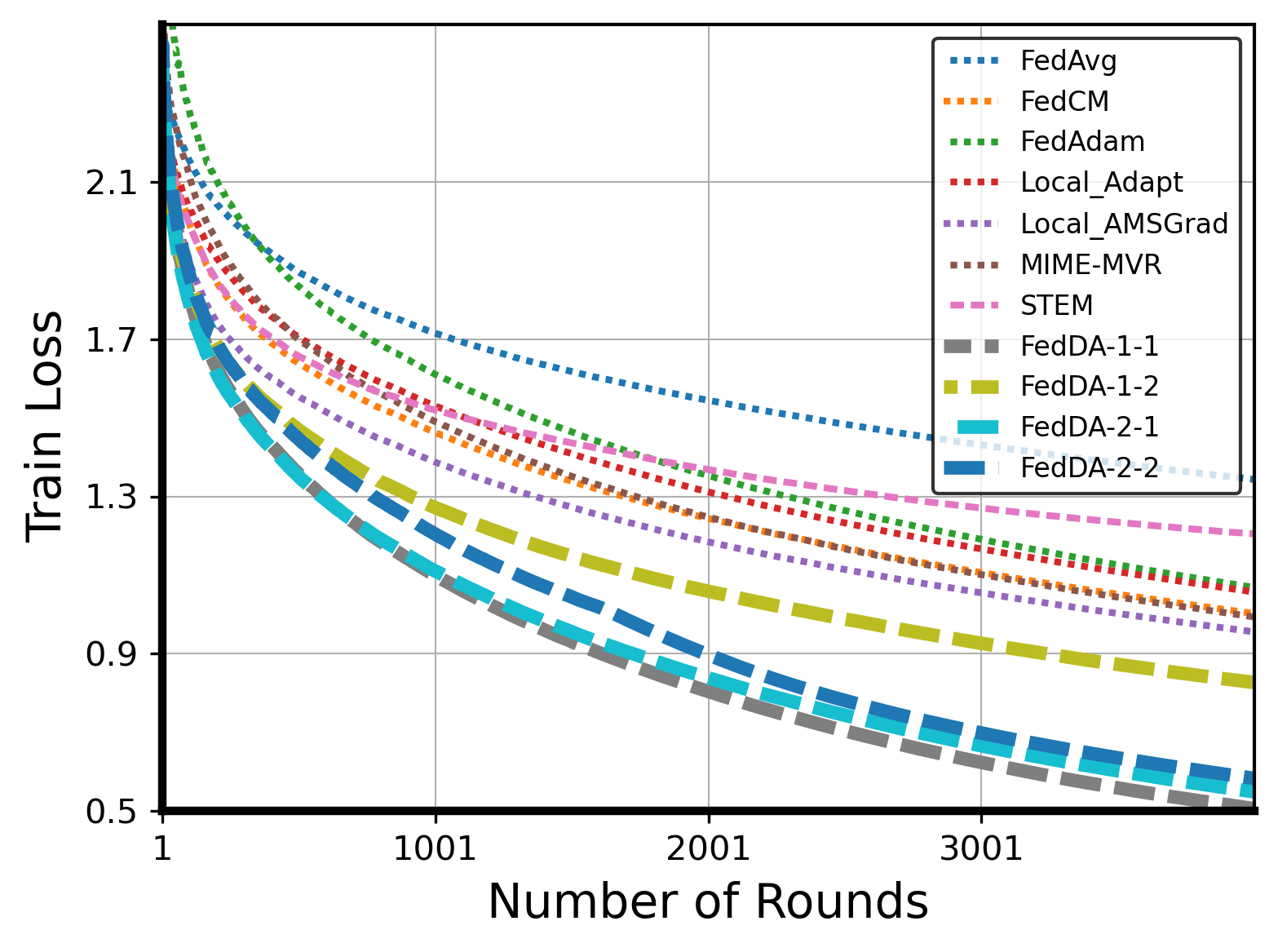}
		\includegraphics[width=0.24\columnwidth]{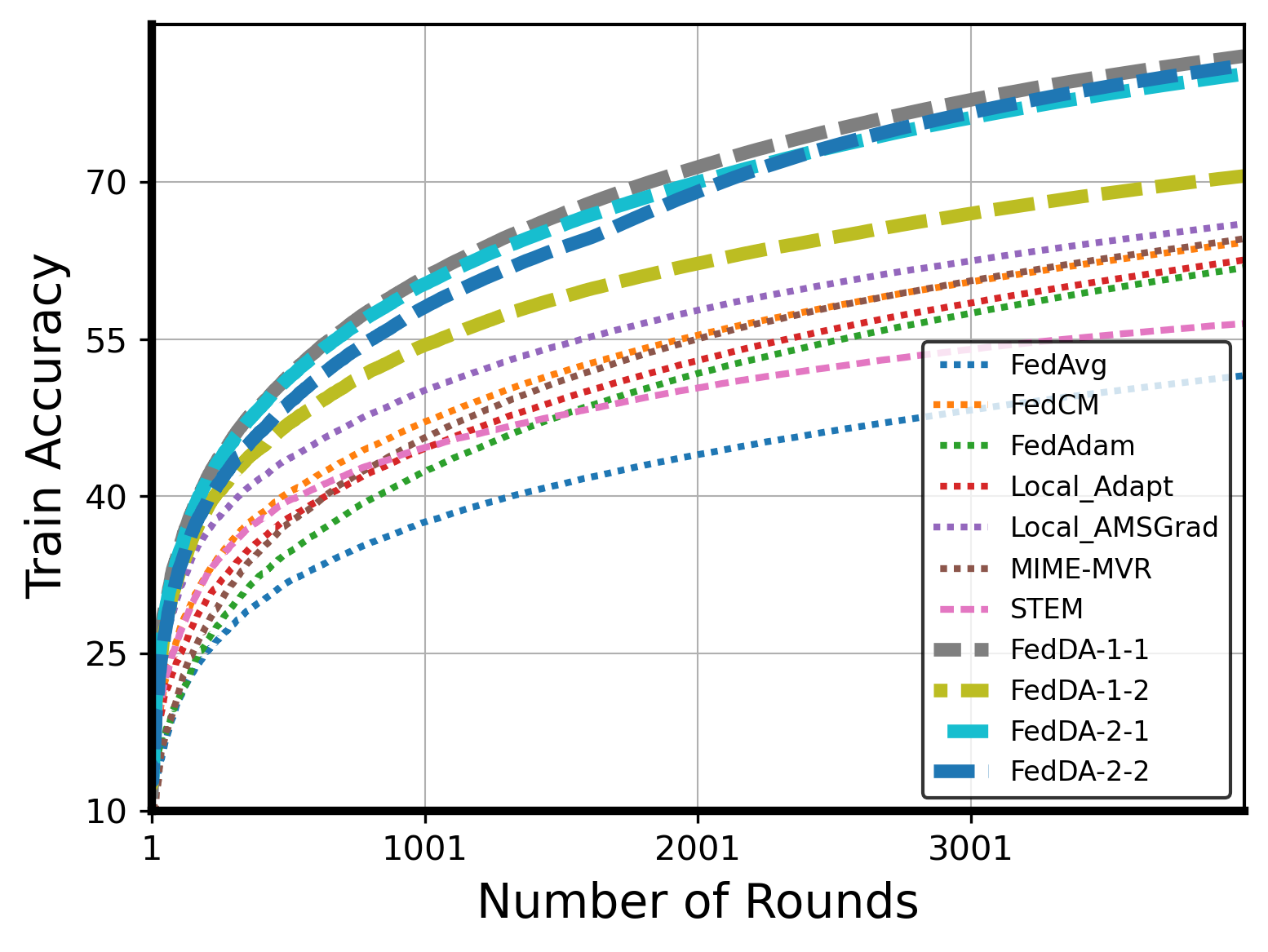}
		\includegraphics[width=0.24\columnwidth]{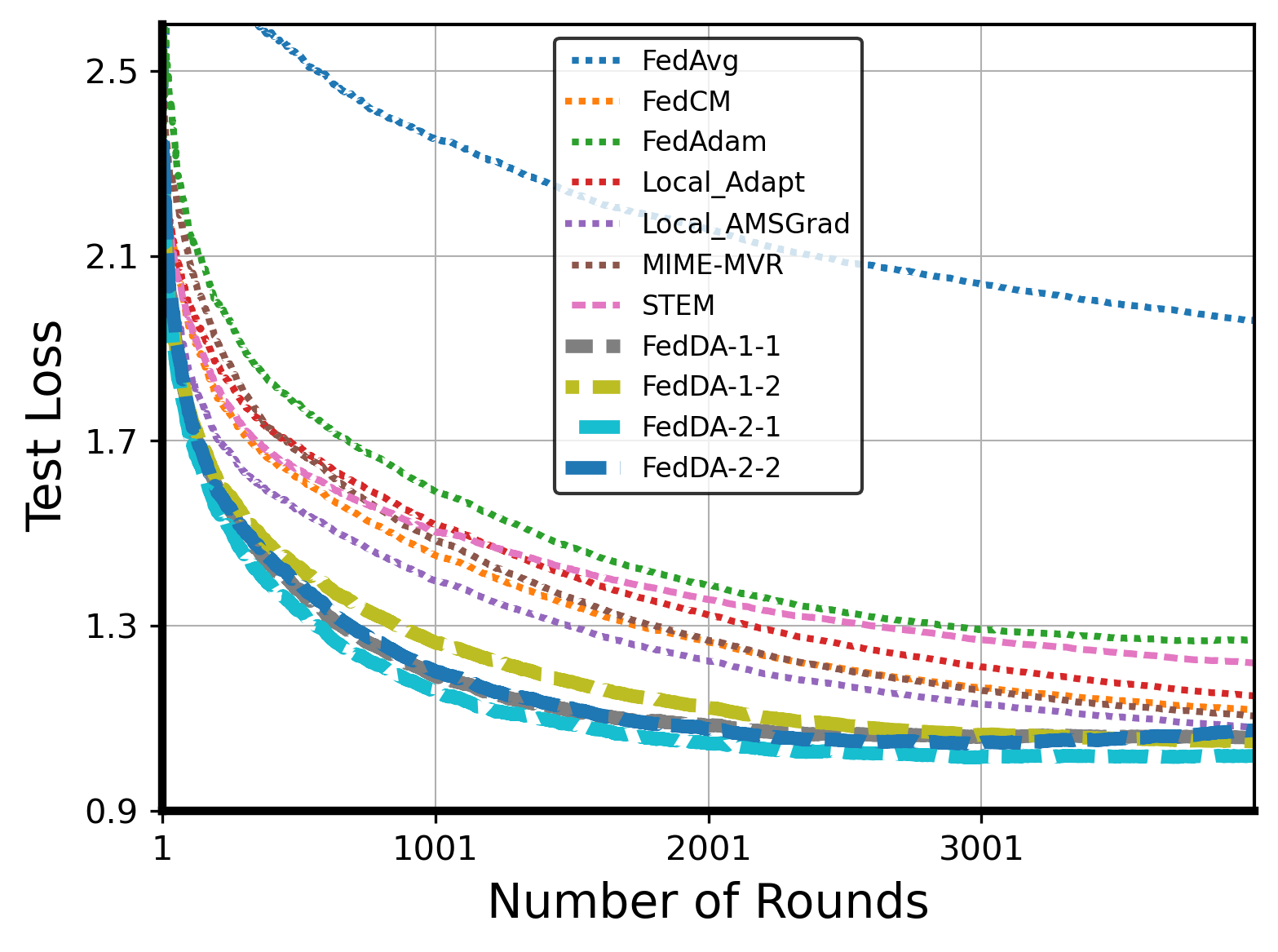}
		\includegraphics[width=0.24\columnwidth]{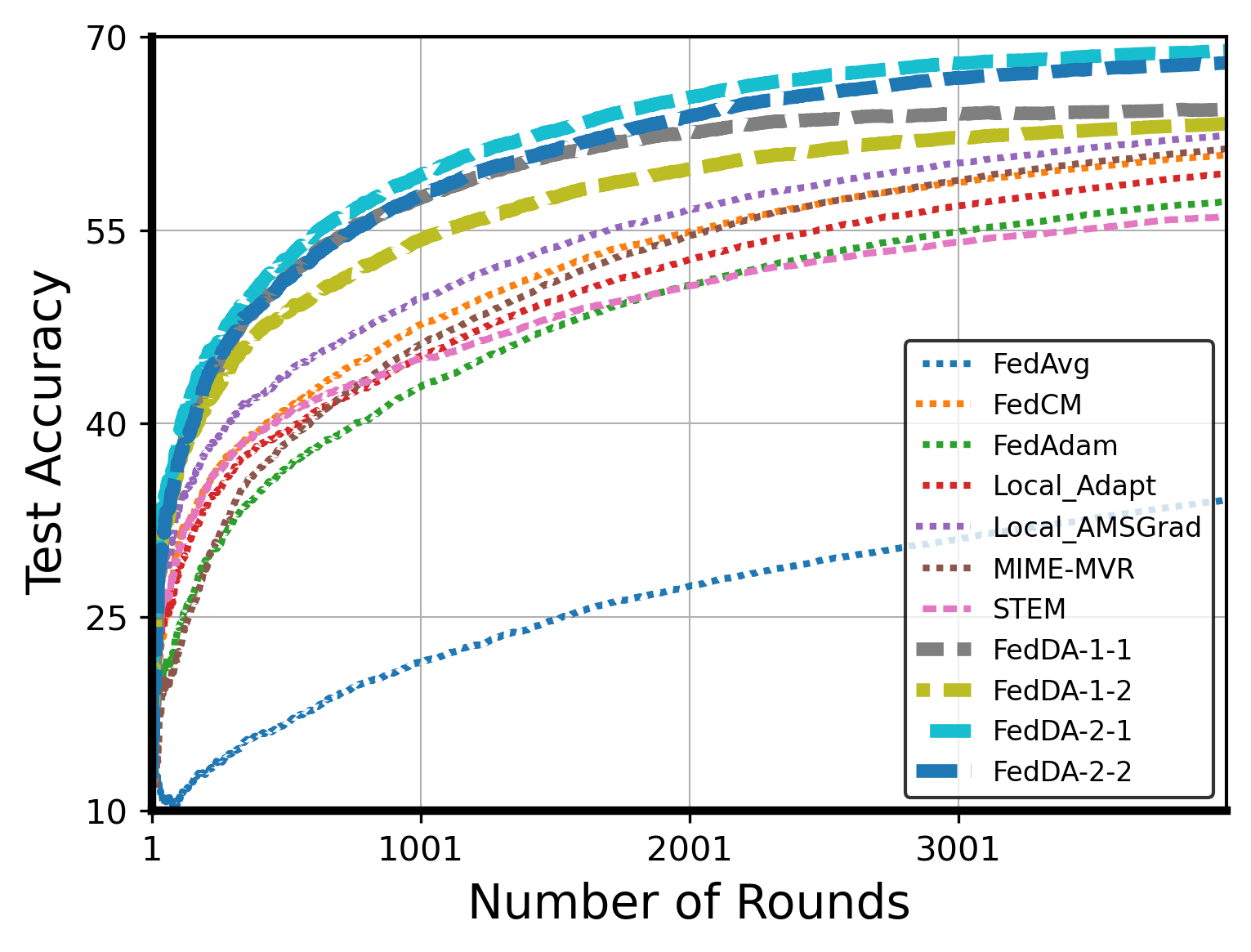}
		\caption{Results for heterogeneous CIFAR10 dataset. From left to right, we show Train Loss, Train Accuracy, Test Loss, Test Accuracy \emph{w.r.t} the number of rounds (E in Algorithm~\ref{alg:fedada-server}), respectively. $I$ is chosen as 5.}
		\label{fig:cifar-hete}
	\end{center}
\end{figure}

For hyper-parameters, we perform grid search and choose the best setting for each method. For the SGD method, we use learning rate 0.01; for the FedCM algorithm, we use learning rate 0.01, momentum coefficient $\alpha$ as 0.9; for the FedAdam algorithm, we use local learning rate 0.001, global learning rate 0.002, momentum coefficient 0.9, coefficient for adaptive matrix $\beta$ as 0.999; for the Local-Adapt algorithm, we use local learning rate 0.001, global learning rate 0.002, momentum coefficient 0.9, coefficient for adaptive matrix $\beta$ as 0.999; for the Local-AMSGrad algorithm, we use learning rate 0.001, momentum coefficient 0.9, adaptive matrix coefficient 0.999; for the MIME-MVR algorithm, we use learning rate 0.1, $w$ 100, $c$ as 2000; for the STEM algorithm, we use learning rate 0.1, $w$ 100 and $c$ 2000; for our FedDA-MVR/FedDA-1-1, we use learning rate 0.02, $w$ as 10000, $c$ as 1000000, $\beta$ as 0.999 and $\tau$ as 0.01. For other variants of FedDA: for FedDA-2-1, we use learning rate 0.001, $\alpha$ as 0.9, $\beta$ as 0.999, $\tau$ as 0.01; for FedDA-1-2, we use learning rate 1, $w$ as 5000, $c$ as 100, $\beta$ as 0.999, $\tau$ as 0.01; for FedDA-2-2, we use learning rate 0.01, $\alpha$ 0.9, $\beta$ as 0.999, $\tau$ as 0.01.

\subsection{A special case of FedDA: $I=1$}
To better illustrate the structure of our FedDA, we give the form of a special case in this subsection, \emph{i.e.} $I=1$. The pseudo code is summarized in Algorithm~\ref{alg:fedada-spec}: at each epoch, the server first gets new primal state through~\eqref{eq:adapt_update} (line 4);  then each client (we assume full participation for simplicity) updates gradient estimate $\nu_{\tau}$ locally (line 6), and the server average these states (line 8), the adaptive matrix is also updated by the server (line 8).
\begin{algorithm}[ht]
\caption{\textbf{FedDA}-Distributed}
\label{alg:fedada-spec}
\begin{algorithmic}[1]
\STATE {\bfseries Input:} Number of global epochs $E$, tuning parameters $\{\alpha_\tau, \beta_{\tau}, \eta_\tau\}_{i=1}^{E}$;
\STATE {\bfseries Initialize:} Choose $x_0 \in \mathcal{X}$ and compute $\nu_0 = \frac{1}{K} \sum_{j=1}^{K} \nabla f^{(j)}(x_0, \mathcal{\mathcal{B}}^{(k)}_0)$ where $\{\mathcal{\mathcal{B}}^{(k)}_0\}_{k=1}^K$ are a mini-batch of random points selected from each of $K$ clients;
\FOR{$\tau=0$ \textbf{to} $E - 1$}
\STATE Compute $x_{\tau + 1} = \underset{x \in \mathcal{X}}{\arg\min} \{ \eta_{\tau + 1} \langle x, \nu_\tau\rangle + \frac{1}{2\lambda} (x - x_\tau)^{T}H_\tau(x - x_\tau)$ \};
\FOR{the client $k \in [K]$ in parallel}
\STATE Compute $\nu^{(k)}_{\tau + 1} = \mathcal{U}(\nu_\tau, x_{\tau + 1}, x_\tau; \alpha_{\tau+1}, \mathcal{\mathcal{B}}_{\tau + 1}^{(k)})$, where $\mathcal{\mathcal{B}}_{\tau + 1}^{(k)}$ is a minibatch of random samples from the client $k$;
\ENDFOR
\STATE Compute $\nu_{\tau + 1} = \frac{1}{K} \sum_{k \in [K]}  \nu^{(k)}_{\tau+1}$ and $H_{\tau+1} = \mathcal{V}(H_{\tau}, \nu_{\tau})$;
\ENDFOR
\end{algorithmic}
\end{algorithm}


\newpage

\section{ Proof of Theorems } \label{Appendix:A}
In this section, we provide the convergence analysis of our algorithm.

\subsection{Preliminary Propositions}
\begin{proposition}
\label{prop:generali_tri}
Let $\{\theta_k\}, k\in{K}$ be $K$ vectors. Then the following are true: $||\theta_i + \theta_j||^2 \leq (1 + \lambda)||\theta_i||^2 + (1 + \frac{1}{\lambda})||\theta_j||^2$ for any $a > 0$ and
$||\sum_{k=1}^K \theta_k||^2 \le K\sum_{k=1}^{K} ||\theta_k||^2$
\end{proposition}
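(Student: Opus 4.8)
The plan is to establish each of the two inequalities separately, both via elementary expansions and standard scalar inequalities; neither requires any structure beyond the inner-product space, so the proof is short and self-contained. I will treat the parameter in the first inequality as $\lambda > 0$ (the displayed ``$a > 0$'' appears to be a typographical slip).

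For the first inequality, I would begin by expanding the squared norm using the inner product:
\[
\|\theta_i + \theta_j\|^2 = \|\theta_i\|^2 + 2\langle \theta_i, \theta_j\rangle + \|\theta_j\|^2.
\]
The only term to control is the cross term $2\langle \theta_i, \theta_j\rangle$. I would bound it by Cauchy--Schwarz followed by the weighted AM--GM (Young) inequality: for any $\lambda > 0$,
\[
2\langle \theta_i, \theta_j\rangle \le 2\,\|\theta_i\|\,\|\theta_j\| \le \lambda \|\theta_i\|^2 + \tfrac{1}{\lambda}\|\theta_j\|^2,
\]
where the last step is just the rearrangement of $\bigl(\sqrt{\lambda}\,\|\theta_i\| - \tfrac{1}{\sqrt{\lambda}}\|\theta_j\|\bigr)^2 \ge 0$. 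Substituting this bound into the expansion and collecting the coefficients of $\|\theta_i\|^2$ and $\|\theta_j\|^2$ yields exactly $(1+\lambda)\|\theta_i\|^2 + (1 + \tfrac{1}{\lambda})\|\theta_j\|^2$, as claimed.

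For the second inequality, I would invoke convexity of the map $v \mapsto \|v\|^2$ together with Jensen's inequality. Writing the sum as $K$ times its average,
\[
\Bigl\|\sum_{k=1}^K \theta_k\Bigr\|^2 = K^2 \Bigl\|\tfrac{1}{K}\sum_{k=1}^K \theta_k\Bigr\|^2 \le K^2 \cdot \tfrac{1}{K}\sum_{k=1}^K \|\theta_k\|^2 = K\sum_{k=1}^K \|\theta_k\|^2,
\]
where the middle inequality is Jensen applied to the convex function $\|\cdot\|^2$. An equivalent route, should one prefer to avoid invoking convexity, is the Cauchy--Schwarz inequality $\bigl(\sum_k 1\cdot\|\theta_k\|\bigr)^2 \le K \sum_k \|\theta_k\|^2$ combined with the triangle inequality $\|\sum_k \theta_k\| \le \sum_k \|\theta_k\|$.

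There is no genuine obstacle in this proposition: both bounds are standard workhorse inequalities used repeatedly in the later convergence analysis (the first to split error terms while tuning the splitting weight $\lambda$, the second to control sums of client-side deviations). The only point requiring a word of care is reconciling the stated parameter name, and I would simply state the first inequality for arbitrary $\lambda > 0$ so that it can be specialized as needed in subsequent lemmas.
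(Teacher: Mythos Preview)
Your proof is correct and complete. The paper itself does not give a proof of this proposition at all; it simply states that the result is standard and refers the reader to Lemma~3 of \cite{karimireddy2019scaffold}. Your elementary derivation via Cauchy--Schwarz/Young for the first inequality and Jensen (or Cauchy--Schwarz plus triangle inequality) for the second is exactly the standard argument and supplies the details the paper omits.
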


\begin{proposition}
\label{prop: Sum_Mean_Kron}
For a finite sequence $z^{(k)} \in \mathbb{R}^d$ for $k \in [K]$ define $\bar{z} \coloneqq \frac{1}{K} \sum_{k = 1}^K z^{(k)}$, we then have $\sum_{k=1}^K    \| z^{(k)} - \bar{z} \|^2 \leq \sum_{k=1}^K    \| z^{(k)} \|^2.$
\end{proposition}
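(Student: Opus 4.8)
The plan is to prove this by a direct expansion of the squared norm and an exact computation of the resulting cross term, exploiting the definition of $\bar{z}$ as the arithmetic mean. This is the standard fact that centering a collection of vectors about their mean can only decrease the sum of squared norms; no inequality is needed until the very last step, where a single nonnegative quantity is discarded.

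Concretely, I would first expand each summand using $\|z^{(k)} - \bar{z}\|^2 = \|z^{(k)}\|^2 - 2\langle z^{(k)}, \bar{z}\rangle + \|\bar{z}\|^2$, and then sum over $k \in [K]$ to obtain
\begin{align*}
\sum_{k=1}^K \|z^{(k)} - \bar{z}\|^2 = \sum_{k=1}^K \|z^{(k)}\|^2 - 2\Big\langle \sum_{k=1}^K z^{(k)},\, \bar{z}\Big\rangle + K\|\bar{z}\|^2.
\end{align*}
The key step is to use the definition $\sum_{k=1}^K z^{(k)} = K\bar{z}$ to evaluate the middle term exactly: the cross term becomes $-2K\langle \bar{z}, \bar{z}\rangle = -2K\|\bar{z}\|^2$, so the last two terms combine into $-K\|\bar{z}\|^2$. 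This yields the clean identity
\begin{align*}
\sum_{k=1}^K \|z^{(k)} - \bar{z}\|^2 = \sum_{k=1}^K \|z^{(k)}\|^2 - K\|\bar{z}\|^2.
\end{align*}

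Finally, since $K\|\bar{z}\|^2 \geq 0$, dropping this nonnegative term gives the desired inequality $\sum_{k=1}^K \|z^{(k)} - \bar{z}\|^2 \leq \sum_{k=1}^K \|z^{(k)}\|^2$. There is no real obstacle here: the proposition is an elementary variance-type identity, and the only thing to be careful about is the bookkeeping of the factor $K$ in the cross term, which is resolved immediately by substituting $\sum_k z^{(k)} = K\bar{z}$. I expect the entire argument to fit in a few lines.
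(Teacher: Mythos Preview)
Your proposal is correct; the expansion yields the exact identity $\sum_{k=1}^K \|z^{(k)} - \bar{z}\|^2 = \sum_{k=1}^K \|z^{(k)}\|^2 - K\|\bar{z}\|^2$, and the inequality follows. The paper does not supply its own proof of this proposition but simply cites Lemma~C.1 of \cite{khanduri2021stem} as a standard result, so your direct-expansion argument is exactly the expected one and there is nothing further to compare.
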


\begin{proposition}
\label{prop: AD_Sum_1overT}
Let $z_0 > 0$ and $z_1,z_2, \ldots, z_T \geq 0$. We have $\sum_{t=1}^T \frac{z_t}{z_0 + \sum_{i=t}^t z_i} \leq \log (1 + \frac{\sum_{i=1}^t z_i}{z_0} ).$
\end{proposition}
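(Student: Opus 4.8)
The plan is to read the statement in its intended cumulative form, namely $\sum_{t=1}^T \frac{z_t}{z_0 + \sum_{i=1}^t z_i} \le \log\bigl(1 + \frac{\sum_{i=1}^T z_i}{z_0}\bigr)$ (the inner $\sum_{i=t}^t$ and the trailing $t$ on the right-hand side being evident typos for $\sum_{i=1}^t$ and $T$), and to prove it by a telescoping argument after introducing the running denominators. First I would set $S_t = z_0 + \sum_{i=1}^t z_i$ for $t \ge 0$, so that $S_0 = z_0 > 0$, each increment is $z_t = S_t - S_{t-1} \ge 0$, and the sequence $\{S_t\}$ is positive and nondecreasing since every $z_i \ge 0$. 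The left-hand side is then exactly $\sum_{t=1}^T \frac{S_t - S_{t-1}}{S_t}$.

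The crux is a per-term bound that converts each ratio into a logarithm increment. Writing $\frac{S_t - S_{t-1}}{S_t} = 1 - \frac{S_{t-1}}{S_t}$ and applying the elementary inequality $1 - u \le -\log u$ (valid for all $u > 0$, equivalently $\log u \le u - 1$) with $u = S_{t-1}/S_t \in (0,1]$, I obtain $\frac{S_t - S_{t-1}}{S_t} \le \log\frac{S_t}{S_{t-1}}$. Positivity of the $S_t$ is what makes $u$ well-defined and keeps the logarithm finite, which is precisely where the hypothesis $z_0 > 0$ enters.

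Summing this bound over $t = 1, \dots, T$ collapses the right-hand side by telescoping: $\sum_{t=1}^T \log\frac{S_t}{S_{t-1}} = \log S_T - \log S_0 = \log\frac{S_T}{S_0}$. Substituting $S_T = z_0 + \sum_{i=1}^T z_i$ and $S_0 = z_0$ yields $\log\bigl(1 + \frac{\sum_{i=1}^T z_i}{z_0}\bigr)$, which is the claimed bound. I do not expect any genuine obstacle here; the whole argument is a one-line telescoping once the right substitution is made. The only points that require care are the direction and domain of the scalar inequality $\log u \le u - 1$ and the positivity and monotonicity of the partial sums $S_t$, both of which follow immediately from $z_i \ge 0$ and $z_0 > 0$.
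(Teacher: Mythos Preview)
Your argument is correct, including the identification of the typos in the statement (the inner sum should be $\sum_{i=1}^{t}$ and the right-hand side should carry $T$). The paper does not actually supply its own proof of this proposition; it simply cites Lemma~C.2 of \cite{khanduri2021stem} as a reference for this standard result, and the telescoping argument you give (via $1-u\le -\log u$ applied to $u=S_{t-1}/S_t$) is precisely the usual way this inequality is established.
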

These propositions are standard results. For proofs, the reader can refer to Lemma 3 of~\cite{karimireddy2019scaffold} for Proposition 1 and Lemma C.1 and Lemma C.2 in~\cite{khanduri2021stem} for Propositions 2 and 3.

\subsection{Preliminary Lemmas in local updates}
\label{sec:pre-local}
We first introduce some notation. For $0 \leq i \leq I$, we denote: 
\begin{align}
\psi_{\tau, i}^{(k)}(x) = -\langle x, z^{(k)}_{\tau, i}\rangle + \frac{1}{2\lambda} (x - x^{(k)}_{\tau, 0})^{T}H_{\tau-1}(x - x^{(k)}_{\tau, 0}),
\end{align}
then, by definition (Line 4 of Algorithm~\ref{alg:fedada-client}), we have: 
\begin{align}
x_{\tau, i}^{(k)} = \underset{x \in \mathcal{X}}{\arg\min}\ \psi_{\tau,i}^{(k)}(x),
\end{align}
we also define 
\begin{align}
\tilde{\psi}_{\tau,i}(x) = - \langle x, \bar{z}_{\tau, i}\rangle + \frac{1}{2\lambda} (x - x_{\tau,0})^{T}H_{\tau-1}(x - x_{\tau, 0}),
\end{align}
where $\bar{z}_{\tau,i} = \frac{1}{K}\sum_{k=1}^K z^{(k)}_{\tau,i}$ is the virtual average of $z^{(k)}_{\tau,i}$ and $x_{\tau, 0} = x_\tau$. Then we define 
\begin{align}
\tilde{x}_{\tau, i} = \underset{x \in \mathcal{X}}{\arg\min}\ \tilde{\psi}_{\tau,i}(x),
\end{align}

\begin{remark}
In Algorithm~\ref{alg:fedada-server}, at each epoch $\tau$, we only sample $r$ clients from the $K$ clients to perform an update. For $k \notin \mathcal{S}_{\tau}$, we define the relevant variables for convenience of analysis and they are not really calculated.
\end{remark}

\begin{remark}
Note that the global primal state $\tilde{x}_{i}$ is not the arithmetic mean of the local states $x_{i}^{(k)}$ in general.
\end{remark}

Finally, we also define 
\begin{align}
\tilde{d}_{\tau, i} = \frac{1}{\eta_{\tau, i}}(\tilde{x}_{\tau, i} - \tilde{x}_{\tau, i+1}),\; d_{\tau, i}^{(k)} = \frac{1}{\eta_{\tau, i}}(x_{\tau, i}^{(k)} - x_{\tau, i+1}^{(k)}), k \in [K], i \in [I],
\end{align}
Furthermore, recall that by the procedure of Algorithm~\ref{alg:fedada-client} (line 6), we have 
\begin{align}
\bar{\nu}_{\tau, i} = \frac{1}{\eta_{\tau, i}}(\bar{z}_{\tau, i} - \bar{z}_{\tau, i+1}),\; \nu_{\tau, i}^{(k)} = \frac{1}{\eta_{\tau, i}}(z_{\tau, i}^{(k)} - z_{\tau, i+1}^{(k)}),\ k \in [K], \; i \in [I],
\end{align}

\begin{remark}
When it is clear from the context, we omit the global epoch $\tau$ in the subscript of the definitions, \emph{i.e.} we use $\psi_{i}^{(k)}(x)$, $\tilde{\psi}_{i}(x)$, $x_{i}^{(k)}$, $\tilde{x}_{i}$, $\tilde{d}_{i}$, $d_{i}^{(k)}$, $\bar{\nu}_{i}$, $\nu_{i}^{(k)}$ and $H$.
\end{remark}

Next, we introduce the following lemma related to local updates. We omit the global epoch number $\tau$ in the subscript.

\begin{lemma} \label{lem:A0} For any $i \in [I]$ and $k \in [K]$, we have the following inequalities be satisfied:
\begin{enumerate}
    \item $\lambda\langle \nu_{i}^{(k)}, d_{i}^{(k)}\rangle \geq \rho||d^{(k)}_{i}||^2, \lambda|| \nu_{i}^{(k)} || \geq \rho||d^{(k)}_{i}||$
    \item $\lambda\langle \bar{\nu}_{i}, \tilde{d}_{i}\rangle \geq \rho||\tilde{d}_{i}||^2, \lambda||\bar{\nu}_i|| \geq \rho||\tilde{d}_i||$;
    \item $\lambda ||z_{i}^{(k)} - \bar{z}_{i}|| \geq \rho||x_{i}^{(k)} - \tilde{x}_{i}||$;
\end{enumerate}
\end{lemma}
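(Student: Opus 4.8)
The plan is to derive all three inequalities from the first-order optimality (variational-inequality) characterization of the constrained proximal subproblems, exploiting that each subproblem is $\frac{\rho}{\lambda}$-strongly convex because $H_\tau \succeq \rho I_d$. Concretely, since $x_i^{(k)} = \arg\min_{x \in \mathcal{X}} \psi_i^{(k)}(x)$ and $\psi_i^{(k)}$ is quadratic with gradient $\nabla \psi_i^{(k)}(x) = -z_i^{(k)} + \frac{1}{\lambda}H(x - x_0)$, the minimizer satisfies $\langle -z_i^{(k)} + \frac{1}{\lambda}H(x_i^{(k)} - x_0),\, x - x_i^{(k)}\rangle \geq 0$ for all $x \in \mathcal{X}$; the analogous statement holds for $\tilde{x}_i$ with $\bar{z}_i$ replacing $z_i^{(k)}$. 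I will use these inequalities by testing them at appropriately chosen feasible points.

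For Part 1, I would write the variational inequalities at indices $i$ and $i+1$, test the $i$-th one at $x = x_{i+1}^{(k)}$ and the $(i+1)$-th at $x = x_i^{(k)}$, and add them. The linear terms combine into $\langle z_{i+1}^{(k)} - z_i^{(k)},\, x_{i+1}^{(k)} - x_i^{(k)}\rangle$ and the quadratic terms into $\frac{1}{\lambda}\langle H(x_i^{(k)} - x_{i+1}^{(k)}),\, x_{i+1}^{(k)} - x_i^{(k)}\rangle$. Substituting $z_{i+1}^{(k)} - z_i^{(k)} = -\eta_i \nu_i^{(k)}$ and $x_{i+1}^{(k)} - x_i^{(k)} = -\eta_i d_i^{(k)}$, dividing through by $\eta_i^2$, and using $H \succeq \rho I_d$ yields $\langle \nu_i^{(k)}, d_i^{(k)}\rangle \geq \frac{\rho}{\lambda}\|d_i^{(k)}\|^2$, which is the first claim after multiplying by $\lambda$. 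The norm bound $\lambda\|\nu_i^{(k)}\| \geq \rho\|d_i^{(k)}\|$ then follows by Cauchy--Schwarz on the inner product and dividing by $\|d_i^{(k)}\|$. Part 2 is identical, replacing $\psi_i^{(k)}, z_i^{(k)}, \nu_i^{(k)}, d_i^{(k)}, x_i^{(k)}$ by $\tilde{\psi}_i, \bar{z}_i, \bar{\nu}_i, \tilde{d}_i, \tilde{x}_i$.

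For Part 3, I would instead compare the two subproblems that differ only in their linear term at the same index $i$: $x_i^{(k)}$ minimizes $\psi_i^{(k)}$ (linear term $z_i^{(k)}$) while $\tilde{x}_i$ minimizes $\tilde{\psi}_i$ (linear term $\bar{z}_i$), and both share the same quadratic part since the anchor $x_{\tau,0}^{(k)} = x_\tau$ and the matrix $H_\tau$ are common to all clients. Testing the first variational inequality at $x = \tilde{x}_i$, the second at $x = x_i^{(k)}$, and adding gives $\langle \bar{z}_i - z_i^{(k)},\, \tilde{x}_i - x_i^{(k)}\rangle \geq \frac{1}{\lambda}\langle H(x_i^{(k)} - \tilde{x}_i),\, x_i^{(k)} - \tilde{x}_i\rangle \geq \frac{\rho}{\lambda}\|x_i^{(k)} - \tilde{x}_i\|^2$. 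Applying Cauchy--Schwarz to the left-hand side and dividing by $\|x_i^{(k)} - \tilde{x}_i\|$ gives $\rho\|x_i^{(k)} - \tilde{x}_i\| \leq \lambda\|z_i^{(k)} - \bar{z}_i\|$, the desired stability estimate.

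The bookkeeping is mild; the only points requiring care are stating the variational inequalities correctly for the constrained minimizer over $\mathcal{X}$ (rather than the unconstrained zero-gradient condition), and tracking signs consistently when translating between the dual increments $z_{i+1}^{(k)} - z_i^{(k)}$ and the primal increments via the definitions of $\nu_i^{(k)}$ and $d_i^{(k)}$. I expect this sign/orientation tracking, together with confirming that all clients share the common anchor $x_\tau$ and matrix $H_\tau$ so that the quadratic terms cancel in Part 3, to be the main thing to get right.
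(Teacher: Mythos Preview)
Your proposal is correct and takes essentially the same approach as the paper: both arguments rest on the first-order optimality (variational inequality) conditions of the constrained proximal subproblems together with $H_\tau \succeq \rho I_d$. The paper packages the computation by bounding the function-value differences $\psi_i^{(k)}(x_{i+1}^{(k)}) - \psi_i^{(k)}(x_i^{(k)})$ and $\psi_{i+1}^{(k)}(x_{i+1}^{(k)}) - \psi_{i+1}^{(k)}(x_i^{(k)})$ before combining them via the relation between $\psi_i^{(k)}$ and $\psi_{i+1}^{(k)}$, whereas you add the two variational inequalities directly; this is a purely cosmetic difference that yields the identical key inequality.
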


\begin{proof}
The first and second claims follow similar derivations, and we provide only the derivations for the first claim. First, if $i = 1$, we have 
\[x_{1}^{(k)} = \underset{x \in \mathcal{X}}{\arg\min}\ -\langle x, z_{1}^{(k)}\rangle + \frac{1}{2\lambda} (x - x_{0}^{(k)})^{T}H(x - x_{0}^{(k)}),\] 
by the first-order optimality condition, we have: 
\[\langle - z_{1}^{(k)} + \frac{1}{\lambda}H(x^{(k)}_{1} - x_{0}^{(k)}), u - x^{(k)}_{1} \rangle \geq 0, \; \forall\; u \in \mathcal{X},\] 
choose $u = x_0^{(k)}$ and use the fact that $z_1^{(k)} = -\eta_{0}\nu_{0}$, we have:
\[
\eta_{0} ||\nu_{0}^{(k)}|| \times ||x_0^{(k)} - x_{1}^{(k)}|| \geq \eta_{0}\langle \nu_{0}^{(k)}, x_0^{(k)} - x_{1}^{(k)}\rangle \geq \frac{1}{\lambda}(x_{1}^{(k)} - x_{0}^{(k)})^TH(x_{1}^{(k)} - x_0^{(k)}) \geq  \frac{\rho}{\lambda}||x_0^{(k)} - x_{1}^{(k)}||^2
\]
we use the Cauchy-Schwartz inequality in the leftmost inequality and use the strong convexity assumption of the adaptive matrix in the rightmost inequality, we get the result in the lemma.

Next if $i > 0$, by the definition of $\psi_{i}^{(k)}(x)$, we have:
\begin{align}
    \psi_{i}^{(k)}(x_{i+1}^{(k)}) -  \psi_{i}^{(k)}(x_{i}^{(k)}) = -\langle z_{i}^{(k)}, x_{i+1}^{(k)} - x_{i}^{(k)} \rangle + \frac{1}{2\lambda} (x_{i+1}^{(k)} - x_{i}^{(k)})^TH(x_{i+1}^{(k)} + x_i^{(k)} - 2x_0^{(k)})
\label{eq:diff1}
\end{align}
Then by the definition of $x_{i}^{(k)}$, and the first order optimality condition, we have 
\[\langle -z_{i}^{(k)} + \frac{1}{\lambda}H(x_{i}^{(k)} - x_{0}^{(k)}), u - x_{i}^{(k)} \rangle \geq 0, \; \forall\; u \in \mathcal{X},\] 
if we pick $ u = x_{i+1}^{(k)}$, we have 
$-\langle z_{i}^{(k)}, x_{i+1}^{(k)} - x_{i}^{(k)} \rangle \geq -\frac{1}{\lambda} (x_{i+1}^{(k)} - x_{i}^{(k)})^TH(x_{i}^{(k)} - x_{0}^{(k)}),$
plug this inequality to ~\eqref{eq:diff1}, we have:
\begin{align*}
  &\psi_{i}^{(k)}(x_{i+1}^{(k)}) -  \psi_{i}^{(k)}(x_{i}^{(k)}) \\ 
  & \geq  - \frac{1}{\lambda}(x_{i+1}^{(k)} - x_{i}^{(k)})^TH(x_{i}^{(k)} - x_{0}^{(k)}) +  \frac{1}{2\lambda} (x_{i+1}^{(k)} - x_{i}^{(k)})^TH(x_{i+1}^{(k)} + x_i^{(k)} - 2x_{0}^{(k)}) \\ 
  & \geq \frac{1}{2\lambda} (x_{i+1}^{(k)} - x_{i}^{(k)})^TH(x_{i+1}^{(k)} - x_{i}^{(k)})
\end{align*}
Similarly for $\psi_{i+1}^{(k)}$, we have:
\begin{align*}
    \psi_{i+1}^{(k)}(x_{i+1}^{(k)}) -  \psi_{i+1}^{(k)}(x_{i}^{(k)}) = -\langle z_{i+1}^{(k)}, x_{i+1}^{(k)} - x_{i}^{(k)} \rangle + \frac{1}{2\lambda} (x_{i+1}^{(k)} - x_{i}^{(k)})^TH(x_{i+1}^{(k)} + x_i^{(k)} - 2x_{0}^{(k)})
\end{align*}
and by the definition of $x_{i+1}^{(k)}$ and the first order optimality condition, we can get 
\[\langle -z_{i+1}^{(k)} + \frac{1}{\lambda}H(x_{i+1}^{(k)} - x_{0}^{(k)}), u - x_{i+1}^{(k)} \rangle \geq 0, \; \forall\; u \in \mathcal{X},\]
pick $u = x_{i}^{(k)}$, we have $-\langle z_{i+1}^{(k)}, x_{i+1}^{(k)} - x_{i}^{(k)} \rangle \leq -\frac{1}{\lambda}(x_{i+1}^{(k)} - x_{i}^{(k)})^TH(x_{i+1}^{(k)} - x_{0}^{(k)})$, plug this inequality to the above equality, we have:
\begin{align*}
    &\psi_{i+1}^{(k)}(x_{i+1}^{(k)}) -  \psi_{i+1}^{(k)}(x_{i}^{(k)})\nonumber \\
    & \leq  -\frac{1}{\lambda}(x_{i+1}^{(k)} - x_{i}^{(k)})^TH(x_{i+1}^{(k)} - x_{0}^{(k)}) +  \frac{1}{2\lambda} (x_{i+1}^{(k)} - x_{i}^{(k)})^TH(x_{i+1}^{(k)} + x_i^{(k)} - 2x_{0}^{(k)}) \nonumber \\
    & \leq -\frac{1}{2\lambda} (x_{i+1}^{(k)} - x_{i}^{(k)})^TH(x_{i+1}^{(k)} - x_{i}^{(k)})
\end{align*}
Next, by definition of $\psi_{i}^{(k)}(x)$ and $\psi_{i+1}^{(k)}(x)$, we have:
\begin{align*}
  \psi_{i+1}^{(k)}(x_{i+1}^{(k)}) -  \psi_{i+1}^{(k)}(x_{i}^{(k)}) &=\psi_{i}^{(k)}(x_{i+1}^{(k)}) -  \psi_{i}^{(k)}(x_{i}^{(k)}) + \eta_{i} \langle \nu_{i}^{(k)}, x_{i+1}^{(k)} - x_{i}^{(k)}\rangle
\end{align*}
Finally, we combine the above relations and have:
\[
\eta_{i} ||\nu_{i}^{(k)}|| \times ||x_i^{(k)} - x_{i+1}^{(k)}|| \geq \eta_{i}\langle \nu_{i}^{(k)}, x_i^{(k)} - x_{i+1}^{(k)}\rangle \geq \frac{1}{\lambda}(x_{i+1}^{(k)} - x_{i}^{(k)})^T H(x_{i+1}^{(k)} - x_i^{(k)}) \geq  \rho||x_i^{(k)} - x_{i+1}^{(k)}||^2
\]
we use the Cauchy-Schwartz inequality in the leftmost inequality and use the strong convexity assumption of the adaptive matrix in the rightmost inequality, we get the result in the claim of the lemma. 

Next, we prove the third claim, by the definition of $\psi_{i+1}^{(k)}$, we have:
\begin{align*}
    \psi_{i+1}^{(k)}(x_{i+1}^{(k)}) -  \psi_{i+1}^{(k)}(\tilde{x}_{i+1}) & = -\langle z_{i+1}^{(k)}, x_{i+1}^{(k)} - \tilde{x}_{i+1} \rangle + \frac{1}{2\lambda} (x_{i+1}^{(k)} - \tilde{x}_{i+1})^TH(x_{i+1}^{(k)} + \tilde{x}_{i+1} - 2x_{0}^{(k)})
\end{align*}
By the definition of $x_{i+1}^{(k)}$ and first order optimality condition, we have
\[\langle -z_{i+1}^{(k)} + \frac{1}{\lambda}H(x_{i+1}^{(k)} - x_{0}^{(k)}), u - x_{i+1}^{(k)} \rangle \geq 0, \; \forall\; u \in \mathcal{X},\] 
pick $ u = \tilde{x}_{i+1}$, we have $-\langle z_{i+1}^{(k)}, x_{i+1}^{(k)} - \tilde{x}_{i+1} \rangle \leq  -\frac{1}{\lambda} (x_{i+1}^{(k)} - \tilde{x}_{i+1})^TH(x_{i+1}^{(k)} - x_{0}^{(k)})$. Plug this inequality back to the above inequality, we have:
\begin{align*}
    &\psi_{i+1}^{(k)}(x_{i+1}^{(k)}) -  \psi_{i+1}^{(k)}(\tilde{x}_{i+1}) \\
    &\leq - \frac{1}{\lambda}(x_{i+1}^{(k)} - \tilde{x}_{i+1})^TH(x_{i+1}^{(k)} - x_{0}^{(k)}) + \frac{1}{2\lambda} (x_{i+1}^{(k)} - \tilde{x}_{i+1})^TH(x_{i+1}^{(k)} + \tilde{x}_{i+1} - 2x_{0}^{(k)})\\
    & \leq -\frac{1}{2\lambda} (x_{i+1}^{(k)} - \tilde{x}_{i+1})^TH(x_{i+1}^{(k)} - \tilde{x}_{i+1})
\end{align*}
Then for $\tilde{\psi}_{i+1}(x)$, we have:
\begin{align*}
    \tilde{\psi}_{i+1}^{(k)}(x_{i+1}^{(k)}) -  \tilde{\psi}_{i+1}(\tilde{x}_{i+1}) & = -\langle \bar{z}_{i+1}, x_{i+1}^{(k)} - \tilde{x}_{i+1} \rangle + \frac{1}{2\lambda} (x_{i+1}^{(k)} - \tilde{x}_{i+1})^T H(x_{i+1}^{(k)} + \tilde{x}_{i+1} - 2\tilde{x}_{0})
\end{align*}
By the definition of $\tilde{x}_{i+1}$ and first order optimality condition, we have:
\[\langle -\bar{z}_{i+1} + \frac{1}{\lambda}H(\tilde{x}_{i+1} - \tilde{x}_{0}), u - \tilde{x}_{i+1} \rangle \geq 0, \; \forall\; u \in \mathcal{X},\] 
pick $ u = x_{i+1}^{(k)}$, we have $-\langle \bar{z}_{i+1}, x_{i+1}^{(k)} - \tilde{x}_{i+1} \rangle \geq  -\frac{1}{\lambda} (x_{i+1}^{(k)} - \tilde{x}_{i+1})^T H(\tilde{x}_{i+1} - \tilde{x}_{0})$.
Plug this inequality back to the above inequality, we have:
\begin{align*}
    &\tilde{\psi}_{i+1}(x_{i+1}^{(k)}) -  \tilde{\psi}_{i+1}(\tilde{x}_{i+1}) \\
    &\geq  - \frac{1}{\lambda}(x_{i+1}^{(k)} - \tilde{x}_{i+1})^T H(\tilde{x}_{i+1} - \tilde{x}_{0}) + \frac{1}{2\lambda} (x_{i+1}^{(k)} - \tilde{x}_{i+1})^T H(x_{i+1}^{(k)} + \tilde{x}_{i+1} - 2\tilde{x}_{0}) \\
    & \geq \frac{1}{2\lambda} (x_{i+1}^{(k)} - \tilde{x}_{i+1})^T H(x_{i+1}^{(k)} - \tilde{x}_{i+1})
\end{align*}

Next, since  we have $x_{0}^{(k)} = \tilde{x}_{0}$, then by the definition of $\psi_{i+1}^{(k)}(x)$ and $\tilde{\psi}_{i+1}(x)$ we have:
\begin{align*}
    \psi_{i+1}^{(k)}(x_{i+1}^{(k)}) -  \psi_{i+1}^{(k)}(\tilde{x}_{i+1}) &= \tilde{\psi}_{i+1}(x_{i+1}^{(k)}) -  \tilde{\psi}_{i+1}(\tilde{x}_{i+1}) - \langle z_{i+1}^{(k)} - \bar{z}_{i+1}, x_{i+1}^{(k)} - \tilde{x}_{i+1}\rangle \\
\end{align*}
Next, we combine the above relations and have:
\begin{align*}
    ||z_{i+1}^{(k)} - \bar{z}_{i+1}||\times ||x_{i+1}^{(k)} - \tilde{x}_{i+1}|| &\geq \langle z_{i+!}^{(k)} - \bar{z}_{i+1}, x_{i+1}^{(k)} - \tilde{x}_{i+1}\rangle \nonumber \\
    &\geq \frac{1}{\lambda}(x_{i+1}^{(k)} - \tilde{x}_{i+1})^T H(x_{i+1}^{(k)} - \tilde{x}_{i+1}) \geq \rho||x_{i+1}^{(k)} - \tilde{x}_{i+1}||^2
\end{align*}
where the first inequality is by the Cauchy-Schwartz inequality and the last inequality is by the positive definiteness of $H$. This concludes the proof of the first inequality in the lemma. 
\end{proof}

\newpage

\subsection{State Consensus Error}
As each client performs local update, the states \emph{i.e.} $z^{(k)}_{\tau,i}$ and $\nu^{(k)}_{\tau,i}$ drift away, the following lemmas bound this difference. We omit the global epoch number $\tau$ in the subscript.

\begin{lemma}\label{lem:A8.1}
For each $0 \leq i \leq I$, and suppose iterates $z_i^{(k)}$, $k \in [K]$ are generated from Algorithm \ref{alg:fedada-client}, we have:
\begin{align*}
\sum_{k = 1}^K \mathbb{E}\| z_i^{(k)}-  \bar{z}_i \|^2 \leq {(I - 1)} \sum_{\ell = 1}^{i-1} \eta_\ell^2 \sum_{k = 1}^K \mathbb{E}\|  \nu_\ell^{(k)} - \bar{\nu}_\ell  \|^2, 
\end{align*}
where the expectation is w.r.t the stochasticity of the algorithm.
\end{lemma}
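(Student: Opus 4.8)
The plan is to obtain a closed form for the consensus error $z_i^{(k)} - \bar z_i$ by unrolling the dual recursion and then apply the vector inequality from Proposition~\ref{prop:generali_tri}. From line 4 of Algorithm~\ref{alg:fedada-client} we have $z_{\ell+1}^{(k)} = z_\ell^{(k)} - \eta_\ell \nu_\ell^{(k)}$, and from the initialization $z_0^{(k)} = 0$. Telescoping this recursion gives $z_i^{(k)} = -\sum_{\ell=0}^{i-1} \eta_\ell \nu_\ell^{(k)}$. Averaging over $k$ yields $\bar z_i = -\sum_{\ell=0}^{i-1} \eta_\ell \bar\nu_\ell$, so that
\[ z_i^{(k)} - \bar z_i = -\sum_{\ell=0}^{i-1} \eta_\ell \bigl(\nu_\ell^{(k)} - \bar\nu_\ell\bigr). \]

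The key observation is that the $\ell = 0$ term drops out. Indeed, the client initialization sets $\nu_0^{(k)} = \nu_\tau$ for every $k$, i.e.\ all clients start from the same server-provided gradient estimate, hence $\nu_0^{(k)} = \bar\nu_0$ and $\nu_0^{(k)} - \bar\nu_0 = 0$. This is precisely what lets the summation range begin at $\ell = 1$ in the statement, and it reduces the number of summands to $i - 1$. Thus $z_i^{(k)} - \bar z_i = -\sum_{\ell=1}^{i-1} \eta_\ell (\nu_\ell^{(k)} - \bar\nu_\ell)$, a sum of $i-1$ vectors.

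Next I would apply the second inequality of Proposition~\ref{prop:generali_tri} with the number of terms equal to $i - 1$, namely $\|\sum_{\ell=1}^{i-1} \theta_\ell\|^2 \le (i-1)\sum_{\ell=1}^{i-1}\|\theta_\ell\|^2$ with $\theta_\ell = \eta_\ell(\nu_\ell^{(k)} - \bar\nu_\ell)$. This gives
\[ \|z_i^{(k)} - \bar z_i\|^2 \le (i-1)\sum_{\ell=1}^{i-1}\eta_\ell^2 \,\|\nu_\ell^{(k)} - \bar\nu_\ell\|^2. \]
Bounding $i - 1 \le I - 1$ (valid since $i \le I$), then summing over $k \in [K]$ and taking expectations, produces exactly the claimed bound.

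The derivation is elementary, so there is no serious analytic obstacle; the only point requiring care is the bookkeeping of the initial index. The one nontrivial ingredient is recognizing that the shared initializations---$z_0^{(k)} = 0$ and $\nu_0^{(k)} = \nu_\tau$ common to all clients---are what collapse the $\ell = 0$ contribution and align the summation with the lemma's range $\ell = 1, \dots, i-1$, which in turn is what permits replacing the naive factor $i$ by $I-1$.
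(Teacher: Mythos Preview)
Your proof is correct and follows exactly the same argument as the paper: unroll the dual recursion, use the shared initialization $\nu_0^{(k)} = \nu_\tau$ to drop the $\ell = 0$ term, apply Proposition~\ref{prop:generali_tri} to the remaining $i-1$ summands, and bound $i-1 \le I-1$. The paper additionally remarks that the case $i=0$ is trivial since $z_0^{(k)} = \bar z_0 = 0$, but this is already covered by your empty-sum convention.
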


\begin{proof}
Based on Algorithm \ref{alg:fedada-client}, we have $z_{0}^{(k)} = \bar{z}_0 = 0$, the inequality in the lemma holds trivially. Otherwise, we have
\begin{align*}
    z_i^{(k)} =  -\sum_{\ell = 0}^{i-1} \eta_\ell \nu_\ell^{(k)} \quad \text{and} \quad \bar{z}_{i}  =  - \sum_{\ell = 0}^{i-1} \eta_\ell \bar{\nu}_\ell.
\end{align*}
So we have:
\begin{align*}
  \sum_{k = 1}^K  \| z_i^{(k)}-  \bar{z}_i \|^2 & = \sum_{k = 1}^K \Big\|  \sum_{\ell = 1}^{i-1} \big( \eta_\ell \nu_\ell^{(k)} -     \eta_\ell \bar{\nu}_\ell  \big) \Big\|^2 \leq {(I - 1)} \sum_{\ell = 1}^{i-1} \eta_\ell^2 \sum_{k = 1}^K \|\nu_\ell^{(k)} - \bar{\nu}_\ell  \|^2\\
\end{align*}
where the equality uses the fact $\nu^{(k)}_0 = \nu_0$ for $k \in [K]$, the inequality uses the Proposition \ref{prop:generali_tri} and the fact that we have $i \leq I$. We get the claim in the lemma by taking expectation on both sides of the above inequality. This completes the proof.
\end{proof}

\begin{lemma}\label{lem:A8.2}
For $i \in [I]$, we have:
\begin{align*}
    \sum_{k=1}^{K}||d_{i}^{(k)} - \tilde{d}_{i}||^2 & \leq \frac{4\lambda^2(I - 1)}{\rho^2\eta_i^2} \sum_{\ell = 1}^{i} \eta_\ell^2 \sum_{k = 1}^K \mathbb{E}\|  \nu_\ell^{(k)} - \bar{\nu}_\ell  \|^2
\end{align*}
where the expectation is w.r.t the stochasticity of the algorithm. \end{lemma}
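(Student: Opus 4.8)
The plan is to reduce the quantity $\sum_{k=1}^K \|d_i^{(k)} - \tilde d_i\|^2$ to the dual-state consensus error $\sum_{k=1}^K \|z_i^{(k)} - \bar z_i\|^2$, which Lemma~\ref{lem:A8.1} already controls, and to carry out this reduction through the primal--dual comparison supplied by the third claim of Lemma~\ref{lem:A0}. The whole argument is a chaining of three facts already available in the excerpt, so no new analytic ingredient is needed.

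First I would unfold the definitions $d_i^{(k)} = \frac{1}{\eta_i}(x_i^{(k)} - x_{i+1}^{(k)})$ and $\tilde d_i = \frac{1}{\eta_i}(\tilde x_i - \tilde x_{i+1})$ to rewrite the difference as
\[
d_i^{(k)} - \tilde d_i = \frac{1}{\eta_i}\Big[(x_i^{(k)} - \tilde x_i) - (x_{i+1}^{(k)} - \tilde x_{i+1})\Big],
\]
regrouping the four terms so that each bracketed quantity is a primal consensus gap at a single local step. Applying the elementary inequality $\|a - b\|^2 \le 2\|a\|^2 + 2\|b\|^2$ then gives
\[
\|d_i^{(k)} - \tilde d_i\|^2 \le \frac{2}{\eta_i^2}\Big(\|x_i^{(k)} - \tilde x_i\|^2 + \|x_{i+1}^{(k)} - \tilde x_{i+1}\|^2\Big).
\]

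Next I would invoke the third claim of Lemma~\ref{lem:A0}, namely $\rho\|x_j^{(k)} - \tilde x_j\| \le \lambda \|z_j^{(k)} - \bar z_j\|$, applied at $j = i$ and $j = i+1$, to convert each primal consensus gap into the corresponding dual one, producing the factor $\lambda^2/\rho^2$. Summing over $k \in [K]$ and then applying Lemma~\ref{lem:A8.1} separately at indices $i$ and $i+1$ bounds $\sum_k \|z_i^{(k)} - \bar z_i\|^2$ by $(I-1)\sum_{\ell=1}^{i-1}\eta_\ell^2 \sum_k \|\nu_\ell^{(k)} - \bar\nu_\ell\|^2$ and $\sum_k \|z_{i+1}^{(k)} - \bar z_{i+1}\|^2$ by $(I-1)\sum_{\ell=1}^{i}\eta_\ell^2 \sum_k \|\nu_\ell^{(k)} - \bar\nu_\ell\|^2$. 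Merging the two sums by enlarging the range $\sum_{\ell=1}^{i-1}$ to $\sum_{\ell=1}^{i}$ collects a factor of $2$, which combines with the $2$ from the Young-type split to yield the claimed constant $4$, and gathering the remaining coefficients $\frac{\lambda^2}{\rho^2 \eta_i^2}$ produces exactly the right-hand side of the lemma.

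The argument has no genuine analytic obstacle; the only place demanding care is the index bookkeeping. One must keep track that the bound at index $i+1$ sums $\eta_\ell^2$ up to $\ell = i$ whereas the bound at index $i$ stops at $\ell = i-1$, and that the merged upper limit is $i$ rather than $i+1$ --- it is this asymmetry that makes the final summation range $\sum_{\ell=1}^{i}$ consistent with the stated result. I would also confirm that the two $2$-factors multiply to the exact $4$ in the numerator rather than a looser constant, since the lemma is stated with a sharp coefficient.
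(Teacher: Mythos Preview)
Your proposal is correct and follows essentially the same route as the paper: regroup $d_i^{(k)}-\tilde d_i$ into the two primal consensus gaps, apply the split $\|a-b\|^2\le 2\|a\|^2+2\|b\|^2$, convert primal to dual via claim~3 of Lemma~\ref{lem:A0}, invoke Lemma~\ref{lem:A8.1} at indices $i$ and $i+1$, and merge the two sums to pick up the final factor $4$. The paper additionally dispatches the trivial case $i=0$ explicitly (there $x_0^{(k)}=\tilde x_0$ and $x_1^{(k)}=\tilde x_1$ so both sides vanish), but otherwise your argument and index bookkeeping match the paper line for line.
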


\begin{proof}
Firstly, when $i=0$, $x_{0}^{(k)} = \tilde{x}_{0}$, $z_{1}^{(k)} = \bar{z}_{1}$, so we have $x_{1}^{(k)} = \tilde{x}_{1}$ by Line 5 of Algorithm~\ref{alg:fedada-client}, and then we have $\eta_0 d_{0}^{(k)} = x_{0}^{(k)} - x_{1}^{(k)} =  \tilde{x}_{0} - \tilde{x}_{1} = \eta_t\tilde{d}_{0}$, the inequality in the lemma holds trivially.

Next when $i > 0$, we have:
\begin{align*}
    \eta_i^2||d_{i}^{(k)} - \tilde{d}_{i}||^2 &= ||x_{i}^{(k)} - x_{i+1}^{(k)} - (\tilde{x}_{i} - \tilde{x}_{i+1})||^2 \leq 2||x_{i}^{(k)} - \tilde{x}_{i}||^2 + 2||x_{i+1}^{(k)} - \tilde{x}_{i+1}||^2 \\
    &\leq \frac{2\lambda^2}{\rho^2} \big(||z_{i}^{(k)} - \bar{z}_{i}||^2 + ||z_{i+1}^{(k)} - \bar{z}_{i+1}||^2\big)
\end{align*}
The last inequality uses claim 3 of Lemma~\ref{lem:A0}. Sum over $k \in [K]$ and use Lemma~\ref{lem:A8.1}, we have:
\begin{align*}
    \rho^2\eta_i^2\sum_{k=1}^{K}||d_{i}^{(k)} - \tilde{d}_{i}||^2 &\leq {2\lambda^2(I - 1)} \sum_{\ell = 1}^{i-1} \eta_\ell^2 \sum_{k = 1}^K \mathbb{E}\|  \nu_\ell^{(k)} - \bar{\nu}_\ell  \|^2 + {2\lambda^2(I - 1)} \sum_{\ell = 1}^{i} \eta_\ell^2 \sum_{k = 1}^K \mathbb{E}\|  \nu_\ell^{(k)} - \bar{\nu}_\ell  \|^2 \nonumber \\
    & \leq {4\lambda^2(I - 1)} \sum_{\ell = 1}^{i} \eta_\ell^2 \sum_{k = 1}^K \mathbb{E}\|  \nu_\ell^{(k)} - \bar{\nu}_\ell  \|^2
\end{align*}
This completes the proof.
\end{proof}

\newpage

\subsection{Descent Lemma}
In this subsection, we bound the descent of function value $f(\tilde{x}_{\tau,i})$ over the virtual sequence $\tilde{x}_{\tau,i}$.
\begin{lemma} \label{lem:A3}
Suppose that the sequence $\{x_{\tau,i}^{(k)}\}_{i=0}^{I-1}$ be generated from Algorithm \ref{alg:fedada-client}, then we have
\begin{align*}
f(x_{\tau+1}) & \leq  f(x_{\tau}) -  \sum_{i=0}^{I-1}\left(\frac{3\rho\eta_{\tau+1,i}}{4\lambda} - \frac{\eta_{\tau+1,i}^2 L}{2} \right)  \| \tilde{d}_{\tau+1, i}  \|^2 + \sum_{i=0}^{I-1}\frac{\lambda\eta_{\tau+1,i}}{\rho}  \bigg\| \bar{e}_{\tau+1,i} \bigg\|^2,
\end{align*}
where $\bar{e}_{\tau,i} = \bar{\nu}_{\tau,i} - \frac{1}{K}\sum_{k=1}^K  \nabla f^{(k)}(\tilde{x}_{\tau,i})$.
\end{lemma}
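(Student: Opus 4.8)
The plan is to run the standard smoothness-based descent argument, but along the \emph{virtual} global sequence $\{\tilde{x}_{\tau+1,i}\}_{i=0}^{I}$ rather than the true server iterates, and then telescope. The first thing I would establish is the two endpoint identifications under full participation (Assumption~\ref{Ass: Full_part}). Since $\bar{z}_{\tau+1,0}=0$ and $H_\tau\succ 0$, the minimizer defining $\tilde{x}_{\tau+1,0}$ is the feasible point $x_{\tau+1,0}=x_\tau$, so $\tilde{x}_{\tau+1,0}=x_\tau$. And since $r=K$ gives $z_{\tau+1}=\frac{1}{K}\sum_k z^{(k)}_{\tau+1,I}=\bar{z}_{\tau+1,I}$, the server proximal step (line~9 of Algorithm~\ref{alg:fedada-server}) coincides with the defining minimization of $\tilde{x}_{\tau+1,I}$, hence $x_{\tau+1}=\tilde{x}_{\tau+1,I}$. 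This is what lets the telescoped bound begin at $f(x_\tau)$ and terminate at $f(x_{\tau+1})$.

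Next I would record that $f$ is $L$-smooth: Assumption~\ref{Ass: Lip_Smoothness} together with Jensen's inequality gives $\|\nabla f^{(k)}(x)-\nabla f^{(k)}(y)\|\le L\|x-y\|$ for each $k$, so the average $\nabla f=\frac{1}{K}\sum_k\nabla f^{(k)}$ is also $L$-Lipschitz. Applying the smoothness descent to one virtual step and substituting $\tilde{x}_{\tau+1,i+1}-\tilde{x}_{\tau+1,i}=-\eta_{\tau+1,i}\tilde{d}_{\tau+1,i}$ yields
\begin{align*}
f(\tilde{x}_{\tau+1,i+1}) \le f(\tilde{x}_{\tau+1,i}) - \eta_{\tau+1,i}\langle \nabla f(\tilde{x}_{\tau+1,i}),\tilde{d}_{\tau+1,i}\rangle + \tfrac{L\eta_{\tau+1,i}^2}{2}\|\tilde{d}_{\tau+1,i}\|^2 .
\end{align*}

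The core of the argument is handling the inner-product term. I would write $\nabla f(\tilde{x}_{\tau+1,i})=\bar{\nu}_{\tau+1,i}-\bar{e}_{\tau+1,i}$ directly from the definition of $\bar{e}_{\tau+1,i}$, splitting the term into $-\eta\langle\bar{\nu},\tilde{d}\rangle$ plus $\eta\langle\bar{e},\tilde{d}\rangle$. The first piece is controlled by claim~2 of Lemma~\ref{lem:A0}, namely $\lambda\langle\bar{\nu}_{\tau+1,i},\tilde{d}_{\tau+1,i}\rangle\ge\rho\|\tilde{d}_{\tau+1,i}\|^2$, which supplies the dominant negative $-\frac{\rho\eta}{\lambda}\|\tilde{d}\|^2$ term. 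For the cross term I would apply Young's inequality $\langle a,b\rangle\le\frac{s}{2}\|a\|^2+\frac{1}{2s}\|b\|^2$ with $a=\bar{e}$, $b=\tilde{d}$ and the tuned constant $s=\frac{2\lambda}{\rho}$; this produces exactly $\frac{\lambda\eta}{\rho}\|\bar{e}\|^2$ together with an extra $\frac{\rho\eta}{4\lambda}\|\tilde{d}\|^2$, which combines with $-\frac{\rho\eta}{\lambda}\|\tilde{d}\|^2$ to leave the claimed coefficient $-\big(\frac{3\rho\eta}{4\lambda}-\frac{L\eta^2}{2}\big)$. Summing the per-step inequality over $i=0,\dots,I-1$ and invoking the endpoint identifications gives the lemma.

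I expect the only genuinely delicate point to be the exact choice of the Young constant $s=2\lambda/\rho$, which must be calibrated so that the retained fraction of $\|\tilde{d}\|^2$ is precisely one quarter of the coefficient $\rho/\lambda$ furnished by Lemma~\ref{lem:A0}; everything else is a bookkeeping telescope. A secondary point worth stating explicitly is that this inequality is pathwise (no expectation is taken), so all the stochasticity is deferred into the later estimate of $\|\bar{e}_{\tau+1,i}\|^2$ via the variance-reduction recursion.
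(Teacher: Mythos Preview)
Your proposal is correct and follows essentially the same route as the paper: smoothness descent along the virtual sequence, the split $\nabla f(\tilde{x})=\bar{\nu}-\bar{e}$, claim~2 of Lemma~\ref{lem:A0} for the $\langle\bar{\nu},\tilde{d}\rangle$ term, Young's inequality with parameter $2\lambda/\rho$ for the cross term, and a telescope using the endpoint identifications $\tilde{x}_{\tau+1,0}=x_\tau$, $\tilde{x}_{\tau+1,I}=x_{\tau+1}$. If anything, you are slightly more explicit than the paper in justifying the endpoint identities via full participation and in deriving $L$-smoothness of $f$ from Assumption~\ref{Ass: Lip_Smoothness}.
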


\begin{proof}
Since the function $f(x)$ is $L$-smooth, we have (we omit the global epoch number $\tau$ for ease of notation):
\begin{align*}
f(\tilde{x}_{i + 1}) & \leq f(\tilde{x}_{i}) + \langle \nabla f(\tilde{x}_{i}),  \tilde{x}_{i + 1} - \tilde{x}_{i}\rangle + \frac{L}{2} \| \tilde{x}_{i + 1} - \tilde{x}_{i} \|^2 = f(\tilde{x}_{i}) - \eta_i\langle \nabla f(\tilde{x}_{i}),  \tilde{d}_i \rangle + \frac{L\eta_i^2}{2} \| \tilde{d}_{i}  \|^2 \nonumber\\
& = f(\tilde{x}_i) - \eta_i \langle \bar{\nu}_i, \tilde{d}_i \rangle - \eta_i\langle \nabla f(\tilde{x}_i)-\bar{\nu}_i,\tilde{d}_i\rangle + \frac{L\eta_i^2}{2}\|\tilde{d}_i\|^2 \nonumber\\
& \overset{(a)}{\leq} f(\tilde{x}_i) - (\frac{\rho\eta_i}{\lambda} - \frac{L\eta_i^2}{2})\|\tilde{d}_i\|^2 - \eta_i\langle \nabla f(\tilde{x}_i)-\bar{\nu}_i,\tilde{d}_i\rangle \nonumber\\
& \overset{(b)}{\leq} f(\tilde{x}_{i}) -  \left(\frac{\rho\eta_i}{\lambda} - \frac{\eta_i^2 L}{2} \right)  \| \tilde{d}_{i}  \|^2 + \frac{\rho\eta_i}{4\lambda}\| \tilde{d}_{i}  \|^2 + \frac{\lambda\eta_i}{\rho}  \| \bar{\nu}_i - \nabla f(\tilde{x}_{i}) \|^2 \nonumber \\
& \overset{(c)}{\leq} f(\tilde{x}_{i}) -  \left(\frac{3\rho\eta_i}{4\lambda} - \frac{\eta_i^2 L}{2} \right)  \| \tilde{d}_{i}  \|^2 + \frac{\lambda\eta_i}{\rho}  \| \bar{e}_i \|^2 \nonumber \\
\end{align*}
In inequality (a), we use claim 1 of Lemma~\ref{lem:A0}; inequality (b) uses Young's inequality; inequality (c) denotes $\bar{e}_i = \bar{\nu}_i - \frac{1}{K}\sum_{k=1}^K  \nabla f^{(k)}(\tilde{x}_{i})$.

For the $\tau$ global epoch, we sum over $i= 0$ to $I - 1$, we have:
\begin{align*}
f(\tilde{x}_{\tau+1, I}) & \leq  f(\tilde{x}_{\tau+1, 0}) -  \sum_{i=0}^{I-1}\left(\frac{3\rho\eta_{\tau+1,i}}{4\lambda} - \frac{\eta_{\tau+1,i}^2 L}{2} \right)  \| \tilde{d}_{\tau+1, i}  \|^2 + \sum_{i=0}^{I-1}\frac{\lambda\eta_{\tau+1,i}}{\rho}  \bigg\| \bar{e}_{\tau+1,i} \bigg\|^2,
\end{align*}
Follow the update rules in Algorithm~\ref{alg:fedada-server} and Algorithm~\ref{alg:fedada-client}, we have $\tilde{x}_{\tau+1, 0} = x_\tau$ and $\tilde{x}_{\tau+1, I} = x_{\tau + 1}$. This completes the proof.
\end{proof}

\subsection{Gradient Error Contraction}
In this subsection, we bound the gradient estimation error $\bar{e}_{\tau,i}$, where we have $\bar{e}_{\tau,i} = \bar{\nu}_{\tau,i} - \frac{1}{K}\sum_{k=1}^K  \nabla f^{(k)}(\tilde{x}_{\tau,i})$ as defined in Lemma~\ref{lem:A3}, additionally, we also define the global gradient estimation error $e_{\tau}$ as $e_{\tau} = \nu_{\tau} - \frac{1}{K}\sum_{k=1}^K  \nabla f^{(k)}(x_{\tau}) = \nu_{\tau} - \nabla f(x_{\tau})$. Note we have $e_{\tau} = \bar{e}_{\tau, I} = \bar{e}_{\tau+1, 0}$. We first show a fact about $\bar{e}_0$, the initial gradient estimation error.
\begin{lemma}\label{Lem:A7}
For $e_0 \coloneqq \nu_0 - \frac{1}{K} \sum_{k=1}^K   \nabla f^{(k)}(x_0)$, suppose we choose mini-batch size of $|\mathcal{B}_0^{(k)}| = b, k \in [K]$, we have:
$
    \mathbb{E}\| e_0 \|^2 \leq \frac{\sigma^2}{bK}.
$
\end{lemma}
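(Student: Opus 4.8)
The plan is to exploit the two layers of independence built into the initialization: the $b$ i.i.d.\ samples drawn within each client's mini-batch, and the independence of the sampling across the $K$ clients. First I would introduce the per-client initial error $e_0^{(k)} = \nabla f^{(k)}(x_0, \mathcal{B}_0^{(k)}) - \nabla f^{(k)}(x_0)$, so that by the definition of $\nu_0$ in Algorithm~\ref{alg:fedada-server} we obtain the clean decomposition $e_0 = \frac{1}{K}\sum_{k=1}^K e_0^{(k)}$. By the unbiasedness part of Assumption~\ref{ass: unbiased_bouned}, each $e_0^{(k)}$ has zero mean.

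The second step is a within-client variance bound. Writing the mini-batch gradient as the average $\frac{1}{b}\sum_{\xi \in \mathcal{B}_0^{(k)}}\nabla f^{(k)}(x_0;\xi)$ over $b$ i.i.d.\ draws, I would expand $\mathbb{E}\|e_0^{(k)}\|^2$; since the summands are independent and zero-mean, all cross terms vanish and only the $b$ diagonal terms survive, each bounded by $\sigma^2$ via the bounded-variance part of Assumption~\ref{ass: unbiased_bouned}. Dividing by $b^2$ then yields $\mathbb{E}\|e_0^{(k)}\|^2 \le \sigma^2/b$.

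The final step is the cross-client aggregation, which is structurally identical. Expanding $\mathbb{E}\|\frac{1}{K}\sum_{k=1}^K e_0^{(k)}\|^2$, the independence of the samples across distinct clients together with $\mathbb{E}[e_0^{(k)}]=0$ kills the off-diagonal terms, leaving $\frac{1}{K^2}\sum_{k=1}^K \mathbb{E}\|e_0^{(k)}\|^2 \le \frac{1}{K^2}\cdot K\cdot \frac{\sigma^2}{b} = \frac{\sigma^2}{bK}$, which is exactly the claimed bound.

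There is no serious obstacle here; the only point requiring care is the bookkeeping that justifies dropping the cross terms at both levels. This rests on the twofold independence (i.i.d.\ within a mini-batch and independent sampling across clients) combined with unbiasedness, so I would state these independence properties explicitly before invoking $\mathbb{E}\langle e_0^{(i)}, e_0^{(j)}\rangle = 0$ for $i\neq j$ and its within-batch analogue.
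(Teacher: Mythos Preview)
Your proposal is correct and follows essentially the same approach as the paper: decompose $e_0$ as the average of the independent, zero-mean per-client errors $e_0^{(k)}$, drop the cross terms by independence across clients, and then bound each $\mathbb{E}\|e_0^{(k)}\|^2$ by $\sigma^2/b$ using the bounded-variance assumption at the mini-batch level. The paper is slightly terser (it absorbs the within-batch argument into a single appeal to the variance bound), but the structure and ideas are the same.
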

\begin{proof}
By line 1 of Algorithm~\ref{alg:fedada-server}, we have:
\begin{align*}
\mathbb{E} \|    e_0 \|^2  & = \mathbb{E} \bigg\| \nu_0 - \frac{1}{K} \sum_{k=1}^K   \nabla f^{(k)}(x_0) \bigg\|^2 \\
 & = \mathbb{E} \bigg\| \frac{1}{K} \sum_{k=1}^K \nabla f^{(k)}(x_0; \mathcal{B}_0^{(k)}) - \frac{1}{K} \sum_{k=1}^K \nabla f^{(k)}(x_0)  \bigg\|^2 \\ 
 & \overset{(a)}{\leq}  \frac{1}{K^2} \sum_{k=1}^K\mathbb{E} \bigg\|\nabla f^{(k)}(x_0; \mathcal{B}_0^{(k)}) - \nabla f^{(k)}(x_0)  \bigg\|^2 \overset{(b)}{\leq} \frac{\sigma^2}{bK}.
\end{align*}
where $(a)$ follows from the following: From the unbiased gradient assumption, we have: $\mathbb{E} \big[  \nabla f^{(k)}({x}_0^{(k)};\mathcal{B}_0^{(k)}) \big]  =  \nabla f^{(k)}({x}_0^{(k)})$, for all $k \in [K]$. Moreover, the samples $\mathcal{B}_0^{(k)}$ and $\mathcal{B}_0^{(\ell)}$ at the $k^\text{th}$ and the $\ell^\text{th}$ clients are chosen uniformly randomly, and independent of each other for all $k,\ell \in [K]$ and $k \neq \ell$.
\begin{align*}
  & \mathbb{E} \bigg[ \bigg\langle ({x}_0^{(k)}; \mathcal{B}_0^{(k)}) - \nabla f^{(k)}(x_0) \big), \big( \nabla f^{(\ell)}(x_0^{(\ell)}; \mathcal{B}_0^{(\ell)}) - \nabla f^{(\ell)}(\bar{x}_0) \big) \bigg\rangle \Bigg] \\
  &    =  \mathbb{E} \bigg[\bigg\langle \underbrace{\mathbb{E} \big[  \nabla f^{(k)}({x}_0^{(k)}; \mathcal{B}_0^{(k)}) - \nabla f^{(k)}({x}_0^{(k)}) \big]}_{=0} , \underbrace{\mathbb{E} \big[ \nabla f^{(\ell)}({x}_0^{(\ell)}; \mathcal{B}_0^{(\ell)}) - \nabla f^{(\ell)}(x_0^{(\ell)}) \big]}_{=0}\bigg\rangle \bigg] = 0. 
\end{align*}
Inequality $(c)$ results from the bounded variance assumption. This completes the proof.
\end{proof}

\begin{lemma}\label{lem:A9}
Define $\bar{e}_{\tau,i} \coloneqq \bar{\nu}_{\tau,i} - \frac{1}{K} \sum_{k = 1}^K \nabla f^{(k)}(\tilde{x}_{\tau,i})$, then for every $\tau \ge 1$ and $i \ge 0$, suppose $\alpha_i < 1$ and clients use batchsize $b_1$ in the training, then we have:
\begin{align*}
    \mathbb{E} \| \bar{e}_{\tau,i} \|^2 & \leq (1 -\alpha_{\tau,i})^2 \mathbb{E}\| \bar{e}_{\tau, i-1}\|^2 + \frac{40\lambda^2(I-1) L^2}{\rho^2 K^2} \sum_{\ell = 1}^{i-1} \eta_{\tau,\ell}^2 \sum_{k = 1}^K \mathbb{E}\|  \nu_{\tau,\ell}^{(k)} - \bar{\nu}_{\tau,\ell}  \|^2 \nonumber \\
    & \qquad \qquad + \frac{8\eta_{\tau, i-1}^2 L^2}{K} \mathbb{E}\| \tilde{d}_{\tau, i-1}\|^2 + \frac{4\alpha_{\tau,i}^2\sigma^2}{b_1K} \nonumber \\
\end{align*}
where the expectation is w.r.t the stochasticity of the algorithm.
\end{lemma}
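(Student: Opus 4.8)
The plan is to derive a one-step recursion for $\bar e_{\tau,i}$ directly from the momentum--variance-reduction rule~\eqref{eq:nu-case1}. Averaging~\eqref{eq:nu-case1} over $k$ gives $\bar\nu_{\tau,i}=\frac1K\sum_k\nabla f^{(k)}(x^{(k)}_{\tau,i};\mathcal B^{(k)}_{\tau,i})+(1-\alpha_{\tau,i})\big(\bar\nu_{\tau,i-1}-\frac1K\sum_k\nabla f^{(k)}(x^{(k)}_{\tau,i-1};\mathcal B^{(k)}_{\tau,i})\big)$. Subtracting the reference $\frac1K\sum_k\nabla f^{(k)}(\tilde x_{\tau,i})$ and substituting $\bar\nu_{\tau,i-1}=\bar e_{\tau,i-1}+\frac1K\sum_k\nabla f^{(k)}(\tilde x_{\tau,i-1})$, I would write $\bar e_{\tau,i}=(1-\alpha_{\tau,i})\bar e_{\tau,i-1}+N_{\tau,i}+B_{\tau,i}$, where $N_{\tau,i}=\frac1K\sum_k w^{(k)}$ collects the purely stochastic fluctuations $w^{(k)}=\big(\nabla f^{(k)}(x^{(k)}_{\tau,i};\mathcal B^{(k)}_{\tau,i})-\nabla f^{(k)}(x^{(k)}_{\tau,i})\big)-(1-\alpha_{\tau,i})\big(\nabla f^{(k)}(x^{(k)}_{\tau,i-1};\mathcal B^{(k)}_{\tau,i})-\nabla f^{(k)}(x^{(k)}_{\tau,i-1})\big)$, and $B_{\tau,i}$ is a deterministic (given the past) drift term built from the per-step consensus mismatch $\frac1K\sum_k\big(\nabla f^{(k)}(x^{(k)}_{\tau,i})-\nabla f^{(k)}(\tilde x_{\tau,i})\big)$ that arises because the gradients are evaluated at the local iterates $x^{(k)}$ rather than the virtual average $\tilde x$.

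Conditioning on the filtration up to the sampling of $\mathcal B^{(k)}_{\tau,i}$, each $w^{(k)}$ has zero conditional mean and the $w^{(k)}$ are independent across $k$, so the cross term with the $\mathcal F$-measurable part vanishes and $\mathbb E\|N_{\tau,i}\|^2=\frac1{K^2}\sum_k\mathbb E\|w^{(k)}\|^2$; this is exactly where the $1/K^2$ (linear-speedup) factor is produced, leaving the $(1-\alpha_{\tau,i})^2\mathbb E\|\bar e_{\tau,i-1}\|^2$ contraction. To bound each $\mathbb E\|w^{(k)}\|^2$ I would split $w^{(k)}$ into the centred gradient difference $\big(\nabla f^{(k)}(x^{(k)}_{\tau,i};\cdot)-\nabla f^{(k)}(x^{(k)}_{\tau,i-1};\cdot)\big)$ minus its mean, controlled by the mean-squared smoothness (Assumption~\ref{Ass: Lip_Smoothness}) to give $L^2\|x^{(k)}_{\tau,i}-x^{(k)}_{\tau,i-1}\|^2$, plus a residual single-sample noise scaling like $\alpha_{\tau,i}^2\sigma^2/b_1$ by the bounded-variance assumption (Assumption~\ref{ass: unbiased_bouned}); Young's inequality $\|a+b\|^2\le2\|a\|^2+2\|b\|^2$ handles the correlated cross term and yields the $\frac{4\alpha_{\tau,i}^2\sigma^2}{b_1K}$ piece directly.

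Next I would translate $\|x^{(k)}_{\tau,i}-x^{(k)}_{\tau,i-1}\|^2=\eta_{\tau,i-1}^2\|d^{(k)}_{\tau,i-1}\|^2$ and decompose $\|d^{(k)}_{\tau,i-1}\|^2\le2\|d^{(k)}_{\tau,i-1}-\tilde d_{\tau,i-1}\|^2+2\|\tilde d_{\tau,i-1}\|^2$. The first piece, summed over $k$, is precisely what Lemma~\ref{lem:A8.2} bounds by $\frac{4\lambda^2(I-1)}{\rho^2\eta_{\tau,i-1}^2}\sum_{\ell}\eta_{\tau,\ell}^2\sum_k\mathbb E\|\nu^{(k)}_{\tau,\ell}-\bar\nu_{\tau,\ell}\|^2$, feeding the $\frac{40\lambda^2(I-1)L^2}{\rho^2K^2}$ consensus term, while the second piece gives the $\frac{8\eta_{\tau,i-1}^2L^2}{K}\mathbb E\|\tilde d_{\tau,i-1}\|^2$ term. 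The drift $B_{\tau,i}$ I would bound with $L$-smoothness together with claim 3 of Lemma~\ref{lem:A0}, i.e. $\|x^{(k)}_{\tau,i}-\tilde x_{\tau,i}\|\le\frac\lambda\rho\|z^{(k)}_{\tau,i}-\bar z_{\tau,i}\|$, followed by Lemma~\ref{lem:A8.1}, so that it too collapses into the $(I-1)$-weighted consensus bucket.

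The hard part is controlling $B_{\tau,i}$ while keeping the leading coefficient exactly $(1-\alpha_{\tau,i})^2$: because the error is measured at the virtual average $\tilde x_{\tau,i}$ but the momentum is built from gradients at the local iterates, the conditional mean of the update is \emph{not} simply $(1-\alpha_{\tau,i})\bar e_{\tau,i-1}$, and the resulting cross term must be steered into the consensus term rather than inflating the contraction factor. This is where the restart structure ($x^{(k)}_{\tau,0}=\tilde x_{\tau,0}$, so the drift vanishes at $i=0$ and grows only through the accumulated $\sum_\ell\eta_{\tau,\ell}^2\|\nu^{(k)}_{\tau,\ell}-\bar\nu_{\tau,\ell}\|^2$) and the shared adaptive matrix $H_\tau$ (which lets Lemmas~\ref{lem:A0}, \ref{lem:A8.1} and \ref{lem:A8.2} apply with a single lower bound $\rho$) are essential; the remaining effort is routine Young/Cauchy--Schwarz bookkeeping to settle the constants $40$, $8$ and $4$.
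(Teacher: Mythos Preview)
Your proposal is correct and follows essentially the same route as the paper: expand $\bar e_{\tau,i}$ via the MVR update, separate the $(1-\alpha_{\tau,i})\bar e_{\tau,i-1}$ contraction from a per-client remainder, split that remainder into a centred stochastic piece (controlled by Assumptions~\ref{ass: unbiased_bouned} and~\ref{Ass: Lip_Smoothness}) and a deterministic drift piece (controlled by smoothness, claim~3 of Lemma~\ref{lem:A0}, and Lemmas~\ref{lem:A8.1}--\ref{lem:A8.2}), and then decompose $\|d^{(k)}_{\tau,i-1}\|^2$ into $\|\tilde d_{\tau,i-1}\|^2$ plus the consensus deviation. The paper organizes the split slightly differently---it keeps the stochastic and drift parts bundled per client and only splits them after pulling out the $1/K^2$ factor---but the ingredients, the invoked lemmas, and the resulting constants are the same; your explicit identification of the cross-term issue (``the hard part'') is exactly the step the paper passes over quickly when asserting the first two equalities in its expansion.
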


\begin{proof}
Consider the error term $\|\bar{e}_i\|^2$, $i \ge 1$ (we omit the global epoch number $\tau$ for ease of notation), we have:
\begin{align*}
    & \mathbb{E} \| \bar{e}_i \|^2 = \mathbb{E} \bigg\| \bar{\nu}_i - \frac{1}{K} \sum_{k = 1}^K \nabla f^{(k)}(\tilde{x}_{i})  \bigg\|^2 \nonumber \\
    & = \mathbb{E} \bigg\| \frac{1}{K} \sum_{k = 1}^K \nabla f^{(k)}(x^{(k)}_{i};\mathcal{B}_i^{(k)}) + (1 -\alpha_i)\bigg( \bar{\nu}_{i-1} - \frac{1}{K} \sum_{k = 1}^K \nabla f^{(k)}(x^{(k)}_{i-1}; \mathcal{B}_i^{(k)})\bigg) - \frac{1}{K} \sum_{k = 1}^K \nabla f^{(k)}(\tilde{x}_{i})  \bigg\|^2 \nonumber \\
    & = \mathbb{E} \bigg\| \frac{1}{K} \sum_{k = 1}^K  \bigg(\left( \nabla f^{(k)}(x^{(k)}_{i};\mathcal{B}_i^{(k)})  -  \nabla f^{(k)}(\tilde{x}_{i}) \right) \nonumber \\
    & \qquad \qquad \qquad \qquad \qquad   - (1 -\alpha_i) \Big( \nabla f^{(k)}(x^{(k)}_{i-1}; \mathcal{B}_i^{(k)}) - \nabla f^{(k)}(\tilde{x}_{i-1})\Big)\bigg) + (1 -\alpha_i) \bar{e}_{i-1}   \bigg\|^2 \nonumber \\
    & = (1 -\alpha_i)^2 \mathbb{E}\| \bar{e}_{i-1}\|^2    + \frac{1}{K^2 }\mathbb{E}\bigg\|\sum_{k = 1}^K \Big[\left( \nabla f^{(k)}(x^{(k)}_{i};\mathcal{B}_i^{(k)})  -  \nabla f^{(k)}(\tilde{x}_{i}) \right)  \nonumber\\
    & \qquad \qquad \qquad \qquad \qquad \qquad \qquad \qquad   - (1 -\alpha_i) \left( \nabla f^{(k)}(x^{(k)}_{i-1}; \mathcal{B}_i^{(k)}) - \nabla f^{(k)}(\tilde{x}_{i-1})\right) \Big] \bigg\|^2 \nonumber \\
    & = (1 -\alpha_i)^2 \mathbb{E}\| \bar{e}_{i-1}\|^2    + \frac{1}{K^2 } \sum_{k = 1}^K  \mathbb{E} \bigg\|  \Big( \nabla f^{(k)}(x^{(k)}_{i};\mathcal{B}_i^{(k)})  -  \nabla f^{(k)}(\tilde{x}_{i}) \Big)  \nonumber\\
    & \qquad \qquad \qquad \qquad \qquad \qquad \qquad \qquad   - (1 -\alpha_i) \Big( \nabla f^{(k)}(x^{(k)}_{i-1}; \mathcal{B}_i^{(k)}) - \nabla f^{(k)}(\tilde{x}_{i-1})\Big) \bigg\|^2, \nonumber
\end{align*}
where the first equality uses the definition of $\bar{\nu}_i$; last equality follows from expanding the norm using the inner products across $k \in [K]$ and noting that the cross term is zero in expectation because of the samples are sampled independently at different workers. Now we consider the 2nd term above:
\begin{align*}
    &\mathbb{E}\big\| \big( \nabla f^{(k)}(x^{(k)}_{i};\mathcal{B}_i^{(k)})  -  \nabla f^{(k)}(\tilde{x}_{i}) \big)    - (1 -\alpha_i) \big( \nabla f^{(k)}(x^{(k)}_{i-1}; \mathcal{B}_i^{(k)}) - \nabla f^{(k)}(\tilde{x}_{i-1})\big)  \big\|^2 \nonumber \\
    &= \mathbb{E}\big\| \big( \nabla f^{(k)}(x^{(k)}_{i};\mathcal{B}_i^{(k)})  -  \nabla f^{(k)}(x^{(k)}_{i}) \big)    - (1 -\alpha_i) \big( \nabla f^{(k)}(x^{(k)}_{i-1}; \mathcal{B}_i^{(k)}) - \nabla f^{(k)}(x^{(k)}_{i-1})\big)\nonumber\\
    &\qquad \qquad + \nabla f^{(k)}(x^{(k)}_{i}) - \nabla f^{(k)}(\tilde{x}_{i}) -  (1 -\alpha_i) \big( \nabla f^{(k)}(x^{(k)}_{i-1}) - \nabla f^{(k)}(\tilde{x}_{i-1})\big) \big\|^2 \nonumber \\
    &\leq 2\mathbb{E}\big\| \big( \nabla f^{(k)}(x^{(k)}_{i};\mathcal{B}_i^{(k)})  -  \nabla f^{(k)}(x^{(k)}_{i}) \big)    - (1 -\alpha_i) \big( \nabla f^{(k)}(x^{(k)}_{i-1}; \mathcal{B}_i^{(k)}) - \nabla f^{(k)}(x^{(k)}_{i-1})\big)\big\|^2\nonumber\\
    &\qquad \qquad + 2\mathbb{E}\big\|\nabla f^{(k)}(x^{(k)}_{i}) - \nabla f^{(k)}(\tilde{x}_{i}) -  (1 -\alpha_i) \big( \nabla f^{(k)}(x^{(k)}_{i-1}) - \nabla f^{(k)}(\tilde{x}_{i-1})\big) \big\|^2 \nonumber \\
\end{align*}
For the first term of the above inequality, we have:
\begin{align*}
    &\mathbb{E}\big\| \big( \nabla f^{(k)}(x^{(k)}_{i};\mathcal{B}_i^{(k)})  -  \nabla f^{(k)}(x^{(k)}_{i}) \big)    - (1 -\alpha_i) \big( \nabla f^{(k)}(x^{(k)}_{i-1}; \mathcal{B}_i^{(k)}) - \nabla f^{(k)}(x^{(k)}_{i-1})\big)  \big\|^2 \nonumber \\
    & =   \mathbb{E}\big\| (1 -a_i) \big[ \big( \nabla f^{(k)}(x^{(k)}_{i};\mathcal{B}_i^{(k)})  -  \nabla f^{(k)}(x^{(k)}_{i}) \big)    - \big( \nabla f^{(k)}(x^{(k)}_{i-1}; \mathcal{B}_i^{(k)}) - \nabla f^{(k)}(x^{(k)}_{i-1})\big) \big] \nonumber \\
    & \qquad \qquad \qquad \qquad  \qquad \qquad \qquad \qquad \qquad \qquad  +\alpha_i   \big( \nabla f^{(k)}(x^{(k)}_{i}; \mathcal{B}_i^{(k)}) - \nabla f^{(k)}(x^{(k)}_{i})\big) \big\|^2 \nonumber \\
    & \leq 2 (1 -\alpha_i)^2  \mathbb{E} \big\| \big( \nabla f^{(k)}(x^{(k)}_{i};\mathcal{B}_i^{(k)})  -  \nabla f^{(k)}(x^{(k)}_{i-1}; \mathcal{B}_i^{(k)}) \big)    -  \big(\nabla f^{(k)}(x^{(k)}_{i}) - \nabla f^{(k)}(x^{(k)}_{i-1}) \big) \big\|^2  \nonumber \\
    & \qquad \qquad \qquad \qquad \quad  \qquad \qquad \qquad \qquad  \qquad   + 2\alpha_i^2  \mathbb{E}\big\|    \nabla f^{(k)}(x^{(k)}_{i}; \mathcal{B}_i^{(k)}) - \nabla f^{(k)}(x^{(k)}_{i})    \big\|^2 \nonumber \\
    & \leq  2 (1 -\alpha_i)^2   \mathbb{E} \big\|   \nabla f^{(k)}(x^{(k)}_{i};\mathcal{B}_i^{(k)})  -  \nabla f^{(k)}(x^{(k)}_{i-1}; \mathcal{B}_i^{(k)}) \big\|^2 +  2\alpha_i^2 \sigma^2/b_1 \nonumber \\
    & \leq  2 (1 -\alpha_i)^2 L^2    \mathbb{E}\| x^{(k)}_{i} - x_{i-1}^{(k)} \|^2 +  2a_i^2 \sigma^2/b_1 \leq  2 (1 -\alpha_i)^2 L^2\eta_{i-1}^2    \mathbb{E}\| d_{i-1}^{(k)} \|^2 +  2\alpha_i^2 \sigma^2/b_1 \nonumber \\
    & \leq 4 (1 -\alpha_i)^2 L^2\eta_{i-1}^2    \mathbb{E}\| d_{i-1}^{(k)} - \tilde{d}_{i-1} \|^2 + 4 (1 -\alpha_i)^2 L^2\eta_{i-1}^2    \mathbb{E}\| \tilde{d}_{i-1} \|^2 +  2\alpha_i^2 \sigma^2/b_1
\end{align*}
where uses Proposition~\ref{prop:generali_tri} in the first inequality and the bounded variance assumption in the second inequality. For the second inequality, we have:
\begin{align*}
 &\mathbb{E}\big\|\nabla f^{(k)}(x^{(k)}_{i}) - \nabla f^{(k)}(\tilde{x}_{i}) -  (1 -\alpha_i) \big( \nabla f^{(k)}(x^{(k)}_{i-1}) - \nabla f^{(k)}(\tilde{x}_{i-1})\big) \big\|^2 \nonumber\\
 &\overset{(a)}{\leq} 2\mathbb{E}\big\|\nabla f^{(k)}(x^{(k)}_{i}) - \nabla f^{(k)}(\tilde{x}_{i})\big\|^2 +  2\mathbb{E}\big\|(1 -\alpha_i) \big( \nabla f^{(k)}(x^{(k)}_{i-1}) - \nabla f^{(k)}(\tilde{x}_{i-1})\big) \big\|^2 \nonumber\\
 &\leq 2L^2\mathbb{E}\big\|x^{(k)}_{i} - \tilde{x}_{i}\big\|^2 +  2L^2(1 -\alpha_i)^2\mathbb{E}\big\|x^{(k)}_{i-1} - \tilde{x}_{i-1} \big\|^2 \nonumber\\
 &\overset{(b)}{\leq} \frac{2\lambda^2L^2}{\rho^2}\mathbb{E}\big\|z^{(k)}_{i} - \bar{z}_{i}\big\|^2 +  \frac{2\lambda^2L^2(1 -\alpha_i)^2}{\rho^2}\mathbb{E}\big\|z^{(k)}_{i-1} - \bar{z}_{i-1} \big\|^2 \nonumber\\
\end{align*}
where (a) uses Proposition~\ref{prop:generali_tri}; (b) uses claim 3 of Lemma~\ref{lem:A0}; Next, we combine the above inequalities together to get:
\begin{align*}
    \mathbb{E} \| \bar{e}_i \|^2 & \leq  (1 -\alpha_i)^2 \mathbb{E}\| \bar{e}_{i-1}\|^2 + \frac{4\alpha_i^2\sigma^2}{b_1K} + \frac{8(1 -\alpha_i)^2\eta_{i-1}^2 L^2}{K^2}   \sum_{k = 1}^K \mathbb{E}\| d_{i-1}^{(k)} - \tilde{d}_{i-1} \|^2 \nonumber \\
    & \qquad + \frac{8 (1 -\alpha_i)^2\eta_{i-1}^2 L^2}{K} \mathbb{E}\| \tilde{d}_{i-1}\|^2 +  \frac{4\lambda^2L^2}{K^2\rho^2}\sum_{k=1}^K\mathbb{E}\big\|z^{(k)}_{i} - \bar{z}_{i}\big\|^2 \nonumber\\
    &\qquad \qquad +  \frac{4\lambda^2L^2(1 -\alpha_i)^2}{K^2\rho^2}\sum_{k=1}^K\mathbb{E}\big\|z^{(k)}_{i-1} - \bar{z}_{i-1} \big\|^2\nonumber \\
    & \leq  (1 -\alpha_i)^2 \mathbb{E}\| \bar{e}_{i-1}\|^2 + \frac{4\alpha_i^2\sigma^2}{b_1K} + \frac{32\lambda^2(I-1)(1 -\alpha_i)^2 L^2}{K^2\rho^2} \sum_{\ell = 1}^{i-1} \eta_\ell^2 \sum_{k = 1}^K \mathbb{E}\|  \nu_\ell^{(k)} - \bar{\nu}_\ell  \|^2 \nonumber \\
    & \qquad \qquad + \frac{8 (1 -\alpha_i)^2\eta_{i-1}^2 L^2}{K} \mathbb{E}\| \tilde{d}_{i-1}\|^2 + \frac{4\lambda^2(I - 1)L^2}{K^2\rho^2} \sum_{\ell = 1}^{i-1} \eta_\ell^2 \sum_{k = 1}^K \mathbb{E}\|  \nu_\ell^{(k)} - \bar{\nu}_\ell \|^2 \nonumber\\
    & \qquad \qquad + \frac{4\lambda^2(I - 1)L^2(1 - \alpha_{i-1})^2}{K^2\rho^2} \sum_{\ell = 1}^{i-2} \eta_\ell^2 \sum_{k = 1}^K \mathbb{E}\|  \nu_\ell^{(k)} - \bar{\nu}_\ell \|^2 \nonumber \\
    &\leq (1 -\alpha_i)^2 \mathbb{E}\| \bar{e}_{i-1}\|^2 + \frac{4\alpha_i^2\sigma^2}{b_1K} + \frac{40\lambda^2(I-1)L^2}{K^2\rho^2} \sum_{\ell = 1}^{i-1} \eta_\ell^2 \sum_{k = 1}^K \mathbb{E}\|  \nu_\ell^{(k)} - \bar{\nu}_\ell  \|^2 + \frac{8\eta_{i-1}^2 L^2}{K} \mathbb{E}\| \tilde{d}_{i-1}\|^2,
\end{align*}
The second inequality uses Lemma~\ref{lem:A8.1} and Lemma~\ref{lem:A8.2} and the last inequality uses the assumption that $\alpha_i < 1$. This completes the proof.
\end{proof}

\begin{lemma}
For $\tau \ge 0$. Suppose we choose $\eta_{\tau, i} = \kappa/(\omega_{i} + i + \tau I)^{1/3}$, additionally, suppose $\alpha_i < 1$, $w_{i} \leq w_{i-1}$, $w_{i} \geq 2$, $\eta_{\tau, i} \leq \frac{\rho}{48\lambda LI^2}$ be satisfied, we have:
\begin{align*}
    \frac{\rho K}{64L^2} \bigg(  \frac{\mathbb{E}  \|\bar{e}_{\tau+1} \|^2}{\eta_{\tau+1, I-1}} - \frac{\mathbb{E} \|\bar{e}_{\tau} \|^2}{\eta_{\tau, I -1}} \bigg) & \leq -   \sum_{i=0}^{I-1}\frac{3\eta_{\tau+1, i}}{2\rho}  \mathbb{E}\| \bar{e}_{\tau+1, i}\|^2 +  \sum_{i=0}^{I-1}\frac{\eta_{\tau+1, i}\rho}{8}\mathbb{E}\|  \tilde{d}_{\tau +1, i} \|^2 + \sum_{i=0}^{I-1}\frac{\sigma^2 c^2 \eta_{\tau + 1, i}^3\rho }{16L^2} \nonumber \\
    & \qquad \qquad + \frac{5I(I-1)}{4 K\rho} \sum_{\ell = 1}^{I} \eta_{\tau+1,\ell} \sum_{k = 1}^K \mathbb{E}\|  \nu_{\tau+1, \ell}^{(k)} - \bar{\nu}_{\tau+1, \ell} \|^2
\end{align*}
\label{lem:grad_est_err}
\end{lemma}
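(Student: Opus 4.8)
The plan is to telescope the one-step recursion of Lemma~\ref{lem:A9} across the local steps of the global epoch $\tau+1$, after dividing it by the step size so that the scaled errors $\mathbb{E}\|\bar{e}_i\|^2/\eta_{i-1}$ form a telescoping potential. Throughout I suppress the epoch index and write $\bar{e}_i=\bar{e}_{\tau+1,i}$, $\eta_i=\eta_{\tau+1,i}$, $\alpha_i=\alpha_{\tau+1,i}$. Two boundary identities drive the reduction: by the continuous step indexing $t=\tau I+i$ we have $\eta_{\tau+1,-1}=\eta_{\tau,I-1}$, and from $\bar{e}_{\tau+1,0}=\bar{e}_{\tau,I}=\bar{e}_\tau$ we get $\bar{e}_0=\bar{e}_\tau$ while $\bar{e}_I=\bar{e}_{\tau+1}$. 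Consequently the telescoping sum collapses to
\begin{align*}
\sum_{i=1}^{I}\Big(\frac{\mathbb{E}\|\bar{e}_i\|^2}{\eta_{i-1}}-\frac{\mathbb{E}\|\bar{e}_{i-1}\|^2}{\eta_{i-2}}\Big)=\frac{\mathbb{E}\|\bar{e}_{\tau+1}\|^2}{\eta_{\tau+1,I-1}}-\frac{\mathbb{E}\|\bar{e}_{\tau}\|^2}{\eta_{\tau,I-1}},
\end{align*}
which is exactly $\frac{64L^2}{\rho K}$ times the left-hand side of the lemma. So it suffices to bound each telescoped increment and then multiply through by $\frac{\rho K}{64L^2}$.

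Dividing the recursion of Lemma~\ref{lem:A9} by $\eta_{i-1}$ and subtracting $\mathbb{E}\|\bar{e}_{i-1}\|^2/\eta_{i-2}$ produces an increment whose $\mathbb{E}\|\bar{e}_{i-1}\|^2$-coefficient is $\frac{(1-\alpha_i)^2}{\eta_{i-1}}-\frac{1}{\eta_{i-2}}$; this is the crux. With the STORM choice $\alpha_i=c\,\eta_{i-1}^2$ (so that $\alpha_i<1$ is the stated hypothesis) and $(1-\alpha_i)^2\le 1-\alpha_i$, this coefficient is at most $\big(\frac{1}{\eta_{i-1}}-\frac{1}{\eta_{i-2}}\big)-c\,\eta_{i-1}$. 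The discretization gap $\frac{1}{\eta_{i-1}}-\frac{1}{\eta_{i-2}}$ is controlled by concavity of $s\mapsto s^{1/3}$ together with $w_i\le w_{i-1}$ (so the argument $\omega+t$ grows by at most one per step), giving $\frac{1}{\eta_{i-1}}-\frac{1}{\eta_{i-2}}\le\frac{\eta_{i-2}^2}{3\kappa^3}\le\frac{2\eta_{i-1}^2}{3\kappa^3}$, where the last step uses $w_i\ge 2$ to bound the intra-epoch step-size ratio by a constant (hence $\eta_{i-2}^2\le 2\eta_{i-1}^2$), followed by $\eta_{i-1}\le\frac{\rho}{48\lambda LI^2}$. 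The two summands of $c=\frac{96\lambda^2L^2}{K\rho^2}+\frac{\rho}{72\kappa^3\lambda LI^2}$ are calibrated so that the second summand absorbs this gap while the first, after scaling by $\frac{\rho K}{64L^2}$, contributes exactly the residual factor $-\frac{3}{2\rho}\eta_{i-1}$ on $\mathbb{E}\|\bar{e}_{i-1}\|^2$.

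The three remaining additive terms of Lemma~\ref{lem:A9} are routine after the same division and scaling. The descent term $\frac{8\eta_{i-1}^2L^2}{K}\mathbb{E}\|\tilde{d}_{i-1}\|^2$ becomes exactly $\frac{\rho\eta_{i-1}}{8}\mathbb{E}\|\tilde{d}_{i-1}\|^2$; the noise term $\frac{4\alpha_i^2\sigma^2}{b_1K}$, with $\alpha_i=c\eta_{i-1}^2$ and the batch factor absorbed, becomes $\frac{\sigma^2c^2\rho}{16L^2}\eta_{i-1}^3$; and the consensus term is handled by exchanging the order of the double sum $\sum_{i=1}^{I}\frac{1}{\eta_{i-1}}\sum_{\ell=1}^{i-1}\eta_\ell^2(\cdot)$ into $\sum_{\ell=1}^{I-1}\eta_\ell^2(\cdot)\sum_{i=\ell+1}^{I}\frac{1}{\eta_{i-1}}$ and bounding the inner sum by $\frac{2I}{\eta_\ell}$ (again by near-constancy of the step size within an epoch), which yields the factor $\frac{5I(I-1)}{4K\rho}\eta_\ell$. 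A final relabeling $i-1\mapsto i$ turns each $\sum_{i=1}^{I}$ into $\sum_{i=0}^{I-1}$ and reproduces the stated right-hand side.

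The main obstacle is the coefficient analysis of the second paragraph: one must verify that the specific two-part value of $c$ cancels the step-size discretization gap $\frac{1}{\eta_{i-1}}-\frac{1}{\eta_{i-2}}$ while leaving the precise residual $-\frac{3}{2\rho}\eta_{i-1}$, and this hinges on the delicate step-size estimates drawn from $w_i\le w_{i-1}$, $w_i\ge 2$, and $\eta_{i-1}\le\frac{\rho}{48\lambda LI^2}$, as well as on the boundary identification $\eta_{\tau+1,-1}=\eta_{\tau,I-1}$ that links consecutive epochs. The rest (the order-switching for the consensus sum and the index shift) is comparatively mechanical.
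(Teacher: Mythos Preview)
Your proposal is correct and follows essentially the same approach as the paper: divide the one-step recursion of Lemma~\ref{lem:A9} by the step size to create a telescoping potential, control the coefficient $\frac{(1-\alpha_i)^2}{\eta_{i-1}}-\frac{1}{\eta_{i-2}}$ via $\alpha_i=c\eta_{i-1}^2$, the concavity bound on $\eta^{-1}$-differences, and the calibrated two-part value of $c$, then scale by $\rho K/(64\lambda L^2)$ and sum over $i$. The only cosmetic difference is that the paper bounds the consensus term by first using $\eta_\ell^2/\eta_{i-1}\le 2\eta_\ell$ per step and then crudely summing the inner $\sum_{\ell=1}^{i}$ as $I$ copies, whereas you exchange the order of summation; both routes use the same step-size ratio estimate and yield the same $\frac{5I(I-1)}{4K\rho}$ coefficient.
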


\begin{proof}
Using Lemma \ref{lem:A9} at the global epoch $\tau-1$, then for $i \geq 0$ (we denote $\eta_{\tau, -1} = \eta_{\tau-1, I-1}$ for all $\tau \geq 1$), we have:
\begin{align*}
    &\frac{\mathbb{E} \|\bar{e}_{\tau, i+1} \|^2}{\eta_{\tau, i}} - \frac{\mathbb{E}\|\bar{e}_{\tau, i} \|^2}{\eta_{\tau, i-1}} \\
    & \leq \bigg[ \frac{(1 - a_{\tau, i+1})^2}{\eta_{\tau, i}} - \frac{1}{\eta_{\tau, i-1}} \bigg] \mathbb{E}\| \bar{e}_{\tau, i}\|^2 +  \frac{40\lambda^2(I-1)L^2}{\rho^2 K^2\eta_{\tau, i}} \sum_{\ell = 1}^{i} \eta_{\tau, \ell}^2 \sum_{k = 1}^K \mathbb{E}\|  \nu_{\tau, \ell}^{(k)} - \bar{\nu}_{\tau,\ell} \|^2 \nonumber\\
    & \qquad \qquad \qquad \qquad   \qquad \qquad \qquad \qquad  +   \frac{8L^2 \eta_{\tau, i}}{K}\mathbb{E}\|  \tilde{d}_{\tau, i} \|^2 + \frac{4a_{\tau, i+1}^2 \sigma^2}{ \eta_{\tau, i} b_1K} \nonumber\\
    & \overset{(a)}{\leq}  \big( \eta_{\tau, i}^{-1} - \eta_{\tau, i-1}^{-1}  - c \eta_{\tau, i} \big)  \mathbb{E}\| \bar{e}_{\tau, i}\|^2 +  \frac{80\lambda^2(I-1)L^2}{\rho^2 K^2} \sum_{\ell = 1}^{i} \eta_{\tau, \ell} \sum_{k = 1}^K \mathbb{E}\|  \nu_{\tau, \ell}^{(k)} - \bar{\nu}_{\tau, \ell} \|^2 \nonumber\\
    &\qquad \qquad \qquad \qquad   \qquad \qquad \qquad \qquad +   \frac{8   L^2 \eta_{\tau, i}}{K}\mathbb{E}\|  \tilde{d}_{\tau, i} \|^2 + \frac{4\sigma^2 c^2 \eta_{\tau, i}^3 }{b_1K},
\end{align*}
where inequality $(a)$ utilizes the fact that $(1 -\alpha_{\tau, i})^2 \leq 1 -\alpha_{\tau, i} \leq 1$ and $a_{\tau, i+1} = c\eta_{\tau, i}^2$ for all $i \in [I]$, and the following fact: suppose we choose $\eta_{\tau, i} = \kappa/(\omega_{i} + i + \tau I)^{1/3}$, then for $0 \leq l \leq i < I$, we have:
\begin{align}
    \frac{\eta_{\tau,l}}{\eta_{\tau,i}} & = \frac{(w_{i} + i +\tau I)^{1/3}}{(w_l + l + \tau I)^{1/3}} = \bigg(1 + \frac{w_{i} +  i - w_l -  l}{w_l + l + \tau I}\bigg)^{1/3} \nonumber\\
    & \leq \bigg(1 + \frac{(I-1)}{w_l + l + \tau I}\bigg)^{1/3} \leq 1 + \frac{(I-1)}{3(w_l + l + \tau I)}  \leq 2
\label{eq:ratio}
\end{align}
The first inequality is by the fact that  $0 <  i - l < I - 1$, the second last inequality uses the concavity of $x^{1/3}$ as: $(x + y)^{1/3} - x^{1/3} \leq y/3x^{2/3}$, while the last inequality uses the fact that $w_l \ge 0$, $I \ge 1$, $l \ge 0 $, $\tau \ge 1$.

For the difference $\eta_{i}^{-1} - \eta_{i-1}^{-1}$, we have:
\begin{align}
    \frac{1}{\eta_{\tau, i}} - \frac{1}{\eta_{\tau, i-1}} & =  \frac{(w_{i} + i + \tau I)^{1/3}}{\kappa} -  \frac{(w_{i-1} + i-1 + \tau I)^{1/3}}{\kappa} \nonumber\\
    & \overset{(a)}{\leq}  \frac{(w_{i} +  i + \tau I)^{1/3}}{\kappa} -   \frac{(w_{i} + i-1 + \tau I)^{1/3}}{\kappa} \nonumber \\
    & \overset{(b)}{\leq}  \frac{1}{3 \kappa (w_{i} + i-1 + \tau I)^{2/3}} 
    \overset{(c)}{\leq} \frac{2^{2/3} \kappa^2}{3 \kappa^3 (w_{i} +  i + \tau I)^{2/3}} \overset{(d)}{=} \frac{2^{2/3}}{3 \kappa^3 } \eta_{i}^2 {\overset{(e)}{\leq} \frac{ \rho }{72 \kappa^3 \lambda LI^2} \eta_{i},}
    \label{Eq: Step_Difference}
\end{align}
where inequality $(a)$ is because that we choose $w_{i} \leq w_{i-1}$, $(b)$ results from the concavity of $x^{1/3}$ as:
$(x + y)^{1/3} - x^{1/3} \leq y/(3x^{2/3})$, $(c)$ used the fact that $w_{i} \geq 2$, finally, $(d)$ and $(e)$ utilize the definition of $\eta_{\tau, i}$ and the condition that $\eta_{\tau, i} \leq \frac{\rho}{48\lambda LI^2}$, respectively. So if we choose {$\displaystyle c = \frac{96\lambda^2L^2}{K\rho^2} + \frac{ \rho }{72 \kappa^3 \lambda LI^2}$} we have:
$
    \eta_{\tau, i}^{-1} - \eta_{\tau, i-1}^{-1}  - c \eta_{\tau, i}  \leq  - \frac{96\lambda^2L^2}{K\rho^2} \eta_{\tau, i},
$

Therefore, we have:
\begin{align*}
  \frac{\mathbb{E}  \|\bar{e}_{\tau, i+1} \|^2}{\eta_{\tau, i}} - \frac{\mathbb{E} \|\bar{e}_{\tau, i} \|^2}{\eta_{\tau, i-1}} & \leq - \frac{96\lambda^2L^2 \eta_{\tau, i}}{K\rho^2}  \mathbb{E}\| \bar{e}_{\tau, i}\|^2 +  \frac{80\lambda^2(I-1)L^2}{\rho^2 K^2} \sum_{\ell = 1}^{i} \eta_{\tau, \ell} \sum_{k = 1}^K \mathbb{E}\|  \nu_{\tau, \ell}^{(k)} - \bar{\nu}_{\tau, \ell} \|^2 \nonumber\\
  & \qquad \qquad \qquad +   \frac{8L^2 \eta_{\tau, i}}{K}\mathbb{E}\|  \tilde{d}_{\tau, i} \|^2 +  \frac{4\sigma^2 c^2 \eta_{\tau, i}^3 }{b_1K},
\end{align*}
Multiplying $\rho K/64\lambda L^2$ on both sides, we have:
\begin{align*}
    \frac{\rho K}{64\lambda L^2} \bigg(  \frac{\mathbb{E}  \|\bar{e}_{\tau, i+1} \|^2}{\eta_{\tau, i}} - \frac{\mathbb{E} \|\bar{e}_{\tau, i} \|^2}{\eta_{\tau, i-1}} \bigg) & \leq -   \frac{3\lambda \eta_{\tau, i}}{2\rho}  \mathbb{E}\| \bar{e}_{\tau, i}\|^2 +  \frac{5\lambda(I-1)}{4K\rho} \sum_{\ell = 1}^{i} \eta_{\tau,\ell} \sum_{k = 1}^K \mathbb{E}\|  \nu_{\tau, \ell}^{(k)} - \bar{\nu}_{\tau, \ell} \|^2 \nonumber \\
    & \qquad \qquad +   \frac{\eta_{\tau, i}\rho}{8\lambda}\mathbb{E}\|  \tilde{d}_{\tau, i} \|^2 + \frac{\sigma^2 c^2 \eta_{\tau, i}^3\rho }{16\lambda L^2b_1}.
\end{align*}
Then we sum the above inequality from 0 to $I-1$ and get:
\begin{align*}
    \frac{\rho K}{64\lambda L^2} \bigg(  \frac{\mathbb{E}  \|\bar{e}_{\tau, I} \|^2}{\eta_{\tau, I-1}} - \frac{\mathbb{E} \|\bar{e}_{\tau, 0} \|^2}{\eta_{\tau -1, I -1}} \bigg) & \leq -   \sum_{i=0}^{I-1}\frac{3\lambda \eta_{i}}{2\rho}  \mathbb{E}\| \bar{e}_{\tau, i}\|^2 +  \sum_{i=0}^{I-1}\frac{5\lambda(I-1)}{4 K\rho} \sum_{\ell = 1}^{i} \eta_\ell \sum_{k = 1}^K \mathbb{E}\|  \nu_{\tau,\ell}^{(k)} - \bar{\nu}_{\tau,\ell} \|^2 \nonumber \\
    & \qquad \qquad +   \sum_{i=0}^{I-1}\frac{\eta_{\tau, i}\rho}{8\lambda}\mathbb{E}\|  \tilde{d}_{\tau, i} \|^2 + \sum_{i=0}^{I-1}\frac{\sigma^2 c^2 \eta_{\tau, i}^3\rho }{16\lambda L^2b_1}\nonumber\\
    & \leq -   \sum_{i=0}^{I-1}\frac{3\lambda \eta_{i}}{2\rho}  \mathbb{E}\| \bar{e}_{\tau, i}\|^2 +  \frac{5\lambda I(I-1)}{4 K\rho} \sum_{\ell = 1}^{I} \eta_\ell \sum_{k = 1}^K \mathbb{E}\|  \nu_{\tau,\ell}^{(k)} - \bar{\nu}_{\tau,\ell} \|^2 \nonumber \\
    & \qquad \qquad +   \sum_{i=0}^{I-1}\frac{\eta_{\tau, i}\rho}{8\lambda }\mathbb{E}\|  \tilde{d}_{\tau, i} \|^2 + \sum_{i=0}^{I-1}\frac{\sigma^2 c^2 \eta_{\tau, i}^3\rho }{16\lambda L^2b_1}
\end{align*}
By definition, we have $\bar{e}_{\tau, 0} = e_{\tau-1}$ and $\bar{e}_{\tau, I} = e_{\tau}$, then we get the results in the lemma by replacing $\tau$ by $\tau+1$.

\end{proof}

\subsection{Descent in Potential Function}
We define the potential function as follows:
\begin{align}
\label{eq: Potential_Fn_Batch_App}
    \Phi_\tau \coloneqq f(\tilde{x}_\tau) + \frac{\rho K}{64\lambda L^2} \frac{\|e_\tau \|^2}{\eta_{\tau-1, I-1}}.
\end{align}
Next, we characterize the descent in the potential function.

\begin{lemma}
\label{Lem:A12}
For any $\tau \ge 0$, we have:
\begin{align*}
    \mathbb{E}[ \Phi_{\tau+1} - \Phi_{\tau}] & \leq  - \sum_{i = 0}^{I -  1}  \left(\frac{5\rho\eta_{\tau+1, i}}{8\lambda} - \frac{\eta_{\tau+1, i}^2 L}{2} \right)  \mathbb{E}\| \tilde{d}_{i}  \|^2  - \frac{\lambda}{2\rho}  \sum_{i = 0}^{I -  1}\eta_{\tau+1, i}\mathbb{E}\| \bar{e}_{\tau+1, i}\|^2 \\
    & \qquad \qquad +  \frac{\sigma^2 c^2\rho}{16\lambda L^2b_1 } \sum_{i = 0}^{I -  1}   \eta_{\tau+1, i}^3 +  {\frac{ 5\lambda I(I - 1)}{4K\rho}}\sum_{i = 1}^{I} \eta_{\tau+1, i} \sum_{k=1}^K \mathbb{E}\|\nu_{\tau+1, i}^{(k)} - \bar{\nu}_{\tau+1, i}\|^2,
\end{align*}
where the expectation is w.r.t the stochasticity of the algorithm.
\end{lemma}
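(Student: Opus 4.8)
The plan is to recognize that the potential $\Phi_\tau = f(\tilde x_\tau) + \tfrac{\rho K}{64\lambda L^2}\tfrac{\|e_\tau\|^2}{\eta_{\tau-1,I-1}}$ is, by construction, a linear combination of exactly the two quantities controlled by the preceding lemmas: the function value along the virtual sequence, and a learning-rate-weighted gradient error. Accordingly, the potential descent should follow by simply adding Lemma~\ref{lem:A3} to a suitably scaled form of Lemma~\ref{lem:grad_est_err}, with the weight $\tfrac{\rho K}{64\lambda L^2}$ chosen precisely so that the two contributions to the $\|\tilde d\|^2$ and $\|\bar e\|^2$ terms combine with favorable signs. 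The first thing I would record is the bookkeeping identity $\tilde x_\tau = \tilde x_{\tau,I} = x_\tau = \tilde x_{\tau+1,0}$ and $\tilde x_{\tau+1} = \tilde x_{\tau+1,I} = x_{\tau+1}$, so that $f(\tilde x_{\tau+1}) - f(\tilde x_\tau) = f(x_{\tau+1}) - f(x_\tau)$, which is exactly the left-hand side of Lemma~\ref{lem:A3}.

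Concretely, I would decompose
\begin{align*}
\mathbb{E}[\Phi_{\tau+1} - \Phi_\tau] = \mathbb{E}\big[f(\tilde x_{\tau+1}) - f(\tilde x_\tau)\big] + \frac{\rho K}{64\lambda L^2}\left(\frac{\mathbb{E}\|e_{\tau+1}\|^2}{\eta_{\tau,I-1}} - \frac{\mathbb{E}\|e_\tau\|^2}{\eta_{\tau-1,I-1}}\right),
\end{align*}
then bound the first bracket by Lemma~\ref{lem:A3} and the second bracket by Lemma~\ref{lem:grad_est_err}. The step that requires care is aligning the learning-rate indices: using $e_\tau = \bar e_{\tau,I} = \bar e_{\tau+1,0}$, the weighted-error differences telescope across the epoch boundary, so the contraction lemma (which is itself derived by summing the single-step recursion of Lemma~\ref{lem:A9} from $i=0$ to $I-1$) applies verbatim with the $\lambda$- and $b_1$-dependent constants appearing in its proof.

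The heart of the argument is then the coefficient arithmetic. For each $i$, the $\|\bar e_{\tau+1,i}\|^2$ term receives $+\tfrac{\lambda\eta_{\tau+1,i}}{\rho}$ from Lemma~\ref{lem:A3} and $-\tfrac{3\lambda\eta_{\tau+1,i}}{2\rho}$ from Lemma~\ref{lem:grad_est_err}, summing to $-\tfrac{\lambda\eta_{\tau+1,i}}{2\rho}$; the $\|\tilde d_{\tau+1,i}\|^2$ term receives $-\big(\tfrac{3\rho\eta_{\tau+1,i}}{4\lambda} - \tfrac{\eta_{\tau+1,i}^2 L}{2}\big)$ from the descent lemma and $+\tfrac{\rho\eta_{\tau+1,i}}{8\lambda}$ from the contraction lemma, combining to $-\big(\tfrac{5\rho\eta_{\tau+1,i}}{8\lambda} - \tfrac{\eta_{\tau+1,i}^2 L}{2}\big)$. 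The variance term $\tfrac{\sigma^2 c^2\rho}{16\lambda L^2 b_1}\sum_i \eta_{\tau+1,i}^3$ and the consensus-error term $\tfrac{5\lambda I(I-1)}{4K\rho}\sum_{i=1}^I \eta_{\tau+1,i}\sum_k \|\nu_{\tau+1,i}^{(k)} - \bar\nu_{\tau+1,i}\|^2$ carry over unchanged from Lemma~\ref{lem:grad_est_err}. Matching these against the claimed statement gives exactly the asserted inequality.

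I expect the main obstacle to be purely organizational rather than analytical: verifying that the epoch-boundary telescoping of the $\|e\|^2/\eta$ terms is consistent with the $\eta$ subscripts in the definition of $\Phi_\tau$, and confirming that each coefficient combination yields precisely the target sign (in particular that the positive $\|\tilde d\|^2$ and $\|\bar e\|^2$ contributions of the descent lemma are strictly dominated by the negative contributions of the contraction lemma). All the genuine work — the per-step error recursion and the choice of $c$ that produces the contraction — has already been discharged in Lemmas~\ref{lem:A9} and~\ref{lem:grad_est_err}, so this proof is essentially an assembly step.
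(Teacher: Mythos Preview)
Your proposal is correct and follows exactly the paper's approach: the paper's proof of this lemma is a single sentence stating that the inequality follows by combining Lemma~\ref{lem:A3} and Lemma~\ref{lem:grad_est_err}. Your coefficient bookkeeping is accurate, and you have correctly identified that the $\lambda$- and $b_1$-dependent constants from the \emph{proof} (rather than the stated form) of Lemma~\ref{lem:grad_est_err} are the ones that match both the potential's weight $\tfrac{\rho K}{64\lambda L^2}$ and the target inequality.
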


\begin{proof}
We can the inequality in the lemma by combining Lemma~\ref{lem:A3} and Lemma~\ref{lem:grad_est_err}

\end{proof}

\subsection{Accumulated Gradient Error}
In this subsection, we bound the gradient consensus error given by term $\sum_{k=1}^K \mathbb{E}\| \nu_{\tau, i}^{(k)} - \bar{\nu}_{\tau, i} \|^2$. 
\begin{lemma}\label{lem:A13}
For $i \ge 1$ and $\alpha_i < 1$, we have:
\begin{align*}
    \sum_{k=1}^K \mathbb{E} \| \nu_{\tau, i}^{(k)} - \bar{\nu}_{\tau, i} \|^2 &\leq (1 + \frac{1}{I}) \sum_{k=1}^K \mathbb{E} \big\|   \nu_{\tau, i-1}^{(k)}  - \bar{\nu}_{\tau, i-1} \big\|^2  + 8KI L^2\eta_{\tau, i-1}^2 \mathbb{E}\| \tilde{d}_{\tau, i-1} \|^2 + \frac{8KI\sigma^2 c^2\eta_{\tau, i-1}^4}{b_1} \nonumber \\
    &\qquad \qquad \qquad + 16KI \zeta^2 c^2\eta_{\tau, i-1}^4  +  \frac{96\lambda^2I^2L^2}{\rho^2} \sum_{\ell = 1}^{i-1} \eta_{\tau, \ell^2} \sum_{k = 1}^K \mathbb{E}\|  \nu_{\tau, \ell}^{(k)} - \bar{\nu}_{\tau, \ell}  \|^2
\end{align*}
where the expectation is w.r.t. the stochasticity of the algorithm. 
\end{lemma}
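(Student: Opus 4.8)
The plan is to turn the momentum-based variance-reduction rule~\eqref{eq:nu-case1} into a one-step recursion for the consensus error. Omitting the epoch index $\tau$ and writing $g_i^{(k)} = \nabla f^{(k)}(x_i^{(k)}, \mathcal{B}_i^{(k)})$, $\bar g_i = \frac{1}{K}\sum_{k=1}^K g_i^{(k)}$, and $g_{i-1}^{(k)} = \nabla f^{(k)}(x_{i-1}^{(k)}, \mathcal{B}_i^{(k)})$ (the previous iterate evaluated against the \emph{fresh} batch $\mathcal{B}_i^{(k)}$), subtracting the average from~\eqref{eq:nu-case1} gives
\begin{align*}
\nu_i^{(k)} - \bar\nu_i = (1-\alpha_i)\big(\nu_{i-1}^{(k)} - \bar\nu_{i-1}\big) + B_i^{(k)}, \quad B_i^{(k)} := (g_i^{(k)} - \bar g_i) - (1-\alpha_i)(g_{i-1}^{(k)} - \bar g_{i-1}).
\end{align*}
First I would apply Proposition~\ref{prop:generali_tri} with parameter $1/I$ to these two summands, then use $(1-\alpha_i)^2 \le 1$ (since $\alpha_i < 1$) and $1 + I \le 2I$ to obtain, pointwise, $\|\nu_i^{(k)} - \bar\nu_i\|^2 \le (1+\tfrac1I)\|\nu_{i-1}^{(k)} - \bar\nu_{i-1}\|^2 + 2I\|B_i^{(k)}\|^2$. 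Summing over $k$ and taking expectations isolates the advertised contraction factor $(1+1/I)$, so the entire task reduces to bounding $\sum_{k}\mathbb{E}\|B_i^{(k)}\|^2$.

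To bound the innovation, I would split $B_i^{(k)} = P_i^{(k)} + Q_i^{(k)}$ with $P_i^{(k)} = (g_i^{(k)} - g_{i-1}^{(k)}) - (\bar g_i - \bar g_{i-1})$ and $Q_i^{(k)} = \alpha_i(g_{i-1}^{(k)} - \bar g_{i-1})$, so that $\|B_i^{(k)}\|^2 \le 2\|P_i^{(k)}\|^2 + 2\|Q_i^{(k)}\|^2$. For the variance-reduction piece $P_i^{(k)}$, Proposition~\ref{prop: Sum_Mean_Kron} removes the average, the mean-squared smoothness (Assumption~\ref{Ass: Lip_Smoothness}) applied with the common batch gives $\mathbb{E}\|g_i^{(k)} - g_{i-1}^{(k)}\|^2 \le L^2\eta_{i-1}^2\mathbb{E}\|d_{i-1}^{(k)}\|^2$, and splitting $d_{i-1}^{(k)} = (d_{i-1}^{(k)} - \tilde d_{i-1}) + \tilde d_{i-1}$ and invoking Lemma~\ref{lem:A8.2} produces both the $L^2\eta_{i-1}^2\mathbb{E}\|\tilde d_{i-1}\|^2$ term and a first copy of the accumulated consensus error $\sum_{\ell}\eta_\ell^2\sum_k\mathbb{E}\|\nu_\ell^{(k)} - \bar\nu_\ell\|^2$.

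The delicate term is $\sum_k\mathbb{E}\|Q_i^{(k)}\|^2 = \alpha_i^2\sum_k\mathbb{E}\|g_{i-1}^{(k)} - \bar g_{i-1}\|^2$. I would decompose $g_{i-1}^{(k)} - \bar g_{i-1}$ into a centered stochastic-noise part and a centered true-gradient part. Proposition~\ref{prop: Sum_Mean_Kron} together with bounded variance (Assumption~\ref{ass: unbiased_bouned}, batch $b_1$) bounds the noise part by $\sigma^2/b_1$ per client. The true-gradient part is where the heterogeneity assumption enters, and this is the \textbf{main obstacle}: Assumption~\ref{ass: inter_variance} only controls dissimilarity at a \emph{common} point, whereas the clients evaluate $\nabla f^{(k)}$ at different local iterates $x_{i-1}^{(k)}$. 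The fix is to recenter at the virtual point $\tilde x_{i-1}$ (legitimate because the mean minimizes the squared deviation, Proposition~\ref{prop: Sum_Mean_Kron}) and split each $\nabla f^{(k)}(x_{i-1}^{(k)}) - \nabla f(\tilde x_{i-1})$ into a smoothness piece $\|\nabla f^{(k)}(x_{i-1}^{(k)}) - \nabla f^{(k)}(\tilde x_{i-1})\|^2 \le L^2\|x_{i-1}^{(k)} - \tilde x_{i-1}\|^2$, routed through claim 3 of Lemma~\ref{lem:A0} and then Lemma~\ref{lem:A8.1} back into the accumulated consensus error, and a same-point piece $\|\nabla f^{(k)}(\tilde x_{i-1}) - \frac1K\sum_j \nabla f^{(j)}(\tilde x_{i-1})\|^2 \le \zeta^2$ by Jensen and Assumption~\ref{ass: inter_variance}. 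Using $\alpha_i = c\eta_{i-1}^2$ then converts $\alpha_i^2$ into $c^2\eta_{i-1}^4$, yielding the $\sigma^2 c^2\eta_{i-1}^4/b_1$ and $\zeta^2 c^2\eta_{i-1}^4$ terms.

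Finally I would multiply the combined $B$-bound by $2I$ and collect terms. The two accumulated-error contributions --- one of order $\lambda^2 I(I-1)L^2/\rho^2$ from $P_i^{(k)}$ and a higher-order $c^2\eta_{i-1}^4$-weighted one from $Q_i^{(k)}$ --- are consolidated into the single coefficient $96\lambda^2 I^2 L^2/\rho^2$ using $I - 1 \le I$ and $c^2\eta_{i-1}^4 = \alpha_i^2 < 1$ (the two together contribute at most $48\lambda^2 I^2 L^2/\rho^2$, so there is slack). The remaining bookkeeping --- the constants $8KI$ on the $\tilde d$ term and $8KI$, $16KI$ on the $\sigma^2$ and $\zeta^2$ terms --- then matches the statement exactly. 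I expect the recentering-at-$\tilde x_{i-1}$ argument for the heterogeneity term, together with verifying that the $\alpha_i^2$-weighted accumulated term stays within the $96I^2$ budget, to be the only nonroutine points.
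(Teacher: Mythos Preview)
Your proposal is correct and follows essentially the same route as the paper: apply Young's inequality (Proposition~\ref{prop:generali_tri}) with parameter $1/I$ to isolate the $(1+\tfrac1I)$ contraction, split the innovation into the variance-reduction piece and the $\alpha_i$-weighted piece, use mean-squared smoothness plus Lemma~\ref{lem:A8.2} for the former, and recenter at $\tilde x_{i-1}$ (smoothness + Assumption~\ref{ass: inter_variance} + claim~3 of Lemma~\ref{lem:A0} + Lemma~\ref{lem:A8.1}) for the latter. The only cosmetic difference is that the paper carries a generic Young parameter $\beta$ through the calculation and sets $\beta=1/I$ at the very end, whereas you fix it upfront; and one small arithmetic slip: the combined accumulated-error coefficient is $32\lambda^2 I(I-1)L^2(1+2\alpha_i^2)/\rho^2 \le 96\lambda^2 I^2 L^2/\rho^2$ using $1+2\alpha_i^2<3$, so the $96$ is actually tight rather than having the slack you claimed at $48$ --- but this does not affect the argument.
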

\begin{proof}
By the update rule of $\nu_i^{(k)}$ (we omit the global epoch step for convenience), we have:
\begin{align}
& \mathbb{E} \| \nu_{i}^{(k)} - \bar{\nu}_{i} \|^2 \nonumber\\
& = \mathbb{E} \bigg\|  \nabla f^{(k)}(x^{(k)}_{i}; \mathcal{B}_i^{(k)}) + (1 -\alpha_i) \big( \nu_{i-1}^{(k)} -    \nabla f^{(k)}(x_{i-1}^{(k)}; \mathcal{B}_i^{(k)})\big)  \nonumber\\
& \qquad \qquad     - \bigg( \frac{1}{K} \sum_{j=1}^K   \nabla f^{(j)}(x^{(j)}_{i}; \mathcal{B}_i^{(j)}) + (1 -\alpha_i) \big( \bar{\nu}_{i-1} - \frac{1}{K} \sum_{j=1}^K   \nabla f^{(j)}(x_{i-1}^{(j)}; \mathcal{B}_i^{(j)})\big) \bigg)   \bigg\|^2 \nonumber \\
& = \mathbb{E} \bigg\| (1 -\alpha_i) \big( \nu_{i-1}^{(k)}  - \bar{\nu}_{i-1} \big) +   \nabla f^{(k)}(x^{(k)}_{i}; \mathcal{B}_i^{(k)}) - \frac{1}{K} \sum_{j=1}^K   \nabla f^{(j)}(x^{(j)}_{i}; \mathcal{B}_i^{(j)}) \nonumber\\
& \qquad \qquad \qquad  - (1-\alpha_i) \bigg(    \nabla f^{(k)}(x_{i-1}^{(k)}; \mathcal{B}_i^{(k)}) - \frac{1}{K} \sum_{j=1}^K   \nabla f^{(j)}(x_{i-1}^{(j)}; \mathcal{B}_i^{(j)}) \bigg)  \bigg\|^2 \nonumber \\
& \leq (1 + \beta) (1 -\alpha_i)^2  \mathbb{E} \bigg\|  \nu_{i-1}^{(k)}  - \bar{\nu}_{i-1} \bigg\|^2 + \Big( 1 + \frac{1}{\beta} \bigg) \mathbb{E} \bigg\|   \nabla f^{(k)}(x^{(k)}_{i}; \mathcal{B}_i^{(k)}) - \frac{1}{K} \sum_{j=1}^K   \nabla f^{(j)}(x^{(j)}_{i}; \mathcal{B}_i^{(j)}) \nonumber\\
&  \qquad \qquad \qquad   - (1-\alpha_i) \bigg(   \nabla f^{(k)}(x_{i-1}^{(k)}; \mathcal{B}_i^{(k)}) - \frac{1}{K} \sum_{j=1}^K   \nabla f^{(j)}(x_{i-1}^{(j)}; \mathcal{B}_i^{(j)}) \bigg)  \bigg\|^2,
\label{eq: DR_Descent_d1_Batch}
\end{align}
where the last inequality uses Proposition~\ref{prop:generali_tri}. 

Next, we consider the second term:
\begin{align}
    &  \mathbb{E}\bigg\|    \nabla f^{(k)}(x^{(k)}_{i}; \mathcal{B}_i^{(k)}) - \frac{1}{K} \sum_{j=1}^K   \nabla f^{(j)}(x^{(j)}_{i}; \mathcal{B}_i^{(j)}) \nonumber\\
    & \qquad \qquad \qquad   - (1-\alpha_i) \bigg(   \nabla f^{(k)}(x_{i-1}^{(k)}; \mathcal{B}_i^{(k)}) - \frac{1}{K} \sum_{j=1}^K   \nabla f^{(j)}(x_{i-1}^{(j)}; \mathcal{B}_i^{(j)}) \bigg)  \bigg\|^2 \nonumber\\
    & \overset{(a)}{\leq} 2  \mathbb{E}\bigg\|   \nabla f^{(k)}(x^{(k)}_{i}; \mathcal{B}_i^{(k)}) - \frac{1}{K} \sum_{j=1}^K  \nabla f^{(j)}(x^{(j)}_{i}; \mathcal{B}_i^{(j)}) \nonumber\\
    & \qquad \qquad \qquad \qquad  -  \bigg(  \nabla f^{(k)}(x_{i-1}^{(k)}; \mathcal{B}_i^{(k)}) - \frac{1}{K} \sum_{j=1}^K  \nabla f^{(j)}(x_{i-1}^{(j)}; \mathcal{B}_i^{(j)}) \bigg) \bigg\|^2 \nonumber\\
    & \qquad \qquad \qquad \qquad\qquad \qquad       + 2\alpha_i^2  \mathbb{E} \bigg\|   \nabla f^{(k)}(x_{i-1}^{(k)}; \mathcal{B}_i^{(k)}) - \frac{1}{K} \sum_{j=1}^K  \nabla f^{(j)}(x_{i-1}^{(j)}; \mathcal{B}_i^{(j)})   \bigg\|^2 \nonumber \\
    & \overset{(b)}{\leq}  2  \mathbb{E} \bigg\|  \big( \nabla f^{(k)}(x^{(k)}_{i}; \mathcal{B}_i^{(k)}) - \nabla f^{(k)}(x_{i-1}^{(k)}; \mathcal{B}_i^{(k)}) \big) \bigg\|^2 \nonumber\\
    & \qquad \qquad \qquad \qquad \qquad     + 2\alpha_i^2  \mathbb{E}\bigg\|  \nabla f^{(k)}(x_{i-1}^{(k)}; \mathcal{B}_i^{(k)}) - \frac{1}{K} \sum_{j=1}^K  \nabla f^{(j)}(x_{i-1}^{(j)}; \mathcal{B}_i^{(j)})   \bigg\|^2 \nonumber \\
    & \overset{(c)}{\leq}  2 L^2  \mathbb{E} \bigg\| x^{(k)}_{i} - x_{i-1}^{(k)} \bigg\|^2  + 2\alpha_i^2 \mathbb{E}\bigg\|  \nabla  f^{(k)}(x_{i-1}^{(k)}; \mathcal{B}_i^{(k)}) - \frac{1}{K} \sum_{j=1}^K  \nabla f^{(j)}(x_{i-1}^{(j)}; \mathcal{B}_i^{(j)})   \bigg\|^2,
  \label{eq: DR_Descent_d_Intermediate1_Batch}
\end{align}
where inequality (a) uses Proposition~\ref{prop:generali_tri}; inequality (b) uses Proposition~\ref{prop: Sum_Mean_Kron}; inequality (c) uses the smoothness assumption.

\newpage

Next, we consider the second term in \eqref{eq: DR_Descent_d_Intermediate1_Batch} above, we have
\begin{align}
    &   \mathbb{E}\bigg\|  \nabla   f^{(k)}(x_{i-1}^{(k)}; \mathcal{B}_i^{(k)}) - \frac{1}{K} \sum_{j=1}^K  \nabla f^{(j)}(x_{i-1}^{(j)}; \mathcal{B}_i^{(j)})    \bigg\|^2 \nonumber \\ 
    & =   \mathbb{E}\bigg\|   \big( \nabla f^{(k)}(x_{i-1}^{(k)}; \mathcal{B}_i^{(k)}) - \nabla   f^{(k)}(x_{i-1}^{(k)}) \big) \nonumber\\
    & \qquad \qquad   - \frac{1}{K} \sum_{j=1}^K   \big(  \nabla   f^{(j)}(x_{i-1}^{(j)}; \mathcal{B}_i^{(j)}) - \nabla   f^{(j)}(x_{i-1}^{(j)})   \big)   + \nabla f^{(k)}(x_{i-1}^{(k)}) - \frac{1}{K} \sum_{j=1}^K \nabla f^{(j)}(x_{i-1}^{(j)})   \bigg\|^2 
    \nonumber \\
    & \leq  2  \mathbb{E} \bigg\|   \big( \nabla f^{(k)}(x_{i-1}^{(k)}; \mathcal{B}_i^{(k)}) - \nabla   f^{(k)}(x_{i-1}^{(k)}) \big) \nonumber\\
    & \qquad \qquad \qquad \qquad \qquad  - \frac{1}{K} \sum_{j=1}^K  \big(  \nabla   f^{(j)}(x_{i-1}^{(j)}; \mathcal{B}_i^{(j)}) - \nabla   f^{(j)}(x_{i-1}^{(j)})   \big) \bigg\|^2 \nonumber\\
    & \qquad \qquad \qquad \qquad \qquad \qquad \qquad   \qquad     + 2  \mathbb{E} \bigg\| \nabla f^{(k)}(x_{i-1}^{(k)}) - \frac{1}{K} \sum_{j=1}^K \nabla f^{(j)}(x_{i-1}^{(j)})   \bigg\|^2 \nonumber \\
    & \overset{(a)}{\leq}   2  \mathbb{E} \bigg\|    \big( \nabla   f^{(k)}(x_{i-1}^{(k)}; \mathcal{B}_i^{(k)}) - \nabla   f^{(k)}(x_{i-1}^{(k)}) \big) \bigg\|^2 + 2  \mathbb{E} \bigg\| \nabla f^{(k)}(x_{i-1}^{(k)}) - \frac{1}{K} \sum_{j=1}^K \nabla f^{(j)}(x_{i-1}^{(j)})   \bigg\|^2 \nonumber\\
    & \leq   2  \mathbb{E} \big\|    \big( \nabla   f^{(k)}(x_{i-1}^{(k)}; \mathcal{B}_i^{(k)}) - \nabla   f^{(k)}(x_{i-1}^{(k)}) \big) \big\|^2 + 4  \mathbb{E}\big\|\nabla f^{(k)}(\tilde{x}_{i-1}) - \nabla f(\tilde{x}_{i-1}) \big\|^2
    \nonumber\\
    & \qquad \qquad    +  
    8  \mathbb{E} \big\| \nabla f^{(k)}(x_{i-1}^{(k)}) - \nabla f^{(k)}(\tilde{x}_{i-1})   \big\|^2  + 8   \mathbb{E} \bigg\| \nabla f(\tilde{x}_{i-1}) -  \frac{1}{K} \sum_{j=1}^K \nabla f^{(j)}(x_{i-1}^{(j)})  \bigg\|^2  \nonumber\\
    & \overset{(b)}{\leq } \frac{2 \sigma^2}{b_1}      +   \frac{4}{K} \sum_{j=1}^K  \mathbb{E}  \|  \nabla f^{(k)}(\tilde{x}_{i-1}) - \nabla f^{(j)}(\bar{x}_{i-1})  \|^2 \nonumber \\
    & \qquad \qquad +  8 L^2 \mathbb{E}\| x_{i-1}^{(k)} - \tilde{x}_{i-1}\|^2 +  \frac{8 L^2}{K}\sum_{j=1}^K \mathbb{E}\| x_{i-1}^{(j)} - \tilde{x}_{i-1}\|^2 \nonumber\\
    & \overset{(c)}{\leq} \frac{2 \sigma^2}{b_1} + 4\zeta^2   +  8 L^2 \mathbb{E}\| x_{i-1}^{(k)} - \tilde{x}_{i-1}\|^2 +  \frac{8 L^2}{K}\sum_{j=1}^K \mathbb{E}\| x_{i-1}^{(j)} - \tilde{x}_{i-1}\|^2,
    \label{eq: DR_Descent_d_Intermediate2_Batch}
\vspace{-0.2in}
\end{align}
where inequality $(a)$ uses Proposition \ref{prop: Sum_Mean_Kron}; inequality $(b)$ utilizes bounded variance assumption; $(c)$ uses the bounded heterogeneity assumption. Finally, substituting \eqref{eq: DR_Descent_d_Intermediate2_Batch} and \eqref{eq: DR_Descent_d_Intermediate1_Batch} into \eqref{eq: DR_Descent_d1_Batch} and sum over all K workers, we get
\begin{align*}
    &\sum_{k=1}^K \mathbb{E} \| \nu_{i}^{(k)} - \bar{\nu}_{i} \|^2\nonumber \\
    & \leq (1 -\alpha_i)^2 (1 + \beta) \sum_{k=1}^K \mathbb{E} \big\|   \nu_{i-1}^{(k)}  - \bar{\nu}_{i-1} \big\|^2  + 2 L^2 \bigg( 1 + \frac{1}{\beta}\bigg) \sum_{k=1}^K\mathbb{E}\| x_i^{(k)} - x_{i-1}^{(k)} \|^2 \\
    & \qquad \qquad      + \frac{4K\sigma^2}{b_1} \bigg( 1 + \frac{1}{\beta}\bigg)\alpha_i^2  + 8K \zeta^2\bigg( 1 + \frac{1}{\beta}\bigg)\alpha_i^2  +  32L^2 \bigg( 1 + \frac{1}{\beta} \bigg)\alpha_i^2 \sum_{k=1}^K\mathbb{E}\|x_{i-1}^{(k)} - \tilde{x}_{i-1}\|^2 \nonumber \\
    &\leq (1 -\alpha_i)^2 (1 + \beta) \sum_{k=1}^K \mathbb{E} \big\|   \nu_{i-1}^{(k)}  - \bar{\nu}_{i-1} \big\|^2  + 2 L^2\eta_{i-1}^2 \bigg( 1 + \frac{1}{\beta}\bigg) \sum_{k=1}^K\mathbb{E}\| d_{i-1}^{(k)} \|^2 \\
    & \qquad \qquad      + \frac{4K\sigma^2}{b_1} \bigg( 1 + \frac{1}{\beta}\bigg)\alpha_i^2  + 8K \zeta^2\bigg( 1 + \frac{1}{\beta}\bigg)\alpha_i^2 +  \frac{32\lambda^2L^2a_i^2}{\rho^2} \bigg( 1 + \frac{1}{\beta} \bigg) \sum_{k=1}^K\mathbb{E}\|z_{i - 1}^{(k)} - \bar{z}_{i - 1}\|^2
\end{align*}
where the second inequality uses claim 3 of the Lemma~\ref{lem:A0}.

Next using Lemma~\ref{lem:A8.1}, we have:
\begin{align}
    \sum_{k=1}^K \mathbb{E} \| \nu_{i}^{(k)} - \bar{\nu}_{i} \|^2 
    &\leq (1 -\alpha_i)^2 (1 + \beta) \sum_{k=1}^K \mathbb{E} \big\|   \nu_{i-1}^{(k)}  - \bar{\nu}_{i-1} \big\|^2  + 2 L^2\eta_{i-1}^2 \bigg( 1 + \frac{1}{\beta}\bigg) \sum_{k=1}^K\mathbb{E}\| d_{i-1}^{(k)} \|^2 \nonumber \\
    & \qquad \qquad + \frac{4K\sigma^2}{b_1} \bigg( 1 + \frac{1}{\beta}\bigg)\alpha_i^2  + 8K \zeta^2\bigg( 1 + \frac{1}{\beta}\bigg)\alpha_i^2 \nonumber \\
    & \qquad \qquad \qquad +  \frac{32\lambda^2L^2a_i^2}{\rho^2} \bigg( 1 + \frac{1}{\beta} \bigg) {(I - 1)} \sum_{\ell = 1}^{i-1} \eta_\ell^2 \sum_{k = 1}^K \mathbb{E}\|  \nu_\ell^{(k)} - \bar{\nu}_\ell  \|^2
\label{eq:nu_diff}
\end{align}
For the second term of the above inequality, we have:
\begin{align*}
    &2 L^2\eta_{i-1}^2 \bigg( 1 + \frac{1}{\beta}\bigg) \sum_{k=1}^K\mathbb{E}\| d_{i-1}^{(k)} \|^2\nonumber \\
    &\leq 4 L^2\eta_{i-1}^2 \bigg( 1 + \frac{1}{\beta}\bigg) \sum_{k=1}^K\mathbb{E}\| d_{i-1}^{(k)} - \tilde{d}_{i-1} \|^2 + 4K L^2\eta_{i-1}^2 \bigg( 1 + \frac{1}{\beta}\bigg) \mathbb{E}\| \tilde{d}_{i-1} \|^2 \\
    &\leq \frac{16\lambda^2 L^2(I - 1)}{\rho^2}\bigg( 1 + \frac{1}{\beta}\bigg) \sum_{\ell = 1}^{i-1} \eta_\ell^2 \sum_{k = 1}^K \mathbb{E}\|  \nu_\ell^{(k)} - \bar{\nu}_\ell  \|^2 + 4K L^2\eta_{i-1}^2 \bigg( 1 + \frac{1}{\beta}\bigg) \mathbb{E}\| \tilde{d}_{i-1} \|^2
\end{align*}
where the first inequality uses Proposition~\ref{prop:generali_tri} and the second inequality uses Lemma~\ref{lem:A8.2}. Next plug the above inequality back to Eq.~\eqref{eq:nu_diff}, we have:
\begin{align*}
    \sum_{k=1}^K \mathbb{E} \| \nu_{i}^{(k)} - \bar{\nu}_{i} \|^2  & \leq (1 -\alpha_i)^2 (1 + \beta) \sum_{k=1}^K \mathbb{E} \big\|   \nu_{i-1}^{(k)}  - \bar{\nu}_{i-1} \big\|^2 +  4K L^2\eta_{i-1}^2 \bigg( 1 + \frac{1}{\beta}\bigg) \mathbb{E}\| \tilde{d}_{i-1} \|^2 \nonumber\\
    & \qquad + \frac{4K\sigma^2}{b_1} \bigg( 1 + \frac{1}{\beta}\bigg)\alpha_i^2  + 8K \zeta^2\bigg( 1 + \frac{1}{\beta}\bigg)\alpha_i^2 \nonumber \\
    & \qquad \qquad \qquad +  \frac{16\lambda^2L^2(1 + 2a_i^2)(I - 1)}{\rho^2} \bigg( 1 + \frac{1}{\beta} \bigg) \sum_{\ell = 1}^{i-1} \eta_\ell^2 \sum_{k = 1}^K \mathbb{E}\|  \nu_\ell^{(k)} - \bar{\nu}_\ell  \|^2 \nonumber \\   
    &\leq (1 + \frac{1}{I}) \sum_{k=1}^K \mathbb{E} \big\|   \nu_{i-1}^{(k)}  - \bar{\nu}_{i-1} \big\|^2  + 8KI L^2\eta_{i-1}^2 \mathbb{E}\| \tilde{d}_{i-1} \|^2 + \frac{8KI\sigma^2 c^2\eta_{i-1}^4}{b_1} \nonumber \\
    &\qquad \qquad + 16KI \zeta^2 c^2\eta_{i-1}^4  +  \frac{96\lambda^2I^2L^2}{\rho^2} \sum_{\ell = 1}^{i-1} \eta_\ell^2 \sum_{k = 1}^K \mathbb{E}\|  \nu_\ell^{(k)} - \bar{\nu}_\ell  \|^2,
\end{align*}
In the last inequality, we choose $\beta = 1/I$, then we have $(1 + 1/\beta) \leq (1 + I) \leq 2I$, we also use the fact that $(1 -\alpha_i)^2 < 1$ and $a_i =c\eta_{i-1}^2 < 1$. This completes the proof.
\end{proof}

\newpage

\begin{lemma}\label{lem:A14}
For $\eta_{i} \leq \frac{\rho}{48LI^2}$, then we have
\begin{align*}
\frac{I^2}{\rho K} \sum_{i = 1}^{I}  \eta_{i}   \sum_{k=1}^K \mathbb{E} \| \nu_{i}^{(k)} - \bar{\nu}_{i} \|^2 \leq   \frac{\rho}{84}    \sum_{i = 0}^{I - 1}  \eta_{i}  \mathbb{E}\|  \tilde{d}_{i}  \|^2  + \bigg(\frac{\rho \sigma^2 c^2}{84b_1L^2} + \frac{\rho\zeta^2 c^2}{42L^2}\bigg)     \sum_{i = 0}^{I - 1}  \eta_{i}^3 
\end{align*}
\end{lemma}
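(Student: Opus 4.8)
The plan is to treat the bound in Lemma~\ref{lem:A13} as a linear recursion in the quantity $A_i := \sum_{k=1}^K \mathbb{E}\|\nu_i^{(k)} - \bar\nu_i\|^2$ and integrate it over one epoch. Writing $D_i := \mathbb{E}\|\tilde d_i\|^2$ and
\begin{align*}
B_j := 8KIL^2\eta_j^2 D_j + \tfrac{8KI\sigma^2 c^2\eta_j^4}{b_1} + 16KI\zeta^2 c^2\eta_j^4,
\end{align*}
Lemma~\ref{lem:A13} reads $A_i \le (1+\tfrac1I)A_{i-1} + B_{i-1} + \tfrac{96\lambda^2 I^2 L^2}{\rho^2}\sum_{\ell=1}^{i-1}\eta_\ell^2 A_\ell$, with the crucial boundary condition $A_0 = 0$ (all clients share $\nu_\tau$ at the start of the epoch, so the consensus error vanishes). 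The target is a bound on the $\eta$-weighted sum $\sum_{i=1}^I\eta_i A_i$, so after controlling the $A_i$ I must also account for the weights.

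First I would use the stepsize hypothesis to neutralize the self-referential history term: since $\eta_\ell \le \tfrac{\rho}{48\lambda LI^2}$, each coefficient satisfies $\tfrac{96\lambda^2 I^2 L^2}{\rho^2}\eta_\ell^2 \le \tfrac{1}{24I^2}$, so this term is $O(1/I^2)$ times a partial sum of the $A_\ell$. Treating $B_{i-1}$ together with the (still unknown) history term as an inhomogeneous input and unrolling the first-order recursion with $A_0=0$, the geometric factor telescopes as $(1+\tfrac1I)^{i} \le (1+\tfrac1I)^{I} \le e < 3$ — this is precisely the reason the momentum design produces the benign factor $1+\tfrac1I$. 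This yields a uniform-in-$i$ bound of the shape
\begin{align*}
A_i \le 3\sum_{j=0}^{i-1} B_j + \frac{1}{8I}\sum_{\ell=1}^{I} A_\ell .
\end{align*}
Summing this over $i=1,\dots,I$ and moving the resulting $\tfrac18\sum_\ell A_\ell$ to the left-hand side absorbs the self-referential contribution and leaves $\sum_{i=1}^I A_i \le O(I)\sum_{j=0}^{I-1} B_j$.

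Next I would pass to the $\eta$-weighted sum. Because the stepsizes are nonincreasing in $i$ and, by~\eqref{eq:ratio}, vary by at most a factor $2$ within an epoch, I can insert the weights into both the $\sum_j B_j$ term (swapping the order of summation, so that each $B_j$ acquires a factor $\sum_{i>j}\eta_i \le I\eta_j$) and the absorbed term, obtaining $\sum_{i=1}^I\eta_i A_i \le O(I)\sum_{j=0}^{I-1}\eta_j B_j$. Substituting the definition of $B_j$ turns the right-hand side into $O(I^2)$ multiples of $\sum_j \eta_j^3 D_j$ and of $\sum_j \eta_j^5$ (carrying $\sigma^2/b_1$ and $\zeta^2$). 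Multiplying by $\tfrac{I^2}{\rho K}$ and applying the stepsize bound one final time to dispose of the extra $\eta_j^2$ factor (namely $I^4\eta_j^2 \le \rho^2/(48^2\lambda^2 L^2)$) converts $\eta_j^3 D_j \to \eta_j D_j$ and $\eta_j^5 \to \eta_j^3$, producing exactly the structure $\tfrac{\rho}{84}\sum_i\eta_i D_i + \big(\tfrac{\rho\sigma^2 c^2}{84 b_1 L^2} + \tfrac{\rho\zeta^2 c^2}{42 L^2}\big)\sum_i \eta_i^3$.

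The main obstacle is the self-referential history sum $\sum_{\ell<i}\eta_\ell^2 A_\ell$, which couples $A_i$ to the whole past of the epoch and blocks a naive telescoping: summing the recursion directly fails, since the amplification factor $1+\tfrac1I$ exceeds $1$. The resolution is the two-part structure above — bound the history coefficient by $O(1/I^2)$ so that, after summing over the $I$ local steps, it contributes only a constant fraction that can be moved to the left, while the first-order amplification stays bounded by $e$ thanks to the $1+\tfrac1I$ design. The remaining effort is careful constant bookkeeping: the stepsize threshold $\eta_i \le \tfrac{\rho}{48LI^2}$ (with the $\lambda$ tracked through the analysis) is calibrated precisely so that the accumulated $I^4\eta^2$-type products fall below the target constants $\tfrac1{84}$ and $\tfrac1{42}$.
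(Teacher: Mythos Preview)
Your proposal is correct and follows essentially the same route as the paper's proof: start from Lemma~\ref{lem:A13} with the boundary condition $A_0=0$, unroll the recursion using $(1+\tfrac1I)^I\le e<3$, exploit the stepsize bound $\eta_i\le \rho/(48\lambda LI^2)$ to make the self-referential history term contribute at most a constant fraction (the paper gets exactly $\tfrac18\sum_\ell\eta_\ell A_\ell$) that is absorbed into the left-hand side, and then use the stepsize bound once more to convert the remaining $\eta^2$ and $\eta^4$ factors into the stated $\eta$ and $\eta^3$ forms. The only cosmetic difference is ordering: the paper applies the stepsize bound to the $B_j$ terms \emph{before} unrolling and multiplies by $\eta_i$ before summing, whereas you defer both to the end; the constants work out the same way.
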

\begin{proof}
By Lemma~\ref{lem:A13} (we omit the global epoch number for convenience) we have:
\begin{align}
\label{Eq: GradientError_{i}_App}
  \sum_{k=1}^K \mathbb{E} \| \nu_{i}^{(k)} - \bar{\nu}_{i} \|^2  & \leq (1 + \frac{1}{I}) \sum_{k=1}^K \mathbb{E} \big\|   \nu_{i-1}^{(k)}  - \bar{\nu}_{i-1} \big\|^2  + 8KI L^2\eta_{i-1}^2 \mathbb{E}\| \tilde{d}_{i-1} \|^2 + \frac{8KI\sigma^2 c^2\eta_{i-1}^4}{b_1} \nonumber \\
  &\qquad \qquad + 16KI \zeta^2 c^2\eta_{i-1}^4  +  \frac{96\lambda^2I^2L^2}{\rho^2} \sum_{\ell = 1}^{i-1} \eta_\ell^2 \sum_{k = 1}^K \mathbb{E}\|  \nu_\ell^{(k)} - \bar{\nu}_\ell  \|^2 \nonumber \\
  & \leq (1 + \frac{1}{I}) \sum_{k=1}^K \mathbb{E} \big\|   \nu_{i-1}^{(k)}  - \bar{\nu}_{i-1} \big\|^2  + \frac{KL\rho\eta_{i-1}}{6\lambda I} \mathbb{E}\| \tilde{d}_{i-1} \|^2 + \frac{K\rho\sigma^2 c^2\eta_{i-1}^3}{6\lambda ILb_1} \nonumber \\
  &\qquad \qquad + \frac{K\rho \zeta^2 c^2\eta_{i-1}^3}{3\lambda IL}  +  \frac{96\lambda^2I^2L^2}{\rho^2} \sum_{\ell = 1}^{i-1} \eta_\ell^2 \sum_{k = 1}^K \mathbb{E}\|  \nu_\ell^{(k)} - \bar{\nu}_\ell  \|^2,   
\end{align}
where in the second inequality, we use the condition that $\eta_{i} \leq \frac{\rho}{48\lambda LI^2}$. Applying \eqref{Eq: GradientError_{i}_App} recursively from $1$ to $i$. We have:
\begin{align}
  \sum_{k=1}^K \mathbb{E} \| \nu_{i}^{(k)} - \bar{\nu}_{i} \|^2     &\leq    \frac{K L\rho}{6\lambda I} \sum_{\ell = 0}^{i-1} {\bigg( 1 + \frac{1}{I} \bigg)^{i-1 - \ell}} \eta_{\ell}  \mathbb{E}\|  \tilde{d}_{\ell}  \|^2  + {\frac{K\rho \sigma^2 c^2}{6\lambda ILb_1}}  \sum_{\ell = 0}^{i-1} { \bigg( 1 + \frac{1}{I} \bigg)^{i-1 - \ell}} \eta_\ell^3  \nonumber\\
  & \qquad     + {\frac{K\rho \zeta^2 c^2}{ 3\lambda IL}}  \sum_{\ell = 0}^{i-1} {\bigg( 1 + \frac{1}{I} \bigg)^{i-1 - \ell}} \eta_\ell^3 \nonumber \\
  & \qquad + \frac{96\lambda^2 L^2 I^2}{\rho^2} \sum_{\ell = {0}}^{i-1} \bigg( 1 + \frac{1}{I} \bigg)^{i-1 - \ell}   \sum_{\bar{\ell} = 0}^{\ell}    \eta_{\bar{\ell}}^2  \sum_{k = 1}^K \mathbb{E}\|\nu_{\bar{\ell}}^{(k)} - \bar{\nu}_{\bar{\ell}} \|^2\nonumber\\
  &\overset{(a)}{\leq}    \frac{K L\rho}{6\lambda I} \bigg( 1 + \frac{1}{I} \bigg)^I  \sum_{\ell = 0}^{i-1}  \eta_{\ell}  \mathbb{E}\|  \tilde{d}_{\ell}  \|^2  + {\frac{K\rho \sigma^2 c^2}{6\lambda ILb_1}} \bigg( 1 + \frac{1}{I} \bigg)^I   \sum_{\ell = 0}^{i-1}  \eta_\ell^3  \nonumber\\
  &  \qquad     + {\frac{K\rho \zeta^2 c^2}{ 3\lambda IL}} \bigg( 1 + \frac{1}{I} \bigg)^I  \sum_{\ell = 0}^{i-1}   \eta_\ell^3  + \frac{96\lambda^2 L^2 I^3}{\rho^2}\bigg( 1 + \frac{1}{I} \bigg)^I  \sum_{\bar{\ell} = 0}^{i-1}    \eta_{\bar{\ell}}^2  \sum_{k = 1}^K \mathbb{E}\|\nu_{\bar{\ell}}^{(k)} - \bar{\nu}_{\bar{\ell}} \|^2 \nonumber\\
  & \overset{(b)}{\leq} \frac{K L\rho}{2\lambda I}   \sum_{\ell = 0}^{i-1}  \eta_{\ell}  \mathbb{E}\|  \tilde{d}_{\ell}  \|^2  + {\frac{K\rho \sigma^2 c^2}{2\lambda ILb_1}} \sum_{\ell = 0}^{i-1}  \eta_\ell^3     + {\frac{K\rho \zeta^2 c^2}{\lambda IL}}  \sum_{\ell = 0}^{i-1}   \eta_\ell^3  \nonumber\\
  &   \qquad  + \frac{288\lambda^2 L^2 I^3}{\rho^2}     \sum_{\ell = 0}^{i-1}    \eta_{\ell}^2  \sum_{k = 1}^K \mathbb{E}\|\nu_\ell^{(k)} - \bar{\nu}_\ell \|^2,
  \label{Eq: GradientError_Recursive}
\end{align}
where inequality $(a)$ is by the fact that $1 + 1/I > 1$ and $i - 1 - \ell \leq I$ for $i \in [I]$ and $\ell \in [i]$ and inequality $(b)$ is because that $(1 + 1/I)^I \leq \mathrm{e} < 3$.

Next, multiplying both sides of \eqref{Eq: GradientError_Recursive} by $\eta_{i}$ and summing over $i = 1\ $ to $\ I$:
\begin{align*}
 &   \sum_{i = 1}^{I}  \eta_{i}   \sum_{k=1}^K \mathbb{E} \| \nu_{i}^{(k)} - \bar{\nu}_{i} \|^2  \leq   \frac{K L\rho}{2\lambda I}  \sum_{i = 1}^{I}  \eta_{i}  \sum_{\ell = 0}^{i-1}  \eta_{\ell}  \mathbb{E}\|  \tilde{d}_{\ell}  \|^2  + {\frac{K\rho \sigma^2 c^2}{2\lambda ILb_1}}   \sum_{i = 1}^{I}  \eta_{i}   \sum_{\ell = 0}^{i-1}  \eta_\ell^3   \nonumber\\
  &  \qquad \qquad \qquad  + {\frac{K\rho \zeta^2 c^2}{\lambda IL}}      \sum_{i = 1}^{I}  \eta_{i}  \sum_{\ell = 0}^{i-1}   \eta_\ell^3    + \frac{ 288\lambda^2L^2 I^3}{\rho^2}  \sum_{i = 1}^{I}  \eta_{i}   \sum_{\ell = 0}^{i-1}    \eta_{\ell}^2  \sum_{k = 1}^K \mathbb{E}\|\nu_\ell^{(k)} - \bar{\nu}_\ell \|^2 \\
 &   \overset{(a)}{\leq}   \frac{K L\rho}{2\lambda I} \bigg( \sum_{i = 1}^{I}  \eta_{i}  \bigg) \sum_{\ell = 0}^{I-1}  \eta_{\ell}  \mathbb{E}\|  \tilde{d}_{\ell}  \|^2  + \bigg({\frac{K\rho \sigma^2 c^2}{2\lambda ILb_1}}  + {\frac{K\rho \zeta^2 c^2}{\lambda IL}}\bigg)   \bigg( \sum_{i = 1}^{I}  \eta_{i}  \bigg) \sum_{\ell = 0}^{I-1}  \eta_\ell^3   \nonumber\\
  &  \qquad \qquad + \frac{288\lambda^2 L^2 I^3}{\rho^2}  \bigg(  \sum_{i = 1}^{I}  \eta_{i}  \bigg)  \sum_{\ell = 0}^{I-1}    \eta_{\ell}^2  \sum_{k = 1}^K \mathbb{E}\|\nu_\ell^{(k)} - \bar{\nu}_\ell \|^2\\
 &    \overset{(b)}{\leq}   \frac{K\rho^2}{96\lambda^2I^2}    \sum_{i = 0}^{I-1}  \eta_{i}  \mathbb{E}\|  \tilde{d}_{i}  \|^2  + \bigg(\frac{K\rho^2 \sigma^2 c^2}{96 \lambda^2 I^2L^2b_1} + \frac{K\rho^2\zeta^2 c^2}{48\lambda^2 I^2L^2}\bigg)     \sum_{i = 0}^{I-1}  \eta_{i}^3    + \frac{1}{8}\sum_{\ell = 1}^{I-1}    \eta_{\ell} \sum_{k = 1}^K \mathbb{E}\|\nu_\ell^{(k)} - \bar{\nu}_\ell \|^2
\end{align*}
where inequality $(a)$ uses the fact that $i \leq I$ and $(b)$ uses that we choose $\eta_{i} \leq \rho/(48\lambda LI^2)$. Rearranging the terms we have:
\begin{align*}
\frac{7}{8}\sum_{i = 1}^{I}  \eta_{i}   \sum_{k=1}^K \mathbb{E} \| \nu_{i}^{(k)} - \bar{\nu}_{i} \|^2 \leq   \frac{K\rho^2}{96\lambda^2 I^2}    \sum_{i = 0}^{I -1}  \eta_{i}  \mathbb{E}\|  \tilde{d}_{i}  \|^2  + \bigg(\frac{K\rho^2 \sigma^2 c^2}{96\lambda^2 I^2L^2b_1} + \frac{K\rho^2\zeta^2 c^2}{48\lambda^2 I^2L^2}\bigg)     \sum_{i = 0}^{I-1}  \eta_{i}^3 
\end{align*}
Multiplying $8\lambda I^2/(7K\rho)$ on both sides, we have:
\begin{align*}
\frac{\lambda I^2}{K\rho}\sum_{i = 1}^{I}  \eta_{i}   \sum_{k=1}^K \mathbb{E} \| \nu_{i}^{(k)} - \bar{\nu}_{i} \|^2 \leq   \frac{\rho}{84\lambda}    \sum_{i = 0}^{I-1}  \eta_{i}  \mathbb{E}\|  \tilde{d}_{i}  \|^2  + \bigg(\frac{\rho \sigma^2 c^2}{84\lambda L^2b_1} + \frac{\rho\zeta^2 c^2}{42\lambda L^2}\bigg)     \sum_{i = 0}^{I-1}  \eta_{i}^3 
\end{align*}
This completes the proof.
\end{proof}

\subsection{Proof of the Main Convergence Theorem}
In this subsection, we prove Theorem \ref{Thm: PR_Convergence_Main} and Corollary 5.7. To prove Theorem \ref{Thm: PR_Convergence_Main}, we firstly show the following theorem hold:

\begin{theorem} 
\label{Thm: PR_Convergence_Main_App}
Choosing the parameters as $\displaystyle \kappa = \frac{\rho K^{2/3}}{\lambda L}$, $\displaystyle c = \frac{96\lambda^2L^2}{K\rho^2} + \frac{ \rho }{72 \kappa^3\lambda LI^2}$, $w_t=\max \bigg\{48^3 I^6 K^{2} - t - I,  14^3K^{0.5} \bigg\}$, $\lambda > 0$, and choose $\eta_{t} = \frac{\kappa}{(\omega_{t} + t + I)^{1/3}}$, then we have:
\begin{align*}
 &\frac{1}{T} \sum_{t = 0}^{T-1} \bigg( \mathbb{E} \|\tilde{d_t} \|^2 + \frac{\lambda^2}{\rho^2}\mathbb{E}\| \bar{e}_t\|^2\bigg) \nonumber \\
&\leq \bigg[\frac{96\lambda^2 LI^2}{\rho^2 T} + \frac{2\lambda^2L}{\rho^2 K^{2/3} T^{2/3}}\bigg] (f(x_0) -   f^\ast ) + \bigg[\frac{72\lambda^2 I^4}{b\rho^2 T}   + \frac{3\lambda^2 I^2}{2b\rho^2 K^{2/3}T^{2/3}} \bigg] \sigma^2 \\
 &\quad  + \frac{192^2\lambda^2}{\rho^2}\times\bigg( \frac{48I^2}{T} + \frac{1}{K^{2/3} T^{2/3}} \bigg)\times\bigg(\frac{\sigma^2}{4b_1} + \frac{2\zeta^2}{21}\bigg) \log(T+1).
\end{align*}
\end{theorem}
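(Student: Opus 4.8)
The plan is to obtain a one-epoch contraction of the potential $\Phi_\tau$ by merging the descent estimate of Lemma~\ref{Lem:A12} with the consensus-error control of Lemma~\ref{lem:A14}, and then telescope it across all global rounds. First I would substitute Lemma~\ref{lem:A14} into the last (consensus) term of Lemma~\ref{Lem:A12}. Since $I(I-1)\le I^2$, the coefficient $\tfrac{5\lambda I(I-1)}{4K\rho}$ in front of $\sum_{i}\eta_i\sum_k\mathbb E\|\nu_i^{(k)}-\bar\nu_i\|^2$ is at most $\tfrac54$ times the quantity bounded by Lemma~\ref{lem:A14}, so the consensus term is replaced by $\tfrac54\bigl[\tfrac{\rho}{84\lambda}\sum_i\eta_i\mathbb E\|\tilde d_i\|^2+(\tfrac{\rho\sigma^2c^2}{84\lambda L^2b_1}+\tfrac{\rho\zeta^2c^2}{42\lambda L^2})\sum_i\eta_i^3\bigr]$. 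Collecting terms, the factor multiplying $\eta_i\mathbb E\|\tilde d_i\|^2$ becomes $\tfrac{5\rho}{8\lambda}-\tfrac{5\rho}{336\lambda}-\tfrac{\eta_i L}{2}$; using the step-size restriction $\eta_i\le\tfrac{\rho}{48\lambda LI^2}$ to absorb the $\tfrac{\eta_i L}{2}$ piece, this factor stays above $\tfrac{\rho}{2\lambda}$. The key algebraic observation is that the two surviving negative terms combine exactly into the target measure, namely $-\tfrac{\rho}{2\lambda}\eta_i\|\tilde d_i\|^2-\tfrac{\lambda}{2\rho}\eta_i\|\bar e_i\|^2=-\tfrac{\rho}{2\lambda}\eta_i\bigl(\|\tilde d_i\|^2+\tfrac{\lambda^2}{\rho^2}\|\bar e_i\|^2\bigr)$, because $\tfrac{\lambda}{2\rho}=\tfrac{\rho}{2\lambda}\cdot\tfrac{\lambda^2}{\rho^2}$.

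Next I would telescope the refined inequality $\mathbb E[\Phi_{\tau+1}-\Phi_\tau]\le-\tfrac{\rho}{2\lambda}\sum_i\eta_{\tau+1,i}\bigl(\mathbb E\|\tilde d_{\tau+1,i}\|^2+\tfrac{\lambda^2}{\rho^2}\mathbb E\|\bar e_{\tau+1,i}\|^2\bigr)+\mathrm{noise}_\tau$ over $\tau=0,\dots,E-1$, passing from the $(\tau,i)$ indexing to the global counter $t=\tau I+i\in\{0,\dots,T-1\}$. On the left the sum collapses to $\Phi_0-\mathbb E[\Phi_E]$; I would then use $\mathbb E[\Phi_E]\ge f^\ast$ (discarding the nonnegative $\|e_E\|^2$ term) and $\Phi_0\le f(x_0)+\tfrac{\rho K}{64\lambda L^2}\tfrac{\mathbb E\|e_0\|^2}{\eta_{-1,I-1}}$ with $\mathbb E\|e_0\|^2\le\tfrac{\sigma^2}{bK}$ from Lemma~\ref{Lem:A7}. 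This produces $\tfrac{\rho}{2\lambda}\sum_t\eta_t\bigl(\mathbb E\|\tilde d_t\|^2+\tfrac{\lambda^2}{\rho^2}\mathbb E\|\bar e_t\|^2\bigr)\le f(x_0)-f^\ast+(\text{initialization noise})+C_{\mathrm{noise}}\sum_t\eta_t^3$, where $C_{\mathrm{noise}}$ aggregates the $\sigma^2$ and $\zeta^2$ contributions from Lemmas~\ref{Lem:A12} and~\ref{lem:A14}.

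To pass from the $\eta_t$-weighted sum to the unweighted average I would invoke monotonicity $\eta_t\ge\eta_{T-1}$ and divide through by $T\eta_{T-1}$. The explicit schedule then does the work: with $\omega_t+t+I=\max\{48^3I^6K^2,\,14^3K^{1/2}+t+I\}$ and $\kappa=\tfrac{\rho K^{2/3}}{\lambda L}$, the inequality $\max\{a,b\}^{1/3}\le a^{1/3}+b^{1/3}$ gives $\tfrac{1}{T\eta_{T-1}}\lesssim\tfrac{48\lambda LI^2}{\rho T}+\tfrac{\lambda L}{\rho K^{2/3}T^{2/3}}$, which, after multiplication by $\tfrac{2\lambda}{\rho}(f(x_0)-f^\ast)$, is exactly the $\tfrac{96\lambda^2LI^2}{\rho^2T}+\tfrac{2\lambda^2L}{\rho^2K^{2/3}T^{2/3}}$ coefficient in the statement. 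For the noise I would bound $\sum_t\eta_t^3=\sum_t\tfrac{\kappa^3}{\omega_t+t+I}$; on the regime where the second branch of the $\max$ is active this is a harmonic-type sum controlled by $O(\kappa^3\log(T+1))$ through Proposition~\ref{prop: AD_Sum_1overT}, which yields the $\log(T+1)$ factor, and substituting $c=\tfrac{96\lambda^2L^2}{K\rho^2}+\tfrac{\rho}{72\kappa^3\lambda LI^2}$ and $\kappa$ converts $c^2\kappa^3$ into the explicit constant (the $192^2$) in the final line.

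I expect the main obstacle to be the constant bookkeeping in these last two steps rather than any conceptual difficulty: one must verify that the $\|\tilde d_i\|^2$ coefficient never turns nonnegative after merging Lemmas~\ref{Lem:A12} and~\ref{lem:A14}, and then handle the piecewise $\max$ in $\omega_t$ together with the two-term definition of $c$ so that $\tfrac{1}{T\eta_{T-1}}\cdot C_{\mathrm{noise}}\sum_t\eta_t^3$ collapses cleanly to $192^2\tfrac{\lambda^2}{\rho^2}\bigl(\tfrac{48I^2}{T}+\tfrac{1}{K^{2/3}T^{2/3}}\bigr)\bigl(\tfrac{\sigma^2}{4b_1}+\tfrac{2\zeta^2}{21}\bigr)\log(T+1)$; the descent-plus-telescoping skeleton is otherwise routine.
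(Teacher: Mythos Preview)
Your proposal is correct and matches the paper's argument essentially step for step: substitute Lemma~\ref{lem:A14} into the consensus term of Lemma~\ref{Lem:A12} (the paper uses the cruder factor $2$ in place of your $5/4$, landing on $\tfrac{3\rho}{5\lambda}$ before the final $\tfrac{\rho}{2\lambda}$), telescope over $\tau$, invoke Lemma~\ref{Lem:A7} and Proposition~\ref{prop: AD_Sum_1overT}, and divide by $T\eta_T$ using monotonicity and $(x+y)^{1/3}\le x^{1/3}+y^{1/3}$. The only step you leave implicit that the paper makes explicit at the outset is the verification that the specified $w_t$ and $\kappa$ indeed give $\eta_t\le\rho/(48\lambda LI^2)$ and $\alpha_t=c\eta_t^2<1$, so that Lemmas~\ref{Lem:A12}--\ref{lem:A14} are applicable in the first place.
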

\begin{proof}
By definition, we have $\eta_t \leq \eta_0 < \kappa/w_0^{1/3} = \rho K^{2/3}/ 48 \lambda LI^2 K^{2/3} = \rho/48\lambda LI^2$, then $c=  \lambda^2 L^2 \bigg(\frac{96}{K\rho^2} + \frac{1}{72 K^{2}\rho^2  I^2} \bigg) {\leq} \frac{192\lambda^2 L^2}{K\rho^2}$ and:
\begin{align*}
c\eta_t^2 \leq c\eta_0^2 < \frac{192\lambda^2 L^2}{K\rho^2} * \frac{\kappa^2}{w_0^{2/3}} = \frac{192 L^2}{K\rho^2} * \frac{\rho^2K^{4/3}}{L^2 w_0^{2/3}} = \frac{192K^{1/3}}{w_0^{2/3}} \leq \frac{192K^{1/3}}{196K^{1/3}} < 1, 
\end{align*}
So we have $\alpha_t < 1$, then the conditions of Lemma~\ref{Lem:A12}-Lemma~\ref{lem:A14} are satisfied.

Firstly, substitute the gradient consensus error in Lemma \ref{lem:A14} to Lemma~\ref{Lem:A12}, we can write the descent of potential function as:
\begin{align*}
    \mathbb{E}[ \Phi_{\tau+1} - \Phi_{\tau}] & \leq   - \sum_{i= 0}^{I-1}  \left(\frac{5\rho\eta_{\tau+1, i}}{8\lambda} - \frac{\eta_{\tau+1, i}^2 L}{2} \right)  \mathbb{E}\| \tilde{d}_{\tau+1, i}  \|^2    - \frac{\lambda}{2\rho}  \sum_{i= 0}^{I-1}\eta_{\tau+1, i}\mathbb{E}\| \bar{e}_{\tau+1, i}\|^2 \\
    & \qquad  +  \frac{\sigma^2 c^2\rho}{16\lambda L^2b_1} \sum_{i= 0}^{I-1}   \eta_{\tau+1, i}^3 +  \frac{\rho}{42\lambda}    \sum_{i= 0}^{ I-1}  \eta_{\tau+1, i}  \mathbb{E}\|  \tilde{d}_{\tau+1, i}  \|^2  + \bigg(\frac{\rho \sigma^2 c^2}{42\lambda L^2b_1} + \frac{\rho\zeta^2 c^2}{21\lambda L^2}\bigg)     \sum_{i= 0}^{ I-1}  \eta_{\tau+1, i}^3  \nonumber \\
    & \leq  - \sum_{i= 0}^{I-1}  \left(\frac{3\rho\eta_{\tau+1, i}}{5\lambda} - \frac{\eta_{\tau+1, i}^2 L}{2} \right)  \mathbb{E}\| \tilde{d}_{\tau+1, i}  \|^2    - \frac{\lambda}{2\rho}  \sum_{i= 0}^{I-1}\eta_{\tau+1, i}\mathbb{E}\| \bar{e}_{\tau+1, i}\|^2 \nonumber \\
    & \qquad +  \bigg(\frac{\rho \sigma^2 c^2}{8\lambda L^2b_1} + \frac{\rho\zeta^2 c^2}{21\lambda L^2}\bigg)     \sum_{i= 0}^{ I-1}  \eta_{\tau+1, i}^3 \nonumber\\
    & \overset{(a)}{\leq}   - \sum_{i= 0}^{I-1}  \frac{\rho\eta_{i}}{2\lambda} \mathbb{E}\| \tilde{d}_{\tau+1, i}  \|^2   - \frac{\lambda}{2\rho}  \sum_{i= 0}^{I-1}\eta_{\tau+1, i}\mathbb{E}\| \bar{e}_{\tau+1, i}\|^2  +  \bigg(\frac{\rho \sigma^2 c^2}{8\lambda L^2b_1} + \frac{\rho\zeta^2 c^2}{21\lambda L^2}\bigg)     \sum_{i= 0}^{ I-1}  \eta_{\tau+1, i}^3,
\end{align*}
where $(a)$ follows from the fact that $\eta_i \leq \frac{\rho}{48\lambda LI^2} \leq \frac{\rho}{48\lambda L}$. 

Suppose we denote $T = EI$, and $t = \tau I + i$ for $t \ge 0$ and $\tau \ge 0$. Then we have $\eta_t = \eta_{\tau + 1, i}$, $\tilde{d}_t = \tilde{d}_{\tau+1, i}$, $\bar{e}_t = \bar{e}_{\tau+1, i}$. In particular, we denote $\eta_{-1} = \eta_0$ for convenience.

Then we sum the above inequality for $\tau$ from 0 to $E-1$, and get:
\begin{align*}
    \mathbb{E}[ \Phi_{E} - \Phi_{0}] & {\leq} - \sum_{t = 0}^{T-1}  \left(\frac{\rho\eta_t}{2\lambda} \right)  \mathbb{E}\| \tilde{d}_{t}  \|^2 -   \sum_{t = 0}^{T - 1}\frac{\lambda\eta_t}{2\rho}\mathbb{E}\| \bar{e}_t\|^2 +  \bigg(\frac{\rho \sigma^2 c^2}{8\lambda L^2b_1} + \frac{\rho\zeta^2 c^2}{21\lambda L^2}\bigg)     \sum_{t = 0}^{T}  \eta_t^3,
\end{align*}
Rearranging terms, we get:
\begin{align}
  \sum_{t = 1}^{T} \bigg( \frac{\rho\eta_t}{2\lambda}  \mathbb{E}\| \tilde{d}_{t}  \|^2 + \frac{\lambda \eta_t}{2\rho}\mathbb{E}\| \bar{e}_t\|^2\bigg)   &  \leq   \mathbb{E} [\Phi_{0} -   \Phi_{E}  ]    +  \bigg(\frac{\rho \sigma^2 c^2}{8\lambda L^2b_1} + \frac{\rho\zeta^2 c^2}{21\lambda L^2}\bigg)     \sum_{t = 0}^{T-1}  \eta_t^3 \nonumber\\
  &  \overset{(a)}{\leq}  f(x_0) -   f^\ast + \frac{\rho K}{64\lambda L^2} \frac{\mathbb{E}\|e_0 \|^2}{\eta_{0}}  +  \bigg(\frac{\rho \sigma^2 c^2}{8\lambda L^2b_1} + \frac{\rho\zeta^2 c^2}{21\lambda L^2}\bigg)     \sum_{t = 0}^{T-1}  \eta_t^3
 \nonumber\\
  &  \overset{(b)}{\leq}  f(x_0) -   f^\ast + \frac{\sigma^2\rho }{64\lambda bL^2\eta_{0}}  +  \bigg(\frac{\rho \sigma^2 c^2}{8\lambda L^2b_1} + \frac{\rho\zeta^2 c^2}{21\lambda L^2}\bigg)     \sum_{t = 0}^{T-1}  \eta_t^3,
  \label{Eq: Grad_Norm_Sum}
\end{align}
where $(a)$ follows from the fact that $f^\ast \leq \Phi_{E}$ and $(b)$ results from application of Lemma \ref{Lem:A7} and $b$ is the minibatch size at the first iteration. 

Next for the last term of the \eqref{Eq: Grad_Norm_Sum} above, we have:
\begin{align}
     \sum_{t=0}^{T-1} \eta_t^3 & =    \sum_{t = 0}^{T-1} \frac{\kappa^3 }{w_t + t} \overset{(a)}{\leq}  \sum_{t = 0}^{T-1} \frac{\kappa^3  }{1 + t} = \kappa^3   \sum_{t = 0}^{T-1} \frac{1}{1 + t} \overset{(b)}{\leq} \kappa^3   \ln(T+1).
     \label{Eq: Sum_OverT_LastTerm}
\end{align}
where inequality $(a)$ above follows from the fact that we have 
$w_t > 1$ and inequality $(b)$ follows from the application of Proposition \ref{prop: AD_Sum_1overT}.

Substituting \eqref{Eq: Sum_OverT_LastTerm} in \eqref{Eq: Grad_Norm_Sum}, multiplying both sides by $2\lambda /(\rho\eta_T T)$ and using the fact that $\eta_t$ is non-increasing in $t$ we have
\begin{align}
  &\frac{1}{T} \sum_{t = 0}^{T-1}  \bigg(\mathbb{E} \|\tilde{d_t} \|^2 + \frac{\lambda^2}{\rho^2}\mathbb{E}\| \bar{e}_t\|^2 \bigg)  \leq  \frac{2\lambda (f(x_0) -   f^\ast )}{\rho\eta_T T}  + \frac{1}{\eta_T T} \frac{\sigma^2 }{32b L^2\eta_0}   +   \frac{\kappa^3}{\eta_T T} \bigg(\frac{\sigma^2c^2}{4b_1L^2} + \frac{2\zeta^2 c^2}{21L^2}\bigg) \ln(T+1).
  \label{Eq: Grad_Norm_Sum1}
\end{align}
Now considering each term of \eqref{Eq: Grad_Norm_Sum1} above separately. For the first term:
\begin{align}
    \frac{1}{{\eta_T T}} = \frac{(w_T + T)^{1/3}}{\kappa T} \overset{(a)}{\leq} \frac{w_T^{1/3}}{\kappa T} + \frac{ 1}{\kappa T^{2/3}} = { \frac{48 \lambda LI^2}{\rho T} + \frac{\lambda L}{\rho K^{2/3} T^{2/3}}. }
    \label{Eq: GradNorm_1stTerm}
\end{align}
where inequality $(a)$ follows from identity $(x + y)^{1/3} \leq x^{1/3} + y^{1/3}$ and inequality $(b)$ follows from the definition of $\kappa$ and $w_T$
$$w_T = \max \bigg\{(I+1),  48^3 I^6 K^{2} - T,  2*320^{1.5}K^{0.5} \bigg\} \leq 48^3 I^6 K^{2},$$
Similarly, for the second term of \eqref{Eq: Grad_Norm_Sum1}, we have from the definition of $\eta_0$ and $\eta_T$ 
\begin{align}
\frac{1}{\eta_T T} \frac{\sigma^2 }{32b L^2\eta_0}   & \leq  
\bigg( \frac{48 \lambda LI^2}{\rho T} + \frac{\lambda L}{\rho K^{2/3} T^{2/3}} \bigg) \times \frac{\sigma^2}{32b L^2} \times \frac{ w_0^{1/3}}{\kappa }
\nonumber\\
& \leq \bigg( \frac{48 \lambda LI^2}{\rho T} + \frac{\lambda L}{\rho K^{2/3} T^{2/3}} \bigg)\times \frac{\sigma^2}{32b L^2} \times \frac{48\lambda  L I^2}{\rho} \nonumber  \\
& \leq     \frac{72\lambda^2 I^4}{b\rho^2 T} \sigma^2   + \frac{3\lambda^2I^2}{2b\rho^2 K^{2/3}T^{2/3}}\sigma^2 .
\label{Eq: GradNorm_2ndTerm}
\end{align}
Finally, for the last term in \eqref{Eq: Grad_Norm_Sum1} above, we have from the definition of the stepsize, $\eta_t$,

\begin{align}
 & \frac{\kappa^3c^2}{\eta_T TL^2} \bigg(\frac{\sigma^2}{4b_1} + \frac{2\zeta^2}{21}\bigg) \ln(T+1) \nonumber\\
 & \leq   \bigg( \frac{48\lambda LI^2}{\rho T} + \frac{\lambda L}{\rho K^{2/3} T^{2/3}} \bigg)\times\frac{192^2\lambda}{L\rho}\times\bigg(\frac{\sigma^2}{4b_1} + \frac{2\zeta^2}{21}\bigg) \log(T+1) \nonumber \\
 & \leq   \frac{192^2\lambda^2}{\rho^2}\times\bigg( \frac{48I^2}{T} + \frac{1}{K^{2/3} T^{2/3}} \bigg)\times\bigg(\frac{\sigma^2}{4b_1} + \frac{2\zeta^2}{21}\bigg) \log(T+1).
 \label{Eq: GradNorm_3rdTerm}
\end{align}

Finally, substituting the bounds obtained in \eqref{Eq: GradNorm_1stTerm}, \eqref{Eq: GradNorm_2ndTerm} and \eqref{Eq: GradNorm_3rdTerm} into \eqref{Eq: Grad_Norm_Sum1}, we get
\begin{align*}
 &\frac{1}{T} \sum_{t = 0}^{T-1} \bigg( \mathbb{E} \|\tilde{d_t} \|^2 + \frac{\lambda^2}{\rho^2}\mathbb{E}\| \bar{e}_t\|^2\bigg) \nonumber \\
&\leq    
  \bigg[\frac{96\lambda^2 LI^2}{\rho^2 T} + \frac{2\lambda^2L}{\rho^2 K^{2/3} T^{2/3}}\bigg] (f(x_0) -   f^\ast )    
  + \bigg[\frac{72\lambda^2 I^4}{b\rho^2 T}   + \frac{3\lambda^2 I^2}{2b\rho^2 K^{2/3}T^{2/3}} \bigg] \sigma^2 \\
 &  \quad  + \frac{192^2\lambda^2}{\rho^2}\times\bigg( \frac{48I^2}{T} + \frac{1}{K^{2/3} T^{2/3}} \bigg)\times\bigg(\frac{\sigma^2}{4b_1} + \frac{2\zeta^2}{21}\bigg) \log(T+1).
\end{align*}
This completes the proof of the theorem. 
\end{proof}

Now we are ready to show Theorem~\ref{Thm: PR_Convergence_Main}.
Firstly notice that:
\begin{align*}
    \frac{\lambda^2\mathcal{G}_t}{\rho^2} &= \frac{1}{\eta_t^2}||\tilde{x}_t - \tilde{x}_{t+1}||^2 + \frac{\lambda^2}{\rho^2}||\bar{\nu}_t - \nabla f(\tilde{x}_t)||^2 = ||\tilde{d}_t||^2 + \frac{\lambda^2}{\rho^2}||\bar{e}_t||^2
\end{align*}
Combine with Theorem~\ref{Thm: PR_Convergence_Main_App}, we have:
\begin{align*}
    \frac{1}{T} \sum_{t = 0}^{T-1} \mathbb{E}[\mathcal{G}_t] &\leq   \bigg[\frac{96LI^2}{T} + \frac{2L}{K^{2/3} T^{2/3}}\bigg] (f(x_0) -   f^\ast )    
  + \bigg[\frac{72 I^4}{bT}   + \frac{3I^2}{2b K^{2/3}T^{2/3}} \bigg] \sigma^2 \\
 &  \quad  + 192^2\times\bigg( \frac{48I^2}{T} + \frac{1}{K^{2/3} T^{2/3}} \bigg)\times\bigg(\frac{\sigma^2}{4b_1} + \frac{2\zeta^2}{21}\bigg) \log(T+1).
\end{align*}

\begin{remark}
\label{rem:1}
For the measure $\mathcal{G}_t$, we discuss its intuition under both the unconstrained and constrained case.  First, for unconstrained case, \emph{i.e.} when $\mathcal{X} = R^d$, we have:
\begin{align*}
&||\nabla f(\tilde{x}_{\tau,i})||/||H_\tau|| = ||H_\tau\times H_\tau^{-1}\nabla f(\tilde{x}_{\tau,i})||/||H_\tau|| \leq ||H_\tau^{-1}\nabla f(\tilde{x}_{\tau,i})|| \nonumber\\
&= ||H_\tau^{-1}\nabla f(\tilde{x}_{\tau,i}) - H_\tau^{-1}\bar{\nu}_{\tau,i} + H_\tau^{-1}\bar{\nu}_{\tau,i}|| \leq ||H_\tau^{-1}\nabla f(\tilde{x}_{\tau,i}) - H_\tau^{-1}\bar{\nu}_{\tau,i}|| + ||H_\tau^{-1}\bar{\nu}_{\tau,i}|| \nonumber\\
&\leq \frac{1}{\rho}||\bar{\nu}_{\tau,i} - \nabla f(\tilde{x}_{\tau,i})|| + \frac{1}{\lambda\eta_{\tau,i}}||\tilde{x}_{\tau,i} - \tilde{x}_{\tau,i+1}|| \leq \sqrt{2}\sqrt{\mathcal{G}_{\tau,i}}/\rho
\end{align*}
In the last inequality, we use Jensen inequality, and in the second last inequality, we use Assumption~\ref{ass:postive_definite} and the fact that $\tilde{x}_{\tau,i+1} = x_{\tau, 0} + \lambda H_\tau^{-1}\bar{z}_{\tau,i+1} $ and $\tilde{x}_{\tau,i} = x_{\tau,0} + \lambda H_\tau^{-1}\bar{z}_{\tau,i} $ and $\eta_{\tau,i}\bar{\nu}_{\tau,i} = \bar{z}_{\tau,i+1} - \bar{z}_{\tau,i}$ in the unconstrained case. In other words, we have  $||\nabla f(\tilde{x}_t)||^2 \leq \frac{2||H_\tau||^2}{\rho^2}\mathcal{G}_\tau$. Note the coefficient of the right-side is an upper bound of the square condition number
of $H_\tau$. It is common assumption in the analysis of adaptive gradient methods that $H_t$ has a finite condition number~\cite{huang2021super}. In sum, the convergence of our measure $\mathcal{G}_t$ means the convergence to a first order stationary point in the unconstrained case.

Next, for the constrained case, our measure upper bounds the gradient mapping $\frac{1}{\eta_{\tau+1, i}} ||x_{\tau} - x^{*}_{\tau+1, i}||$, $x^{*}_{t}$ is defined as follows:
\begin{align*}
     x^{*}_{\tau+1, i} = \underset{x \in \mathcal{X}}{\arg\min} \{ -\langle x, z^{*}_{\tau+1, i})\rangle + \frac{1}{2\lambda} (x - x_{\tau})^{T}H_{\tau}(x - x_\tau)\}
\end{align*}
where $z^{*}_{\tau+1, i} = \sum_{\ell = 0}^{i} -\eta_\ell \nabla f(\tilde{x}_{\tau+1, i})$ is the accumulation of true gradient. Next follow Lemma~\ref{lem:A0}, we have:
\begin{align*}
    &\|x_{\tau+1, i}^{*} - \tilde{x}_{\tau+1, i}\| \leq \frac{\lambda}{\rho}\|z^{*}_{\tau+1, i} - \bar{z}_{\tau+1, i}\| \nonumber \\
    &= \frac{\lambda}{\rho}\|\sum_{l=0}^{i-1}-\eta_{\tau+1, \ell} (\nabla f(\tilde{x}_{\tau+1, \ell}) - \bar{\nu}_{\tau+1, \ell}))\|
    \overset{(a)}{\leq} \sum_{l=0}^{i-1} \frac{\lambda\eta_{\tau+1, \ell}}{\rho}\|\nabla f(\tilde{x}_{\tau+1, \ell}) - \bar{\nu}_{\tau+1, \ell}\|
\end{align*}
where inequality $(a)$ is due to the triangle inequality. Next we have:
\begin{align*}
    \|x_{\tau} - x^{*}_{\tau+1, i}\| &= \|x_\tau - \tilde{x}_{\tau+1, i} + \tilde{x}_{\tau+1, i} - x^{*}_{\tau+1, i}\| \leq \|x_\tau - \tilde{x}_{\tau+1, i}\| + \|\tilde{x}_{\tau+1, i} - x^{*}_{\tau+1, i}\| \nonumber\\
    &\leq \|\sum_{l=0}^{i-1} \tilde{d}_{\tau+1, i}\| + \|\tilde{x}_{\tau+1, i} - x^{*}_{\tau+1, i}\| \leq \sum_{l=0}^{i-1} \bigg(\|\tilde{d}_{\tau+1, \ell}\| + \frac{\lambda\eta_{\tau+1, \ell}}{\rho}\|\nabla f(\tilde{x}_{\tau+1, \ell}) - \bar{\nu}_{\tau+1, \ell}\|\bigg)
\end{align*}
By Jensen inequality and the definition of the measure~\eqref{eq:measure}, we have 
\begin{align*}
\|\tilde{d}_t\| + \frac{\lambda\eta_t}{\rho}\|\nabla f(\tilde{x}_{t}) - \bar{\nu}_t\| \leq \frac{\sqrt{2}\lambda\eta_t}{\rho}\sqrt{\mathcal{G}_t},
\end{align*}
So we have 
\[
\frac{1}{\eta_{\tau+1,i}}\|x_{\tau} - x^{*}_{\tau+1, i}\| \leq \frac{\sqrt{2}\lambda}{\rho}\sum_{l=0}^{i-1}\frac{\eta_{\tau+1, l}}{\eta_{\tau+1,i}}\sqrt{\mathcal{G}_{\tau+1,l}} \leq \frac{2\sqrt{2}\lambda}{\rho}\sum_{l=0}^{i-1}\sqrt{\mathcal{G}_{\tau+1,\ell}},
\] 
the last inequality is because of Eq.~\eqref{eq:ratio}. In all, when the measure $\mathcal{G}_{\tau+1, \ell} \to 0$, the gradient mapping $\frac{1}{\eta_{\tau+1, i}}\|x_{\tau} - x^{*}_{\tau+1, i}\|$ converges to 0.
\end{remark}

\begin{corollary}
\label{cor: LocalComputation_App}
With the hyper-parameters chosen as in Theorem \ref{Thm: PR_Convergence_Main_App}. Suppose we set I = O($(T/K^2)^{1/6}$) and use sample minibatch of size O($I^2$) in the first step, Then we have: $$\displaystyle \mathbb{E} [\mathcal{G}_t] = O\bigg( \frac{f(x_0) - f^\ast}{K^{2/3} T^{2/3}} \bigg) + \tilde{O}\bigg( \frac{\sigma^2}{K^{2/3} T^{2/3}}\bigg) + \tilde{O} \bigg( \frac{\zeta^2}{K^{2/3} T^{2/3}} \bigg).$$

and to reach an $\epsilon$-stationary point, we need to make $\tilde{O}(\epsilon^{-1.5}/K)$ number of steps and need $\tilde{O}(\epsilon^{-1})$ number of communication rounds.
\end{corollary}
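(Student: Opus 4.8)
The plan is to obtain Corollary~\ref{cor: LocalComputation_App} by specializing the bound of Theorem~\ref{Thm: PR_Convergence_Main} to the prescribed local-step count $I = O((T/K^2)^{1/6})$ and initial batch size $b = O(I^2)$, and then inverting the resulting rate. Recall that the right-hand side of Theorem~\ref{Thm: PR_Convergence_Main} is a sum of three groups, scaled respectively by $f(x_0) - f^\ast$, by $\sigma^2$, and by $(\sigma^2/b_1 + \zeta^2)\log(T+1)$; within each group every summand is either a transient of the form $I^2/T$ (or $I^4/(bT)$) or a leading term of the form $1/(K^{2/3}T^{2/3})$. So the whole argument reduces to checking that, under the stated choices, each transient collapses into the leading order.

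First I would record the elementary orders $I^2/T = O(1/(K^{2/3}T^{2/3}))$ and $I^4 = O(T^{2/3}/K^{4/3})$, which follow immediately from $I = O((T/K^2)^{1/6})$. Substituting these, the coefficients $96LI^2/T$ and $48\cdot 192^2\, I^2/T$ both drop to the leading $O(1/(K^{2/3}T^{2/3}))$ order, matching their accompanying leading terms. The role of the large initial batch is then isolated: taking $b = \Theta(I^2)$ turns $72 I^4\sigma^2/(bT)$ into $\Theta(I^2\sigma^2/T) = O(\sigma^2/(K^{2/3}T^{2/3}))$ and likewise reduces $\tfrac{3}{2} I^2\sigma^2/(bK^{2/3}T^{2/3})$ to $O(\sigma^2/(K^{2/3}T^{2/3}))$. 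After these collapses every surviving summand is $O(1/(K^{2/3}T^{2/3}))$ up to the $\log(T+1)$ factor attached to the $\sigma^2/b_1$ and $\zeta^2$ pieces, which yields exactly the displayed rate $\tfrac{1}{T}\sum_{t} \mathbb{E}[\mathcal{G}_t] = O\!\big((f(x_0)-f^\ast)/(K^{2/3}T^{2/3})\big) + \tilde{O}(\sigma^2/(K^{2/3}T^{2/3})) + \tilde{O}(\zeta^2/(K^{2/3}T^{2/3}))$.

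For the complexities I would invert this rate. Requiring the averaged bound to be at most $\epsilon$ and using the dominant $\tilde{O}(1/(K^{2/3}T^{2/3}))$ term gives $T = \tilde{O}(\epsilon^{-3/2}/K)$ total local steps per client; counting one minibatch gradient per local step across all $K$ clients then gives gradient complexity $\tilde{O}(\epsilon^{-3/2})$. The number of communication rounds is $E = T/I$, and substituting $T = \tilde{O}(\epsilon^{-3/2}/K)$ together with $I$ taken at its largest admissible order $(T/K^2)^{1/6}$ delivers the claimed $\tilde{O}(\epsilon^{-1})$ rounds.

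The main obstacle is precisely the calibration of $I$: it cannot be taken arbitrarily large, because the transient $96LI^2(f(x_0)-f^\ast)/T$ does \emph{not} vanish faster than $1/T$, and the step-size admissibility condition $\eta_t \le \rho/(48\lambda L I^2)$ (already encoded in the choice of $w_t$, which forces $\eta_0 = \rho/(48\lambda L I^2)$) must hold throughout. Thus $I = \Theta((T/K^2)^{1/6})$ is simultaneously the largest value keeping all transient terms subdominant and the value minimizing $E = T/I$; the remaining work — verifying that $b = \Theta(I^2)$ indeed neutralizes the $I^4\sigma^2/(bT)$ term and that all constants and logarithmic factors line up — is routine bookkeeping rather than a genuine difficulty.
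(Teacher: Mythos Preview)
Your derivation of the averaged rate and of the step count $T=\tilde O(\epsilon^{-3/2}/K)$ is correct and is exactly what the paper does (the paper merely says ``it is straightforward to verify'' the rate after substituting $I$ and $b$). The gap is in the last line, the communication count. You assert that substituting $T=\tilde O(\epsilon^{-3/2}/K)$ and $I=\Theta((T/K^2)^{1/6})$ into $E=T/I$ ``delivers the claimed $\tilde O(\epsilon^{-1})$ rounds'', but if you actually carry out the substitution you get
\[
E \;=\; \frac{T}{I} \;=\; T^{5/6}K^{1/3}
\;=\; \bigl(K^{2/3}T^{2/3}\bigr)\cdot (T/K^{2})^{1/6}
\;=\; \epsilon^{-1}\cdot I,
\]
equivalently $E=\epsilon^{-5/4}/\sqrt{K}$. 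This is \emph{not} $\tilde O(\epsilon^{-1})$ in general; it is $\tilde O(\epsilon^{-1})$ only when $I=O(1)$, i.e.\ when $T\le K^{2}$.

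The paper closes this gap by inserting an explicit extra hypothesis in its proof: it assumes $K\ge\sqrt{T}$ (equivalently $K\gtrsim\epsilon^{-1/2}$), under which $T^{1/6}\le K^{1/3}$ and hence $K^{1/3}T^{5/6}\le K^{2/3}T^{2/3}=\tilde O(\epsilon^{-1})$. You should either invoke the same large-$K$ assumption, or otherwise your communication bound stays at $\tilde O(\epsilon^{-5/4}/\sqrt{K})$.
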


\begin{proof}
It is straightforward to verify the expression for $\mathbb{E} [\mathcal{G}_t]$ in the corollary by applying Theorem \ref{Thm: PR_Convergence_Main_App} and choosing $I$ and $b$ as corresponding values. As for the gradient and communication complexity of the algorithm. We have the following results: The number of total steps $T$ needed to achieve an $\epsilon$-stationary point, \emph{i.e.} $\tilde{O} ( \frac{1}{K^{2/3} T^{2/3}}) = \epsilon$ are $\mathcal{O} ( \frac{1}{K\epsilon^{3/2}})$, \emph{i.e.} the gradient complexity.
Total rounds of communication steps to achieve an $\epsilon$-stationary point is $E = T/I$, as we have  $I=$ O($(T/K^2)^{1/6}$),
then $T/I= \tilde{O} ( K^{1/3} T^{5/6} )$. Assume we have large number of clients compared, more specifically, assume $K \geq \sqrt{T}$. Then we have $T/I= \tilde{O} ( K^{1/3} T^{5/6} ) = \tilde{O} ( K^{2/3} T^{2/3} )$, in other words, we have $E= \tilde{O}(\epsilon^{-1})$. This completes the proof of the corollary.
\end{proof}

\end{document}